\definecolor{yc}{RGB}{255,0,0}
\let\hat\widehat
\newcommand{\be}{\bm{e}}
\newcommand{\bg}{\bm{g}}
\newcommand{\bh}{\bm{h}}
\newcommand{\bs}{\bm{s}}
\newcommand{\bu}{\bm{u}}
\newcommand{\bv}{\bm{v}}
\newcommand{\bw}{\bm{w}}
\newcommand{\bx}{\bm{x}}
\newcommand{\by}{\bm{y}}
\newcommand{\bz}{\bm{z}}
\newcommand{\bA}{\bm{A}}
\newcommand{\bB}{\bm{B}}
\newcommand{\bD}{\bm{D}}
\newcommand{\bE}{\bm{E}}
\newcommand{\bF}{\bm{F}}
\newcommand{\bH}{\bm{H}}
\newcommand{\bI}{\mathbf{I}}
\newcommand{\bJ}{\bm{J}}
\newcommand{\bQ}{\bm{Q}}
\newcommand{\bR}{\bm{R}}
\newcommand{\bS}{\bm{S}}
\newcommand{\bU}{\bm{U}}
\newcommand{\bV}{\bm{V}}
\newcommand{\bX}{\bm{X}}
\newcommand{\bY}{\bm{Y}}
\newcommand{\cC}{\mathcal{C}}
\newcommand{\cJ}{\mathcal{J}}
\newcommand{\cS}{{\mathcal{S}}}
\newcommand{\bDelta}{\bm{\Delta}}
\newcommand{\bOmega}{\bm{\Omega}}
\newcommand{\dist}{\mathop{\mathrm{dist}}}
\newcommand{\inv}{\mathop{\mathrm{inv}}}
\newcommand{\norm}[1]{\left\| #1\right\|}
\newcommand{\snorm}[1]{\left\| #1\right\|_{\psi_2}}
\newcommand{\sbra}[1]{\left ( #1 \right )}
\newcommand{\mbra}[1]{\left [ #1\right]}
\newcommand{\mbP}[1]{\mathbb{P}\sbra{#1}}
\newcommand{\mbE}[1]{\mathbb{E}\mbra{#1}}
\newcommand{\gd}[1]{\mathcal{N}(0,#1)}
\newcommand{\abs}[1]{\left | #1 \right |}
\definecolor{lx}{RGB}{65,105,255}
\theoremstyle{plain} 
\newtheorem{lemma}{\textbf{Lemma}} 
\newtheorem{prop}{\textbf{Proposition}}
\newtheorem{theorem}{\textbf{Theorem}}
\newtheorem{corollary}{\textbf{Corollary}}
\newtheorem{definition}{\textbf{Definition}}
\newtheorem{fact}{\textbf{Fact}} 
\theoremstyle{definition}
\theoremstyle{remark}\newtheorem{remark}{\textbf{Remark}}
\newtheorem*{theorem*}{\textbf{Theorem}}
\newtheorem*{lemma*}{\textbf{Lemma}}	
\title{Manifold Gradient Descent Solves Multi-Channel Sparse Blind Deconvolution Provably and Efficiently}
\date{}
\author{Laixi Shi \quad \quad \quad Yuejie Chi \\
Department of Electrical and Computer Engineering\\
Carnegie Mellon University \\
Email: \{laixis, yuejiec\}@andrew.cmu.edu\footnote{The work of L. Shi and Y. Chi is supported in part by ONR under the grants N00014-18-1-2142 and N00014-19-1-2404, by NSF under the grants CAREER ECCS-1818571, CCF-1806154 and CCF-1901199, and the Liang Ji-Dian Graduate Fellowship to L. Shi.}
\thanks{The results in this paper were presented in part at the 45th International Conference on Acoustics, Speech, and Signal Processing (ICASSP), May 2020. }}
\def\endthebibliography{%
	\def\@noitemerr{\@latex@warning{Empty `thebibliography' environment}}%
	\endlist
}
\begin{document}

\maketitle

\begin{abstract}
Multi-channel sparse blind deconvolution, or convolutional sparse coding, refers to the problem of learning an unknown filter by observing its circulant convolutions with multiple input signals that are sparse. This problem finds numerous applications in signal processing, computer vision, and inverse problems. However, it is challenging to learn the filter efficiently due to the bilinear structure of the observations with respect to the unknown filter and inputs, as well as the sparsity constraint. In this paper, we propose a novel approach based on nonconvex optimization over the sphere manifold by minimizing a smooth surrogate of the sparsity-promoting loss function. It is demonstrated that  manifold gradient descent with random initializations will provably recover the filter, up to scaling and shift ambiguity, as soon as the number of observations is sufficiently large under an appropriate random data model. Numerical experiments are provided to illustrate the performance of the proposed method with comparisons to existing ones.
\end{abstract}

\noindent\textbf{Keywords:} nonconvex optimization, multi-channel sparse blind deconvolution, manifold gradient descent

\setcounter{tocdepth}{2}
\tableofcontents{}

\section{Introduction}
  
In various fields of signal processing, computer vision, and inverse problems, it is of interest to identify the location of sources from  traces of responses collected from sensors. For example, neural or seismic recordings can be modeled as the convolution of a pulse shape (i.e. a filter), corresponding to characteristics of neuron or earth wave propagation, with a spike train modeling time of activations (i.e. a sparse input) \cite{ekanadham2011recovery,donoho1981minimum,lou2006blind}. Thanks to the advances of sensing technologies, in many applications, one can make multiple observations that share the same filter, but actuated by diverse sparse inputs, either spatially or temporally. Examples include underwater communications \cite{amari1997multichannel, tian2017multichannel}, neuroscience \cite{gitelman2003modeling}, seismic imaging \cite{kaaresen1998multichannel,repetti2014euclid}, image deblurring \cite{zhang2014multi, zhang2013multi}, and so on. The goal of this paper is to identify the filter as well as the sparse inputs by leveraging multiple observations in an efficient manner, a problem termed as multi-channel sparse blind deconvolution (MSBD).

  Mathematically, we model each observation $\by_i \in \mathbb{R}^n$ as a convolution, between a filter $\bg \in\mathbb{R}^n$, and a sparse input, $\bx_i\in\mathbb{R}^{n}$:
\begin{equation}\label{mmv_bd}
\by_i =  \bg \circledast \bx_i = \mathcal{C}(\bg) \bx_i, \quad i=1,\ldots, p,
\end{equation}
where the total number of observations is given as $p $. Here, we consider circulant convolution, denoted as $\circledast$, whose operation is expressed equivalently via pre-multiplying a circulant matrix $ \mathcal{C}(\bg)$ to the input, defined as
\begin{equation}
  \mathcal{C}(\bg)= \begin{bmatrix}
g_1 & g_{n} & \cdots & g_2 \\
g_2 & g_1 & \cdots & g_3 \\
\vdots & \vdots &  \ddots &\vdots \\
g_{n} & g_{n-1} & \cdots & g_{1}
\end{bmatrix}.
\end{equation}
In practice, the circulant convolution is used in situations when the filter $\bg$ satisfies periodic boundary conditions \cite{cho2009fast, yang2009efficient}, or as an approximation of the linear convolution when the filter has compact support or decays fast \cite{strohmer2002four, li2018global}. It is particularly attractive in large-scale problems to accelerate computation by taking advantage of the fast Fourier transform \cite{cho2009fast, yang2009efficient}.

 \subsection{Nonconvex Optimization on the Sphere}
 
Our goal is to recover both the filter $\bg$ and sparse inputs $\{\bm{x}_i\}_{i=1}^p$ from the observations $\{ \by_i\}_{i=1}^p$. The problem is challenging due to the bilinear form of the observations with respect to the unknowns, as well as the sparsity constraint. A direct observation tells that the unknowns are not uniquely identifiable, since for any circulant shift $\mathcal{S}_k(\cdot)$ by $k$ entries (defined in Section~\ref{sec:notation}) and a non-zero scalar $\beta\neq 0$, we have
\begin{equation}
\by_i = \left(\beta \mathcal{S}_k (\bg) \right) \circledast \left(\beta^{-1}\mathcal{S}_{-k}(\bx_i) \right) , 
\end{equation}
for $k=1,\ldots, n-1$. Hence, we can only hope to recover $\bg$ and $\{\bm{x}_i\}_{i=1}^p$ accurately up to certain circulant shift and scaling factor.

In this paper, we focus on the case that $\mathcal{C}(\bm{g})$ is invertible, which is equivalent to requiring all Fourier coefficients of $\bm{g}$ are nonzero. This condition plays a critical role in guaranteeing the identifiability of the model as long as $p$ is large enough \cite{li2015unified}. Under this assumption, there exists a unique inverse filter, $\bg_{\mathrm{inv}}\in\mathbb{R}^n$, such that
\begin{equation}\label{eq:def_equalizer}
\mathcal{C}(\bg_{\mathrm{inv}}) \mathcal{C}(\bg) = \mathcal{C}(\bg)\mathcal{C}(\bg_{\mathrm{inv}}) = \bI.
\end{equation}
   This allows us to convert the bilinear form \eqref{mmv_bd} into a linear form, by multiplying $\mathcal{C}(\bg_{\mathrm{inv}}) $ on both sides:
\begin{align}\label{eq:reformulation}
 \mathcal{C}(\bg_{\mathrm{inv}}) \by_i  & = \mathcal{C}(\bg_{\mathrm{inv}}) \mathcal{C}(\bg) \bx_i \nonumber  = \bx_i, \quad i=1,\ldots, p.
\end{align}   
Consequently, we can equivalently aim to recover $\bg_{\mathrm{inv}}$ via exploiting the sparsity of the inputs $\{\bx_i\}_{i=1}^p$. An immediate thought is to seek a vector $\bh$ that minimizes the cardinality of $ \mathcal{C}(\bh) \by_i  = \mathcal{C}(\by_i) \bh$:
\begin{equation*}
\min_{\bh\in\mathbb{R}^n}\; \frac{1}{p}  \sum_{i=1}^p\left\|\mathcal{C}(\by_i) \bh  \right\|_0 ,
\end{equation*}
where $\|\cdot\|_0$ is the pseudo-$\ell_0$ norm that counts the cardinality of the nonzero entries of the input vector. However, this simple formulation is problematic for two obvious reasons: 
\begin{enumerate}[1)]
\item first, due to scaling ambiguity, a trivial solution is $\bh=\bm{0}$;
\item second, the cardinality minimization is computationally intractable. 
\end{enumerate}
The first issue can be addressed by adding a {\em spherical} constraint $\| \bh\|_2 =1$  to avoid scaling ambiguity. The second issue can be addressed by relaxing to a convex smooth surrogate that promotes sparsity. In this paper, we consider the function
\begin{equation}\label{eq:logcosh} 
\psi_{\mu}(z) = \mu\log\cosh(z/\mu),
\end{equation}
which serves as a convex surrogate of $\|\cdot\|_0$, where $\mu>0$ controls the smoothness of the surrogate. With slight abuse of notation, we assume $\psi_{\mu}(\bm{z}) = \sum_{i=1}^n \psi_{\mu}(z_i)$ is applied in an entry-wise manner, where $\bm{z}=[z_i]_{1\leq i\leq n}$. Putting them together, we arrive at the following optimization problem:
\begin{equation}\label{eq:opt_problem}
    \min_{\bh\in \mathbb{R}^n} \; f_{o}(\bh):= \frac{1}{p}\sum_{i=1}^p \psi_{\mu}(\mathcal{C}(\by_i) \bh)  \quad {\rm s.t.} \quad \norm{\bh}_2=1,
\end{equation}
which is a nonconvex optimization problem due to the sphere constraint. As we shall see later, while this approach works well when $\mathcal{C}(\bg)$ is an orthogonal matrix, further care needs to be taken when it is a general invertible matrix in order to guarantee a benign optimization geometry. Following \cite{li2018global, sun2017complete}, we introduce the following pre-conditioned optimization problem:
\begin{equation}\label{eq:opt_problem_preconditioned}
    \min_{\bh\in \mathbb{R}^n} \;
f(\bh)=\frac{1}{p}\sum_{i=1}^p \psi_{\mu}(\mathcal{C}(\by_i)\bR \bh)  \quad {\rm s.t.} \quad \norm{\bh}_2=1,
\end{equation}
where $\bR$ is a pre-conditioning matrix depending only on the observations $\{\by_i\}_{i=1}^p$ that we will formally introduce in Section~\ref{sec:main}.

 \begin{figure*}[t]
    \centering 
    \begin{tabular}{ccc}
  \hspace{-0.15in}    \includegraphics[width=0.33\textwidth]{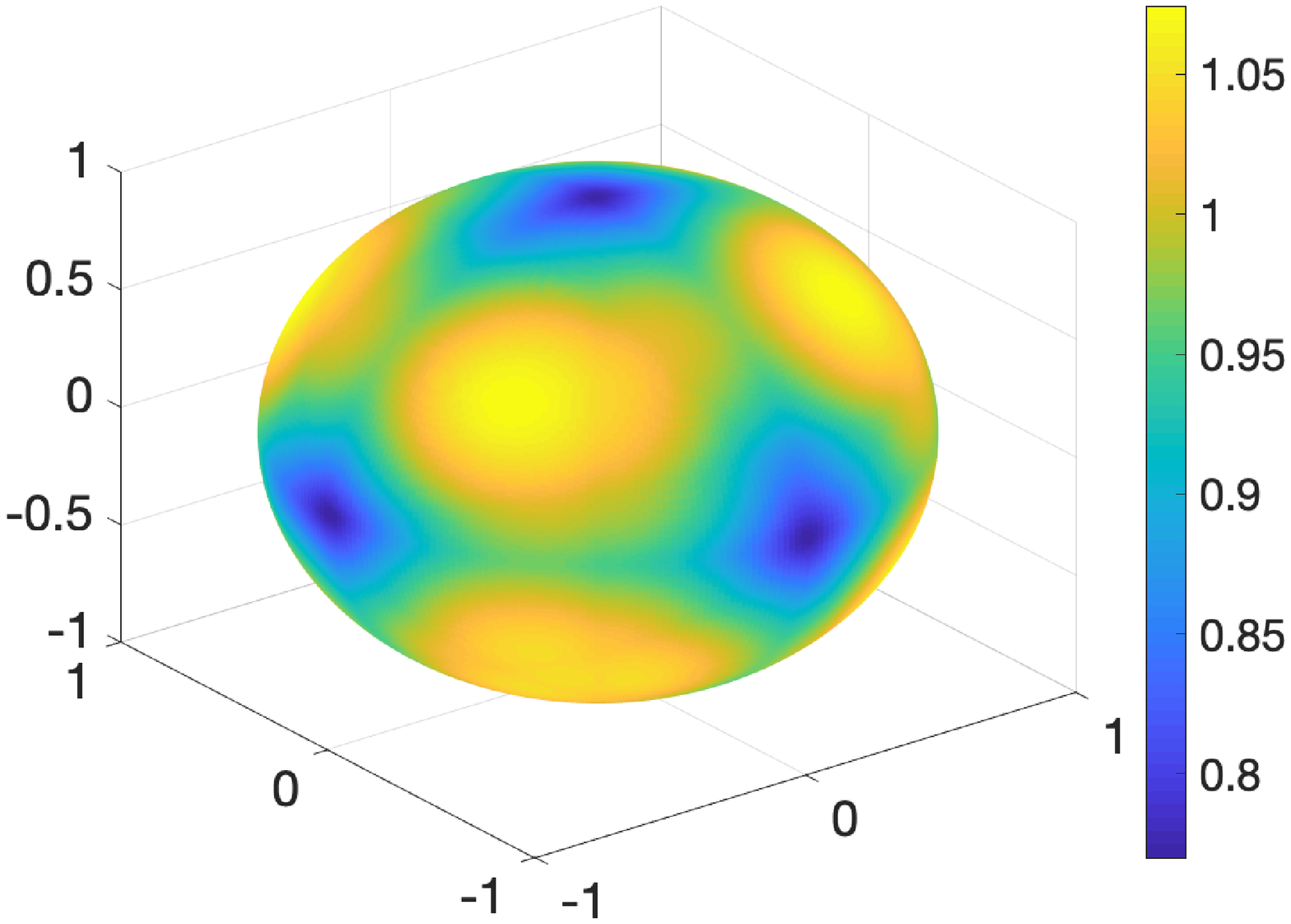}
& \hspace{-0.15in}\includegraphics[width=0.33\textwidth]{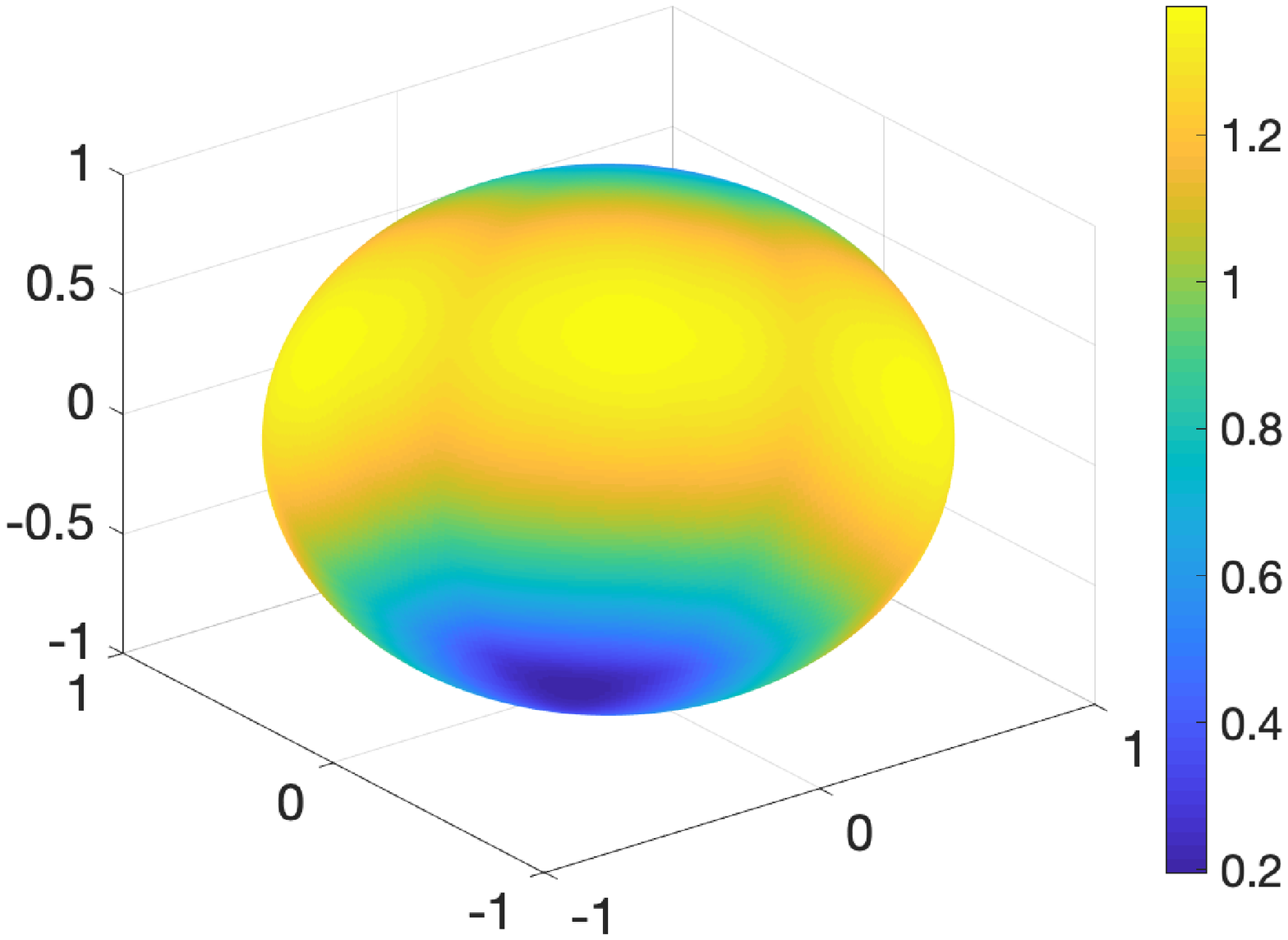}
& \hspace{-0.15in}   \includegraphics[width=0.33\textwidth]{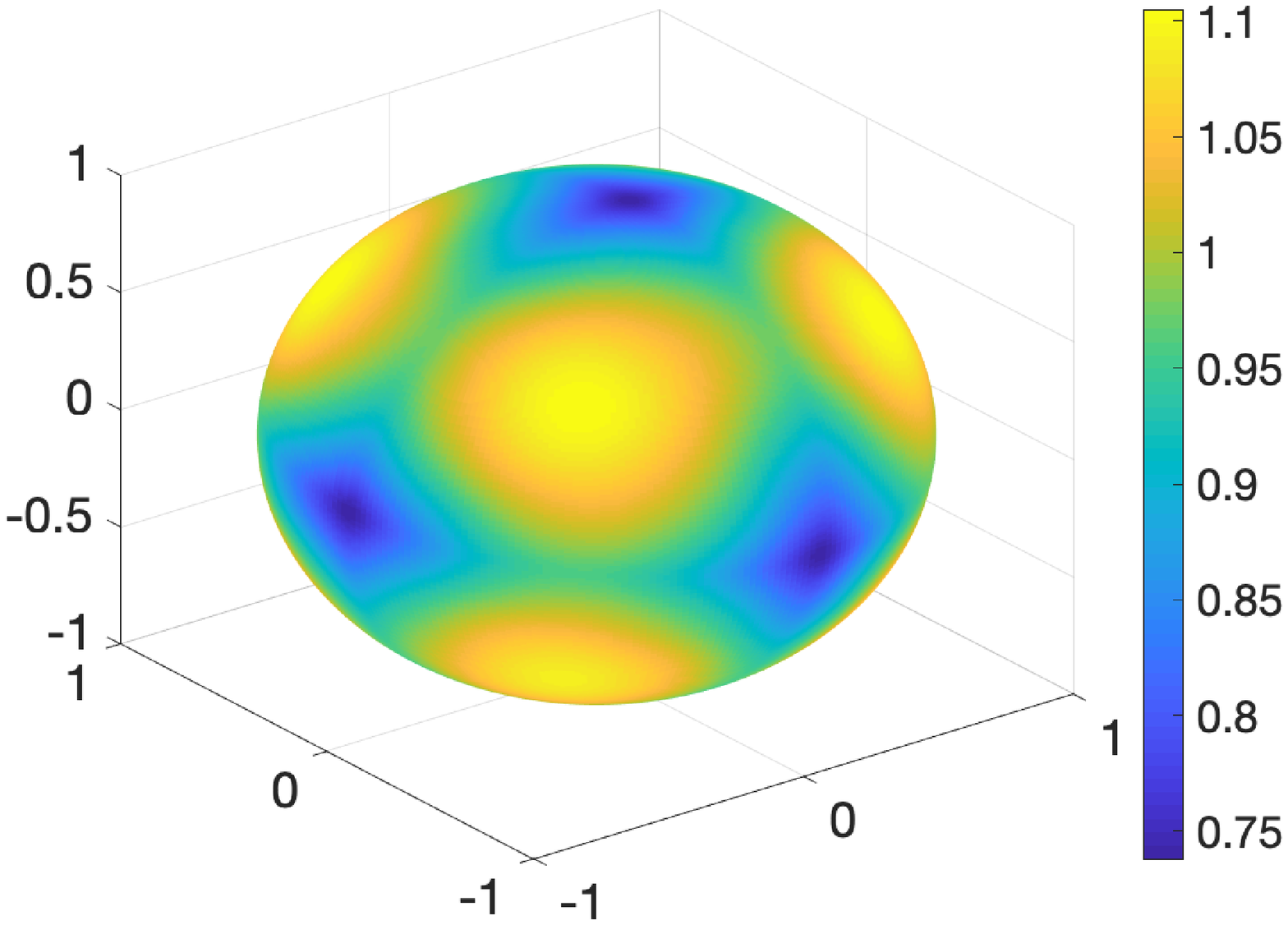}\\  
\hspace{-0.1in} (a) orthogonal filter  &\hspace{-0.2in}  (b) general filter & \hspace{-0.2in}  (c) general filter \\
 no pre-conditioning & no pre-conditioning &  with pre-conditioning
\end{tabular}
    \caption{An illustration of the landscape of the empirical loss function $f_o(\bh)$ or $f(\bh)$ with or without the pre-conditioning matrix $\bR$ in $\mathbb{R}^3$, where the sparse inputs are generated according to a Bernoulli-Gaussian model with $p=30$ observations and activation probability $\theta=0.3$. (a) orthogonal filter $\mathcal{C}(\bg) = \bI$, no pre-conditioning is applied; (b) a general filter, no pre-conditioning is applied; (c) the same general filter as (b) with pre-conditioning. }
    \label{fig:empirical_landscape}
\end{figure*}
 
\subsection{Optimization Geometry and Manifold Gradient Descent}\label{sec:intro_geometry_MGD}

Encouragingly, despite nonconvexity, under a suitable random model of the sparse inputs, the empirical loss functions exhibits benign geometric curvatures as long as the sample size $p$ is sufficiently large. As an illustration, Fig.~\ref{fig:empirical_landscape} shows the landscape of $f_o(\bh)$ and $f(\bh)$ when $n=3$ and $p=30$, and the sparse inputs $\{\bx_i\}_{i=1}^p$ follow the standard Bernoulli-Gaussian model (with an activation probability $\theta=0.3$, see Definition~\ref{def: BG}). When the filter is orthogonal, e.g. $\mathcal{C}(\bm{g})=\bI$, it can be seen from Fig.~\ref{fig:empirical_landscape} (a) that the function $f_o(\bh)$ in \eqref{eq:opt_problem} has benign geometry without pre-conditioning, where the local minimizers are approximately all shift and sign-flipped variants of the ground truth (i.e, the basis vectors), and are symmetrically distributed across the sphere. On the other end, for filters that are not orthogonal, the geometry of $f_o(\bh)$ in \eqref{eq:opt_problem} is less well-posed without pre-conditioning, as illustrated in Fig.~\ref{fig:empirical_landscape} (b). By introducing pre-conditioning, which intuitively stretches the loss surface to mirror the orthogonal case, the pre-conditioned loss function  $f(\bh)$ given in \eqref{eq:opt_problem_preconditioned} for the same non-orthogonal filter used in Fig.~\ref{fig:empirical_landscape} (b) is much easier to optimize over, as illustrated in Fig.~\ref{fig:empirical_landscape} (c). 

Motivated by this benign geometry, it is therefore natural to optimize $\bh$ over the sphere. One simple and low-complexity approach is to minimize $ f(\bh)$ over the sphere via (projected) manifold gradient descent (MGD), i.e. for $k=0,1,\ldots$
\begin{equation} \label{eq:subgrad_descent}
\bh^{(k+1)} : = \frac{\bh^{(k)}  - \eta \partial f(\bh^{(k)})}{\|\bh^{(k)} - \eta \partial f(\bh^{(k)})\|_2},
\end{equation} 
where $\eta$ is the step size, $\partial f(\bh)$ is the Riemannian manifold gradient with respect to $\bh$ (defined in Sec.~\ref{sec:mgd_guarantees}). Surprisingly, this simple approach works remarkably well even with random initializations for appropriately chosen step sizes. As an illustration, Fig.~\ref{fig:GD_example} depicts that MGD converges within a few number of iterations for the problem instance in Fig.~\ref{fig:empirical_landscape} (c). Based on such empirical success, our goal is to address the following question: {\em can we establish theoretical guarantees of MGD to recover the filter for MSBD?}

\begin{figure*}[htp]
    \centering  
 \includegraphics[width=0.48\textwidth]{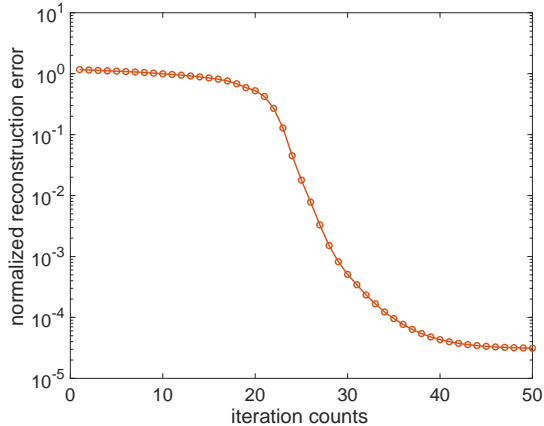}
    \caption{The normalized reconstruction error of MGD with respect to the number of iterations for  the problem instance in Fig.~\ref{fig:empirical_landscape} (c). }
    \label{fig:GD_example}
\end{figure*}

In this paper, we formally establish the benign geometry of the empirical loss function over the sphere, and prove that MGD, with a small number of random initializations, is guaranteed to recover the filter with high probability in polynomial time. Our result is stated informally below.

\begin{theorem}[Informal]
Assume the sparse inputs are generated using a Bernoulli-Gaussian model, where the activation probability $\theta\in (0,1/3)$. As long as the sample size is sufficiently large, i.e. $p= O(\mathrm{poly}(n))$, manifold gradient descent, initialized from at most $O(\log n)$ independently and uniformly selected points on the sphere, recovers the filter accurately with high probability, for properly chosen $\mu$, and step size $\eta_t$.
\end{theorem}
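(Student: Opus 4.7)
The plan is to reduce to an orthogonal-filter setting via the preconditioner $\bR$, analyze the population landscape of $f(\bh)$ under the Bernoulli-Gaussian model, transfer its benign geometry to the empirical loss by uniform concentration, and drive an MGD convergence argument region by region. Following \cite{li2018global, sun2017complete}, I would take $\bR$ essentially as $\bigl(\frac{1}{\theta p}\sum_{i=1}^p\mathcal{C}(\by_i)^\top\mathcal{C}(\by_i)\bigr)^{-1/2}$, whose population limit is $(\mathcal{C}(\bg)^\top\mathcal{C}(\bg))^{-1/2}$. After the change of variable that expresses $\bR\bh$ through an auxiliary $\bq$ on the sphere, the loss becomes (up to a controllable perturbation) $\frac{1}{p}\sum_i\psi_\mu(\mathcal{C}(\bx_i)\bq)$—exactly the orthogonal-filter case in Fig.~\ref{fig:empirical_landscape}(a)—with target minimizers $\pm\mathcal{S}_k(\be_1)$ for $k=0,\ldots,n-1$.

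For the population landscape in $\bq$-coordinates, I would partition the sphere based on $\max_k|\bq(k)|$ into three regions: a \emph{strong-convexity} basin around each $\pm\mathcal{S}_k(\be_1)$ where the Riemannian Hessian is uniformly positive definite on the tangent space; a \emph{negative-curvature} region near the saddles (equal-magnitude combinations of several shifts) where the Riemannian Hessian admits a strictly negative eigenvalue; and a \emph{large-gradient} region everywhere else. Under the Bernoulli-Gaussian model these are tractable via Gaussian integration applied to $\psi'_\mu=\tanh(\cdot/\mu)$ and $\psi''_\mu=\operatorname{sech}^2(\cdot/\mu)/\mu$, and the threshold $\theta<1/3$ enters as the sign condition guaranteeing that spiky $\bq$'s beat flat ones. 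Transfer to the empirical loss is then established by combining an $\varepsilon$-net on the sphere (log covering number $O(n\log(1/\varepsilon))$) with matrix Bernstein, using boundedness of $\tanh$ for sub-exponential tails; simultaneously $\bR$ concentrates around its population limit via matrix Bernstein applied to $\mathcal{C}(\by_i)^\top\mathcal{C}(\by_i)$. A sample complexity $p=O(\mathrm{poly}(n))$ will make all deviations negligible relative to the population curvatures, so the three-region decomposition carries over verbatim.

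Given this empirical geometry, MGD is analyzed region by region: in the large-gradient region a sufficiently small step size (tuned against the Riemannian smoothness, itself controlled by $\psi''_\mu\leq 1/\mu$) yields the descent inequality $f(\bh^{(k+1)})\leq f(\bh^{(k)})-c\eta\|\partial f(\bh^{(k)})\|_2^2$; in a strong-convexity basin, iterates contract geometrically to a minimizer; and the negative-curvature region is exited in $O(1)$ iterations because MGD from a random initialization avoids, with probability one, the measure-zero stable manifolds of the saddles. For initialization, a uniformly random $\bh_0$ on the sphere, pulled back to $\bq$-coordinates, has $\max_k|\bq_0(k)|=\Omega(\sqrt{\log n/n})$ with at least a dimension-independent constant probability $c(\theta)>0$, landing outside the saddle region of some signed shift; running $O(\log n)$ independent random starts then succeeds with probability $1-n^{-\Omega(1)}$.

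The hardest part will be the uniform concentration step at the targeted polynomial sample complexity: the Hessian smoothness scales like $1/\mu$, so $\mu$ must be tuned carefully to balance the $\ell_1$-approximation error of $\psi_\mu$ against the concentration rate, and the data dependence between $\bR$ and the $\by_i$'s appearing in the empirical loss precludes a direct product-of-independent argument—a leave-one-out decoupling, or a two-step argument that first replaces $\bR$ by its population limit uniformly in $\bh$, seems necessary. A secondary difficulty is controlling the dependence of the final sample complexity on the condition number $\kappa(\mathcal{C}(\bg))$, which must be bounded by a fixed polynomial in $n$ under the assumption that the Fourier coefficients of $\bg$ are uniformly bounded away from zero; otherwise the advertised $\mathrm{poly}(n)$ could secretly hide a factor $\mathrm{poly}(\kappa)$.
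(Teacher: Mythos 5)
Your overall pipeline (precondition with $\bR$, reduce to the orthogonal case, population landscape $\to$ uniform concentration $\to$ region-by-region convergence, $O(\log n)$ random restarts) matches the paper's skeleton, but there is one genuine gap and one structural divergence that together would prevent your argument from closing. The gap is the saddle step: you claim the negative-curvature region "is exited in $O(1)$ iterations because MGD from a random initialization avoids, with probability one, the measure-zero stable manifolds of the saddles." The measure-zero avoidance result is purely asymptotic and gives no iteration bound; vanilla (unperturbed) gradient descent is known to be able to spend super-polynomial time near strict saddles even from random initialization, so this sentence does not yield the advertised polynomial-time guarantee. Escaping saddles provably requires either injected perturbations (as in \cite{li2018global}) or a quantitative negative-curvature descent argument with its own step-size and concentration requirements, neither of which you supply.

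The structural divergence is that you propose a \emph{global} three-region decomposition of the sphere (strong convexity, negative curvature near saddles, large gradient elsewhere). The paper deliberately avoids this: it only establishes geometry on the union of $2n$ local sets $\cS_{\xi_0}^{(i\pm)}$ around the signed shifted minimizers, splits each into a large-directional-gradient annulus $\mathcal{Q}_1$ and a strongly convex core $\mathcal{Q}_2$, and then proves an \emph{implicit staying} lemma (via the directional-gradient inequality $\partial f(\bh)^\top(\be_k/h_k-\be_n/h_n)\geq c\,\xi_0\theta$) showing the MGD iterates never leave the set they start in. This removes saddles from the analysis entirely, which is precisely what buys the improved $O(n^{4.5})$ sample complexity over the $O(n^9)$ of the global-landscape route you are implicitly following; controlling gradients and Hessians uniformly over the whole sphere, including the saddle region, forces the larger sample size. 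Your random-initialization observation (a uniform point has a dominant coordinate with constant probability) is essentially the paper's Lemma on $\cS_{1/(4\log n)}^{(i\pm)}$ and is the right ingredient, but to make it work you must additionally prove that the iterates remain in that favorable set for all time --- a step absent from your proposal --- rather than relying on a saddle-escape argument. Your closing remarks about the $\bR$-versus-$\{\by_i\}$ dependence and the hidden $\kappa$ factor are well taken: the paper handles the former by bounding $\|\bDelta\|=\|\cC(\bg)\bR\bU^{-1}-\bI\|$ and propagating it as a deterministic perturbation of the gradient and Hessian, and the latter does appear explicitly as $\kappa^8$ in the sample complexity.
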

 
Our theorem provides justifications to the empirical success of MGD with random initializations. This result is achieved through an integrated analysis of geometry and optimization. Namely, we identify a union of subsets, corresponding to neighborhoods of equivalent global minimizers, and show that this region has large gradients pointing towards the direction of minimizers. Consequently, if the iterates of MGD lie in this region, and never jump out of it during its execution, we can guarantee that MGD converges to the global minimizers. Luckily, this region is also large enough, so that the probability of a random initialization selected uniformly over the sphere has at least a constant probability falling into the region. By independently initializing a few times, it is guaranteed with high probability at least one of the initializations successfully land into the region of interest and return a faithful estimate of the filter.

\subsection{Paper Organization and Notation}\label{sec:notation}

The rest of this paper is organized as follows. Section~\ref{sec:main} presents the problem formulation and main results, with comparisons to existing approaches. Section~\ref{sec:analysis} outlines the analysis framework and sketches the proof. Section~\ref{sec:numerical} provides numerical experiments on both synthetic and real data with comparisons to existing algorithms. Section~\ref{sec:literature} further discusses the related literature and we conclude in Section~\ref{sec:discussions} with future directions. 

Throughout the paper, we use boldface letters to represent vectors and matrices. Let $\bx^\top$, $\bx^{\mathsf{H}}$ denote the transpose and conjugate transpose of $\bx$, respectively.
 Let $[n]$ denote the index set $\{1,2,\cdots,n\}$. For a vector $\bx\in \mathbb{R}^n$, let $x_{j}$ denote its $j$th element. Let $\bx_{\mathcal{D}}$, $\mathcal{D}\subseteq [n]$ denote the length-$\abs{\mathcal{D}}$ vector composed of the elements in the index set $\mathcal{D}$ of $\bx$, and let $\bx_{\backslash \mathcal{D}}$ denote the vector obtained by removing the elements of $\bx$ in the index set $\mathcal{D}$. For example, $\bx_{1:j}$ denotes the length-$j$ vector composed of the first $j$ entries of $\bx$, i.e., the vector $[x_1,x_2,\cdots, x_j]^\top$, and $\bx_{\backslash \{i\}}$ denotes the length-$(n-1)$ vector composed of all entries of $\bx$ except the $i$th one, i.e. the vector $\bx_{1:i-1,i+1:n}$. If an index $j \notin [n]$ for an $n$-dimensional vector, then the actual index is computed as in the modulo $n$ sense. $\mathcal{S}_j$ denotes a circular shift by $j$ positions, i.e., $[\mathcal{S}_j(\bx)]_{k}=x_{k-j}$ for $j,k\in [n]$. Let $\norm{\cdot}_p$, $p\in[1,\infty]$ represent the $\ell_p$ norm of a vector, and $\|\cdot\|$, $\| \cdot\|_{\mathrm{F}}$ denote the operator norm and the Frobenius norm of a matrix, respectively. Let $\sigma_i(\bA)$ be the $i$th largest eigenvalue of a matrix $\bA$.Let $\odot$ denote the Hadamard product for two vector $\bx,\by\in\mathbb{R}^n$ of the same dimension. Let $\bI$ denote an identity matrix, and $ \be_i\in \mathbb{R}^n,i\in[n]$ be the $i$th standard basis vector. If $\bA\preceq \bB$, then $\bB-\bA$ is positive semidefinite. Last, we use $c_1,c_2, C, \ldots$ to denote universal constants whose values may change from line to line.

\section{Main Results}
\label{sec:main}
 
To begin, we state a few key assumptions. In this paper, we assume that the sparse inputs are generated according to the well-known Bernoulli-Gaussian model, defined below.  

\begin{definition}[Bernoulli-Gaussian model \cite{spielman2012exact}] \label{def: BG}
The inputs $\bx_i$, $i=1,\cdots,p$, are said to satisfy the Bernoulli-Gaussian model with parameter $\theta\in (0,1)$, i.e. $\bx_i\sim_{iid} \mathrm{BG}(\theta)$, if $\bx_i = \bOmega_i \odot \bz_i$, where $\bOmega_i$ is an i.i.d. Bernoulli vector with parameter $\theta$, and $\bz_i$ is a random vector with i.i.d. random Gaussian variables drawn from $\mathcal{N}(0,1)$.
\end{definition}

Furthermore, the geometry of the loss function $f(\bh)$ turns out to be highly related to the condition number of the matrix $\mathcal{C}(\bg)$, which is defined below.

\begin{definition}[Condition number]
Let $\kappa$ be the condition number of $\mathcal{C}(\bg)$, i.e. $\kappa = \sigma_1(\mathcal{C}(\bg)) / \sigma_n(\mathcal{C}(\bg) )$.
\end{definition}

When $\mathcal{C}(\bg)$ is orthogonal, we have $\kappa=1$. Let the discrete Fourier transform (DFT) of $\bg$ be $\hat{\bg}= \bF \bg$, then $\kappa$ is equivalent to
 the ratio of the largest and the smallest absolute values of $\hat{\bg}$, i.e. $\kappa := |\hat{\bg} |_{\max} / |\hat{\bg} |_{\min}$. Therefore, $\kappa$ measures the flatness of the spectrum $\hat{\bg}$, which plays a similar role as the coherence introduced in early works of blind deconvolution with a single snapshot \cite{chi2016guaranteed,ahmed2014blind}. In addition, since $\bg_{\mathrm{inv}}$ can only be identified up to scaling and shift ambiguities, without loss of generality, we assume $\|\bg_{\mathrm{inv}}\|_2=1$.

\subsection{Geometry of the Empirical Loss} \label{sec:results_orghogonal}
 
We start by describing the geometry of $f_o(\bh)$ when $\mathcal{C}(\bg)$ is an orthonormal matrix, where pre-conditioning is not needed. Without loss of generality, we can assume $\mathcal{C}(\bg)=\bI$,\footnote{Denote $ \widetilde{\bh} =\mathcal{C}(\bg) \bh $, we have $ \|\widetilde{\bh}\|_2=\norm{\mathcal{C}(\bg) \bh}_2=1$ due to the orthonormality of $\mathcal{C}(\bg)$. Rewriting the loss function with respect to $\widetilde{\bh}$ confirms this assertion. This does not change the geometry of the objective function that is of primary interest.}  which  corresponds to the ground truth $\bg_{\inv}= \bm{e}_1$ and $\by_i = \bx_i$. Therefore, the loss function $f_o(\bh)$ in \eqref{eq:opt_problem} can be equivalently reformulated as
\begin{equation} \label{eq:equivalent_formulation}
	 \min_{\bh\in \mathbb{R}^n} \;
   f_o(\bh)=\frac{1}{p}\sum_{i=1}^p \psi_{\mu}(\mathcal{C}(\bx_i) \bh) \quad {\rm s.t.} \quad \norm{\bh}_2=1.
\end{equation}
 Our geometric theorem characterizes benign properties of the curvatures in the local neighborhood of $\{\pm \be_i\}_{i=1}^n$, shifted and sign-flipped copies of the ground truth. Inspired by \cite{bai2019subgradient,gilboa2019efficient}, we introduce $2n$ subsets,
\begin{align}\label{eq:subsets}
    \mathcal{S}_{\xi}^{(i\pm)} & =\left \{\bh:h_i \gtrless 0, \frac{h_i^2}{\norm{\bh_{\backslash\{i\}}}_{\infty}^2}\geqslant 1+\xi\right \},   \quad i\in[n],
\end{align}
where $\xi\in[0,\infty)$. Clearly, $\be_i \in  \mathcal{S}_{\xi}^{(i+ )}$ and $ - \be_i \in  \mathcal{S}_{\xi}^{(i- )}$, for all $i\in[n]$. The quantity $\xi$  captures the size of the local neighborhood --- the smaller $\xi$ is, the larger the size of $\mathcal{S}_{\xi}^{(i\pm)}$.

Due to symmetry, we focus on describing the geometry of $f_o(\bh)$ in one of such subsets, say $\mathcal{S}_{\xi}^{(n+)}$. For convenience, we introduce a reparametrization trick \cite{sun2017complete}. Define $\bw=\bh_{1:n-1} \in \mathbb{B}^{n-1}$, corresponding to the first $(n-1)$ entries of $\bh$, where $\mathbb{B}^{n-1}:=\{\bw \in \mathbb{R}^{n-1}: \|\bw\|_2\leq 1\}$ is the unit ball in $\mathbb{R}^{n-1}$. Given $\bw$, the vector $\bh$ can be written as
 \begin{equation} \label{eq:reparam}
        \bh(\bw)=\sbra{\bw,\sqrt{1-\norm{\bw}_2^2}}, \quad \forall \bw\in \mathbb{B}^{n-1}.
\end{equation}
Therefore, $\bw=\bm{0}$ is equivalent to $\bh(\bm{0})=\be_n$, which is the shifted ground truth within $\mathcal{S}_{\xi}^{(n+)}$. The loss function $f_o(\bh)$ can be rewritten with respect to $\bw$ as
\begin{equation}\label{eq:phio_w}
   \phi_o(\bw)= f_o(\bh(\bw))=\frac{1}{p}\sum_{i=1}^p \psi_{\mu}(\mathcal{C}(\bx_i) \bh(\bw)).
\end{equation} 
In addition, a short calculation reveals that,\footnote{When $\bh(\bw)\in \mathcal{S}_{\xi}^{(n+)}$, we have $h_n^2 \geq (1+\xi) \| \bh_{\backslash\{i\}} \|_{\infty}^2$, which leads to 
$1 = \|\bh\|_2^2 \leq h_n^2 + (n-1) \| \bh_{\backslash\{i\}} \|_{\infty}^2 \leq \left( 1+ \frac{n-1}{1+\xi}\right) h_n^2 = \left( 1+ \frac{n-1}{1+\xi}\right)  ( 1- \|\bw\|_2^2)$.}
\begin{equation}\label{eq:range_w} 
\|\bw\|_2^2 \leq  \frac{n-1}{n+\xi} \quad \mbox{whenever} \quad \bh(\bw)\in \mathcal{S}_{\xi}^{(n+)}. 
\end{equation}

The theorem below states the geometry of $\phi_o(\bw)$ in the neighborhood $\bh(\bw)\in\mathcal{S}_{\xi_0}^{(n+)}$ for $\xi_0 \in (0,1)$. In particular, we split the region of interest into two subregions:
\begin{equation}\label{equ:subregions}
        \mathcal{Q}_1 := \left \{\bw: \frac{\mu}{4\sqrt{2}} \leq \norm{\bw}_2\leq \sqrt{\frac{n-1}{n+\xi_0}} \right \}, \quad
        \mathcal{Q}_2 := \left \{\bw: \norm{\bw}_2\leq \frac{\mu}{4\sqrt{2}} \right \} .
\end{equation}
\begin{theorem}[Geometry in the orthogonal case] \label{thm:orthogonal_geometry}
Without loss of generality, suppose $\mathcal{C}(\bg)=\bI$. For any $\xi_0\in (0,1)$, $\theta\in (0,\frac{1}{3})$, there exist constants $c_1,c_2,c_3,c_4, c_5, C$ such that when 
$\mu< c_1 \min\{\theta,    \xi_0^{1/6} n^{-3/4} \}$ and
\begin{equation}\label{eq:sample_size_orthogonal} 
 p\geq \frac{Cn^4  }{\theta^2 \xi_0^2} \log n \log\sbra{ \frac{ n^3 \log p}{\mu\theta \xi_0} }, 
\end{equation} 
the following holds with probability at least $1-c_3 p^{-7} -\exp\sbra{-c_4 n}$ for $\bh(\bw) \in\mathcal{S}_{\xi_0}^{(n+)}$:
\begin{subequations} \label{eq:orthogonal_geometry}
	\begin{align}
	 \mbox{(large directional gradient)}\quad	\frac{\bw^\top\nabla \phi_o(\bw)}{\norm{\bw}_2 }&  \geq c_2 \xi_0 \theta, \quad \mbox{if}\; \bw \in  \mathcal{Q}_1,  \label{res:lg_geometry}\\
	 \mbox{(strong convexity)} \quad \nabla^2 \phi_o(\bw) &\succeq \frac{c_2 n\theta}{\mu} \bI, \quad \mbox{if}\;   \bw \in  \mathcal{Q}_2.\label{res:sc_geometry}
	\end{align}
\end{subequations}	
Furthermore, the function $\phi_o(\bw)$ has exactly one unique local minimizer $\bw_o^{\star}$ near $\bm{0}$, such that
	\begin{equation}\label{res:nearbasis_geometry}
		\norm{\bw_o^{\star}- \bm{0}}_2\leq  \frac{c_5 \mu}{\theta} \sqrt{\frac{\log^2 p}{p}}. 
	\end{equation}
 \end{theorem}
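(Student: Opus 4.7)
The plan is to follow the standard two-stage recipe for geometric landscape analysis on the sphere: first establish the required directional-gradient and strong-convexity bounds for the population loss $\mathbb{E}[\phi_o(\bw)]$, and then transfer them to the empirical loss $\phi_o(\bw)$ via a uniform concentration argument over the compact reparametrized domain. The chain rule applied to $\bh(\bw) = (\bw, \sqrt{1-\|\bw\|_2^2})$ expresses $\nabla_\bw \phi_o$ and $\nabla_\bw^2 \phi_o$ in terms of $\tanh(\mathcal{C}(\bx_i)\bh(\bw)/\mu)$ and $\mathrm{sech}^2(\mathcal{C}(\bx_i)\bh(\bw)/\mu)/\mu$ respectively, and the Bernoulli--Gaussian decomposition $\bx_i = \bOmega_i \odot \bz_i$ makes the resulting expectations tractable by conditioning first on the support $\bOmega_i$ and then carrying out Gaussian integrals against a smooth bounded integrand.

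With the expectations in hand, the population geometry follows from two observations. For strong convexity on $\mathcal{Q}_2$, the Hessian at $\bw = \bm{0}$ (i.e.\ $\bh = \be_n$) is dominated by $\tfrac{1}{\mu}\mathbb{E}[\mathcal{C}(\bx)^\top \mathrm{diag}(\mathrm{sech}^2(\mathcal{C}(\bx)\be_n/\mu)) \mathcal{C}(\bx)]$; isolating the contribution of the BG coordinate aligned with $\be_n$ after the circulant action shows this matrix is $\approx (n\theta/\mu)\bI$ on the tangent plane $\{w_n = 0\}$, and Lipschitz continuity of $\mathrm{sech}^2$ extends the bound uniformly throughout $\mathcal{Q}_2$. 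For the directional gradient on $\mathcal{Q}_1$, the inequality $\bw^\top \nabla \mathbb{E}[\phi_o(\bw)] \gtrsim \xi_0 \theta \|\bw\|_2$ follows from the fact that the dominance of $h_n$ (measured by $\xi_0$) forces $\tanh$ to respond monotonically to displacements along $\bw$ in the ``signal'' coordinate, while the subdominant coordinates contribute only symmetric fluctuations that average out under the BG model.

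The concentration step lifts these population bounds to the empirical loss. After a mild truncation $\|\bx_i\|_\infty \lesssim \sqrt{\log p}$, which holds with probability $1 - O(p^{-7})$, each per-sample gradient and Hessian is bounded and sub-exponential, so an $\epsilon$-net over $\{\bw : \|\bw\|_2 \le \sqrt{(n-1)/(n+\xi_0)}\}$ combined with vector/matrix Bernstein inequalities and Lipschitz continuity of $\tanh$ and its derivatives delivers uniform deviations of the order $\sqrt{\mathrm{poly}(n) \log n \log p / p}$. Matching these deviations against the population floors $\xi_0 \theta$ and $n\theta/\mu$ produces the sample complexity \eqref{eq:sample_size_orthogonal}. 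The unique minimizer inside $\mathcal{Q}_2$ then exists and is unique by strong convexity, and its location bound $\|\bw_o^\star\|_2 \lesssim (\mu/\theta)\sqrt{\log^2 p / p}$ is obtained by dividing the empirical gradient concentration at $\bw=\bm{0}$ (whose expectation vanishes by the sign symmetry of the Gaussian factor) by the strong-convexity modulus. The main obstacle I anticipate is the uniform Hessian concentration: balancing the $1/\mu$ amplification inherent in $\psi_\mu''$, the polynomial factors arising from the circulant action of $\mathcal{C}(\bx_i)$, and the $\epsilon$-net resolution required to preserve the $n\theta/\mu$ curvature floor is delicate, and this interaction is what ultimately drives the $n^4/(\theta^2 \xi_0^2)$ scaling in the sample-complexity bound.
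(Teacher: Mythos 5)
Your proposal matches the paper's actual proof structure step for step: population geometry for $\mathbb{E}[\phi_o]$ (directional-gradient lower bound on $\mathcal{Q}_1$ and Hessian floor on $\mathcal{Q}_2$, reduced to one-dimensional Gaussian integrals by conditioning on the Bernoulli support; the paper in fact inherits the $\mathcal{Q}_2$ and near-origin $\mathcal{Q}_1$ pieces from prior work of Sun et al. and only does new work on the outer shell $\mathcal{R}_1$), then pointwise Bernstein concentration for the directional gradient and the Hessian, and finally an $\epsilon$-net covering argument with Lipschitz control under the truncation event $\|\bX\|_\infty \lesssim \sqrt{\log(np)}$, which indeed drives the $n^4/(\theta^2\xi_0^2)$ scaling. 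The bound on $\bw_o^\star$ is obtained exactly as you describe, by dividing a Bernstein bound on $\|\nabla\phi_o(\bm{0})\|_2$ (whose mean vanishes by symmetry) through the strong-convexity modulus.
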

Theorem \ref{thm:orthogonal_geometry} has the following implications when $\bh(\bw) \in\mathcal{S}_{\xi_0}^{(n+)}$, as long as the sample size $p$ is sufficiently large and satisfies \eqref{eq:sample_size_orthogonal}:
\begin{itemize}
 \item The function $\phi_o(\bw)$ either has a large gradient when $\|\bw\|_2$ is large (cf. \eqref{res:lg_geometry}), or is strongly convex when $\|\bw\|_2$ is small (cf. \eqref{res:sc_geometry}), indicating the geometry is rather benign and suitable for optimization using first-order methods such as MGD;
 \item There are no spurious local minima, and the unique local optimizer is close to the ground truth according to \eqref{res:nearbasis_geometry} with an error decays at the rate  $O\sbra{ \frac{\mu}{\theta} \sqrt{\frac{\log^2 p }{ p}}}$ as the sample size $p$ increases.
 
 \end{itemize}
Theorem \ref{thm:orthogonal_geometry} also suggests that a larger sample size is necessary to guarantee a benign geometry when the subset $\mathcal{S}_{\xi_0}^{(i\pm )}$ gets larger -- with the decrease of $\xi_0$. By a simple union bound, we can ensure a similar geometry applies to all $2n$ subsets $\mathcal{S}_{\xi_0}^{(i\pm)}$ defined in \eqref{eq:subsets}.

\paragraph{Extension to the general case.} To extend the geometry in Theorem~\ref{thm:orthogonal_geometry} to the general case when $\mathcal{C}(\bg)$ is invertible, we adopt the trick in \cite{li2018global, sun2017complete} and introduce the pre-conditioning matrix $\bR$:
 \begin{equation}\label{eq:preconditioning}
 	\bR=\mbra{\frac{1}{\theta np}\sum_{i=1}^p \mathcal{C}(\by_i)^\top\mathcal{C}(\by_i)}^{-1/2}.
 \end{equation}
The main purpose of the pre-conditioning is to convert the loss function to one similar to the orthogonal case studied above. Recognizing that $\mbE{\frac{1}{\theta np} \sum_{i =1}^p \mathcal{C}(\bx_i)^\top \mathcal{C}(\bx_i) } = \bI$, we have $\bR \approx \sbra{\cC(\bg)^\top \cC(\bg)}^{-1/2}$ asymptotically as $p$ increases. Plugging $\mathcal{C}(\by_i)=\mathcal{C}(\bx_i)\mathcal{C}(\bg)$ and $\bR \approx \sbra{\cC(\bg)^\top \cC(\bg)}^{-1/2}$ into the loss function of \eqref{eq:opt_problem_preconditioned}, we have
\begin{equation} \label{eq:approx_loss}
f(\bh) \approx \frac{1}{p}\sum_{i=1}^p \psi_{\mu}(\mathcal{C}(\bx_i)\bU \bh)  ,
\end{equation}
where $\bU$ is a circulant orthonormal matrix given by  
\begin{equation} \label{eq:def_U}
	\bU := \cC(\bg)\sbra{\cC(\bg)^\top \cC(\bg)}^{-\frac{1}{2}}.
\end{equation}
By the rotation invariance of the loss function over the sphere with respect to the orthonormal transform by $\bU$, \eqref{eq:approx_loss} is equivalent to the one studied in the orthogonal case, thus justifying our choice of the pre-conditioning matrix. Returning to the original loss function without approximating $\bR$ by its population counterpart, we can repeat the same argument performed in \eqref{eq:equivalent_formulation} and rewrite a rotationed version of (\ref{eq:opt_problem_preconditioned}) as
\begin{equation} \label{equ:equvalent_f2}
	 \min_{\bh\in \mathbb{R}^n} \;
   f(\bh)=\frac{1}{p}\sum_{i=1}^p \psi_{\mu}(\mathcal{C}(\bx_i) \cC(\bg)\bR\bU^{\top} \bh) \quad {\rm s.t} \quad \norm{\bh}_2=1,
\end{equation}
where the shifted and sign-flipped ground truth has been rotated to almost $\{\pm \be_i\}_{i=1}^n$, which is the same as the orthogonal case. The theorem below suggests that under the same reparameterization $\bh=\bh(\bw)$ in \eqref{eq:reparam}, a similar geometry as Theorem~\ref{thm:orthogonal_geometry} can be guaranteed for $\phi(\bw) = f(\bh(\bw))$.

\begin{theorem}[Geometry in the general case] \label{theo:general_geometry}
Suppose $\mathcal{C}(\bg)$ is invertible with condition number $\kappa$. For any $\xi_0\in (0,1)$, $\theta\in (0,\frac{1}{3})$, there exist constants $c_1,c_2,c_3,c_4, C$ such that when 
$\mu< c_1 \min\{\theta,    \xi_0^{1/6} n^{-3/4} \}$ and  
\begin{equation}\label{eq:sample_size_general_1}
p\geq C \frac{\kappa^8 n^3 \log^4 p \log^2 n}{\theta^4 \mu^2 \xi_0^2},
\end{equation}  
the geometry \eqref{eq:orthogonal_geometry} holds for $\phi(\bw)$ with probability at least $1-c_3 p^{-7} -\exp\sbra{-c_4 n}$ for $\bh(\bw) \in\mathcal{S}_{\xi_0}^{(n+)}$. In addition, the function $\phi(\bw)$ has exactly one unique local minimizer $\bw^{\star}$ near $\bm{0}$, such that
 $$  \norm{\bw^{\star} - \bm{0} }_2  \leq  \frac{c_2 \kappa^4}{\theta^2} \sqrt{\frac{n\log^3 p \log^2 n}{p}} .$$
\end{theorem}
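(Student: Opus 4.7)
The plan is to reduce the general case to the orthogonal case handled in Theorem~\ref{thm:orthogonal_geometry}, by viewing the loss function in \eqref{equ:equvalent_f2} as a perturbation of the orthogonal loss \eqref{eq:phio_w}, with the perturbation size governed by how well the empirical pre-conditioner $\bR$ approximates its population counterpart $(\cC(\bg)^\top \cC(\bg))^{-1/2}$. Everything then reduces to (i) a matrix-concentration bound showing this approximation is tight and (ii) a smoothness argument showing that a tight approximation preserves the benign geometry.

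\textbf{Step 1: Decomposition as a perturbation of the orthogonal loss.} Define
$$\bE := \cC(\bg)\bR\bU^\top - \bI,$$
so that the rewritten objective in \eqref{equ:equvalent_f2} becomes $f(\bh)=\frac{1}{p}\sum_i \psi_\mu(\cC(\bx_i)(\bI+\bE)\bh)$. Under the reparametrization $\bh=\bh(\bw)$, write $\phi(\bw)=\phi_o(\bw)+\Delta(\bw)$, where $\phi_o$ is the orthogonal-case loss already controlled by Theorem~\ref{thm:orthogonal_geometry}. Using the uniform bounds $|\psi_\mu'|\le 1$ and $0\le \psi_\mu''\le 1/\mu$, a first-order expansion yields pointwise estimates of $\Delta(\bw)$, $\nabla \Delta(\bw)$, and $\nabla^2 \Delta(\bw)$ that are controlled by $\|\bE\|$ times a polynomial in $\|\cC(\bx_i)\|$ and $\sqrt{n}$. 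Standard concentration inequalities for circulant Bernoulli--Gaussian matrices bound $\max_i \|\cC(\bx_i)\|$ by $O(\sqrt{n\log p})$ with the stated probability.

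\textbf{Step 2: Controlling $\|\bE\|$ via concentration.} Since $\cC(\by_i)=\cC(\bx_i)\cC(\bg)$, the definition \eqref{eq:preconditioning} gives $\bR^{-2}=\cC(\bg)^\top \bM \cC(\bg)$ with $\bM := \frac{1}{\theta n p}\sum_{i=1}^p \cC(\bx_i)^\top \cC(\bx_i)$. Because $\mbE{\bM}=\bI$ and the rows of each $\cC(\bx_i)$ are shifts of a single BG vector, a truncation plus matrix Bernstein argument (analogous to \cite{li2018global,sun2017complete}) yields
$$\|\bM-\bI\|\;\lesssim\; \sqrt{\frac{n\log^{c} p}{\theta p}}$$
with probability at least $1-p^{-7}-e^{-c n}$. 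Applying a matrix square-root perturbation bound to $\bR=[\cC(\bg)^\top \bM \cC(\bg)]^{-1/2}$ introduces powers of $\kappa$ through $\|(\cC(\bg)^\top \cC(\bg))^{-1}\|=\sigma_n(\cC(\bg))^{-2}$, producing an estimate of the form $\|\bE\|\lesssim \kappa^{\alpha}\|\bM-\bI\|$ for a small power $\alpha$. Substituting \eqref{eq:sample_size_general_1} then drives $\|\bE\|$ below the tolerance $\lesssim \theta\mu\xi_0/(\kappa^{\beta} n^{1/2}\log n)$ that Step~3 will require.

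\textbf{Step 3: Transferring geometry from $\phi_o$ to $\phi$.} Combining Step~1 with the bound from Step~2, in the large-gradient region $\cQ_1$ we obtain
$$\Big|\tfrac{\bw^\top \nabla \Delta(\bw)}{\|\bw\|_2}\Big|\;\lesssim\; \|\bE\|\cdot \sqrt{n}\log p,$$
which is dominated by the margin $c_2\xi_0\theta$ in \eqref{res:lg_geometry}; in the strong-convexity region $\cQ_2$ we obtain $\|\nabla^2 \Delta(\bw)\|\lesssim \|\bE\|\cdot n\log p/\mu$, which is dominated by $c_2 n\theta/\mu$ in \eqref{res:sc_geometry}. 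Hence both qualitative geometric conclusions are inherited by $\phi$ in $\cS_{\xi_0}^{(n+)}$. Uniqueness of the local minimizer in $\cQ_2$ is then automatic from strong convexity, and a final step localizes $\bw^\star$: using $\|\nabla \phi(\bw_o^\star)\|=\|\nabla \Delta(\bw_o^\star)\|\lesssim \|\bE\|\sqrt{n}$ together with the strong convexity modulus $n\theta/\mu$ gives $\|\bw^\star-\bw_o^\star\|_2\lesssim \mu\|\bE\|/(\sqrt{n}\theta)$; chaining this with the orthogonal-case bound \eqref{res:nearbasis_geometry} and plugging in the sample size \eqref{eq:sample_size_general_1} yields the claimed estimate on $\|\bw^\star\|_2$.

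\textbf{Main obstacle.} The delicate part is Step~2: obtaining a matrix concentration bound for $\bM-\bI$ that is sharp enough to achieve the stated sample complexity, and then propagating this through the inverse square root with the correct power of $\kappa$. The $\kappa^8$ dependence in \eqref{eq:sample_size_general_1} must arise from combining a $\kappa^{O(1)}$ factor incurred in the matrix square-root perturbation, a further factor produced when converting an operator-norm bound on $\bE$ into uniform control of $\nabla \Delta$ and $\nabla^2 \Delta$ over all shifted ground truths, and the trade-off with $\mu$ inherited from the $1/\mu$ smoothness of $\psi_\mu''$. Managing these factors simultaneously --- while keeping the log factors benign and the failure probabilities summable over the $2n$ neighborhoods $\cS_{\xi_0}^{(i\pm)}$ --- is where most of the technical work sits.
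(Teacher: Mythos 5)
Your proposal follows essentially the same route as the paper: the same perturbation decomposition with $\bDelta=\cC(\bg)\bR\bU^{-1}-\bI$, the same matrix-Bernstein concentration of $\frac{1}{\theta np}\sum_i\cC(\bx_i)^\top\cC(\bx_i)$ around $\bI$ followed by an inverse-square-root perturbation bound yielding $\|\bDelta\|\lesssim\kappa^4\sqrt{\log^2 n\log p/(\theta^2 p)}$, the same transfer of the directional-gradient and Hessian bounds, and the same strong-convexity localization of $\bw^\star$ (the paper anchors at $\bm{0}$ rather than at $\bw_o^\star$, an immaterial difference). The only caveat is bookkeeping: the correct perturbation rates are $\|\nabla\phi_o-\nabla\phi\|_2\lesssim n^{3/2}\log(np)\|\bDelta\|/\mu$ and $\|\nabla^2\phi_o-\nabla^2\phi\|\lesssim n^{5/2}\log^{3/2}(np)\|\bDelta\|/\mu^2$, slightly larger in $n$ and $1/\mu$ than the figures displayed in your Step 3, which is exactly what forces the tolerance $\|\bDelta\|\lesssim\xi_0\theta\mu/(n^{3/2}\log^{3/2}(np))$ and hence the stated sample complexity.
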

 
Theorem \ref{theo:general_geometry} demonstrates that a benign geometry similar to that in Theorem \ref{thm:orthogonal_geometry} can be guaranteed for the general case, as long as a proper pre-conditioning is applied, and the sample size is sufficiently large. In particular, the sample size in \eqref{eq:sample_size_general_1} increases with the increase of  the condition number of $\mathcal{C}(\bg)$. 
 
\subsection{Convergence Guarantees of MGD}\label{sec:mgd_guarantees}

Owing to the benign geometry in the subsets of interest $\left \{\cS_{\xi_0}^{(i\pm)},  i\in [n] \right \}$, a simple MGD algorithm is proposed to optimize \eqref{equ:equvalent_f2}, by updating
\begin{equation}
\bh^{(k+1)}= \frac{\bh^{(k)}-\eta  \partial f  (\bh^{(k)})} {\norm{\bh^{(k)}-\eta  \partial f \sbra{\bh^{(k)}}}_2}
\end{equation}
for $k=1,\ldots, T-1$, where $\partial f(\bh)= (\bI - \bh\bh^{\top}) \nabla f(\bh)$ is the Riemannian manifold gradient with respect to $\bh$, $\nabla f(\bh)$ is the Euclidean gradient of $f(\bh)$, and $\eta$ is the step size. The next theorem demonstrates that with an initialization in one of the $2n$ subsets  $\left \{\cS_{\xi_0}^{(i\pm)},  i\in [n] \right \}$, the proposed MGD algorithm, with a proper step size, will recover $\pm \bm{e}_i$ in that region in a polynomial time.

\begin{theorem}\label{thm:general_GD}
Let $ 0<\epsilon<1$ and instate the assumptions of Theorem~\ref{theo:general_geometry}. If the initialization satisfies $\bh^{(0)}\in \mathcal{S}_{\xi_0}^{(i\pm)}$ for any $i\in [n]$, then with a step size $\eta\leq \frac{c  \mu \xi_0\theta }{  n^2 \sqrt{\log(np)}  } $ for some sufficiently small constant $c$, the iterates $\bh^{(k)}, k=1,2,\cdots$ stay in $\mathcal{S}_{\xi_0}^{(i\pm)}$ and achieve 
$\| \bh^{(T)} \mp \bm{e}_i \|_2 \lesssim  \frac{\kappa^4}{\theta^2} \sqrt{\frac{n\log^3 p \log^2 n}{p}}  +\epsilon$  in  
$$T\lesssim  \frac{n}{\mu \eta \xi_0 \theta}  + \frac{ \mu}{n\theta\eta}\log\sbra{ \frac{ \mu}{ \epsilon} }$$ 
iterations. 
\end{theorem}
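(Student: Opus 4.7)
The plan is to leverage the two-region geometric decomposition from Theorem~\ref{theo:general_geometry} --- a large directional gradient region $\mathcal{Q}_1$ and a strong convexity region $\mathcal{Q}_2$, both contained within $\mathcal{S}_{\xi_0}^{(i\pm)}$ after the reparameterization $\bh = \bh(\bw)$ --- and to track the MGD iterates through each phase. By the symmetry of the problem under circular shifts and sign flips, it suffices to consider $\bh^{(0)}\in\mathcal{S}_{\xi_0}^{(n+)}$ and establish convergence to $\bm{e}_n$. I would translate the sphere-constrained MGD update \eqref{eq:subgrad_descent} into a dynamic on $\bw^{(k)}\in\mathbb{B}^{n-1}$ through the parameterization \eqref{eq:reparam}, relating $\nabla\phi(\bw)$ to the Riemannian gradient $\partial f(\bh(\bw))$ via the Jacobian; the normalization on the sphere and the projector $(\bI-\bh\bh^\top)$ together correspond, up to second-order terms in $\eta$, to a plain Euclidean step $\bw^{(k+1)}\approx \bw^{(k)}-\eta\,\nabla\phi(\bw^{(k)})$ in $\bw$-coordinates.

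Next, I would establish that the iterates stay in $\mathcal{S}_{\xi_0}^{(n+)}$ throughout. Within this subset, $\|\partial f(\bh)\|_2$ admits a uniform upper bound $L$ derivable from the Lipschitzness of $\psi_\mu$ and the control of $\mathcal{C}(\by_i)\bR\bU^\top$ used in the proof of Theorem~\ref{theo:general_geometry}; thus a single step perturbs $\bh$ by at most $O(\eta L)$. Combined with the radial descent in $\mathcal{Q}_1$, which strictly pushes iterates toward the pole $\bm{e}_n$, and the contraction in $\mathcal{Q}_2$, an inductive argument under the stated step size $\eta\lesssim \mu\xi_0\theta/(n^2\sqrt{\log(np)})$ shows that the ratio $h_n^2/\|\bh_{\backslash\{n\}}\|_\infty^2$ never drops below $1+\xi_0$, so no boundary crossing occurs.

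For the large-gradient phase, \eqref{res:lg_geometry} gives
\begin{equation*}
\|\bw^{(k+1)}\|_2^2 \leq \|\bw^{(k)}\|_2^2 - 2\eta\,(\bw^{(k)})^\top\nabla\phi(\bw^{(k)}) + O(\eta^2 L^2) \leq \|\bw^{(k)}\|_2^2 - c\,\eta\,\xi_0\theta\,\|\bw^{(k)}\|_2,
\end{equation*}
where the quadratic error term is absorbed by the step-size restriction. Since $\|\bw\|_2\geq \mu/(4\sqrt{2})$ in $\mathcal{Q}_1$, each step decreases $\|\bw^{(k)}\|_2^2$ by at least $\Omega(\eta\mu\xi_0\theta)$, and because $\|\bw^{(0)}\|_2^2\leq (n-1)/(n+\xi_0)$ by \eqref{eq:range_w}, the iterates enter $\mathcal{Q}_2$ after at most $O(n/(\mu\eta\xi_0\theta))$ steps, matching the first term in the iteration bound. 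In the strong convexity phase $\bw^{(k)}\in\mathcal{Q}_2$, combining \eqref{res:sc_geometry} with a matching smoothness estimate $\nabla^2\phi(\bw)\preceq C n\theta/\mu\cdot\bI$ (also obtainable from the supporting lemmas used for Theorem~\ref{theo:general_geometry}) yields the standard one-step contraction $\|\bw^{(k+1)}-\bw^\star\|_2\leq (1-c'\eta n\theta/\mu)\|\bw^{(k)}-\bw^\star\|_2$. Reaching $\|\bw^{(k)}-\bw^\star\|_2\leq \epsilon$ thus takes $O((\mu/(n\theta\eta))\log(\mu/\epsilon))$ iterations, which is the second term. The triangle inequality together with the bound $\|\bw^\star-\bm{0}\|_2\lesssim (\kappa^4/\theta^2)\sqrt{n\log^3 p\log^2 n/p}$ from Theorem~\ref{theo:general_geometry} yields the claimed accuracy for $\|\bh^{(T)}\mp\bm{e}_i\|_2$.

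The main obstacles I anticipate are: (i) cleanly relating the sphere/Riemannian update to the reparameterized dynamics, in particular controlling the nonlinear normalization and the Jacobian factor $1/\sqrt{1-\|\bw\|_2^2}$ so that the radial gradient property \eqref{res:lg_geometry} produces actual progress toward the pole; (ii) ruling out boundary crossings of $\mathcal{S}_{\xi_0}^{(n+)}$, especially at the $\mathcal{Q}_1\!\to\!\mathcal{Q}_2$ transition, which requires a uniform gradient-magnitude bound and a careful choice of $\eta$; and (iii) verifying that the single step-size bound $\eta\lesssim \mu\xi_0\theta/(n^2\sqrt{\log(np)})$ simultaneously dominates the $O(\eta^2 L^2)$ remainder in both phases while still making the contraction factor in $\mathcal{Q}_2$ strictly less than one.
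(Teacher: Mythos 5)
Your two-phase architecture (radial descent in $\mathcal{Q}_1$, then strong-convexity contraction in $\mathcal{Q}_2$, then the triangle inequality against $\bw^\star$) matches the paper's proof, and your per-step decrement estimate in $\mathcal{Q}_1$ and the resulting iteration counts $T_1\lesssim n/(\mu\eta\xi_0\theta)$ and $T_2\lesssim (\mu/(n\theta\eta))\log(\mu/\epsilon)$ are essentially what the paper derives.

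The genuine gap is in your argument that the iterates stay in $\mathcal{S}_{\xi_0}^{(n+)}$. You claim this follows inductively from the radial descent in $\mathcal{Q}_1$, the contraction in $\mathcal{Q}_2$, and a uniform bound on $\|\partial f(\bh)\|_2$. That does not work: membership in $\mathcal{S}_{\xi_0}^{(n+)}$ is a statement about the coordinate-wise ratio $h_n^2/\|\bh_{\backslash\{n\}}\|_\infty^2=h_n^2/\|\bw\|_\infty^2$, and a decrease of $\|\bw\|_2$ does not control $\|\bw\|_\infty$ (e.g.\ $\bw=(0.5,0.5)\mapsto(0.6,0)$ decreases the $\ell_2$ norm while increasing the $\ell_\infty$ norm, so the ratio can drop below $1+\xi_0$ even while the iterate moves radially toward the pole). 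A gradient-magnitude bound only shows the ratio moves by $O(\eta)$ per step, which accumulates over the $\mathrm{poly}(n)/\eta$ iterations and gives nothing. The paper closes this with a separate uniform concentration result (its Lemma~\ref{thm:implicit_uniform_concentration_general}, inequality \eqref{res: sins geometry}): for every coordinate $k$ with $h_n^2/h_k^2<4$, the directional derivative satisfies $\partial f(\bh)^\top(\be_k/h_k-\be_n/h_n)\geq c_b\xi_0\theta/2$ uniformly over $\mathcal{H}_k$. Plugged into the exact identity $(h_n^+/h_i^+)^2=(h_n/h_i)^2\bigl(1+\eta\,\partial f(\bh)^\top(\be_i/h_i-\be_n/h_n)/(1-\eta[\partial f(\bh)]_i/h_i)\bigr)^2$, this shows every dangerous ratio is nondecreasing; coordinates with $h_n^2/h_i^2\geq 4$ or $h_i=0$ are handled by the crude magnitude bound. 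This lemma requires its own population computation, pointwise Bernstein bound, and covering argument, and is not implied by \eqref{res:lg_geometry} or \eqref{res:sc_geometry}; it is the missing ingredient in your plan.

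Two smaller points. First, your claimed smoothness bound $\nabla^2\phi(\bw)\preceq Cn\theta/\mu\cdot\bI$ is not what holds; the paper only establishes a Lipschitz constant $L\lesssim n^{3/2}\log(np)/\mu$ for $\nabla\phi$, which is much larger, but the contraction argument still goes through because the step size satisfies $\eta\leq 1/(2L)$. Second, the sphere normalization is not merely a second-order perturbation: the paper treats it exactly, noting $\|\bh_u^+\|_2\geq 1$ so normalization shrinks $\bw$ (harmless in phase one), and in phase two handles the fact that shrinking toward $\bm{0}$ is not shrinking toward $\bw^\star$ by a case split on whether $\|\bw^+\|_2<\|\bw^\star\|_2$.
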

With Theorem~\ref{thm:general_GD} in place, one still needs to address how to find an initialization that satisfies $\bh^{(0)}\in \mathcal{S}_{\xi_0}^{(i\pm)}$. Fortunately, setting $\xi_0=1/(4\log n)$ allows a sufficiently large basin of attraction, such that a random initialization can land into it with a constant probability. A few random initializations guarantee that the MGD algorithm will succeed with high probability. This is made precise in the following corollary. 

\begin{algorithm}[t]
\KwIn{Observation $\{\by_i\}_{i=1}^p$, step size $\eta$, initialization $\bh^{(0)}$ on the sphere, the loss function $f(\bh)$ in \eqref{eq:opt_problem_preconditioned}; }
\For{ $ k = 0$ \KwTo $T-1$}{ 
    \emph{ $$\bh^{(k+1)}= \frac{\bh^{(k)}-\eta  \partial f  (\bh^{(k)})} {\norm{\bh^{(k)}-\eta  \partial f \sbra{\bh^{(k)}}}_2} ;$$  } }    
\KwOut{ Return $\widehat{\bm{g}}_{\mathrm{inv}} = \bR \bh^{(T)}$, where $\bR$ is given in \eqref{eq:preconditioning}.}

\caption{Manifold Gradient Descent for MSBD}
\label{algo:1}
\end{algorithm}

Putting everything together, Alg.~\ref{algo:1} summarizes the proposed MGD algorithm for the original loss function in  \eqref{eq:opt_problem_preconditioned}, where the pre-conditioning matrix is applied back at the end of the iterations to produce the final estimate $\widehat{\bg}_{\inv}$ of $\bm{g}_{\mathrm{inv}}$. To measure the success of recovery, we use the following distance metric that takes into account the ambiguities:
\begin{equation}\label{eq:metric_shift}
\dist(\widehat{\bg}_{\inv}, \bg_{\mathrm{inv}}) = \min_{j \in [n]} \|  \bg_{\mathrm{inv}} \pm \cS_j(\widehat{\bg}_{\inv})\|_2  .
\end{equation}
We have the following corollary.
\begin{corollary}[Putting everything together]\label{cor:main_corollary}
Suppose $\mathcal{C}(\bg)$ is invertible with condition number $\kappa$. For $\theta\in (0,\frac{1}{3})$, there exists some constant $c_1$ such that when 
$\mu< c_1 \min\{\theta,    (\log n)^{-1/6} n^{-3/4} \}$ and the sample complexity satisfies
\begin{equation}\label{eq:sample_size_general}
p\gtrsim  \frac{\kappa^8 n^3 \log^3 n \log^4 p}{\theta^4 \mu^2 },
\end{equation}  
with $O(\log n)$ random initializations selected uniformly over the sphere,  the MGD algorithm in Alg.~\ref{algo:1} with a proper step size is guaranteed to obtain a vector $\widehat{\bg}_{\mathrm{inv}}$ that satisfies
 $$  \mathrm{dist}(\widehat{\bg}_{\mathrm{inv}}, \bg_{\mathrm{inv}}) \lesssim \frac{\kappa^4}{\theta^2\sigma_n(\cC(\bg))} \sqrt{\frac{n\log^3 p \log^2 n}{p}} + \frac{\epsilon}{\sigma_n(\cC(\bg))}$$
for any $0<\epsilon<1$, in $O(\mbox{poly}(n) ) $   iterations.   
 \end{corollary}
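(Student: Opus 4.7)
The corollary combines three ingredients: (i) a random-initialization argument ensuring that, for $\xi_0=1/(4\log n)$, among $O(\log n)$ uniform draws on the sphere at least one lands in $\bigcup_{i,\pm}\mathcal{S}_{\xi_0}^{(i\pm)}$ with high probability; (ii) the deterministic convergence of Theorem~\ref{thm:general_GD} starting from such an initialization; and (iii) a passage from the on-sphere output $\bh^{(T)}$ to $\widehat{\bg}_{\mathrm{inv}}=\bR\bh^{(T)}$ via the pre-conditioning matrix, whose residual error the shift/sign-insensitive metric \eqref{eq:metric_shift} absorbs.

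\textbf{Random initialization.} Realize $\bh^{(0)}=\bg/\|\bg\|_2$ with $\bg\sim\mathcal{N}(\bm{0},\bI_n)$. Classical extreme-value analysis for $n$ i.i.d.\ half-normal variables gives $|g_{(1)}|=\Theta(\sqrt{\log n})$ and, with a constant probability $p_0>0$, the gap $|g_{(1)}|-|g_{(2)}|$ is of order $1/\sqrt{\log n}$; hence
\[
\frac{|g_{(1)}|^2}{|g_{(2)}|^2}-1 \;=\; \frac{(|g_{(1)}|+|g_{(2)}|)(|g_{(1)}|-|g_{(2)}|)}{|g_{(2)}|^2} \;\geq\; \frac{1}{4\log n}
\]
with probability at least $p_0$, which places $\bh^{(0)}$ into some $\mathcal{S}_{\xi_0}^{(i\pm)}$. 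Drawing $N=C_0\log n$ independent initializations makes the probability that every one misses at most $(1-p_0)^N\leq n^{-c}$, so with high probability at least one initialization is good; running Algorithm~\ref{algo:1} from each and retaining the iterate with the smallest loss gives a clean selection rule.

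\textbf{Applying Theorem~\ref{thm:general_GD} and converting to $\widehat{\bg}_{\mathrm{inv}}$.} Plugging $\xi_0=1/(4\log n)$ into the sample-complexity requirement \eqref{eq:sample_size_general_1} of Theorem~\ref{theo:general_geometry} yields, up to the $\xi_0^{-2}=16\log^2 n$ factor absorbed into the polylogarithmic terms, exactly \eqref{eq:sample_size_general}. Because Algorithm~\ref{algo:1} and the rotated formulation \eqref{equ:equvalent_f2} are related by the orthonormal change of variables $\bh\mapsto\bU^\top\bh$, and MGD commutes with such transformations on the sphere, Theorem~\ref{thm:general_GD} applied in the rotated frame delivers
\[
\|\bh^{(T)} \mp \bU^\top\bm{e}_i\|_2 \;\lesssim\; \frac{\kappa^4}{\theta^2}\sqrt{\frac{n\log^3 p\log^2 n}{p}} + \epsilon
\]
in $T=O(\mathrm{poly}(n))$ iterations. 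Let $\bR^\star := (\mathcal{C}(\bg)^\top\mathcal{C}(\bg))^{-1/2}$; a direct algebraic calculation using $\bU=\mathcal{C}(\bg)\bR^\star$ gives $\bR^\star\bU^\top=\mathcal{C}(\bg)^{-1}=\mathcal{C}(\bg_{\mathrm{inv}})$, whose $i$-th column is exactly $\mathcal{S}_{i-1}(\bg_{\mathrm{inv}})$. Writing
\[
\widehat{\bg}_{\mathrm{inv}}\mp\mathcal{S}_{i-1}(\bg_{\mathrm{inv}}) \;=\; \bR\bigl(\bh^{(T)}\mp\bU^\top\bm{e}_i\bigr) + (\bR-\bR^\star)\bU^\top\bm{e}_i
\]
and invoking the definition of $\mathrm{dist}$, we obtain $\mathrm{dist}(\widehat{\bg}_{\mathrm{inv}},\bg_{\mathrm{inv}})\leq\|\bR\|\cdot\|\bh^{(T)}\mp\bU^\top\bm{e}_i\|_2+\|\bR-\bR^\star\|$. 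Since $\|\bR^\star\|=1/\sigma_n(\mathcal{C}(\bg))$, matching the stated rate reduces to controlling $\|\bR-\bR^\star\|$.

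\textbf{Main obstacle.} The principal technical step is bounding $\|\bR-\bR^\star\|$ through matrix concentration of $\frac{1}{\theta np}\sum_i\mathcal{C}(\by_i)^\top\mathcal{C}(\by_i)$ around $\mathcal{C}(\bg)^\top\mathcal{C}(\bg)$, then lifting the perturbation through the inverse-square-root. A matrix-Bernstein argument on sub-exponential summands yields a bound of order $\kappa^2\sqrt{n\log n/p}/\sigma_n(\mathcal{C}(\bg))$, which is dominated by the leading error term $\frac{\kappa^4}{\theta^2\sigma_n(\mathcal{C}(\bg))}\sqrt{\frac{n\log^3 p\log^2 n}{p}}$ under \eqref{eq:sample_size_general}. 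What remains is bookkeeping: checking that the high-probability events of Theorems~\ref{theo:general_geometry} and~\ref{thm:general_GD} survive the union bound over the $O(\log n)$ initializations, and that the choice $\xi_0=1/(4\log n)$ leaves the iteration count $T$ polynomial in $n$.
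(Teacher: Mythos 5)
Your proposal is correct and follows essentially the same route as the paper: the $\xi_0=1/(4\log n)$ initialization basin of Lemma~\ref{theo:random_initialization} (which you re-derive via extreme-value statistics rather than citing), Theorem~\ref{thm:general_GD} for convergence, and a perturbation of the pre-conditioner to pass from $\bh^{(T)}$ to $\widehat{\bg}_{\mathrm{inv}}$. The paper organizes this last step around $\bDelta = \cC(\bg)\bR\bU^\top - \bI$ and Lemma~\ref{lemma:deviation op norm} rather than $\bR-\bR^\star$, but the two decompositions are algebraically equivalent, and your residual term (whose precise $\kappa,\theta$ dependence you state a bit optimistically) is in either form dominated by the leading error under the assumed sample size.
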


Corollary~\ref{cor:main_corollary} provides theoretical footings to the success of MGD for solving the highly nonconvex MSBD problem. In particular, consider the interesting regime when $\theta=O(1)$ and $\kappa=O(1)$, it is sufficient to set $\mu \lesssim  (\log n)^{-1/6} n^{-3/4} $, which leads to a sample size $p=O( n^{4.5})$ up to logarithmic factors.

\subsection{Comparisons to Existing Approaches}

The identifiability of the MSBD problem is established in \cite{li2015unified} that says under 
the Bernoulli-Gaussian model (cf. Definition~\ref{def: BG}) on the sparse coefficients, the filter is identifiable with high probability, provided that $\bg$ is invertible, $\theta\in(1/n,1/4)$ and $p\gtrsim n\log n$.  Wang and Chi proposed a linear program in \cite{wang2016blind} that succeeds when $p\gtrsim n\log^4 n$. However, the success of the linear program therein imposes stringent requirements on the conditioning number of the filter $\bg$ and the sparsity level $\theta$.  

Our approach is most related to the subsequent work of Li and Bresler \cite{li2018global}, which runs perturbed MGD with a random initialization, over a spherically constrained loss function based on $\ell_4$ norm maximization. Li and Bresler showed that, when the sample complexity is large enough, the landscape of the loss function does not possess spurious local maxima, and all saddle points admit directions that strictly increase the loss function. However, their sample complexity is significantly worse. Specifically, to reach a similar accuracy as ours, \cite{li2018global} requires $O(n^9)$ samples, while we only require $O(n^{4.5})$ samples ignoring logarithmic factors, leading to an order-of-magnitude improvement. One key observation is that the large sample complexity required by \cite{li2018global} is partially due to bounding the global geometry everywhere over the sphere, through uniform concentration of the gradient and the Hessian of the empirical loss function around their population counterparts, which is sufficient but in fact not necessary to ensure the algorithmic success of MGD. Indeed, motivated by \cite{bai2019subgradient,gilboa2019efficient}, to optimize the sample complexity, we only require the uniform concentration of directional gradient over a large region near the global minimizer, which can be guaranteed at a significantly reduced sample complexity. In addition, this region is large enough so that with a logarithmic number of random initializations we are guaranteed to land into this region with high probability and recover the signal of interest via {\em vanilla} MGD. It is worth pointing out that since we focused on a region without saddle points, no perturbation is needed to ensure the success of MGD, which is another salient difference from \cite{li2018global}. As will be seen in Section~\ref{sec:numerical}, the proposed loss function not only theoretically, but also empirically, outperforms the $\ell_4$ norm used in \cite{li2018global}. 

At the time of finishing this paper, we became aware of another concurrent work \cite{qu2019nonconvex}, which optimized a different smooth surrogate of the $\ell_1$ norm over the sphere for the same problem. Their work \cite{qu2019nonconvex} requires a sample complexity on the order of $O(n^5)$, which is slightly worse than ours (i.e. $O(n^{4.5})$), to guarantee the benign geometry in a similar region near the global optimizer. In addition, a refinement procedure is proposed in \cite{qu2019nonconvex} to allow exact recovery of the filter. Their path to a better sample complexity than \cite{li2018global}  is similar to ours as described above. We expect that their method behaves similar to ours in practice.

\section{Overview of the Analysis}\label{sec:analysis}

In this section, we outline the proof of the main results, while leaving the details to the appendix. We first deal with the simpler case when $\mathcal{C}(\bg)$ is an orthonormal matrix employing the objective function $\phi_o(\bw)$ (i.e, $f_o(\bh)$) without pre-conditioning in Section~\ref{sec:proof_ortho_geometry}, and then extend the analysis to the general case where the objective function $\phi(\bw)$ (i.e, $f(\bh)$) is pre-conditioned in Section~\ref{sec:geometry_general_outline}. Finally, we discuss the convergence guarantee of MGD in Section~\ref{pipe:GD_orthogonal}.

\subsection{Proof Outline of Theorem~\ref{thm:orthogonal_geometry}}\label{sec:proof_ortho_geometry}
 
The proof of Theorem~\ref{thm:orthogonal_geometry} is divided into several steps.
\begin{enumerate}
\item First, we characterize the landscape of the population loss function $\mathbb{E}[\phi_o(\bw)]$; 
\item Second, we prove the pointwise concentration of the directional gradient and the Hessian of the empirical loss $\phi_o(\bw)$ around those of the population one $\mathbb{E}[\phi_o(\bw)]$ in the region of interest; 
\item Third, we extend such concentrations to the uniform sense, thus the benign geometric properties of $\mathbb{E}[\phi_o(\bw)]$ carry over to the empirical version $\phi_o(\bw)$.
\end{enumerate}
 
To begin, the lemma below describes the geometry of $\mathbb{E}[\phi_o(\bw)]$, whose proof is given in  Appendix~\ref{proof:population_geometry_orthogonal}.
 
\begin{lemma}[Geometry of the population loss in the orthogonal case]\label{thm:population_geometry_orthogonal}
Without loss of generality, suppose $\mathcal{C}(\bg)=\bI$. For any $\xi_0\in (0,1)$, $\theta\in (0,\frac{1}{3})$, there exists some constant $c_1$ such that when 
$\mu< c_1 \min\{\theta,    \xi_0^{1/6 } n^{-3/4}\}$, we have for $\bh(\bw) \in\mathcal{S}_{\xi_0}^{(n+)}$:
\begin{subequations}
	\begin{align}
	 \mbox{(large directional gradient)}\quad	\frac{\bw^\top \nabla\mathbb{E} \phi_o(\bw)}{\norm{\bw}_2 }  & \geq \frac{\xi_0 \theta}{480\sqrt{10\pi}}, \quad  \bw \in  \mathcal{Q}_1,  \label{eq:lg_pop_geometry}\\
	 \mbox{(strong convexity)} \quad \nabla^2 \mathbb{E}\phi_o(\bw) & \succeq \frac{n\theta}{5\sqrt{2\pi}\mu} \bI , \quad   \bw \in  \mathcal{Q}_2.\label{eq:sc_pop_geometry}
	\end{align}
\end{subequations}	 
\end{lemma}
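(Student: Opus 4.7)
The plan is to compute the population loss $\mathbb{E}[\phi_o(\bw)]$ explicitly by exploiting the Bernoulli-Gaussian structure, and then read off directional-gradient and Hessian bounds separately on $\mathcal{Q}_1$ and $\mathcal{Q}_2$. Circular symmetry of $\mathcal{C}(\bx_i)$ and the exchangeability of BG coordinates imply that all $n$ entries of $\psi_\mu(\mathcal{C}(\bx)\bh)$ are identically distributed, so
\begin{equation*}
\mathbb{E}[\phi_o(\bw)] \;=\; n\,F(\bh(\bw)),\qquad F(\bh) := \mathbb{E}_{\bx\sim \mathrm{BG}(\theta)}\!\left[\psi_\mu(\bx^\top\bh)\right],
\end{equation*}
and conditioning on the support $\cT\subseteq[n]$ of $\bx$ reduces all expectations to one-dimensional Gaussian integrals via $\bx^\top\bh\,|\,\cT \sim \mathcal{N}(0,\sigma_\cT^2)$, with $\sigma_\cT := \|\bh_\cT\|_2$.

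A single application of Stein's lemma produces the closed-form gradient
\begin{equation*}
[\nabla_\bh F(\bh)]_j \;=\; \frac{h_j}{\mu}\,\mathbb{E}_{\cT}\!\left[\mathbf{1}_{\{j\in\cT\}}A(\sigma_\cT)\right],\quad A(\sigma) := \mathbb{E}_{g\sim \mathcal{N}(0,1)}\!\left[\mathrm{sech}^2(\sigma g/\mu)\right].
\end{equation*}
Combined with the chain rule through $\bh(\bw)=(\bw,\sqrt{1-\|\bw\|_2^2})$ and $h_k=w_k$ for $k<n$, this yields
\begin{equation*}
\bw^\top\nabla_\bw\mathbb{E}[\phi_o(\bw)] \;=\; \frac{n}{\mu}\sum_{k<n} w_k^2\Big(\mathbb{E}_\cT[\mathbf{1}_{\{k\in\cT\}}A(\sigma_\cT)] - \mathbb{E}_\cT[\mathbf{1}_{\{n\in\cT\}}A(\sigma_\cT)]\Big).
\end{equation*}
Pairing each support $\cT'\cup\{k\}$ with the swapped support $\cT'\cup\{n\}$, where $\cT'\subseteq[n]\setminus\{k,n\}$ and both events have identical probability $\theta(1-\theta)\mathbb{P}[\cT']$, rewrites each parenthesised difference as $\theta(1-\theta)\,\mathbb{E}_{\cT'}\!\left[A(\sqrt{\|\bh_{\cT'}\|_2^2+h_k^2})-A(\sqrt{\|\bh_{\cT'}\|_2^2+h_n^2})\right]$.

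For the directional-gradient bound on $\mathcal{Q}_1$, on $\mathcal{S}_{\xi_0}^{(n+)}$ we have $h_n^2\geq(1+\xi_0)h_k^2$ and $A$ is strictly decreasing in $\sigma$, so each paired difference is positive; extracting the precise constant $\xi_0\theta/(480\sqrt{10\pi})$ requires quantitative control on $|A'|$ across the representative range of $\sigma$, which I would obtain by rewriting $A(\sigma)$ via the substitution $u=\sigma g/\mu$ as a Gaussian-weighted integral of $\mathrm{sech}^2$ and isolating the contribution of the event that $\|\bh_{\cT'}\|_2$ is small. The small-$\mu$ hypothesis $\mu \lesssim \xi_0^{1/6}n^{-3/4}$ enters here, together with the constraint $\|\bw\|_2 \geq \mu/(4\sqrt{2})$ defining $\mathcal{Q}_1$. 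For strong convexity on $\mathcal{Q}_2$, a second differentiation gives
\begin{equation*}
\nabla^2_\bw\mathbb{E}[\phi_o(\bw)] \;=\; n\,\left[\nabla^2_{\bh_{1:n-1}}F(\bh) - \frac{1}{h_n}\partial_{h_n}F(\bh)\,\bI_{n-1}\right]\bigg|_{\bh(\bw)} + R(\bw),
\end{equation*}
where $R(\bw)$ collects terms proportional to entries of $\bw$. At $\bw=\bm{0}$ ($\bh=\be_n$) another Stein computation shows that $\partial^2_{h_jh_k}F\big|_{\be_n} = \frac{\theta}{\mu}\mathbb{E}[\mathrm{sech}^2(x_n/\mu)]\,\delta_{jk}$ for $j,k<n$, which is $\approx \theta(1-\theta)/\mu$ for small $\mu$, whereas $\partial_{h_n}F\big|_{\be_n}=\mathbb{E}[x_n\tanh(x_n/\mu)]=O(\theta)$ is lower order in $\mu$. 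Under $\theta<1/3$ this gives $\nabla^2\mathbb{E}[\phi_o](\bm{0})\succeq \frac{n\theta}{5\sqrt{2\pi}\mu}\bI_{n-1}$, and a perturbation argument, valid because $\|\bw\|_2\leq \mu/(4\sqrt{2})$ is small and $A$ varies smoothly, extends the bound to all of $\mathcal{Q}_2$.

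The main obstacle is the quantitative part of the directional-gradient argument: monotonicity of $A$ is immediate, but producing a sharp $\Omega(\xi_0\theta)$ lower bound on $A(\sqrt{a+h_k^2})-A(\sqrt{a+h_n^2})$ uniformly over the dominant supports $\cT'$ requires tracking $|A'(\sigma)|$ across the three regimes $\sigma\ll\mu$, $\sigma\sim\mu$, and $\sigma\gg\mu$, where $\mathrm{sech}^2$ transitions between qualitatively different behaviours. This is also precisely where the small-$\mu$ hypothesis $\mu\lesssim \xi_0^{1/6}n^{-3/4}$ is genuinely needed; everything else amounts to mechanical Stein/chain-rule bookkeeping.
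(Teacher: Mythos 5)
There is a genuine gap, and it sits exactly where you flag the ``main obstacle'': the quantitative lower bound is the entire content of the lemma, and your plan for obtaining it does not work uniformly over $\mathcal{Q}_1$. Your pairing identity gives, per coordinate, a gain of order $\theta\,[A(\sigma_X)-A(\sigma_Y)]$, and after reassembling the sum $\frac{n}{\mu\|\bw\|_2}\sum_k w_k^2 Q_k$ the resulting lower bound on the directional gradient is proportional to $\|\bw\|_2$ times the per-coordinate gain divided by $\mu$. In the paper's proof the per-coordinate gain is shown to be of order $\mu\xi_0\theta/n$ (via explicit $\exp(-2x/\mu)$ envelopes for $\tanh$ and Gaussian moment bounds for $\mathbb{E}[e^{-aX}\mathbf{1}\{X>0\}]$), so the final bound is $\gtrsim \|\bw\|_2\,\xi_0\theta$; this only yields the claimed $\Omega(\xi_0\theta)$ when $\|\bw\|_2$ is bounded below by a constant. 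For this reason the paper splits $\mathcal{Q}_1$ into $\mathcal{R}_1=\{\|\bw\|_2\geq 1/(20\sqrt{5})\}$, where the pairing argument is carried out (and where the lower bound $\sigma_X\geq |w_{i_0}|\gtrsim n^{-1/2}$ is essential to control the $\mu^3/\sigma_X^3$ and $\mu^7/\sigma_X^7$ correction terms in the expansion of $A$), and an inner annulus $\mathcal{R}_0=\{\mu/(4\sqrt{2})\leq\|\bw\|_2\leq 1/(20\sqrt{5})\}$, where a completely different bound (\cite[Proposition~7]{sun2017complete}, which gives $\theta/(20\sqrt{2\pi})$ with no $\xi_0$ and makes no use of the $\mathcal{S}_{\xi_0}^{(n+)}$ structure) is invoked. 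Your single-mechanism plan degenerates to a bound of order $\mu\xi_0\theta$ near the inner boundary $\|\bw\|_2\sim\mu$, which is not $\Omega(\xi_0\theta)$ for small $\mu$; monotonicity of $A$ alone cannot rescue this, and your proposed ``tracking $|A'|$ across three regimes'' is precisely the step you have not supplied. A second, smaller issue: you also need the comparison $Q_j\geq Q_{i_0}$ (reduction to the largest coordinate), since the correction-term control only holds for the index realizing $\max_k|w_k|$; the paper proves this by a further support-swapping argument that your sketch omits.

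The strong-convexity half has a similar, milder weakness. Your computation of the Hessian at $\bw=\bm{0}$ is correct ($\partial^2_{h_jh_k}F|_{\be_n}\approx\theta(1-\theta)\mu^{-1}\delta_{jk}$ dominating the $O(\theta)$ term $\partial_{h_n}F$), but the extension to all of $\mathcal{Q}_2$ is not a routine perturbation: the natural smoothness scale of $\mathrm{sech}^2(\cdot/\mu)$ is $\mu$ itself, so moving $\|\bw\|_2$ by up to $\mu/(4\sqrt{2})$ changes the arguments of these functions by an order-one multiple of $\mu$, i.e.\ the Hessian entries change by constant multiplicative factors rather than by a vanishing perturbation. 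One must re-do the Gaussian integrals at general $\bw$ with $\|\bw\|_2\leq\mu/(4\sqrt{2})$ and show the positive term still dominates; this is what \cite[Proposition~8]{sun2017complete} (which the paper cites, gaining the factor $n$ from $\mathbb{E}\phi_o(\bw)=n\,\mathbb{E}\psi_\mu(\bx^\top\bh(\bw))$) actually establishes. In summary, your structural reductions (the factor $n$, Stein's lemma, the support pairing) match the paper's, but the two places where the lemma's constants are actually earned --- the quantitative gain on the outer region and the covering of the inner annulus by a $\xi_0$-free gradient bound --- are missing.
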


 To extend the benign geometry to the empirical loss with a finite sample size $p$, we first need to prove the pointwise concentration of these quantities around their expectations for a fixed $\bw$, using the Bernstein's inequality. The next two propositions demonstrate the pointwise concentration results, whose proofs are provided in Appendix~\ref{proof:pointwise_large_gradient} and \ref{proof:pointwise_strong_convexity}. 
\begin{prop}\label{pro:pointwise_large_gradient}
	For any $\bw$ satisfies $\norm{\bw}_2 \leq \sqrt{ \frac{n-1}{n} }$, there exist some universal constants $C_1$ and $C_2$ such that for any $t>0$:
	\begin{align*}
\mathbb{P}\mbra{\abs{\frac{\bw^\top\nabla \phi_o(\bw)}{\norm{\bw}_2}- \frac{\bw^\top  \nabla \mathbb{E} \phi_o(\bw)}{\norm{\bw}_2 } }\geq t} 	&	\leq 2\exp\sbra{\frac{-p t^2} { C_1n^3\log n+ C_2 t \sqrt{n^3\log n} } }.
	\end{align*}
\end{prop}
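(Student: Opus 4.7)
The plan is to apply the scalar Bernstein inequality to the i.i.d.\ decomposition
\begin{equation*}
\frac{\bw^\top \nabla\phi_o(\bw)}{\|\bw\|_2} \;=\; \frac{1}{p}\sum_{i=1}^p Z_i, \qquad Z_i := \frac{\bw^\top \nabla_{\bw} \psi_\mu(\mathcal{C}(\bx_i)\bh(\bw))}{\|\bw\|_2},
\end{equation*}
where the $Z_i$ are i.i.d.\ because the $\bx_i$ are. The whole proof then reduces to bounding the (essentially almost-sure) magnitude of each $Z_i$ by $C\sqrt{n^3\log n}$ and its variance by $Cn^3\log n$; once these two bounds are in place, the scalar Bernstein inequality yields the sub-exponential tail in the stated form directly.

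For the magnitude, I would compute the gradient via the chain rule using the Jacobian
\begin{equation*}
\frac{\partial \bh}{\partial \bw} \;=\; \begin{bmatrix} \bI_{n-1} \\ -\bw^\top/\sqrt{1-\|\bw\|_2^2} \end{bmatrix},
\end{equation*}
whose operator norm equals $1/\sqrt{1-\|\bw\|_2^2} \leq \sqrt{n}$ whenever $\|\bw\|_2^2 \leq (n-1)/n$. Combined with $\|\tanh(\mathcal{C}(\bx_i)\bh(\bw)/\mu)\|_2 \leq \sqrt{n}$ (since $\tanh$ is bounded entrywise by $1$), this yields
\begin{equation*}
|Z_i| \;\leq\; \|\nabla_{\bw}\psi_\mu(\mathcal{C}(\bx_i)\bh(\bw))\|_2 \;\leq\; n\,\|\mathcal{C}(\bx_i)\|.
\end{equation*}
Since $\mathcal{C}(\bx_i)$ is circulant, its operator norm equals $\|\hat{\bx}_i\|_\infty$, and each DFT coefficient of a Bernoulli-Gaussian vector is (conditionally on the support) Gaussian with variance $\|\bx_i\|_2^2$, which concentrates around $\theta n$. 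A Gaussian maximal inequality plus a union bound over the $n$ coefficients then gives $\|\mathcal{C}(\bx_i)\| \leq C\sqrt{n\log n}$ with probability at least $1 - n^{-c}$, so $|Z_i| \lesssim \sqrt{n^3\log n}$ on this high-probability event, and the same scale for $\mathbb{E}[Z_i^2]$ follows by integrating the Gaussian tails.

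With the magnitude and variance in place, I would apply Bernstein's inequality to the centered sum $\frac{1}{p}\sum_i (Z_i - \mathbb{E}Z_i)$. To make the magnitude bound actually almost sure (as Bernstein requires), I would truncate: let $\tilde Z_i = Z_i$ on the good event $\{\|\mathcal{C}(\bx_i)\|\leq C\sqrt{n\log n}\}$ and $0$ otherwise, apply Bernstein to $\frac{1}{p}\sum_i (\tilde Z_i - \mathbb{E}\tilde Z_i)$, and absorb the bias $|\mathbb{E}Z_i - \mathbb{E}\tilde Z_i|$ into the constants; the polynomially small bad-event probability $n^{-c}$ contributes negligibly compared with the Bernstein tail once $p$ is polynomial in $n$.

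The main obstacle is obtaining the right polynomial dependence on $n$ in both the magnitude and variance bounds, so that the exponent has the precise form $pt^2/(C_1 n^3\log n + C_2 t\sqrt{n^3\log n})$ stated in the proposition rather than something weaker. The key technical ingredient is the circulant/DFT bound $\|\mathcal{C}(\bx_i)\|\lesssim \sqrt{n\log n}$ for Bernoulli-Gaussian vectors; combined with the careful accounting of the Jacobian near the boundary of the reparameterization (where $\|\bw\|_2$ can approach $1$ and $1/\sqrt{1-\|\bw\|_2^2}$ blows up), it is exactly this ingredient that produces the $n^3 \log n$ scale in the exponent.
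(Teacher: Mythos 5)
Your decomposition into i.i.d.\ summands, the Jacobian computation, and the bound $|Z_i|\lesssim n\norm{\cC(\bx_i)}$ via $|\tanh(\cdot)|\leq 1$ and $h_n(\bw)\geq 1/\sqrt{n}$ are exactly the paper's argument, and the circulant/DFT bound on $\norm{\cC(\bx_i)}$ is indeed the ingredient that produces the $n^3\log n$ scale. Where you diverge is the final concentration step, and this is where your plan falls short of the statement as written. Truncating on the event $\{\norm{\cC(\bx_i)}\leq C\sqrt{n\log n}\}$ and applying the bounded Bernstein inequality necessarily leaves an additive failure term of order $p\,\mathbb{P}\bigl(\norm{\cC(\bx_i)}>C\sqrt{n\log n}\bigr)\approx p\,n^{-c}$, plus a bias term $|\mathbb{E}Z_i-\mathbb{E}\tilde Z_i|$ that cannot simply be "absorbed into the constants" uniformly in $t$ (for small $t$ the bias can dominate $t$ itself). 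The proposition, however, asserts a clean tail bound $2\exp\bigl(-pt^2/(C_1 n^3\log n + C_2 t\sqrt{n^3\log n})\bigr)$ valid for \emph{all} $t>0$ with no additive correction, so your route proves a genuinely weaker (if practically sufficient) statement. The fix, and what the paper does, is to avoid truncation entirely by using Bernstein's inequality under moment conditions: one only needs $\mathbb{E}|Z_i|^m\leq \frac{m!}{2}\sigma^2 R^{m-2}$ with $\sigma^2=Cn^3\log n$ and $R=\sqrt{Cn^3\log n}$, and these moment bounds follow from $\mathbb{E}\norm{\cC(\bx_i)}^{m}\leq \frac{m!}{2}(cn\log n)^{m/2}$, which is obtained by integrating the sub-Gaussian tail of $\max_k|\hat{x}_k|$ (the paper's Lemma on the operator norm of a circulant matrix) rather than by conditioning on a high-probability event. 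One further small slip: conditionally on the support, a DFT coefficient of $\bx_i$ is Gaussian with variance equal to the support size, not $\norm{\bx_i}_2^2$; this does not change the $\sqrt{n\log n}$ conclusion but the cleaner statement is simply that $\hat{x}_k$ is sub-Gaussian with $\snorm{\hat{x}_k}\lesssim\sqrt{n}$.
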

 
\begin{prop}\label{pro:pointwise_strong_convexity}
	For any $\bw$ satisfies $\norm{\bw}_2 \leq 1/2$, there exist some universal constants $C_1$ and $C_2$ such that for any $t>0$,
	\begin{align*}
\mathbb{P}\mbra{\norm{\nabla^2 \phi_o(\bw)- \nabla^2\mathbb{E}\phi_o(\bw)}\geq t} 	& \leq 4n \exp\sbra{\frac{-p\mu^2 t^2}{C_1 n^2 \log^2 n + C_2\mu n t \log n }}.
	\end{align*}
\end{prop}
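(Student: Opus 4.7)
\textbf{Proof proposal for Proposition~\ref{pro:pointwise_strong_convexity}.} The plan is to write the deviation as a sum of $p$ i.i.d.\ random symmetric matrices and apply the matrix Bernstein inequality. First, I would compute the Hessian explicitly via the chain rule through the reparametrization $\bh(\bw)=(\bw,\sqrt{1-\|\bw\|_2^2})$. Writing $h_n(\bw):=\sqrt{1-\|\bw\|_2^2}$, the Jacobian is $\bJ(\bw)=[\,\bI_{n-1};\,-\bw^\top/h_n(\bw)\,]\in\RR^{n\times(n-1)}$ and the only nonzero second derivative of $\bh$ comes from the last coordinate. Hence, letting $\bv_i(\bw):=\mathcal{C}(\bx_i)\bh(\bw)$,
\begin{equation*}
\nabla^2\phi_o(\bw) \;=\; \frac{1}{p}\sum_{i=1}^p\underbrace{\left[\tfrac{1}{\mu}\bJ^\top \mathcal{C}(\bx_i)^\top \mathrm{diag}\!\bigl(\mathrm{sech}^2(\bv_i/\mu)\bigr)\mathcal{C}(\bx_i)\bJ \;+\; \bigl[\mathcal{C}(\bx_i)^\top\tanh(\bv_i/\mu)\bigr]_n\,\bK(\bw)\right]}_{=:\ \bM_i(\bw)},
\end{equation*}
where $\bK(\bw):=-\bI/h_n-\bw\bw^\top/h_n^3$ is a deterministic matrix. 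The condition $\|\bw\|_2\le1/2$ gives $h_n\ge \sqrt{3}/2$, so $\|\bJ\|,\|\bK\|\le C$. The target inequality then follows from the Bernstein-type tail bound applied to $\frac{1}{p}\sum_i(\bM_i-\EE\bM_i)$.

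Second, because the Bernoulli-Gaussian entries of $\bx_i$ are unbounded, I would precede the application of Bernstein by a standard truncation step: define the event $\cE_i$ that $\|\bx_i\|_\infty\le C_0\sqrt{\log n}$, which holds with probability at least $1-n^{-10}$; replace $\bx_i$ by $\widetilde{\bx}_i:=\bx_i\ind_{\cE_i}$, absorb the resulting bias into the failure probability, and work on the truncated event henceforth. On this event, $\|\widetilde{\bx}_i\|_1\le\|\bx_i\|_0\cdot C_0\sqrt{\log n}\le C\,\theta n\sqrt{\log n}$ with high probability (via a Chernoff bound on the Bernoulli mask), whence the operator norm of the circulant $\mathcal{C}(\widetilde{\bx}_i)$ is bounded by $\|\widetilde{\bx}_i\|_1\lesssim n\sqrt{\log n}$. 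Using $|\mathrm{sech}^2|\le1$, $|\tanh|\le1$, $\|\bJ\|,\|\bK\|\le C$, I would obtain the per-sample operator-norm bound
\begin{equation*}
\|\bM_i(\bw)\|\;\le\;\frac{C}{\mu}\|\mathcal{C}(\widetilde{\bx}_i)\|^2+C\|\mathcal{C}(\widetilde{\bx}_i)^\top\tanh(\cdot)\|_\infty\;\lesssim\;\frac{n\log n}{\mu}\;=:\;R,
\end{equation*}
which sets the linear term $C_2\mu n \log n$ in the Bernstein denominator.

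Third, the variance proxy $\sigma^2:=\bigl\|\sum_i\EE[\bM_i(\bw)^2]\bigr\|$ must be controlled. Since $\bM_i(\bw)$ is a symmetric PSD-plus-rank-one object whose dominant piece is the quadratic form with scalar factor at most $1/\mu$, I would use $\EE[\bM_i^2]\preceq R\,\EE\bM_i$ (a consequence of $\bM_i\preceq R\bI$ combined with $\bM_i\succeq 0$ after sign-tracking of the rank-one term, handled by splitting $\bM_i$ into its quadratic and rank-one parts and bounding each separately). This yields $\|\EE[\bM_i^2]\|\lesssim n^2\log^2 n/\mu^2$ per sample and thus $\sigma^2\lesssim p\,n^2\log^2 n/\mu^2$, giving the quadratic term $C_1 n^2\log^2 n$ in the Bernstein denominator. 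Plugging $R$ and $\sigma^2$ into Tropp's matrix Bernstein inequality applied to $\sum_i(\bM_i-\EE\bM_i)/p$ on an $n\times n$ ambient space yields the stated $4n\exp(\cdot)$ bound after absorbing the $n^{-10}$ contribution from the truncation step into the constants.

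The principal technical obstacle I anticipate is a clean variance bound: the summand $\bM_i$ is not PSD because of the rank-one reparametrization term, so the crude estimate $\EE\bM_i^2\preceq R\,\EE\bM_i$ does not apply directly. I would handle this by splitting $\bM_i=\bM_i^{(1)}+\bM_i^{(2)}$ into its positive quadratic part and its signed rank-one part, bounding each variance contribution separately ($\EE[\bM_i^{(1)}]^2\preceq R\,\EE\bM_i^{(1)}$, and a direct calculation for the rank-one piece using $|[\mathcal{C}(\widetilde{\bx}_i)^\top\tanh(\bv_i/\mu)]_n|\le \|\mathcal{C}(\widetilde{\bx}_i)\|_\infty\cdot\sqrt n$), then combining via the triangle inequality. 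The remaining computations---verifying the concentration of $\|\mathcal{C}(\bx_i)\|$ under the BG model, and bookkeeping of logarithmic factors---are routine.
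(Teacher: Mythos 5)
Your overall architecture --- writing $\nabla^2\phi_o(\bw)$ as an i.i.d.\ sum, splitting each summand into the $1/\mu$-scaled quadratic part and the signed reparametrization term, and applying a matrix Bernstein inequality to each piece --- is the same as the paper's. The Hessian formula and the bounds $\|\bJ\|,\|\bK\|\lesssim 1$ on $\{\|\bw\|_2\le 1/2\}$ are fine. However, there are two concrete gaps in how you control the inputs to Bernstein.

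First, your bound on the spectral norm of the random circulant is both too crude and internally inconsistent. From $\|\cC(\widetilde{\bx}_i)\|\le\|\widetilde{\bx}_i\|_1\lesssim n\sqrt{\log n}$ you would get $\|\cC(\widetilde{\bx}_i)\|^2/\mu\lesssim n^2\log n/\mu$ for the quadratic part, not the $n\log n/\mu$ you write down; with that value of $R$ the Bernstein denominator becomes of order $n^4\log^2 n$ rather than the claimed $C_1 n^2\log^2 n$ (after clearing the $\mu^2$). To reach the proposition's constants you need $\|\cC(\bx_i)\|\lesssim\sqrt{n\log n}$, which does not follow from the $\ell_1$ norm of $\bx_i$: it follows from diagonalizing the circulant by the DFT, observing that each Fourier coefficient $\boldsymbol{f}_j^{\mathsf{H}}\bx_i$ is sub-Gaussian with parameter of order $\sqrt{n}$, and taking a maximum over the $n$ coefficients (the paper's Lemma~\ref{lemma:Operator norm of Circulant Matrix}, which also yields the moment bound $\EE\|\cC(\bx)\|^{2m}\le\frac{m!}{2}(cn\log n)^m$). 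This sharper spectral bound is the key quantitative ingredient; without it the stated tail cannot be obtained.

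Second, the truncation device does not deliver the statement as written. Conditioning on $\bigcap_i\cE_i$ produces a bound of the form $4n\exp(\cdot)+p\,n^{-10}$, and the additive term cannot be absorbed into the multiplicative constants uniformly over all $t>0$ (for moderately large $t$ the exponential is far smaller than $p\,n^{-10}$); you would also need to control the bias $\EE[\bM_i]-\EE[\bM_i\mid\cE_i]$, which you do not address. The paper sidesteps both issues by invoking a moment-controlled Bernstein inequality (Lemma~\ref{lemma:matrix_bernstein}) that requires only $\EE\|\bX_k\|^m\le\frac{m!}{2}\sigma^2R^{m-2}$ for all $m\ge 2$. These moment bounds are verified directly --- $\EE\|\cC(\bx_i)\|^{2m}\le\frac{m!}{2}(cn\log n)^m$ for the $\bD_i$ part, and $\EE\|\bx_i\|_2^m\le m!\,n^{m/2}$ (Bernoulli-Gaussian moments dominated by Gaussian moments, Facts~\ref{fact:BG to gaussian} and~\ref{fact: Moments bound for chi}) for the $\bE_i$ part --- with no truncation and no separate variance-proxy computation of the $\EE[\bM_i^2]\preceq R\,\EE\bM_i$ type that you outline; indeed that inequality is unnecessary once the moment hypotheses of the lemma are checked.
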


 The concentration of the Hessian and directional gradient between the empirical and population objective functions at a fixed point suggests that the empirical objective function may inherit the benign geometry of the population one outlined in Lemma~\ref{thm:population_geometry_orthogonal}. However, one needs to carefully extend the pointwise concentrations in Propositions~\ref{pro:pointwise_large_gradient} and \ref{pro:pointwise_strong_convexity} through a covering argument, which requires bounding the Lipschitz constants of the Hessian and directional gradients. The rest of the proof of Theorem~\ref{thm:orthogonal_geometry} is provided in Appendix~\ref{proof:orthogonal_geometry_discretization}.

\subsection{Proof Outline of Theorem~\ref{theo:general_geometry}}\label{sec:geometry_general_outline}

To extend the benign geometry to the general case, we show that through pre-conditioning, the landscape of $\phi(\bw)$ is not too far from that of $\phi_o(\bw)$. Recall that the pre-conditioned loss function \eqref{equ:equvalent_f2} is
	\begin{align} \label{equ:define_delta}
		\phi(\bw )&=\frac{1}{p}\sum_{i=1}^p \psi_{\mu}\left(\mathcal{C}(\bx_i) \cC(\bg)\bR \bU^\top \bh(\bw)\right) \nonumber \\
		&=\frac{1}{p}\sum_{i=1}^p \psi_{\mu}\Big(\mathcal{C}(\bx_i) \big[\bI + \underbrace{ \sbra{\cC(\bg)\bR\bU^{-1}-\bI} }_{\bDelta} \big] \bh(\bw) \Big)  ,
	\end{align}
where $\bDelta := \cC(\bg)\bR\bU^{-1}-\bI=(\bU' - \bU) \bU^{-1}$, with $\bU' =  \cC(\bg)\bR$ and $\bU$ in \eqref{eq:def_U}. As was discussed earlier, as $\bR$ converges to $\mbra{\cC(\bg)^\top \cC(\bg)}^{-1/2}$ when $p$ increases, it is expected that $\bU'$ converges to $\bU$. Therefore, by bounding the size of $\bDelta$, we can control the deviation between $\phi_o(\bw)$ and $\phi(\bw)$. To this end, the rest of the proof is divided into the following two steps.

First, we show that the spectral norm of $\bDelta$ is bounded when the sample size is sufficiently large in Lemma~\ref{lemma:deviation op norm}, whose proof is given in Appendix \ref{appen: deviation op norm}.
	\begin{lemma}[Spectral norm of $\bDelta$]\label{lemma:deviation op norm}
	There exist some constants $C_1, c_f$, such that when $p\geq  \frac{C_1 \kappa^4 n \log^2 n \log p }{\theta^2}$, with probability at least $1-2n p^{-8}$,  
	\begin{equation}
		\norm{\bDelta} \leq c_f \kappa^4 \sqrt{\frac{\log^2 n \log p}{\theta^2 p}}.
	\end{equation}    
\end{lemma}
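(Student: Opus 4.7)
My plan is to reduce the spectral bound on $\bDelta$ to a deviation bound on the empirical Gram matrix
\[
\widehat{\bM}:=\bR^{-2}=\frac{1}{\theta np}\sum_{i=1}^p \cC(\by_i)^\top \cC(\by_i), \qquad \bM:=\cC(\bg)^\top \cC(\bg)=\EE[\widehat{\bM}],
\]
and then handle $\|\widehat{\bM}-\bM\|$ by exploiting circulant diagonalization via the DFT. Since $\bU=\cC(\bg)\bM^{-1/2}$ is orthonormal, $\|\bU^{-1}\|=1$; writing $\bU'=\cC(\bg)\widehat{\bM}^{-1/2}$ gives
\[
\|\bDelta\|=\|(\bU'-\bU)\bU^{-1}\|\leq \sigma_1(\cC(\bg))\,\|\widehat{\bM}^{-1/2}-\bM^{-1/2}\|.
\]
A standard operator-Lipschitz bound for $\bA\mapsto\bA^{-1/2}$ on positive definite matrices (e.g.\ through the integral representation $\bA^{-1/2}=\pi^{-1}\int_0^{\infty} t^{-1/2}(\bA+t\bI)^{-1}\,dt$) then yields, under the mild regularity condition $\widehat{\bM}\succeq\tfrac12 \sigma_n(\bM)\bI$, the bound $\|\widehat{\bM}^{-1/2}-\bM^{-1/2}\|\lesssim \|\widehat{\bM}-\bM\|/\sigma_n(\cC(\bg))^3$, reducing the whole problem to controlling $\|\widehat{\bM}-\bM\|$.

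Using $\cC(\by_i)=\cC(\bx_i)\cC(\bg)$, I can peel $\cC(\bg)$ off to obtain $\widehat{\bM}-\bM=\cC(\bg)^\top \bW \cC(\bg)$ with
\[
\bW:=\frac{1}{\theta np}\sum_{i=1}^p \cC(\bx_i)^\top \cC(\bx_i)-\bI,
\]
so $\|\widehat{\bM}-\bM\|\leq \sigma_1(\cC(\bg))^2 \|\bW\|$. Because $\bW$ is itself circulant, the DFT diagonalizes it, and its eigenvalues take the explicit form $\lambda_k=\frac{1}{\theta np}\sum_{i=1}^p|\hat x_{i,k}|^2-1$ for $k\in[n]$, where $\hat x_{i,k}$ is the $k$th unnormalized DFT coefficient of $\bx_i$ with $\EE|\hat x_{i,k}|^2=n\theta$. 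Thus $\|\bW\|=\max_k|\lambda_k|$ and the matrix concentration is converted into $n$ scalar ones. For each fixed $k$, $|\hat x_{i,k}|^2=\bx_i^\top \bA_k \bx_i$ is a quadratic form in a sub-Gaussian BG vector with a rank-one $\bA_k$ of trace $n$, so Hanson--Wright combined with Bernstein for the i.i.d.\ average gives, for $t\in(0,1)$,
\[
\PP(|\lambda_k|>t)\leq 2\exp\!\bigl(-c\,p\,\theta^2\min(t^2,t)\bigr).
\]
A union bound over $k\in[n]$ and the choice $t\asymp \sqrt{\log^2 n\log p/(\theta^2 p)}$ deliver $\|\bW\|\lesssim \sqrt{\log^2 n\log p/(\theta^2 p)}$ with probability at least $1-2np^{-8}$. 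Plugging back through the chain produces the claimed bound on $\|\bDelta\|$, and the stipulated sample size $p\gtrsim \kappa^4 n\log^2 n\log p/\theta^2$ is precisely what validates the regularity hypothesis $\widehat{\bM}\succeq\tfrac12 \sigma_n(\bM)\bI$ used above.

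The main obstacle is the scalar concentration of $\lambda_k$: to obtain the tight $\theta^{-2}$ rate (rather than a cruder $\theta^{-4}$ one), one has to carefully track the sub-Gaussian norms of the BG entries together with the fact that $\bA_k$ is rank one, since this forces the variance parameter in Hanson--Wright to scale like $n^2$ (not $n^3$) and the sub-exponential parameter like $n$ (not $n^2$); absent this sharper quadratic-form estimate, the $\log^2 n\log p$ factor in the lemma would be replaced by something polynomial in $n$, destroying the target sample complexity. Everything else is a routine chaining of the perturbation, peeling, DFT diagonalization, and Bernstein steps above.
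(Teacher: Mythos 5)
Your proposal is correct and follows the same overall skeleton as the paper's proof: reduce $\|\bDelta\|$ to a perturbation bound for the inverse matrix square root, then to concentration of $\bW=\frac{1}{\theta np}\sum_{i=1}^p\cC(\bx_i)^\top\cC(\bx_i)-\bI$. The two places where you diverge are in the execution. For the square-root perturbation, the paper invokes Higham's bound $\|\bA^{-1/2}-\bB^{-1/2}\|\leq\|\bA^{-1}-\bB^{-1}\|/\sigma_{\min}(\bB^{-1/2})$ and then a Neumann-series step to reach $\|(\bI+\bA)^{-1}-\bI\|\leq 2\|\bA\|$, accumulating $2\kappa^4$; your integral-representation Lipschitz bound reaches $\|\widehat{\bM}-\bM\|$ directly and in fact yields the slightly better $\kappa^3$, which is immaterial here. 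For the concentration of $\|\bW\|$, the paper applies matrix Bernstein (their Lemma~\ref{lemma:matrix_bernstein}) with moment bounds $\mathbb{E}\|\cC(\bx_i)\|^{2m}\leq\frac{m!}{2}(cn\log n)^m$ obtained from the DFT and sub-Gaussian tails; you instead observe that $\bW$ is itself circulant, diagonalize it exactly, and run scalar Bernstein/Hanson--Wright on each Fourier mode plus a union bound over $k\in[n]$. Your route is more elementary (no matrix Bernstein needed) and exploits the circulant structure more fully, arriving at the same $\sqrt{\log^2 n\log p/(\theta^2 p)}$ rate. Two cosmetic points: $\mathrm{Re}(\boldsymbol{f}_k\boldsymbol{f}_k^{\mathsf{H}})$ has rank two rather than one (harmless), and with $\|\,|\hat x_{i,k}|^2\|_{\psi_1}\lesssim n$ the sub-exponential regime of your tail should read $\exp(-cp\theta\,t)$ rather than $\exp(-cp\theta^2 t)$; the quadratic regime, which is the one that matters for the chosen $t$, is as you state.
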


 Second, we show that the  deviation between the directional gradient and the Hessian of $\phi(\bw)$ and $\phi_o(\bw)$ can be bounded by the spectral norm of $\bDelta$, as shown in Lemma~\ref{lemma: deviation gd and he}. The proof can be found in Appendix~\ref{appen: deviation gd and he}.

\begin{lemma}[Deviation between $\phi_o(\bw)$ and $\phi(\bw)$]\label{lemma: deviation gd and he}
There exist some constants $c_g, c_h, C_1$, such that when $p\geq  \frac{C_1 \kappa^8 n \log^2 n \log p}{\theta^2}$, with probability at least $1-2p^{-8}$, we have
	\begin{subequations}
	\begin{align}
		\norm{\nabla \phi_o(\bw)-\nabla  \phi(\bw)}_2 \leq c_g \frac{n^{3/2}\log(np) }{\mu} \norm{\bDelta}, \quad  \bw \in  \mathcal{Q}_1, \label{eq:perturbation_gradient}\\
		\norm{\nabla^2  \phi_o(\bw)-\nabla^2 \phi(\bw )}\leq c_h  \frac{n^{5/2}\log^{3/2}(np) }{\mu^2}\norm{\bDelta},\quad  \bw \in  \mathcal{Q}_2. \label{eq:perturbation_hessian}
	\end{align}
	\end{subequations}
\end{lemma}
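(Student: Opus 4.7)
The plan is to use the chain rule to express both $\nabla\phi(\bw)-\nabla\phi_o(\bw)$ and $\nabla^2\phi(\bw)-\nabla^2\phi_o(\bw)$ as sums of two kinds of terms: (i) a \emph{nonlinearity} term in which the entrywise Lipschitz constants of $\tanh(\cdot/\mu)$ and $\mathrm{sech}^2(\cdot/\mu)$ (both scaling like $1/\mu$) control the perturbation in the argument, and (ii) a \emph{linear} term that directly pulls out a factor of $\bDelta$ from the outer multiplication by $\bC_i(\bI+\bDelta)$, where I write $\bC_i:=\mathcal{C}(\bx_i)$ and $\bA_i:=\bC_i(\bI+\bDelta)$. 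The two auxiliary facts that drive every estimate are: (a) a high-probability uniform bound $\max_{i\in[p]}\|\bC_i\|\lesssim\sqrt{n\log(np)}$, which follows because $\bC_i$ is diagonalized by the DFT and $\bF\bx_i$ has sub-Gaussian entries, combined with a union bound over $i\in[p]$; and (b) the elementary estimate $\|\bJ(\bw)\|\leq(1-\|\bw\|_2^2)^{-1/2}$ on the Jacobian of the reparameterization \eqref{eq:reparam}, which gives $\|\bJ(\bw)\|\lesssim\sqrt{n}$ on $\mathcal{Q}_1$ via \eqref{eq:range_w} and $\|\bJ(\bw)\|=O(1)$ on $\mathcal{Q}_2$.

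For the gradient bound \eqref{eq:perturbation_gradient} I would telescope
\[
\bA_i^\top\tanh(\bA_i\bh/\mu)-\bC_i^\top\tanh(\bC_i\bh/\mu)=\bC_i^\top\bigl[\tanh(\bA_i\bh/\mu)-\tanh(\bC_i\bh/\mu)\bigr]+\bDelta^\top\bC_i^\top\tanh(\bA_i\bh/\mu).
\]
The first summand is controlled by $\frac{1}{\mu}\|\bC_i\|\cdot\|\bC_i\bDelta\bh\|_2\leq\frac{1}{\mu}\|\bC_i\|^2\|\bDelta\|$ using the entrywise $1/\mu$-Lipschitzness of $\tanh$ and $\|\bh\|_2=1$; the second is bounded by $\sqrt{n}\,\|\bC_i\|\,\|\bDelta\|$ using $\|\tanh(\cdot)\|_\infty\leq 1$. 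Premultiplying by $\|\bJ(\bw)\|\lesssim\sqrt{n}$ on $\mathcal{Q}_1$ and inserting the uniform bound on $\|\bC_i\|$ gives the stated rate $\frac{n^{3/2}\log(np)}{\mu}\|\bDelta\|$.

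For the Hessian bound \eqref{eq:perturbation_hessian} I would apply the same telescoping idea to the quadratic sandwich
\[
\bA_i^\top\diag\bigl(\mathrm{sech}^2(\bA_i\bh/\mu)\bigr)\bA_i-\bC_i^\top\diag\bigl(\mathrm{sech}^2(\bC_i\bh/\mu)\bigr)\bC_i,
\]
splitting it into one diagonal-perturbation piece (of order $\|\bC_i\|^3\|\bDelta\|/\mu$ via the $1/\mu$-Lipschitzness of $\mathrm{sech}^2$) and two linear-perturbation pieces of order $\|\bC_i\|^2\|\bDelta\|$; the diagonal piece dominates. A small additional contribution arises from $\nabla^2 h_n(\bw)=\nabla^2\sqrt{1-\|\bw\|_2^2}$ (whose norm is $O(1)$ on $\mathcal{Q}_2$) multiplying the difference of Euclidean gradient entries, which is already dominated by the gradient-difference estimate above. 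Sandwiching by $\bJ(\bw)$, paying the outer $1/\mu$, and substituting $\|\bC_i\|\lesssim\sqrt{n\log(np)}$ produces the target $\frac{n^{5/2}\log^{3/2}(np)}{\mu^2}\|\bDelta\|$; a little extra care with an $\ell_\infty$-to-$\ell_2$ or $\ell_{2\to\infty}$ style bound on $\|\bC_i\bDelta\bh\|_\infty$ (using that $\bx_i$ has sub-Gaussian entries) is what calibrates the precise exponent of $n$.

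The main obstacle I foresee is the careful uniform control of $\|\bC_i\|$ across all $p$ observations \emph{simultaneously} with the correct $\mu^{-1}$ and $\mu^{-2}$ scaling: both Lipschitz constants blow up as $\mu\to 0$, so sloppy bookkeeping would introduce spurious $\mu$-factors in the denominator that would degrade the benign geometry in the subsequent application. The sample-size assumption $p\gtrsim \kappa^8 n\log^2 n\log p/\theta^2$ in the hypothesis is precisely the threshold at which (i) Lemma~\ref{lemma:deviation op norm} furnishes a spectrally small $\bDelta$, and (ii) the circulant-norm concentration event for all $i\in[p]$ holds with probability at least $1-p^{-8}$ via a union bound; adding these two probabilities yields the $2p^{-8}$ in the statement.
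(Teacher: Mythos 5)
Your proposal matches the paper's argument: the same telescoping decomposition (isolating a nonlinearity perturbation bounded via the Lipschitzness of $\psi_\mu'$ and $\psi_\mu''$ from a direct $\bDelta$ pull-out), the same uniform circulant-norm concentration $\max_i\|\cC(\bx_i)\|\lesssim\sqrt{n\log(np)}$ together with $\max_i\|\bx_i\|_2\lesssim\sqrt{n\log p}$, and the same use of Lemma~\ref{lemma:deviation op norm} to calibrate the sample size so $\|\bDelta\|\leq 1$ and collect the $2p^{-8}$ failure probability. One small bookkeeping inconsistency: you correctly note that the spectral norm of the Jacobian satisfies $\|\bJ_\bh(\bw)\|=O(1)$ on $\mathcal{Q}_2$, which if used consistently would give the tighter exponent $n^{3/2}$ in \eqref{eq:perturbation_hessian}; the paper instead applies the uniform Frobenius bound $\|\bJ_\bh(\bw)\|\leq\sqrt{2n}$ from \eqref{eq:jacobian_bound}, which is what produces the stated $n^{5/2}$, so your proof actually proves a slightly stronger statement.
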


To complete the proof of Theorem~\ref{theo:general_geometry}, we need to show that the perturbations of the Hessian and the gradient between $\phi_o(\bw)$ and $\phi(\bw)$ are sufficiently small, which hold as long as the sample size is sufficiently large, in view of Lemma~\ref{lemma:deviation op norm}. Consequently, we can propagate the benign geometry of $\phi_o(\bw)$ in Theorem~\ref{thm:orthogonal_geometry} to $\phi(\bw)$. The complete proof is provided in Appendix~\ref{proof:theorem_general_geometry}.

\subsection{Proof Outline of Theorem~\ref{thm:general_GD}}\label{pipe:GD_orthogonal}

To capitalize on the benign geometry established in Theorem~\ref{theo:general_geometry}, one of the key arguments is to ensure that the iterates of MGD stay in the $2n$ subsets $\left \{\cS_{\xi_0}^{(i\pm)}, i\in [n] \right \}$ implicitly. This requires bounding properties of the directional gradient of $f(\bh)$ in \eqref{equ:equvalent_f2}, supplied in the following lemma whose proof can be found in Appendix~\ref{proof:thm:implicit_uniform_concentration_general}.
\begin{lemma} [Uniform concentration of the directional gradient]\label{thm:implicit_uniform_concentration_general}
Instate the assumptions of Theorem~\ref{theo:general_geometry}. There exist some constants $c_a, c_b, C_1$, such that with probability at least $1-3(np)^{-8} -2\exp\sbra{-c_an}$, 
	\begin{align}
		 \partial f(\bh)^\top \sbra{\frac{\be_k}{h_k}-\frac{\be_n}{h_n}} & \geq \frac{c_b \xi_0 \theta}{2},  \label{res: sins geometry}
	\end{align}
for $\bh\in \mathcal{H}_k =\left\{\bh: \bh\in \mathcal{S}_{\xi_0}^{(n+)}, h_k\neq 0, h_n^2/h_k^2<4\right\}$, and
	\begin{align}
		  \norm{\partial f(\bh)}_2 & \leq \norm{\nabla f(\bh)}_2\leq C_1 n \sqrt{\log(np)} \label{eq:bound_grad}
	\end{align}	 
for $\bh\in\mathbb{S}^{n-1}$.
\end{lemma}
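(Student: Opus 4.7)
The plan is to first simplify the directional quantity $\partial f(\bh)^\top(\be_k/h_k-\be_n/h_n)$ using the Riemannian projection identity $\partial f(\bh)=(\bI-\bh\bh^\top)\nabla f(\bh)$. A short calculation shows the inner product with $\bh$ cancels, giving $\partial f(\bh)^\top(\be_k/h_k-\be_n/h_n)=\partial_{h_k} f/h_k-\partial_{h_n} f/h_n$, and the reparameterization $\bh=\bh(\bw)$ from \eqref{eq:reparam} implies $\partial_{w_k}\phi(\bw)=\partial_{h_k}f-\tfrac{h_k}{h_n}\partial_{h_n}f$. Therefore the target quantity equals $\tfrac{1}{h_k}\partial_{w_k}\phi(\bw)$, reducing \eqref{res: sins geometry} to a coordinate version of the ``large directional gradient'' statement. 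On $\mathcal{H}_k$, combining $h_n^2/h_k^2<4$, $h_n^2\geq(1+\xi_0)\|\bh_{\backslash\{n\}}\|_\infty^2$, and $\|\bh\|_2=1$ yields $|h_k|\gtrsim 1/\sqrt{n}$, so the prefactor $1/h_k$ is harmless in covering estimates.

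\paragraph{Pointwise population bound and Bernstein concentration.}
Next, I would compute $\mathbb{E}[\partial_{w_k}\phi_o(\bw)/h_k]$ in the orthogonal case by differentiating the population objective already analyzed in Lemma~\ref{thm:population_geometry_orthogonal}. On $\mathcal{H}_k$ the coordinate $w_k=h_k$ plays the role of a ``dominant'' entry of $\bw$, and a direct adaptation of the argument underlying \eqref{eq:lg_pop_geometry} produces the lower bound $\mathbb{E}[\partial_{w_k}\phi_o(\bw)/h_k]\gtrsim \xi_0\theta$. The transfer to the pre-conditioned loss $\phi$ follows from Lemma~\ref{lemma: deviation gd and he}: the deviation is at most $O(n^{3/2}\log(np)\,\|\bDelta\|/\mu)$, and Lemma~\ref{lemma:deviation op norm} under \eqref{eq:sample_size_general_1} makes this $o(\xi_0\theta)$. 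For the fluctuation around the expectation I would invoke a Bernstein-type pointwise bound in the style of Proposition~\ref{pro:pointwise_large_gradient}, using that each summand is the product of $\tanh(\cdot)\in[-1,1]$ with a sub-exponential linear functional of $\mathcal{C}(\bx_i)\mathcal{C}(\bg)\bR\bU^\top$, giving per-point deviation $\lesssim \sqrt{n^3\log n/p}\ll \xi_0\theta$.

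\paragraph{Uniform extension over the sphere.}
The main obstacle is upgrading the pointwise bound to a uniform statement over $\bh\in\mathcal{H}_k$ simultaneously for all $k\in[n]$. I would build an $\epsilon$-net $\mathcal{N}$ of $\mathbb{S}^{n-1}$ with $|\mathcal{N}|\leq (3/\epsilon)^n$, apply the pointwise concentration along with union bounds over $\mathcal{N}$ and over $k\in[n]$, and then control the fluctuation $|\tfrac{1}{h_k}\partial_{w_k}\phi(\bh)-\tfrac{1}{h_k'}\partial_{w_k}\phi(\bh')|$ using the Lipschitz constant of $\nabla f$ and the lower bound $|h_k|\gtrsim 1/\sqrt{n}$. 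The Hessian spectral norm is polynomial in $n$ times $\max_i\|\mathcal{C}(\bx_i)\mathcal{C}(\bg)\bR\bU^\top\|^2$, and taking $\epsilon$ polynomially small in $n,p$ makes the covering overhead logarithmic, which is absorbed by \eqref{eq:sample_size_general_1}. This yields \eqref{res: sins geometry} on an event of probability at least $1-3(np)^{-8}-2\exp(-c_a n)$.

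\paragraph{Gradient norm bound.}
Finally, \eqref{eq:bound_grad} follows from $\|\partial f(\bh)\|_2\leq\|\nabla f(\bh)\|_2$ (orthogonal projection is contractive) together with the representation $\nabla f(\bh)=\tfrac{1}{p}\sum_{i=1}^p \bM_i^\top\tanh(\bM_i\bh/\mu)$, where $\bM_i:=\mathcal{C}(\bx_i)\mathcal{C}(\bg)\bR\bU^\top$. Since $|\tanh|\leq 1$ entrywise, $\|\tanh(\bM_i\bh/\mu)\|_2\leq\sqrt{n}$, so $\|\nabla f(\bh)\|_2\leq\sqrt{n}\max_{i\in[p]}\|\bM_i\|$. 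The factor $\|\mathcal{C}(\bg)\bR\bU^\top\|$ is $O(1)$ on the event of Lemma~\ref{lemma:deviation op norm}, and $\|\mathcal{C}(\bx_i)\|$ equals $\|\widehat{\bx}_i\|_\infty$; standard concentration for DFT coefficients of Bernoulli--Gaussian vectors gives $\|\widehat{\bx}_i\|_\infty\lesssim \sqrt{n\log(np)}$ uniformly in $i\in[p]$ after a union bound, yielding $\|\nabla f(\bh)\|_2\lesssim n\sqrt{\log(np)}$ as required.
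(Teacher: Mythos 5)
Your high-level structure matches the paper: reduce the directional derivative to $\tfrac{1}{h_k}\partial_{w_k}\phi(\bw)$ using the projection identity (correct, since $\bh^\top(\be_k/h_k-\be_n/h_n)=0$), establish a population lower bound for the orthogonal surrogate, concentrate pointwise, cover, and transfer to the pre-conditioned loss via $\|\bDelta\|$. The bound \eqref{eq:bound_grad} is also handled the same way as the paper. But two steps in the transfer do not close as written.

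First, the citation of Lemma~\ref{lemma: deviation gd and he} loses a $\sqrt{n}$ factor. That lemma controls $\|\nabla_{\bw}\phi-\nabla_{\bw}\phi_o\|_2\lesssim n^{3/2}\log(np)\|\bDelta\|/\mu$, but your target quantity is $\tfrac{1}{h_k}\partial_{w_k}\phi$; taking $\bigl|\tfrac{1}{h_k}\partial_{w_k}\phi-\tfrac{1}{h_k}\partial_{w_k}\phi_o\bigr|\leq \tfrac{1}{|h_k|}\|\nabla_{\bw}\phi-\nabla_{\bw}\phi_o\|_2$ with $|h_k|\gtrsim 1/\sqrt{n}$ yields $O\!\bigl(n^{2}\log(np)\|\bDelta\|/\mu\bigr)$. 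Plugging in Lemma~\ref{lemma:deviation op norm} under \eqref{eq:sample_size_general_1} gives $\|\bDelta\|\lesssim \theta\mu\xi_0 / \bigl(n^{3/2}\log^{3/2}(np)\bigr)$, so this deviation is of order $\sqrt{n}\,\xi_0\theta/\sqrt{\log(np)}$, which diverges rather than being $o(\xi_0\theta)$. The correct route is to write $\tfrac{1}{h_k}\partial_{w_k}\phi=(\be_k/h_k-\be_n/h_n)^\top\nabla_{\bh}f$ and bound the Euclidean $\bh$-gradient deviation $\|\nabla_{\bh}f-\nabla_{\bh}f_o\|_2\lesssim n\log(np)\|\bDelta\|/\mu$ (no Jacobian factor, so this is a separate estimate from Lemma~\ref{lemma: deviation gd and he}) multiplied by $\|\be_k/h_k-\be_n/h_n\|_2\leq\sqrt{5n}$, recovering the needed $O(n^{3/2}\log(np)\|\bDelta\|/\mu)$. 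This is precisely what the paper does.

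Second, you invoke Bernstein on summands expressed through $\cC(\bx_i)\cC(\bg)\bR\bU^\top$. But $\bR$ is built from all $\{\bx_j\}_{j=1}^p$, so these summands are not independent and Bernstein does not apply directly to the pre-conditioned empirical loss $\phi$. The paper applies Bernstein only to $\phi_o$ (whose summands involve $\cC(\bx_i)$ alone and are i.i.d.), proves a uniform statement there, and transfers deterministically to $\phi$ on the event that $\|\bDelta\|$ is small, an event handled separately by Lemma~\ref{lemma:deviation op norm}. If your intent was the same, you need to state explicitly that concentration and covering are done for $\phi_o$, with the $\phi_o\to\phi$ step decoupled; as written, the independence hypothesis of the concentration inequality fails.
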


The following lemma, proved in Appendix \ref{proof:implicit_stay}, then shows that the iterates of MGD will always stay in one of the subsets $\left \{\cS_{\xi_0}^{(i\pm) }, i\in [n] \right \}$ that it initializes in, as long as the sample complexity $p$ is large enough and the step size is properly chosen.
\begin{lemma} [Implicitly staying in the subsets]\label{thm:implicit_stay}
Instate the assumptions of Theorem~\ref{theo:general_geometry}. 
	For the MGD algorithm in Alg.~\ref{algo:1}, if the initialization satisfies that $\bh^{(0)}\in \cS_{\xi_0}^{(i \pm)}$ for any $i\in[n]$, and the step size satisfies
		$\eta\leq \frac{c}{ n^{3/2}\sqrt{\log(np)} }$
for some small enough constant $c$,	 then the iterates $\bh^{(k)}$, $k=1,2,\cdots$ will stay in $\cS_{\xi_0}^{(i\pm)}$.
\end{lemma}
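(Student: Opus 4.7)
The plan is to argue by induction on the iteration index $t$, proving that a single MGD step keeps the iterate inside the same subset. By the symmetry across the $2n$ subsets $\{\cS_{\xi_0}^{(i\pm)}\}_{i\in[n]}$, it suffices to treat $\bh^{(t)}\in\cS_{\xi_0}^{(n+)}$, i.e.\ $h_n^{(t)}>0$ and $(h_n^{(t)})^2\ge(1+\xi_0)(h_j^{(t)})^2$ for all $j\ne n$. Since the spherical normalization scales every coordinate by the same positive factor, it preserves coordinate signs and pairwise ratios, so it is enough to analyze the unnormalized step $\tilde\bh:=\bh^{(t)}-\eta\,\partial f(\bh^{(t)})$ and check (i) $\tilde h_n>0$ and (ii) $\tilde h_n^2/\tilde h_j^2\ge 1+\xi_0$ for every $j\ne n$.

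Two deterministic consequences of Lemma~\ref{thm:implicit_uniform_concentration_general} drive the argument. First, the uniform gradient bound $\|\partial f(\bh)\|_2\le C_1 n\sqrt{\log(np)}$ combined with $\eta\le c/(n^{3/2}\sqrt{\log(np)})$ yields the coordinatewise displacement bound $M:=\eta\|\partial f(\bh^{(t)})\|_\infty\le cC_1/\sqrt n$, which can be made arbitrarily small by shrinking $c$. Second, $\bh^{(t)}\in\cS_{\xi_0}^{(n+)}$ forces $h_n^{(t)}\ge 1/\sqrt n$, since $1=\|\bh^{(t)}\|_2^2\le(1+(n-1)/(1+\xi_0))(h_n^{(t)})^2\le n(h_n^{(t)})^2$. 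Claim (i) then follows from $\tilde h_n\ge h_n^{(t)}-M\ge(1-cC_1)/\sqrt n>0$. For (ii), I would split on the current ratio: in \emph{Case A}, where $(h_n^{(t)})^2/(h_j^{(t)})^2\ge 4$ (so $|h_j^{(t)}|\le h_n^{(t)}/2$, possibly zero), the crude triangle-inequality bound $\tilde h_n^2/\tilde h_j^2\ge(h_n^{(t)}-M)^2/(h_n^{(t)}/2+M)^2$ exceeds $2>1+\xi_0$ as soon as $M\le h_n^{(t)}/10$, which again follows from small enough $c$.

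The substantive case is \emph{Case B}, where $1+\xi_0\le(h_n^{(t)})^2/(h_j^{(t)})^2<4$: here $\bh^{(t)}\in\mathcal{H}_j$, so Lemma~\ref{thm:implicit_uniform_concentration_general} delivers the directional inequality $(\partial f)_j/h_j^{(t)}-(\partial f)_n/h_n^{(t)}\ge c_b\xi_0\theta/2$. Setting $a:=(\partial f)_n/h_n^{(t)}$ and $b:=(\partial f)_j/h_j^{(t)}$, the one-step ratio factorizes as $\tilde h_n^2/\tilde h_j^2=\bigl((h_n^{(t)})^2/(h_j^{(t)})^2\bigr)\cdot(1-\eta a)^2/(1-\eta b)^2$, and the elementary identity $(1-\eta a)^2-(1-\eta b)^2=\eta(b-a)\bigl(2-\eta(a+b)\bigr)$ is nonnegative once $\eta|a|,\eta|b|\le 1/2$. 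In Case B one has $|h_j^{(t)}|>h_n^{(t)}/2\ge 1/(2\sqrt n)$, so the uniform gradient bound forces $\eta|b|\le 2cC_1$, ensuring the last condition for small $c$; hence the ratio only grows and stays $\ge 1+\xi_0$. The main obstacle will be precisely this Case B bookkeeping: one must simultaneously keep $1-\eta a$ and $1-\eta b$ strictly positive so that the quadratic ratio manipulation is valid, ensure the first-order gain $2\eta(b-a)$ dominates the second-order correction $\eta^2(b-a)(a+b)$, and glue Cases A and B seamlessly across the boundary $(h_n^{(t)})^2/(h_j^{(t)})^2=4$. All randomness is absorbed into the single event from Lemma~\ref{thm:implicit_uniform_concentration_general}, so the induction runs deterministically on that event and the stated probability is inherited without any union bound over iterations.
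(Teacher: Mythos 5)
Your proposal is correct and takes essentially the same approach as the paper: you observe that the spherical normalization preserves signs and ratios so only the unnormalized step matters, invoke the two conclusions of Lemma~\ref{thm:implicit_uniform_concentration_general} (the uniform $\ell_2$ gradient bound and the directional inequality over $\mathcal{H}_k$), use $h_n\ge 1/\sqrt n$ to control all displacements, and split the coordinates into the regime where the ratio is already large (handled by a crude triangle-inequality estimate) and the regime $1+\xi_0\le h_n^2/h_j^2<4$ (handled by the directional inequality, which shows the ratio can only increase). The paper separates $h_j=0$ from $h_n^2/h_j^2\ge 4$ as two explicit subcases and writes the Case-B update as $\frac{h_n^2}{h_j^2}\bigl(1+\eta\,\partial f(\bh)^\top(\be_j/h_j-\be_n/h_n)/(1-\eta[\partial f]_j/h_j)\bigr)^2$, whereas you merge the first two into a single Case~A and work via the difference-of-squares factorization $(1-\eta a)^2-(1-\eta b)^2=\eta(b-a)(2-\eta(a+b))$ — but these are algebraically equivalent, and both hinge on exactly the same positivity conditions ($1-\eta b>0$, $b-a\ge c_b\xi_0\theta/2>0$) secured by the step-size choice and the lower bound on $|h_j|$ in Case~B.
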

 
The proof of Theorem~\ref{thm:general_GD} then follows by analyzing the convergence in two stages, corresponding to when the iterates lie in the region with large directional gradients, and the region with strong convexity, respectively. The details are given in Appendix \ref{proof:gd_for_orthogonal_case}.

 Till this point, the only left ingredient is to make sure a valid initialization can be obtained efficiently. By setting $\xi_0$ sufficiently  small, it is known from the following lemma \cite[Lemma 3]{gilboa2019efficient} that the union of $\left \{\cS_{\xi_0}^{(i \pm)}, i\in [n] \right \}$ is large enough to ensure a random initialization will land in it with a constant probability.
\begin{lemma}[{\cite[Lemma 3]{gilboa2019efficient}}]\label{theo:random_initialization} 
    When $\xi_0=\frac{1}{4 \log n}$, an initialization selected uniformly at random on the sphere lies in one of these $2n$ subsets $\left \{\cS_{\xi_0}^{(i\pm)}, i\in [n] \right \}$ with probability at least $1/2$. 
\end{lemma}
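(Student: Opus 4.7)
The strategy is to reduce to an i.i.d.\ Gaussian computation and then directly estimate the resulting probability on order statistics of $\chi^2_1$ variables. Using the standard representation $\bh \stackrel{d}{=} \bg/\|\bg\|_2$ with $\bg\sim \cN(0,\bI_n)$ for the uniform distribution on $\SSS^{n-1}$, and noting that the defining inequality $h_i^2 \geq (1+\xi_0)\max_{j\neq i}h_j^2$ of each $\cS_{\xi_0}^{(i\pm)}$ is scale-invariant while the $2n$ sets are disjoint (both the index $i$ and the sign $\pm$ are uniquely determined by $\bh$ whenever the inequality is strict), membership of $\bh$ in the union is equivalent to the event
$$\cE \,:=\, \bigl\{\, Z_{(1)} \geq (1+\xi_0)\,Z_{(2)}\,\bigr\},$$
where $Z_i := g_i^2 \sim \chi^2_1$ are i.i.d.\ and $Z_{(1)}\geq Z_{(2)}\geq \cdots \geq Z_{(n)}$ are their order statistics. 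It thus suffices to show $\PP(\cE)\geq 1/2$ when $\xi_0 = 1/(4\log n)$.

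Next, I would exploit the order-statistics conditional formula: given $Z_{(2)}=b$, the conditional law of $Z_{(1)}$ is that of a $\chi^2_1$ variate conditioned on exceeding $b$, so
$$\PP(\cE) \;=\; \EE\!\left[\frac{\bar F\bigl((1+\xi_0)\,Z_{(2)}\bigr)}{\bar F(Z_{(2)})}\right],$$
where $\bar F(t)=2\bar\Phi(\sqrt{t})$ denotes the $\chi^2_1$ survival function. Mills' ratio bounds on $\bar\Phi$ yield, uniformly for $t\geq 2$,
$$\frac{\bar F\bigl((1+\xi_0)t\bigr)}{\bar F(t)} \;\geq\; c_0\,\exp\!\bigl(-\xi_0\, t/2\bigr)$$
for an explicit absolute constant $c_0$. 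A simple union bound $\PP[Z_{(2)}>u]\leq n^2 \bar F(u)^2$ then shows $Z_{(2)} \leq 2\log n + C$ with probability $1-o(1)$ upon choosing $C$ large. On this event, $\xi_0 Z_{(2)}/2 \leq 1/4 + o(1)$, so $\exp(-\xi_0 Z_{(2)}/2)\geq e^{-1/4}\approx 0.78$. Combining these estimates with the fact that $\{Z_{(2)}<2\}$ has exponentially small probability yields $\PP(\cE)\geq 1/2$ for all $n$ sufficiently large, while small $n$ are dispatched by direct numerical verification.

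The primary obstacle is tracking constants carefully so that the final lower bound is exactly $\geq 1/2$ for \emph{all} $n\geq 2$ rather than only asymptotically: both the Mills-ratio factor $c_0$ and the upper tail of $Z_{(2)}$ degrade if estimated crudely, so sharper versions of each are needed. An alternative, cleaner route---matching the one in \cite{gilboa2019efficient}---is to compute $\PP[\cS_{\xi_0}^{(n+)}]$ directly by conditioning on $g_n$ and exploiting the independence of $\max_{j<n}|g_j|$ from $g_n$, which reduces the problem to a single one-dimensional integral; summing over the $2n$ disjoint, equiprobable sets then delivers the $1/2$ bound in a more streamlined fashion.
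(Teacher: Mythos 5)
The paper does not prove this lemma; it is cited verbatim from \cite[Lemma 3]{gilboa2019efficient}, so there is no in-paper proof to compare against. Assessing your argument on its own merits:

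Your setup is correct. Writing $\bh\stackrel{d}{=}\bg/\|\bg\|_2$ with $\bg\sim\cN(0,\bI_n)$, the $2n$ sets $\cS_{\xi_0}^{(i\pm)}$ are almost surely disjoint, and membership in their union is equivalent to the scale-invariant event $\cE=\{Z_{(1)}\geq(1+\xi_0)Z_{(2)}\}$, where $Z_i=g_i^2\sim\chi^2_1$. The order-statistics identity $\PP(\cE)=\EE\bigl[\bar F((1+\xi_0)Z_{(2)})/\bar F(Z_{(2)})\bigr]$ with $\bar F(t)=2\bar\Phi(\sqrt t)$ is also correct: given $Z_{(2)}=b$, the top value $Z_{(1)}$ is a $\chi^2_1$ draw conditioned to exceed $b$.

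The gap is in the quantitative step, and as written the argument does not reach $1/2$ even asymptotically. A Mills'-ratio bound stated \emph{uniformly for $t\geq 2$} with a universal $c_0$ forces $c_0$ to its worst case near $t=2$; standard two-sided Mills estimates put that around $0.6$ or smaller. Multiplying by your $e^{-\xi_0 Z_{(2)}/2}\geq e^{-1/4}\approx 0.78$ (which itself only holds after truncating $Z_{(2)}$ to a high-probability window, costing another factor) leaves you at or below $1/2$ — you acknowledge precisely this obstacle, but without resolving it. Two sharpenings close it. First, because $x\mapsto x\bar\Phi(x)/\phi(x)$ is increasing on $(0,\infty)$, one has the cleaner uniform bound $\bar\Phi(\sqrt{(1+\xi_0)t})/\bar\Phi(\sqrt t)\geq (1+\xi_0)^{-1/2}e^{-\xi_0 t/2}$ for all $t>0$, which replaces the lossy $c_0$ by $(1+\xi_0)^{-1/2}\geq 0.85$ for every $n\geq 2$. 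Second, avoid the truncation entirely via Jensen, $\EE[e^{-\xi_0 Z_{(2)}/2}]\geq e^{-\xi_0\EE[Z_{(2)}]/2}$, combined with a bound such as $\EE[Z_{(2)}]\leq 2\log n$ (this needs a short separate argument but is not hard); this gives $\PP(\cE)\geq(1+\xi_0)^{-1/2}e^{-1/4}>0.6$. The alternative route you sketch at the end — computing $\PP[\cS_{\xi_0}^{(n+)}]=\int_0^\infty\phi(u)\bigl(1-2\bar\Phi(u/\sqrt{1+\xi_0})\bigr)^{n-1}\,du$ by conditioning on $g_n$ and multiplying by $2n$ — is indeed what \cite{gilboa2019efficient} does and sidesteps this constant-tracking altogether; if you want a self-contained proof, that is the path to follow through.
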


Finally, combining Lemma~\ref{theo:random_initialization} and Theorem~\ref{thm:general_GD}, by setting $\xi_0=1/(4\log n)$, we can guarantee to recover $\bg_{\inv}$ accurately up to global ambiguity with high probability,  as long as Alg.~\ref{algo:1} is initialized uniformly at random over the sphere with $O(\log n)$ times. This leads precisely to Corollary~\ref{cor:main_corollary}.

\section{Numerical Experiments}
\label{sec:numerical}

In this section, we examine the performance of the proposed approach with comparison to \cite{li2018global}, which is also based on MGD but using a different loss function $L(\bh)=-\frac{1}{4p}\sum_{i=1}^p \norm{\mathcal{C}(\by_i)\bR \bh}_4^4 $ over the sphere, on both synthetic and real data.

\subsection{MSBD with Synthetic Data}

We first compare the success rates of the proposed approach and the approach in \cite{li2018global}, following a similar simulation setup as in \cite{li2018global}. In each experiment, the sparse inputs are generated following $\mathrm{BG}(\theta)$, and $\cC(\bg)$ with specific $\kappa$ is synthesized by generating the DFT $\widehat{\bg}$ of $\bg$ which is random with the following rules: 1) the DFT $\widehat{\bg}$ is symmetric to ensure that $\bg$ is real, i.e., $\widehat{\bg}_{j} = \widehat{\bg}^*_{n+2-j}$, where $*$ denotes the conjugate operation; 2) the gains of $\widehat{\bg}$ follow a uniform distribution on $[1, \kappa]$, and the phases of $\widehat{\bg}$ follow a uniform distribution on $[0, 2\pi)$.  

In all experiments, we run MGD (cf. Alg.~\ref{algo:1}) for no more than $T=200$ iterations with a fixed step size of $\eta=0.1$ and apply backtracking line search for both methods for computational efficiency. For our formulation, we set $\mu=\min{(10 n^{-5/4},0.05)}$. For each parameter setting, we conduct $10$ Monte Carlo simulations to compute the success rate. Recall that the desired estimate $\widehat{\bg}_{\mathrm{inv}}$ is a signed shifted version of $\bg_{\inv}$, since $\cC(\bg)  \bg_{\mathrm{inv}} = \pm \be_j$ ($j\in [n]$). Therefore, to evaluate the accuracy of the output $\widehat{\bg}_{\mathrm{inv}}$, we compute $\cC(\bg) \widehat{\bg}_{\mathrm{inv}}$ using the ground truth $\bg$, and declare that the recovery is successful if
$ \|\cC(\bg) \widehat{\bg}_{\mathrm{inv}}\|_\infty / \| \cC(\bg) \widehat{\bg}_{\mathrm{inv}}\|_2 > 0.99$.

Fig.~\ref{fig:success_rate_nonorth} (a) and (d) show the success rates of the proposed approach and the approach in \cite{li2018global} with respect to $n$ and $p$, where $\theta=0.3$ and $\kappa =8$ are fixed. It can be seen that the proposed approach succeeds at a much smaller sample size, even when $p$ is smaller than $n$. This indicates room for improvements of our theory. Fig.~\ref{fig:success_rate_nonorth} (b) and (e) shows the success rates of the two approaches with respect to $\theta$ and $p$, where $n=64$ and $\kappa=8$ are fixed. The proposed approach continues to work well even at a relatively high value of $\theta$ up to around $0.5$. Finally, Fig.~\ref{fig:success_rate_nonorth} (c) and (f) shows the success rate of the two approaches with respect to $\kappa$ and $p$, where $n=64$ and $\theta=0.3$ are fixed. Again, the performance of the proposed approach is quite insensitive to the condition number $\kappa$ as long as the sample size $p$ is large enough. On the other end, the approach in \cite{li2018global} performs significantly worse than the proposed approach under the examined parameter settings.

\begin{figure}[t] 
    \centering
    \begin{tabular}{ccc}
    \hspace{-0.1in} \includegraphics[width=0.32\linewidth]{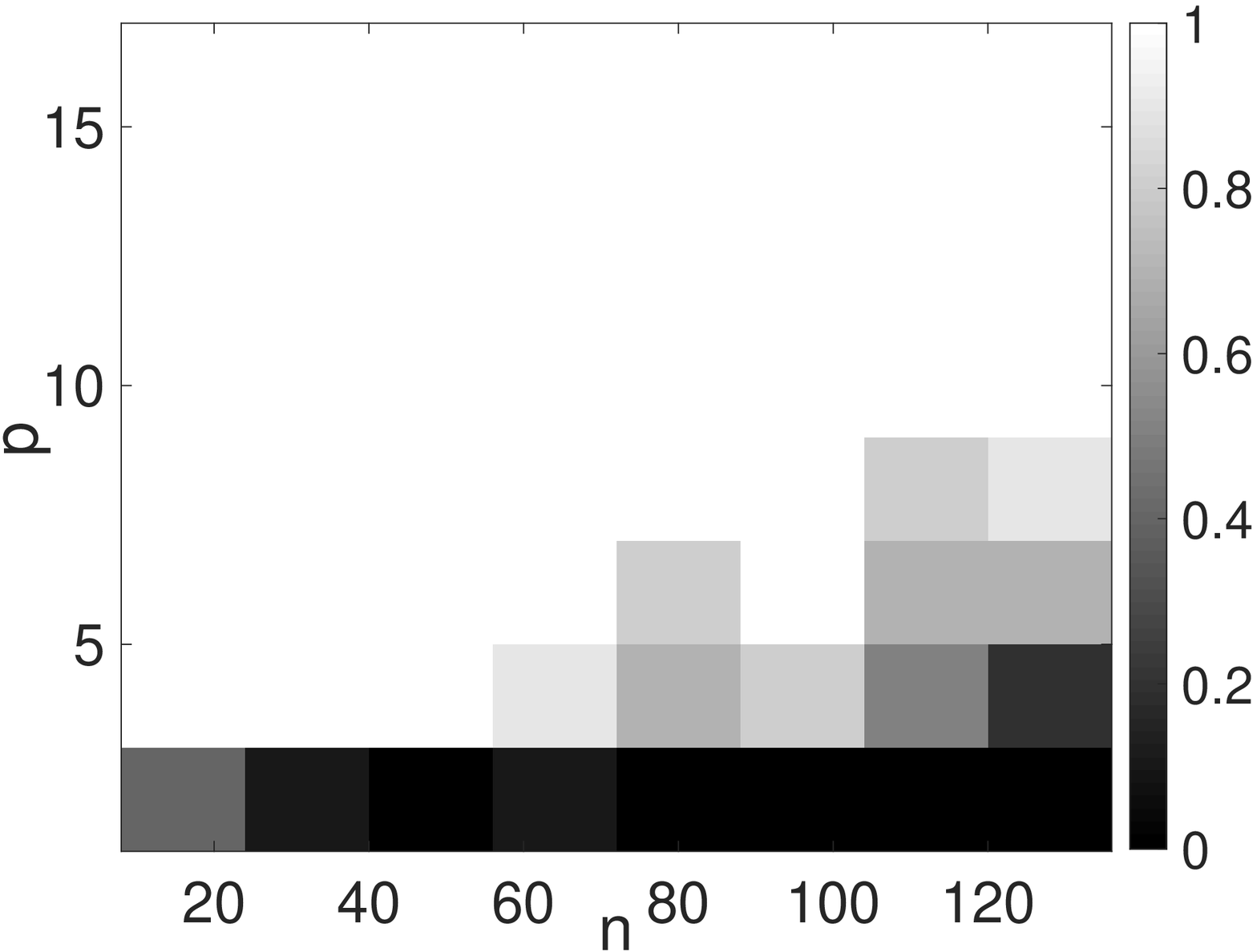} & 
    \hspace{-0.1in}    \includegraphics[width=0.32\linewidth]{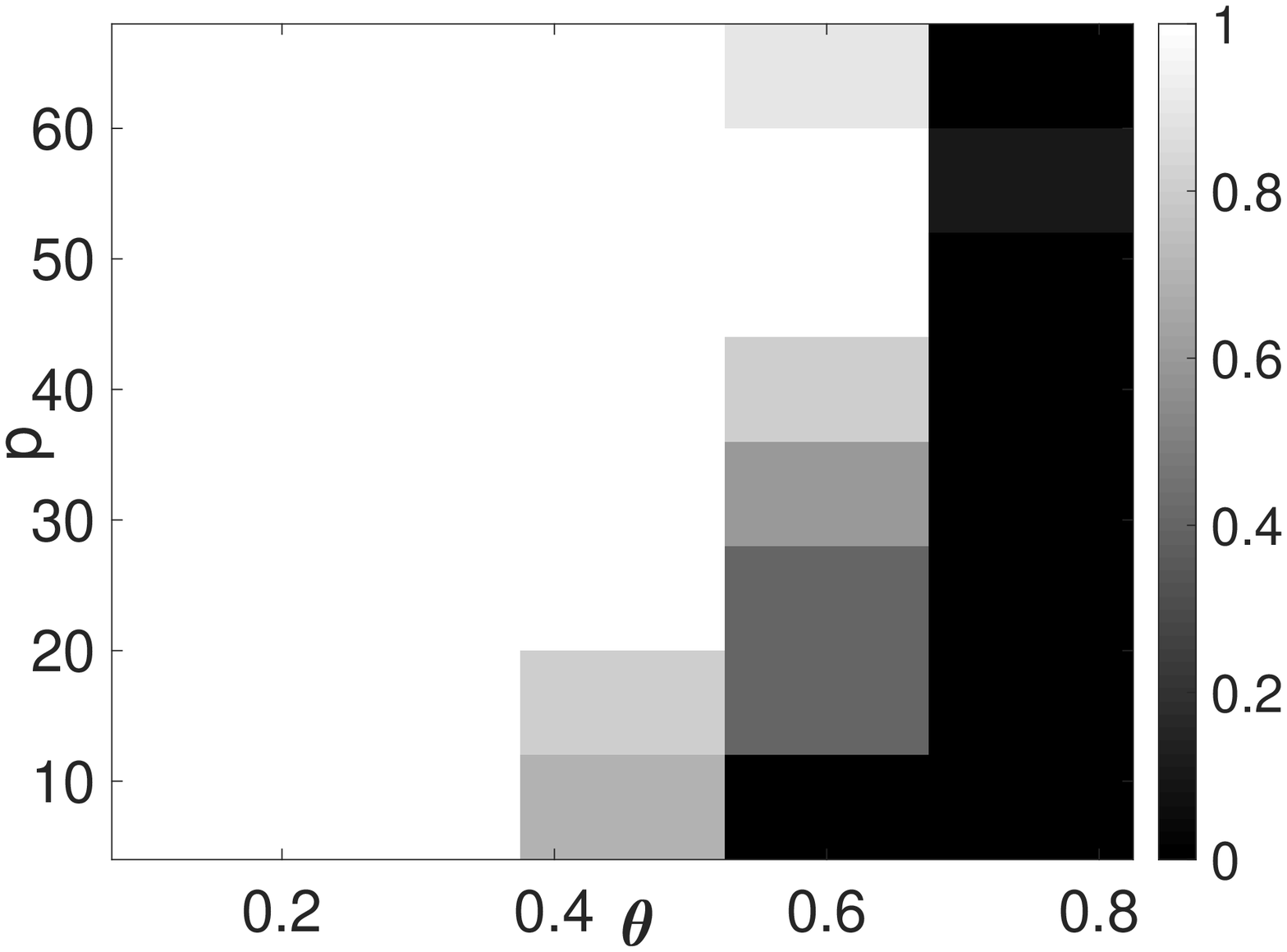} &
    \hspace{-0.1in} \includegraphics[width=0.32\linewidth]{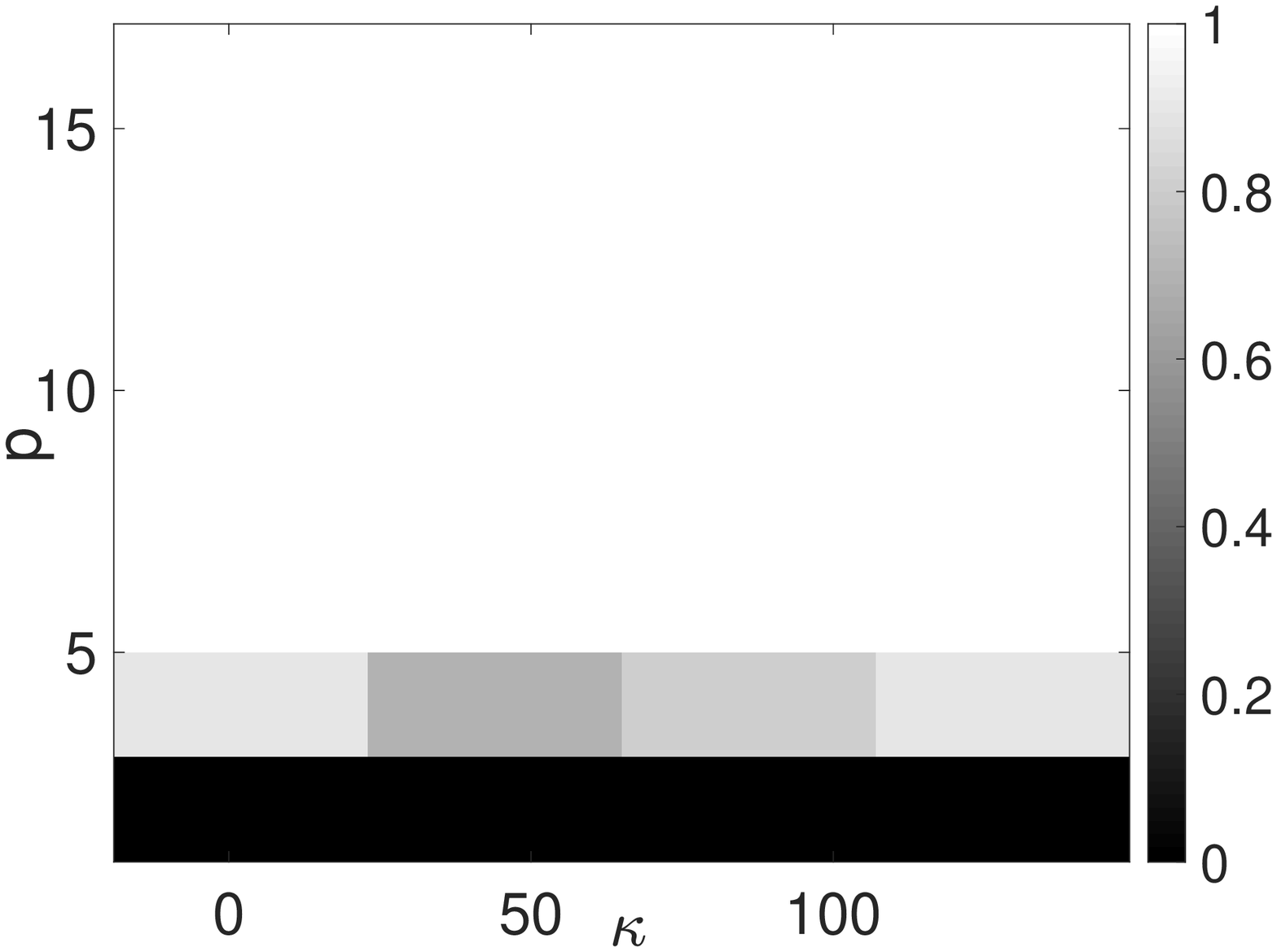}\\  
    \hspace{-0.2in} (a) ours $(n,p)$ & 
    \hspace{-0.2in}(b) ours $(\theta,p)$& 
    \hspace{-0.2in} (c) ours $(\kappa,p)$ \\ \vspace{0.01in} \\
    \hspace{-0.1in} \includegraphics[width=0.32\linewidth]{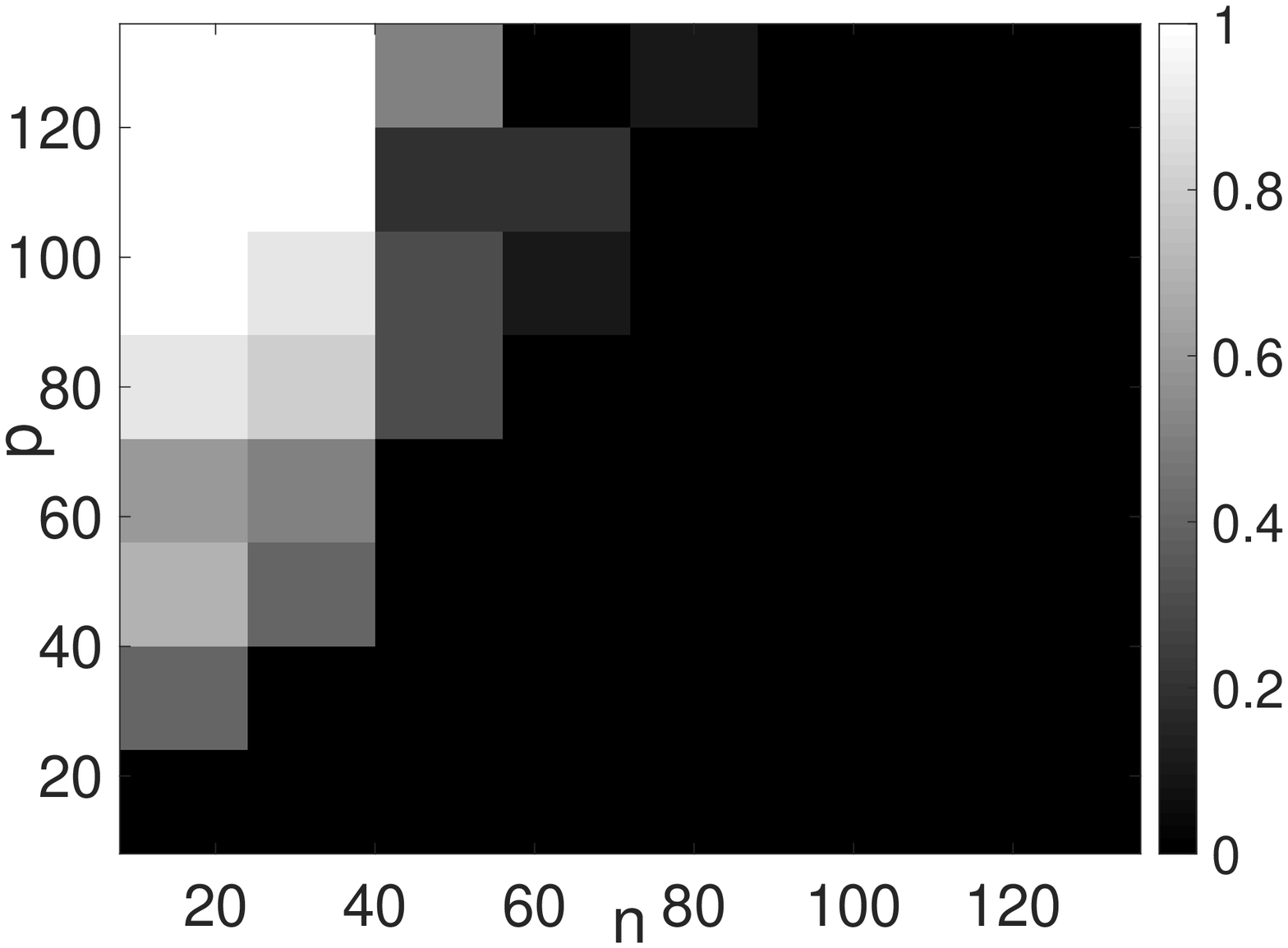} &
    \hspace{-0.1in} \includegraphics[width=0.32\linewidth]{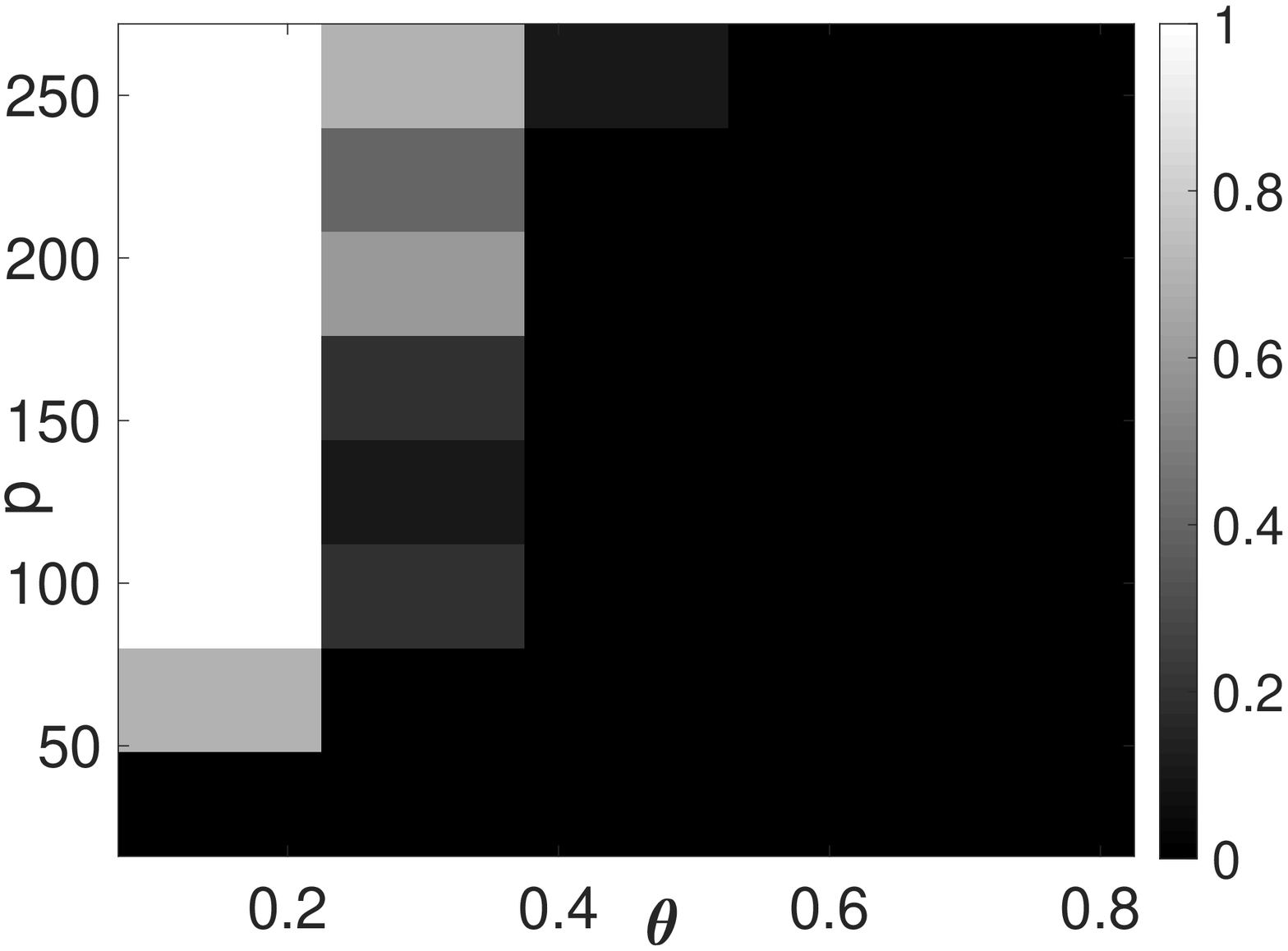} &
   \hspace{-0.1in} \includegraphics[width=0.32\linewidth]{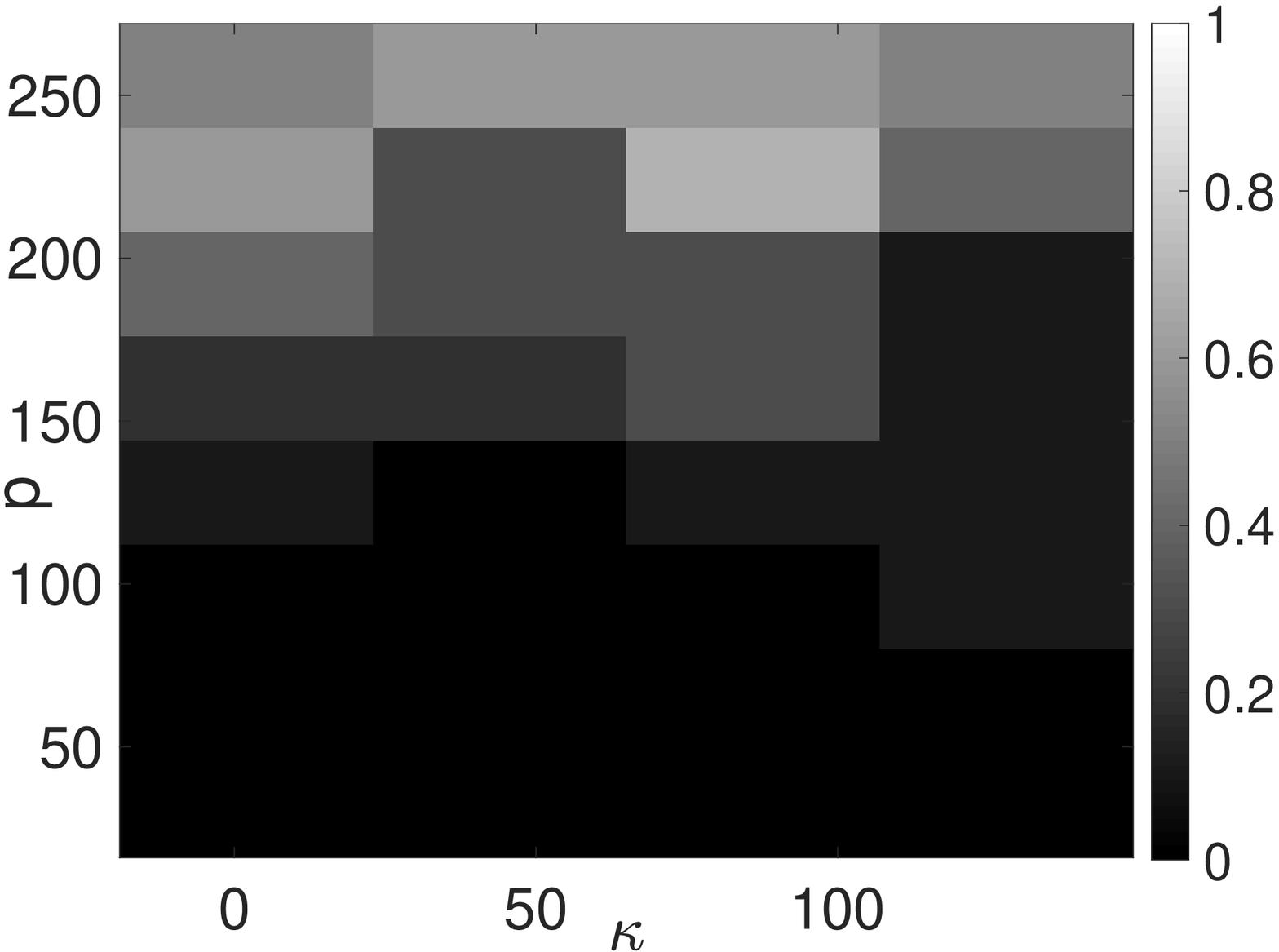} \\
    \hspace{-0.2in} (d)  \cite{li2018global} $(n,p)$ &
    \hspace{-0.2in} (e)   \cite{li2018global} $(\theta,p)$& 
    \hspace{-0.2in} (f)   \cite{li2018global} $(\kappa,p)$
    \end{tabular} 
    \caption{Success rates of the proposed approach (first row) and the approach in \cite{li2018global} (second row) under various parameter settings. }
    \label{fig:success_rate_nonorth}
\end{figure}

\subsection{Image Deconvolution and Deblurring} 

To further evaluate our method, we performance the task of blind image reconstruction and deblurring, and compare with \cite{li2018global}. Firstly, suppose multiple circulant convolutions $\{ \by_i\}_{i=1}^p$ (illustrated in Fig.~\ref{fig:real_data_experiments} (b)) of an unknown 2D image (illustrated in the ground truth figure in Fig. \ref{fig:real_data_experiments}, the Hamerschlag Hall on the campus of CMU) and multiple Bernoulli-Gaussian (BG) sparse inputs $\{ \bx_i \sim_{iid}\mathrm{BG}(\theta)\}_{i=1}^p$ (illustrated in Fig.~\ref{fig:real_data_experiments} (a)) are observed. Here, the size of the observations is $n=128\times 128$, $\theta=0.1$, and the number of observations $p=1000$, which is significantly smaller than $n$. 

We apply the proposed reconstruction method to each channel of the image, i.e. R, G, B, 
respectively using the corresponding channel of the observations $\{ \by_i\}_{i=1}^p$, and obtain the final recovery by summing up the recovered channels. For each channel, the recovered image is computed as $ \hat{\bg}=\mathcal{F}^{-1} \mbra{\mathcal{F}\sbra{\bR \hat{\bh}}^{\odot  -1}  },$ where $\hat{\bh}$ denotes the output of the algorithm, $\mathcal{F}$ is the 2D DFT operator, and $\bx^{\odot  -1}$ is the entry-wise inverse of a vector $\bx$. Fig.~\ref{fig:real_data_experiments} (c) and (d) show the final recovered images by our method and \cite{li2018global} (after aligning the shift and sign) respectively. It implies that the proposed approach obtains much better recovery than that in \cite{li2018global} when the sparse inputs $\{ \bx_i\}_{i=1}^p$ are with constant sparsity level $\theta$.

We next consider a more realistic setting and examine the performance of the proposed algorithm when the sparse coefficients do not obey the Bernoulli-Gaussian model. Using the same 2D image, we now generate multiple circulant convolutions $\{ \by_i\}_{i=1}^p$ (illustrated in Fig.~\ref{fig:real_data_experiments} (f)) using realistically-generated motion blur kernels\footnote{The nonlinear blur kernels are randomly produced using the tool in \url{https://github.com/LeviBorodenko/motionblur}. } (illustrated in Fig.~\ref{fig:real_data_experiments} (e)). Fig.~\ref{fig:real_data_experiments} (g) and (h) show the final recovered images by our method and \cite{li2018global} (after aligning the shift and sign) respectively. 
It can be seen that the proposed approach still obtains a robust recovery and removes the blurring effectively, while the recovery using \cite{li2018global} further degenerates possibly due to the model mismatch.

\begin{figure}[t]
    \centering
    \begin{tabular}{cccc}
    \centering
  &    \multicolumn{2}{c}{\includegraphics[width=0.25\linewidth]{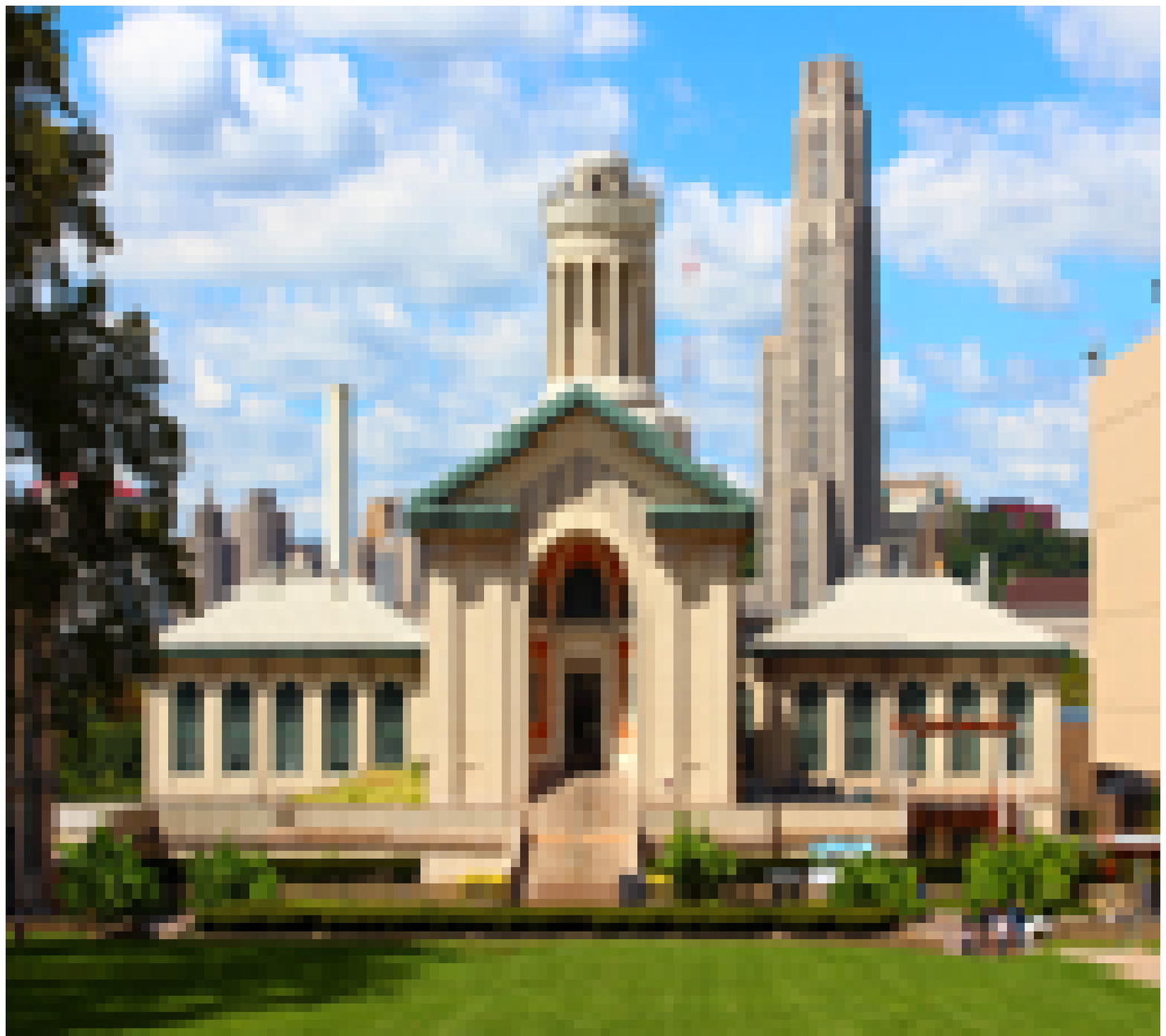}}  & \\
      \multicolumn{4}{c}{ground truth} \\ \vspace{0.01in} \\
    \hspace{-0.15in} \includegraphics[width=0.25\linewidth]{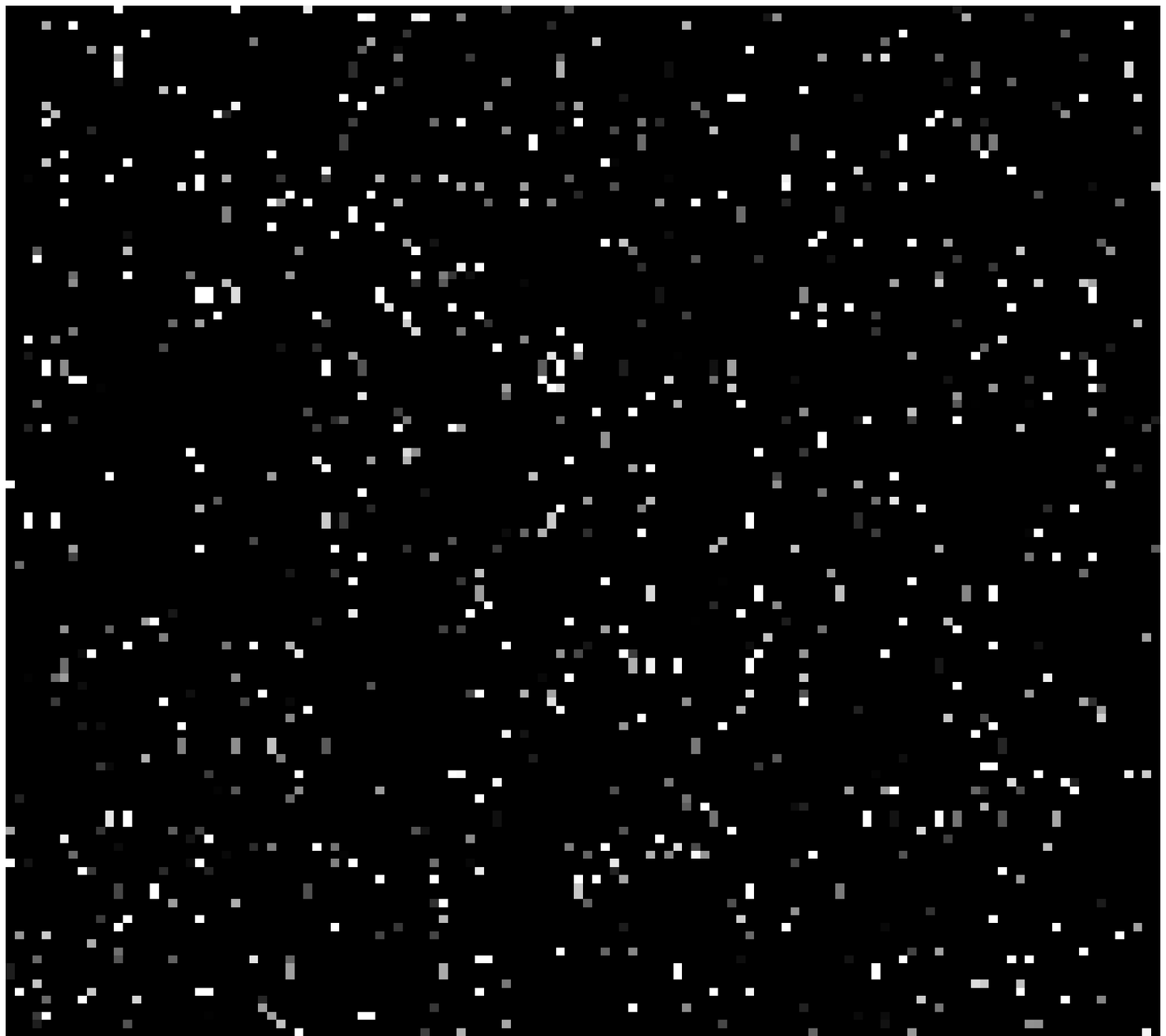} & 
    \hspace{-0.15in}  \includegraphics[width=0.25\linewidth]{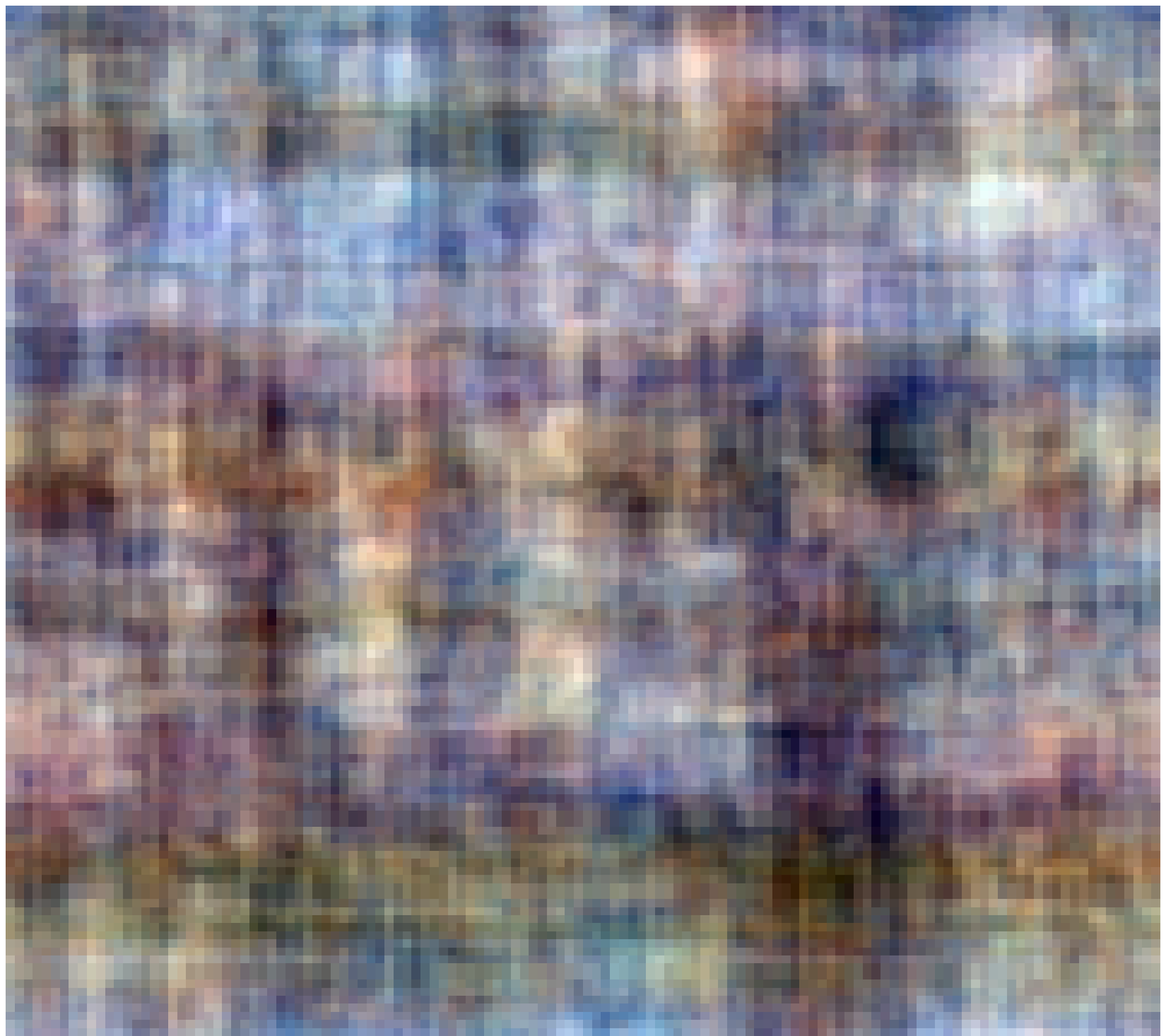}    &
    \hspace{-0.15in} \includegraphics[width=0.25\linewidth]{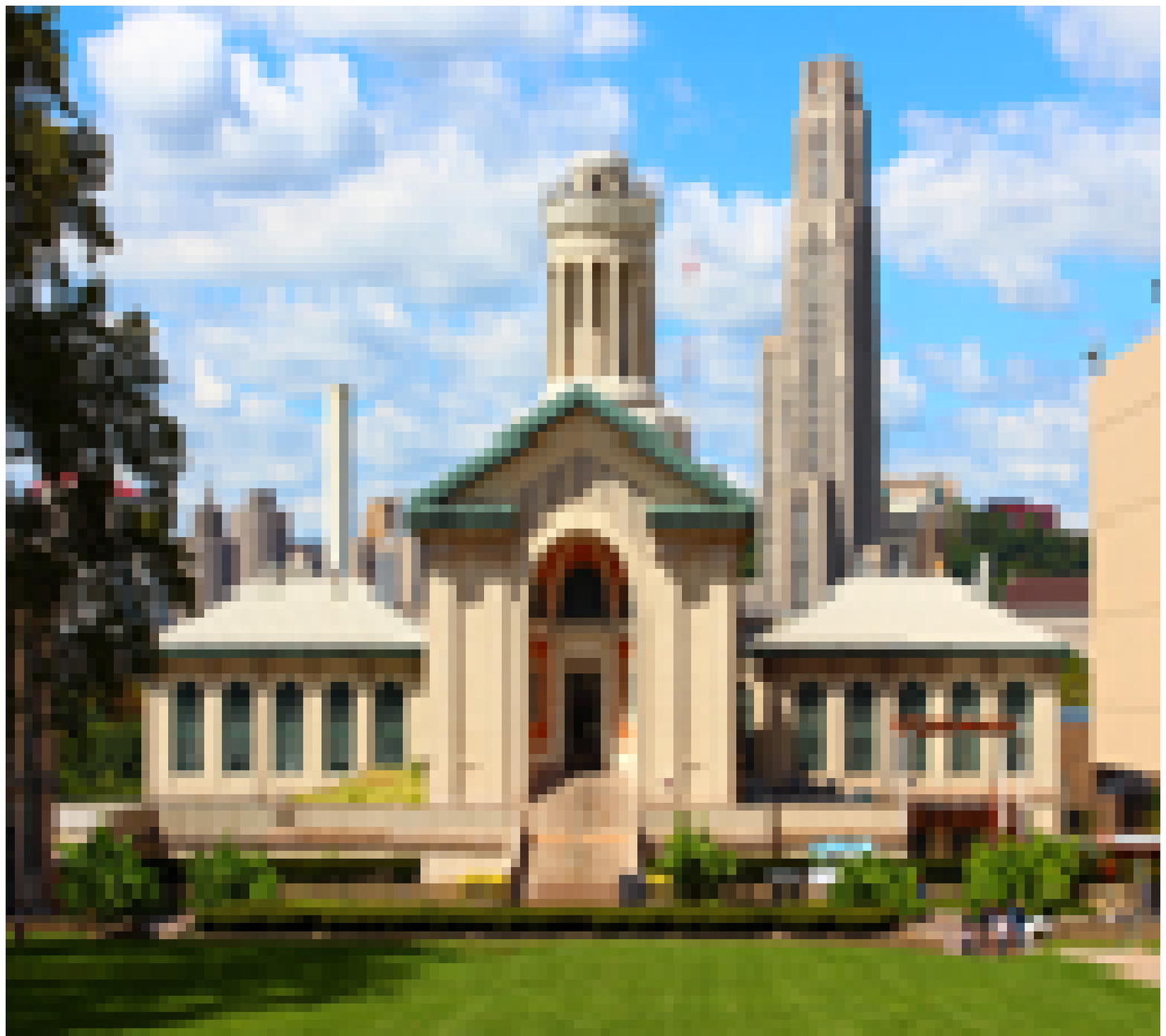} & 
    \hspace{-0.15in} \includegraphics[width=0.25\linewidth]{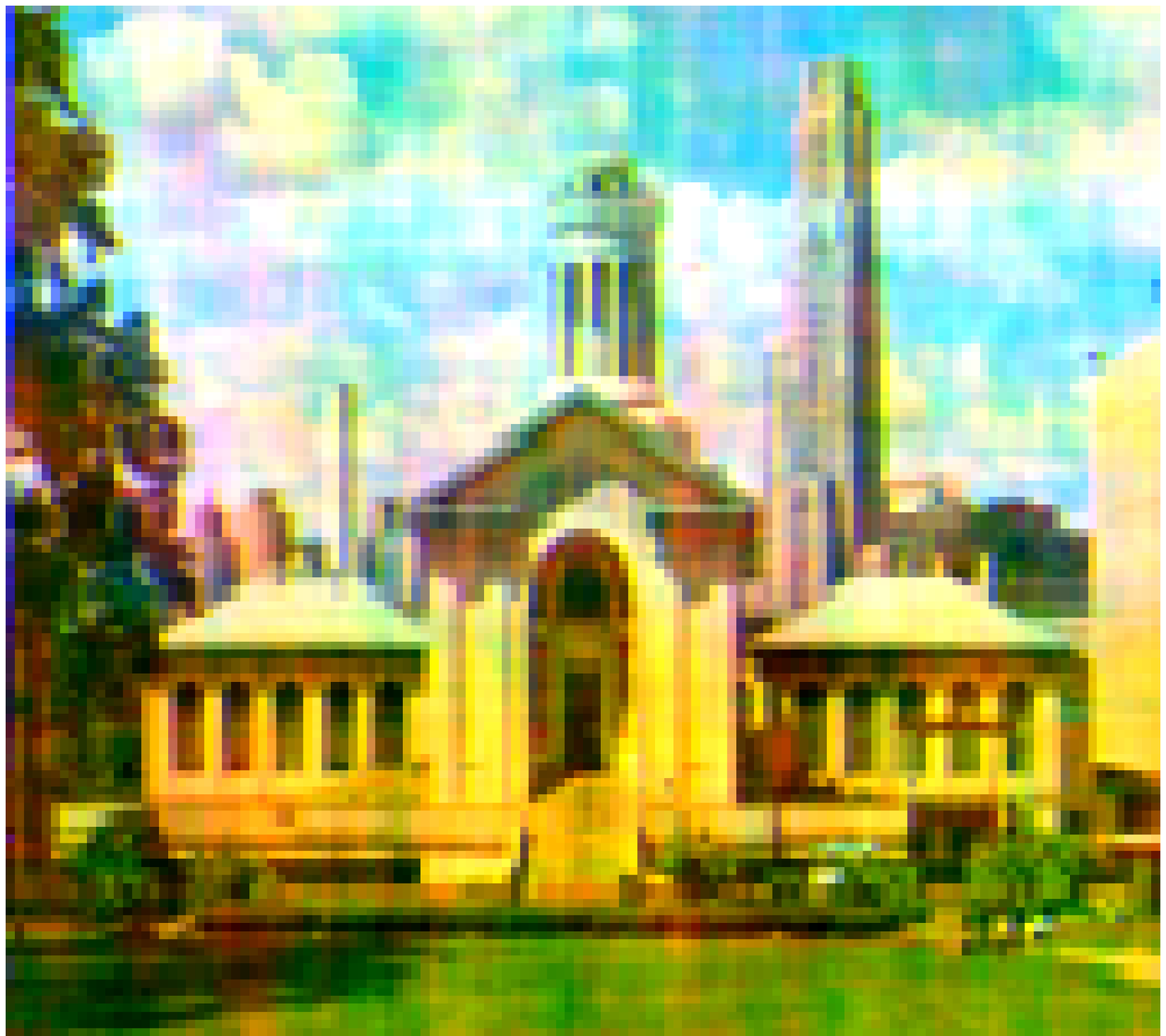} 
     \\ 
\hspace{-0.2in} (a) BG input   &
     \hspace{-0.2in} (b) observation  &
    \hspace{-0.2in} (c) recovery via ours &
    \hspace{-0.2in} (d) recovery via \cite{li2018global}    \\ \vspace{0.01in} \\
    \hspace{-0.15in} \includegraphics[width=0.25\linewidth]{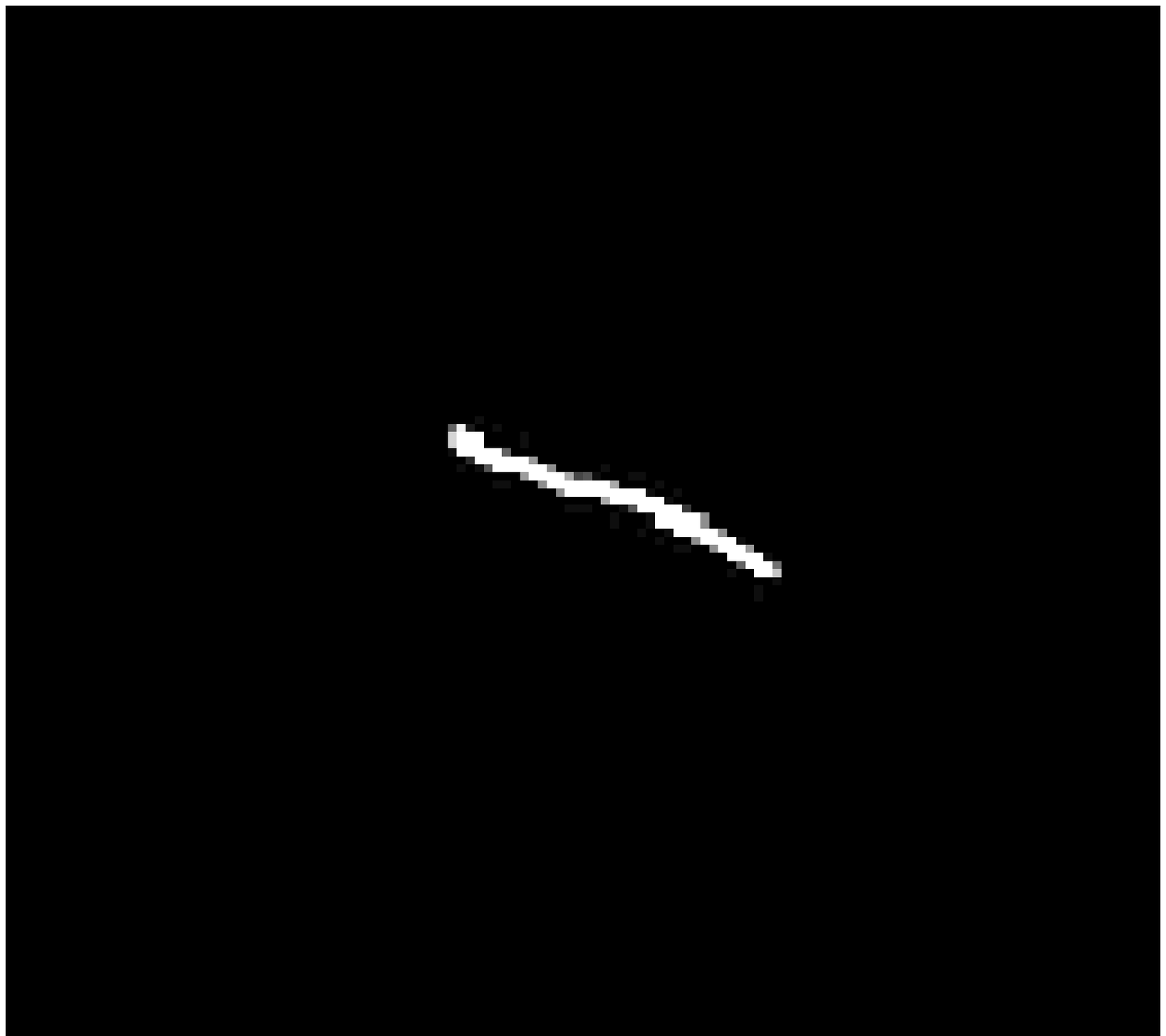} & 
    \hspace{-0.15in} \includegraphics[width=0.25\linewidth]{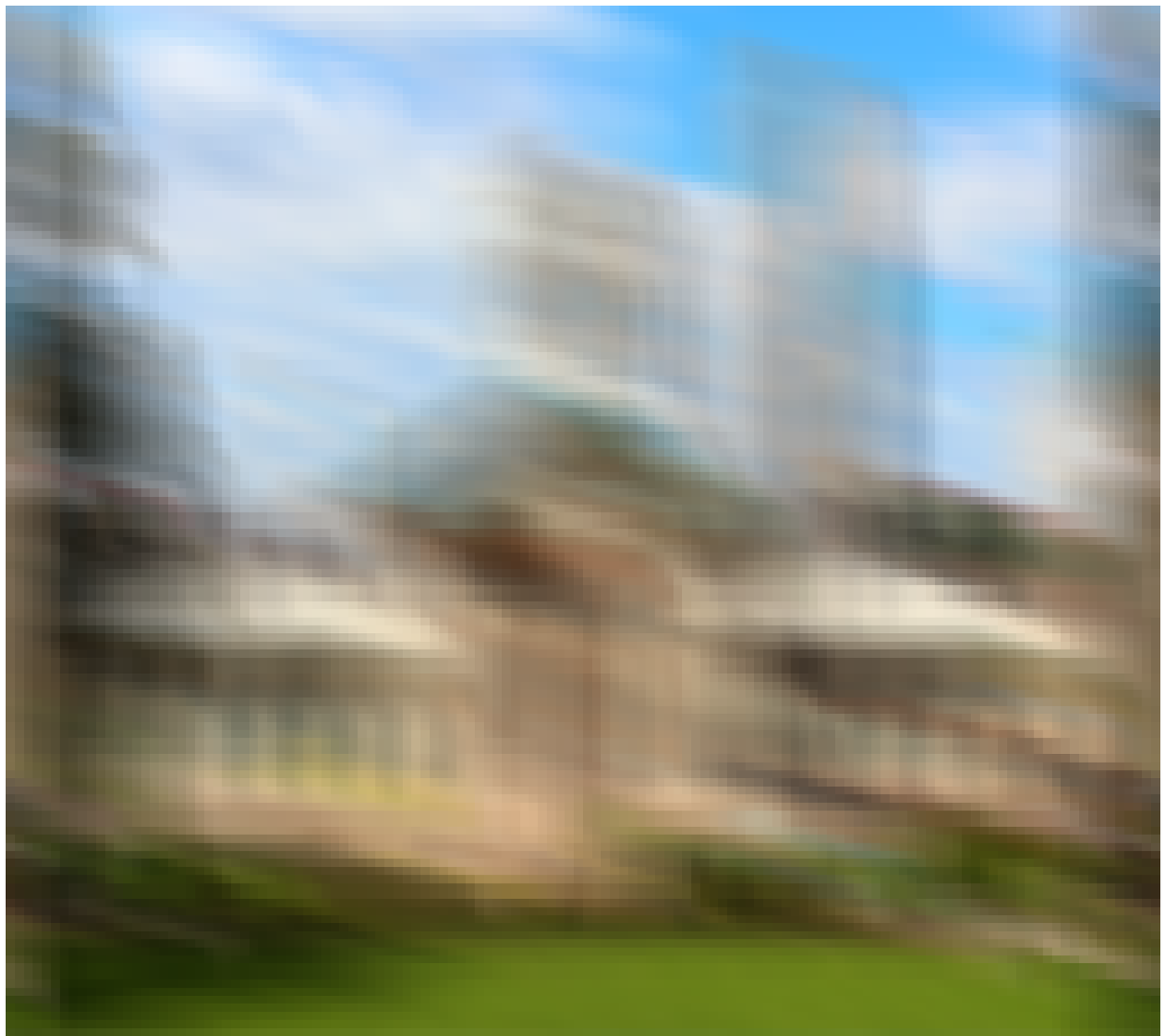}  &    
    \hspace{-0.15in} \includegraphics[width=0.25\linewidth]{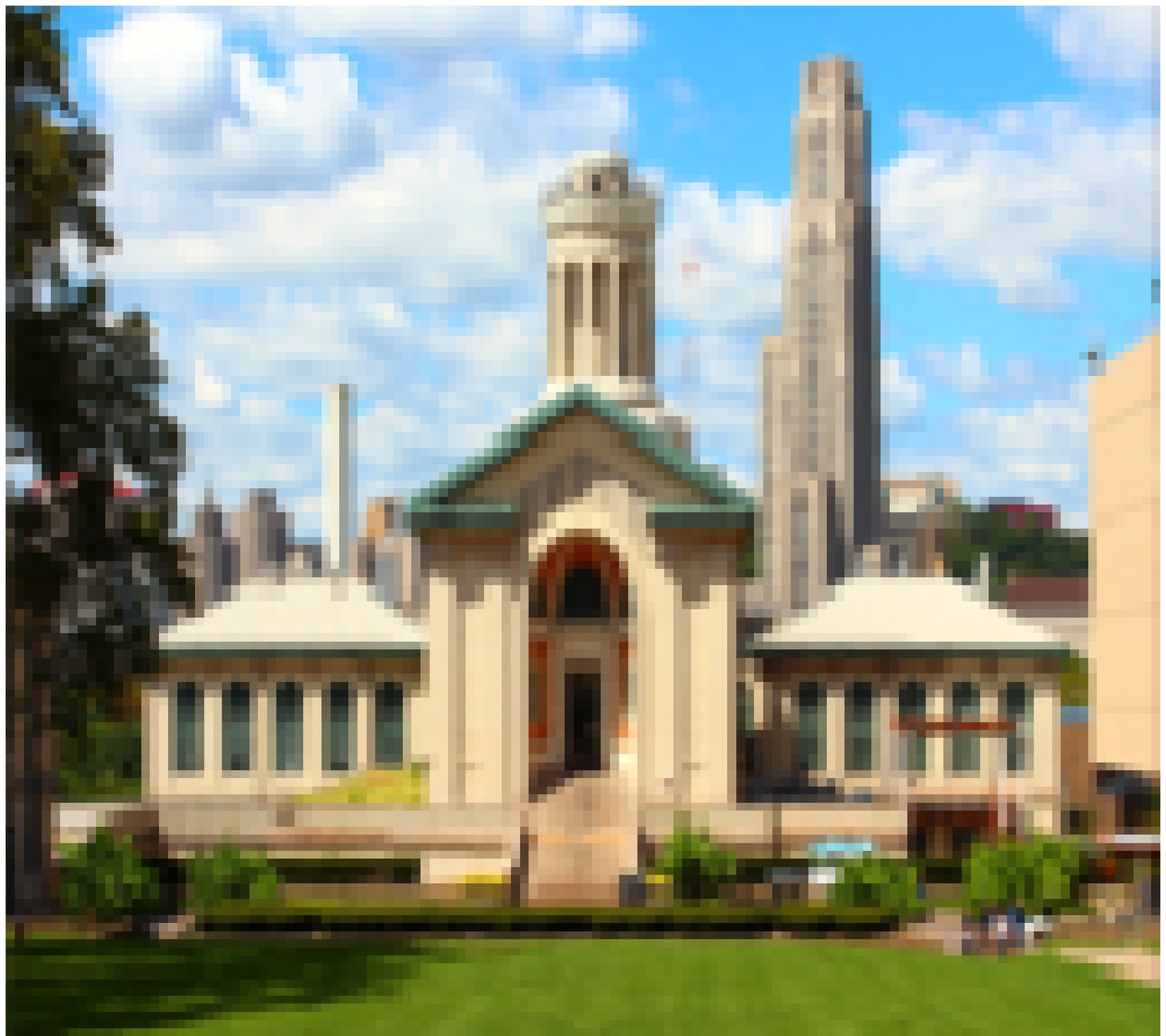} &  
        \hspace{-0.15in} \includegraphics[width=0.25\linewidth]{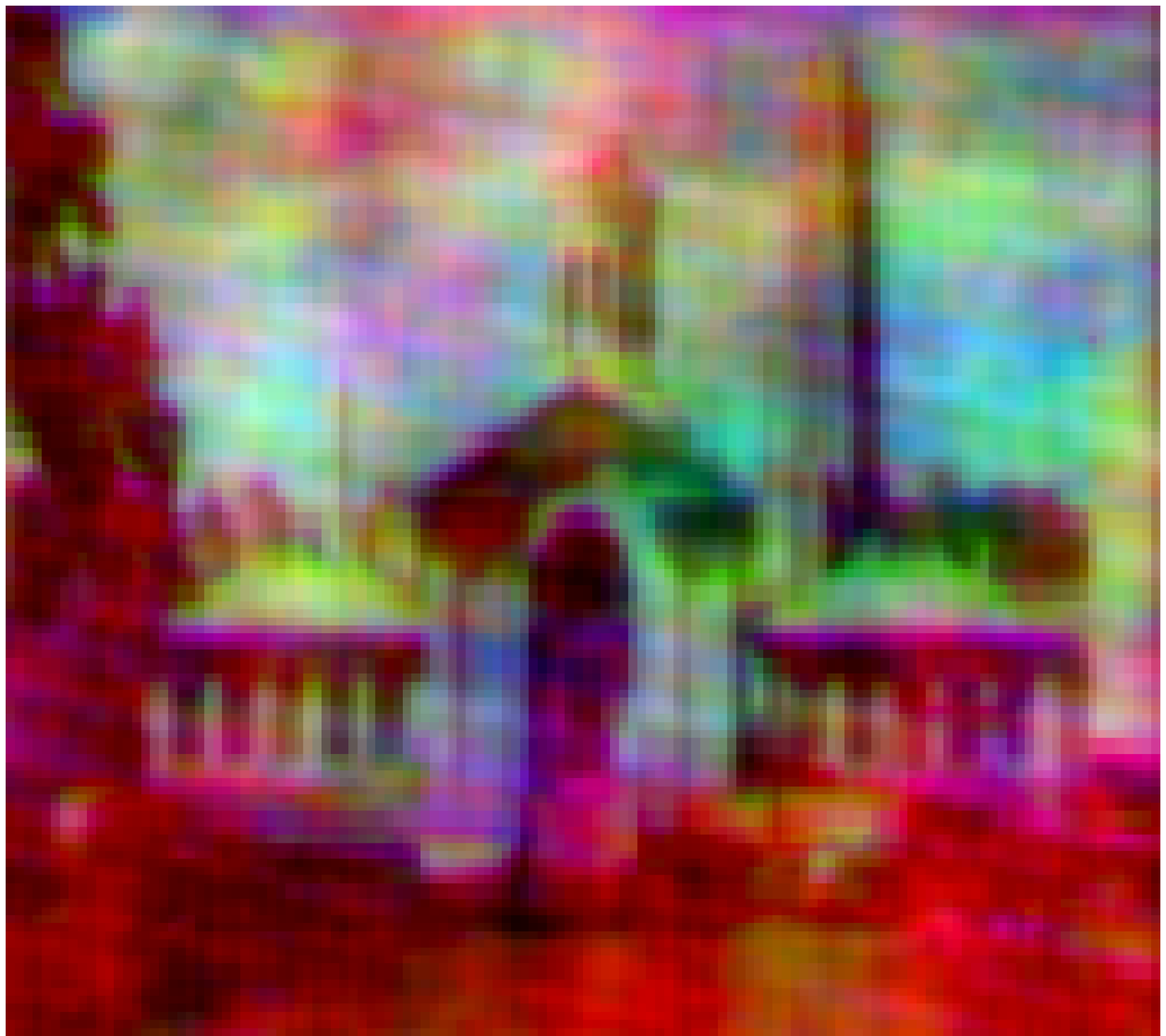}  \\
    \hspace{-0.2in} (e) blurring input  &
     \hspace{-0.2in} (f) observation  &
    \hspace{-0.2in} (g) recovery via ours &
    \hspace{-0.2in} (h) recovery via \cite{li2018global}
    \end{tabular} 
    \caption{Multi-channel sparse blind image deconvolution and deblurring. Top row: the ground truth image. Middle row: (a) the sparse input generated from the BG model; (b) the observation; (c) and (d): the recovery via our method and \cite{li2018global}. Bottom row: (e) the sparse input modeling motion blur; (f) the observation; (g) and (h): the recovery via our method and \cite{li2018global}. 
    }
    \label{fig:real_data_experiments}
\end{figure}

\section{Further Related Work}
\label{sec:literature} 

In this section, we discuss further related literature, emphasizing on algorithms with provable guarantees.

\paragraph{Provable blind deconvolution.} The problem of blind deconvolution with a single snapshot (or equivalently, channel) has been studied recently under different geometric priors such as sparsity and subspace assumptions on both the filter and the input, using both convex and nonconvex optimization formulations \cite{chi2016guaranteed,ling2015self,ahmed2014blind,li2019rapid,ma2018implicit,gribonval2012blind,bilen2014convex,li2018blind, choudhary2014sparse,chen2020convex,charisopoulos2019composite}. With the presence of multiple channels, one expects to identify the filter with fewer prior assumptions. Algorithms for multi-channel blind deconvolution include sparse spectral methods \cite{lee2018fast}, linear least squares \cite{ling2018self}, and nonconvex regularization \cite{xia2018identifiability}. A different model called ``sparse-and-short'' deconvolution is studied in \cite{kuo2019geometry, zhang2019structured}.

\paragraph{Provable dictionary learning.} Learning a sparsifying invertible transform from data has been extensively studied, e.g. in \cite{sun2018geometric, spielman2012exact,bai2019subgradient,gilboa2019efficient,luh2015dictionary,ravishankar2013learning,zhai2019complete}. In addition, provable algorithms for learning overcomplete dictionaries are also proposed in \cite{arora2014new,agarwal2014learning,barak2015dictionary,arora2015simple}. Our problem can be regarded as learning a convolutional invertible transform, where the proposed algorithm is inspired by the approach in \cite{bai2019subgradient} that characterizes a local region large enough for the success of gradient descent with random initializations. However, the approach in \cite{bai2019subgradient} is only applicable to an orthogonal dictionary, while we deal with a general invertible convolutional kernel. Compared to sample complexities required in learning complete dictionaries \cite{sun2017complete}, our result demonstrates the benefit of exploiting convolutional structures in further reducing the sample complexity.

\paragraph{Provable nonconvex statistical estimation.} Our work belongs to the recent line of activities of designing provable nonconvex procedures for high-dimensional statistical estimation, see \cite{chi2018nonconvex,chen2018harnessing,zhang2020symmetry} for recent overviews. Our approach interpolates between two popular approaches, namely, global analyses of optimization landscape (e.g. \cite{sun2018geometric,sun2017complete,ling2019landscape,ge2016matrix,park2017non,ge2017no,bhojanapalli2016global,zhu2017global,li2018global}) that are independent of algorithmic choices, and local analyses with careful initializations and local updates (e.g. \cite{candes2015phase,cai2016optimal,zhang2016provable,zhang2017reshaped,li2020non,chen2015solving,ma2018implicit,netrapalli2014non,tu2015low,tong2020accelerating,li2019nonconvex}).

\section{Discussions} \label{sec:discussions}

This paper proposes a novel nonconvex approach for multi-channel sparse blind deconvolution based on manifold gradient descent with random initializations. Under a Bernoulli-Gaussian model for sparse inputs, we demonstrate that the proposed approach succeeds as long as the sample complexity satisfies $p=O(n^{4.5}\mbox{polylog} p)$, a result significantly improving prior art in \cite{li2018global}. We conclude the paper by some discussions on future directions.
\begin{itemize} 

\item {\em Improve sample complexity.} Our numerical experiments indicate that there is still room to further improve the sample complexity of the proposed algorithm, which may require a more careful analysis of the trajectory of the gradient descent iterates, as done in \cite{chen2018gradient}.

\item {\em Efficient exploitation of negative curvature.} We remark that it is possible to characterize the global geometry over the sphere, where the remaining region contains saddle points with negative curvatures. However, a direct analysis leads to an increase of sample complexity which is undesirable and therefore not pursued in this paper. On the other end, it seems random initialization {\em without restarts} also works well in practice, which warrants further investigation.

\item {\em Super-resolution blind deconvolution}. The model studied in this paper assumes the same temporal resolution of the input and the output, while in practice the sparse activations of the input can occur at a much higher resolution. This lead to the consideration of a refined model, where the observation is given as $\by = \bm{F}_{n\times n}^{\mathsf{H}} \mbox{diag}(\hat{\bg}) \bm{F}_{n\times D} \bx$, where $\bm{F}_{n\times D}$ is the oversampled DFT matrix of size $n\times D$, $D\geq n$. The approach taken in this paper cannot be applied anymore, and new formulations are needed to address this problem, see \cite{da2019self} for a related problem.

\item {\em Convolutional dictionary learning.} Our work can be regarded as a first step towards developing sample-efficient algorithms for 
 convolutional dictionary learning \cite{kavukcuoglu2010learning} with performance guarantees. An interesting model for future investigation is when multiple filters are present, and the observation is modeled as $\by = \sum_{\ell=1}^L \mathcal{C}(\bg_\ell)\bx_\ell$, with $L$ the number of filters. The goal is thus to simultaneously learn multiple filters $\{\bg_\ell \}_{\ell=1}^L$ from a number of observations in the form of $\by$. See \cite{qu2019geometric} for some recent developments.

\end{itemize}

\section*{Acknowledgment}
The authors thank Wenhao Ding of Carnegie Mellon University and Rong Fu of Tsinghua University for useful discussions about the conducted experiments.

\bibliography{bibfileSparseBD,bibfileNonconvex_TIT}

\begin{thebibliography}{10}
\providecommand{\url}[1]{#1}
\csname url@samestyle\endcsname
\providecommand{\newblock}{\relax}
\providecommand{\bibinfo}[2]{#2}
\providecommand{\BIBentrySTDinterwordspacing}{\spaceskip=0pt\relax}
\providecommand{\BIBentryALTinterwordstretchfactor}{4}
\providecommand{\BIBentryALTinterwordspacing}{\spaceskip=\fontdimen2\font plus
\BIBentryALTinterwordstretchfactor\fontdimen3\font minus
  \fontdimen4\font\relax}
\providecommand{\BIBforeignlanguage}[2]{{%
\expandafter\ifx\csname l@#1\endcsname\relax
\typeout{** WARNING: IEEEtran.bst: No hyphenation pattern has been}%
\typeout{** loaded for the language `#1'. Using the pattern for}%
\typeout{** the default language instead.}%
\else
\language=\csname l@#1\endcsname
\fi
#2}}
\providecommand{\BIBdecl}{\relax}
\BIBdecl

\bibitem{ekanadham2011recovery}
C.~Ekanadham, D.~Tranchina, and E.~P. Simoncelli, ``Recovery of sparse
  translation-invariant signals with continuous basis pursuit,'' \emph{IEEE
  transactions on signal processing}, vol.~59, no.~10, pp. 4735--4744, 2011.

\bibitem{donoho1981minimum}
D.~Donoho, ``On minimum entropy deconvolution,'' in \emph{Applied time series
  analysis II}.\hskip 1em plus 0.5em minus 0.4em\relax Elsevier, 1981, pp.
  565--608.

\bibitem{lou2006blind}
Y.~Lou, B.~Zhang, J.~Wang, and M.~Jiang, ``Blind deconvolution for symmetric
  point-spread functions,'' in \emph{2005 IEEE Engineering in Medicine and
  Biology 27th Annual Conference}.\hskip 1em plus 0.5em minus 0.4em\relax IEEE,
  2006, pp. 3459--3462.

\bibitem{amari1997multichannel}
S.-i. Amari, S.~C. Douglas, A.~Cichocki, and H.~H. Yang, ``Multichannel blind
  deconvolution and equalization using the natural gradient,'' in \emph{IEEE
  Signal Processing Workshop on Signal Processing Advances in Wireless
  Communications}.\hskip 1em plus 0.5em minus 0.4em\relax IEEE, 1997, pp.
  101--104.

\bibitem{tian2017multichannel}
N.~Tian, S.-H. Byun, K.~Sabra, and J.~Romberg, ``Multichannel myopic
  deconvolution in underwater acoustic channels via low-rank recovery,''
  \emph{The Journal of the Acoustical Society of America}, vol. 141, no.~5, pp.
  3337--3348, 2017.

\bibitem{gitelman2003modeling}
D.~R. Gitelman, W.~D. Penny, J.~Ashburner, and K.~J. Friston, ``Modeling
  regional and psychophysiologic interactions in fmri: the importance of
  hemodynamic deconvolution,'' \emph{Neuroimage}, vol.~19, no.~1, pp. 200--207,
  2003.

\bibitem{kaaresen1998multichannel}
T.~Kaaresen, Kjetil F a nd~Taxt, ``Multichannel blind deconvolution of seismic
  signals,'' \emph{Geophysics}, vol.~63, no.~6, pp. 2093--2107, 1998.

\bibitem{repetti2014euclid}
A.~Repetti, M.~Q. Pham, L.~Duval, E.~Chouzenoux, and J.-C. Pesquet, ``Euclid in
  a taxicab: Sparse blind deconvolution with smoothed {$\ell_1/ \ell_2$}
  regularization,'' \emph{IEEE Signal Processing Letters}, vol.~22, no.~5, pp.
  539--543, 2014.

\bibitem{zhang2014multi}
H.~Zhang, D.~Wipf, and Y.~Zhang, ``Multi-observation blind deconvolution with
  an adaptive sparse prior,'' \emph{IEEE transactions on pattern analysis and
  machine intelligence}, vol.~36, no.~8, pp. 1628--1643, 2014.

\bibitem{zhang2013multi}
------, ``Multi-image blind deblurring using a coupled adaptive sparse prior,''
  in \emph{Proceedings of the IEEE Conference on Computer Vision and Pattern
  Recognition}, 2013, pp. 1051--1058.

\bibitem{cho2009fast}
S.~Cho and S.~Lee, ``Fast motion deblurring,'' in \emph{ACM SIGGRAPH Asia 2009
  papers}, 2009, pp. 1--8.

\bibitem{yang2009efficient}
J.~Yang, Y.~Zhang, and W.~Yin, ``An efficient tvl1 algorithm for deblurring
  multichannel images corrupted by impulsive noise,'' \emph{SIAM Journal on
  Scientific Computing}, vol.~31, no.~4, pp. 2842--2865, 2009.

\bibitem{strohmer2002four}
T.~Strohmer, ``Four short stories about {T}oeplitz matrix calculations,''
  \emph{Linear Algebra and its Applications}, vol. 343, pp. 321--344, 2002.

\bibitem{li2018global}
Y.~Li and Y.~Bresler, ``Global geometry of multichannel sparse blind
  deconvolution on the sphere,'' in \emph{Advances in Neural Information
  Processing Systems}, 2018, pp. 1132--1143.

\bibitem{li2015unified}
Y.~Li, K.~Lee, and Y.~Bresler, ``A unified framework for identifiability
  analysis in bilinear inverse problems with applications to subspace and
  sparsity models,'' \emph{arXiv preprint arXiv:1501.06120}, 2015.

\bibitem{sun2017complete}
J.~Sun, Q.~Qu, and J.~Wright, ``Complete dictionary recovery over the sphere
  {I}: Overview and the geometric picture,'' \emph{IEEE Transactions on
  Information Theory}, vol.~63, no.~2, pp. 853--884, 2017.

\bibitem{spielman2012exact}
D.~A. Spielman, H.~Wang, and J.~Wright, ``Exact recovery of sparsely-used
  dictionaries,'' in \emph{Conference on Learning Theory}, 2012, pp. 1--37.

\bibitem{chi2016guaranteed}
Y.~Chi, ``Guaranteed blind sparse spikes deconvolution via lifting and convex
  optimization,'' \emph{IEEE Journal of Selected Topics in Signal Processing},
  vol.~10, no.~4, pp. 782--794, 2016.

\bibitem{ahmed2014blind}
A.~Ahmed, B.~Recht, and J.~Romberg, ``Blind deconvolution using convex
  programming,'' \emph{Information Theory, IEEE Transactions on}, vol.~60,
  no.~3, pp. 1711--1732, 2014.

\bibitem{bai2019subgradient}
Y.~Bai, Q.~Jiang, and J.~Sun, ``Subgradient descent learns orthogonal
  dictionaries,'' in \emph{7th International Conference on Learning
  Representations, ICLR 2019}, 2019.

\bibitem{gilboa2019efficient}
D.~Gilboa, S.~Buchanan, and J.~Wright, ``Efficient dictionary learning with
  gradient descent,'' in \emph{International Conference on Machine Learning},
  2019, pp. 2252--2259.

\bibitem{wang2016blind}
L.~Wang and Y.~Chi, ``Blind deconvolution from multiple sparse inputs,''
  \emph{IEEE Signal Processing Letters}, vol.~23, no.~10, pp. 1384--1388, 2016.

\bibitem{qu2019nonconvex}
Q.~Qu, X.~Li, and Z.~Zhu, ``A nonconvex approach for exact and efficient
  multichannel sparse blind deconvolution,'' in \emph{Advances in Neural
  Information Processing Systems}, 2019, pp. 4015--4026.

\bibitem{ling2015self}
S.~Ling and T.~Strohmer, ``Self-calibration and biconvex compressive sensing,''
  \emph{Inverse Problems}, vol.~31, no.~11, p. 115002, 2015.

\bibitem{li2019rapid}
X.~Li, S.~Ling, T.~Strohmer, and K.~Wei, ``Rapid, robust, and reliable blind
  deconvolution via nonconvex optimization,'' \emph{Applied and computational
  harmonic analysis}, vol.~47, no.~3, pp. 893--934, 2019.

\bibitem{ma2018implicit}
C.~Ma, K.~Wang, Y.~Chi, and Y.~Chen, ``Implicit regularization in nonconvex
  statistical estimation: Gradient descent converges linearly for phase
  retrieval, matrix completion, and blind deconvolution,'' \emph{Foundations of
  Computational Mathematics}, pp. 1--182, 2019.

\bibitem{gribonval2012blind}
R.~Gribonval, G.~Chardon, and L.~Daudet, ``Blind calibration for compressed
  sensing by convex optimization,'' in \emph{2012 IEEE International Conference
  on Acoustics, Speech and Signal Processing (ICASSP)}.\hskip 1em plus 0.5em
  minus 0.4em\relax IEEE, 2012, pp. 2713--2716.

\bibitem{bilen2014convex}
C.~Bilen, G.~Puy, R.~Gribonval, and L.~Daudet, ``Convex optimization approaches
  for blind sensor calibration using sparsity,'' \emph{Signal Processing, IEEE
  Transactions on}, vol.~62, no.~18, pp. 4847--4856, 2014.

\bibitem{li2018blind}
Y.~Li, K.~Lee, and Y.~Bresler, ``Blind gain and phase calibration via sparse
  spectral methods,'' \emph{IEEE Transactions on Information Theory}, vol.~65,
  no.~5, pp. 3097--3123, 2018.

\bibitem{choudhary2014sparse}
S.~Choudhary and U.~Mitra, ``Sparse blind deconvolution: What cannot be done,''
  in \emph{2014 IEEE International Symposium on Information Theory}.\hskip 1em
  plus 0.5em minus 0.4em\relax IEEE, 2014, pp. 3002--3006.

\bibitem{chen2020convex}
Y.~Chen, J.~Fan, B.~Wang, and Y.~Yan, ``Convex and nonconvex optimization are
  both minimax-optimal for noisy blind deconvolution,'' \emph{arXiv preprint
  arXiv:2008.01724}, 2020.

\bibitem{charisopoulos2019composite}
V.~Charisopoulos, D.~Davis, M.~D{\'\i}az, and D.~Drusvyatskiy, ``Composite
  optimization for robust blind deconvolution,'' \emph{arXiv preprint
  arXiv:1901.01624}, 2019.

\bibitem{lee2018fast}
K.~Lee, N.~Tian, and J.~Romberg, ``Fast and guaranteed blind multichannel
  deconvolution under a bilinear system model,'' \emph{IEEE Transactions on
  Information Theory}, vol.~64, no.~7, pp. 4792--4818, 2018.

\bibitem{ling2018self}
S.~Ling and T.~Strohmer, ``Self-calibration and bilinear inverse problems via
  linear least squares,'' \emph{SIAM Journal on Imaging Sciences}, vol.~11,
  no.~1, pp. 252--292, 2018.

\bibitem{xia2018identifiability}
Y.~Xia and S.~Li, ``Identifiability of multichannel blind deconvolution and
  nonconvex regularization algorithm,'' \emph{IEEE Transactions on Signal
  Processing}, vol.~66, no.~20, pp. 5299--5312, 2018.

\bibitem{kuo2019geometry}
H.-W. Kuo, Y.~Lau, Y.~Zhang, and J.~Wright, ``Geometry and symmetry in
  short-and-sparse deconvolution,'' in \emph{International Conference on
  Machine Learning}, 2019, pp. 3570--3580.

\bibitem{zhang2019structured}
Y.~Zhang, H.-W. Kuo, and J.~Wright, ``Structured local optima in sparse blind
  deconvolution,'' \emph{IEEE Transactions on Information Theory}, 2019.

\bibitem{sun2018geometric}
J.~Sun, Q.~Qu, and J.~Wright, ``A geometric analysis of phase retrieval,''
  \emph{Foundations of Computational Mathematics}, vol.~18, no.~5, pp.
  1131--1198, 2018.

\bibitem{luh2015dictionary}
K.~Luh and V.~Vu, ``Dictionary learning with few samples and matrix
  concentration,'' \emph{IEEE Transactions on Information Theory}, vol.~62,
  no.~3, pp. 1516--1527, March 2016.

\bibitem{ravishankar2013learning}
S.~Ravishankar and Y.~Bresler, ``Learning sparsifying transforms,'' \emph{IEEE
  Transactions on Signal Processing}, vol.~61, no.~5, pp. 1072--1086, 2013.

\bibitem{zhai2019complete}
Y.~Zhai, Z.~Yang, Z.~Liao, J.~Wright, and Y.~Ma, ``Complete dictionary learning
  via $\ell_4$-norm maximization over the orthogonal group,'' \emph{arXiv
  preprint arXiv:1906.02435}, 2019.

\bibitem{arora2014new}
S.~Arora, R.~Ge, and A.~Moitra, ``New algorithms for learning incoherent and
  overcomplete dictionaries,'' in \emph{Conference on Learning Theory}, 2014,
  pp. 779--806.

\bibitem{agarwal2014learning}
A.~Agarwal, A.~Anandkumar, P.~Jain, P.~Netrapalli, and R.~Tandon, ``Learning
  sparsely used overcomplete dictionaries,'' in \emph{Conference on Learning
  Theory}, 2014, pp. 123--137.

\bibitem{barak2015dictionary}
B.~Barak, J.~A. Kelner, and D.~Steurer, ``Dictionary learning and tensor
  decomposition via the sum-of-squares method,'' in \emph{Proceedings of the
  forty-seventh annual ACM symposium on Theory of computing}.\hskip 1em plus
  0.5em minus 0.4em\relax ACM, 2015, pp. 143--151.

\bibitem{arora2015simple}
S.~Arora, R.~Ge, T.~Ma, and A.~Moitra, ``Simple, efficient, and neural
  algorithms for sparse coding,'' in \emph{Conference on Learning Theory},
  2015, pp. 113--149.

\bibitem{chi2018nonconvex}
Y.~{Chi}, Y.~M. {Lu}, and Y.~{Chen}, ``Nonconvex optimization meets low-rank
  matrix factorization: An overview,'' \emph{IEEE Transactions on Signal
  Processing}, vol.~67, no.~20, pp. 5239--5269, Oct 2019.

\bibitem{chen2018harnessing}
Y.~Chen and Y.~Chi, ``Harnessing structures in big data via guaranteed low-rank
  matrix estimation: Recent theory and fast algorithms via convex and nonconvex
  optimization,'' \emph{IEEE Signal Processing Magazine}, vol.~35, no.~4, pp.
  14 -- 31, 2018.

\bibitem{zhang2020symmetry}
Y.~Zhang, Q.~Qu, and J.~Wright, ``From symmetry to geometry: Tractable
  nonconvex problems,'' \emph{arXiv preprint arXiv:2007.06753}, 2020.

\bibitem{ling2019landscape}
S.~Ling, R.~Xu, and A.~S. Bandeira, ``On the landscape of synchronization
  networks: A perspective from nonconvex optimization,'' \emph{SIAM Journal on
  Optimization}, vol.~29, no.~3, pp. 1879--1907, 2019.

\bibitem{ge2016matrix}
R.~Ge, J.~D. Lee, and T.~Ma, ``Matrix completion has no spurious local
  minimum,'' in \emph{Advances in Neural Information Processing Systems}, 2016,
  pp. 2973--2981.

\bibitem{park2017non}
D.~Park, A.~Kyrillidis, C.~Carmanis, and S.~Sanghavi, ``Non-square matrix
  sensing without spurious local minima via the burer-monteiro approach,'' in
  \emph{Artificial Intelligence and Statistics}, 2017, pp. 65--74.

\bibitem{ge2017no}
R.~Ge, C.~Jin, and Y.~Zheng, ``No spurious local minima in nonconvex low rank
  problems: A unified geometric analysis,'' in \emph{International Conference
  on Machine Learning}, 2017, pp. 1233--1242.

\bibitem{bhojanapalli2016global}
S.~Bhojanapalli, B.~Neyshabur, and N.~Srebro, ``Global optimality of local
  search for low rank matrix recovery,'' in \emph{Advances in Neural
  Information Processing Systems}, 2016, pp. 3873--3881.

\bibitem{zhu2017global}
Z.~Zhu, Q.~Li, G.~Tang, and M.~B. Wakin, ``Global optimality in low-rank matrix
  optimization,'' \emph{IEEE Transactions on Signal Processing}, vol.~66,
  no.~13, pp. 3614--3628, 2018.

\bibitem{candes2015phase}
E.~Cand\`es, X.~Li, and M.~Soltanolkotabi, ``Phase retrieval via {W}irtinger
  flow: Theory and algorithms,'' \emph{Information Theory, IEEE Transactions
  on}, vol.~61, no.~4, pp. 1985--2007, 2015.

\bibitem{cai2016optimal}
T.~T. Cai, X.~Li, and Z.~Ma, ``Optimal rates of convergence for noisy sparse
  phase retrieval via thresholded {W}irtinger flow,'' \emph{The Annals of
  Statistics}, vol.~44, no.~5, pp. 2221--2251, 2016.

\bibitem{zhang2016provable}
H.~Zhang, Y.~Chi, and Y.~Liang, ``Provable non-convex phase retrieval with
  outliers: Median truncated {W}irtinger flow,'' in \emph{International
  conference on machine learning}, 2016, pp. 1022--1031.

\bibitem{zhang2017reshaped}
H.~Zhang, Y.~Zhou, Y.~Liang, and Y.~Chi, ``A nonconvex approach for phase
  retrieval: Reshaped {W}irtinger flow and incremental algorithms,''
  \emph{Journal of Machine Learning Research}, vol.~18, no. 141, pp. 1--35,
  2017.

\bibitem{li2020non}
Y.~Li, Y.~Chi, H.~Zhang, and Y.~Liang, ``Non-convex low-rank matrix recovery
  with arbitrary outliers via median-truncated gradient descent,''
  \emph{Information and Inference: A Journal of the IMA}, vol.~9, no.~2, pp.
  289--325, 2020.

\bibitem{chen2015solving}
Y.~Chen and E.~Cand\`es, ``Solving random quadratic systems of equations is
  nearly as easy as solving linear systems,'' \emph{Communications on Pure and
  Applied Mathematics}, vol.~70, no.~5, pp. 822--883, 2017.

\bibitem{netrapalli2014non}
P.~Netrapalli, U.~Niranjan, S.~Sanghavi, A.~Anandkumar, and P.~Jain,
  ``Non-convex robust {PCA},'' in \emph{Advances in Neural Information
  Processing Systems}, 2014, pp. 1107--1115.

\bibitem{tu2015low}
S.~Tu, R.~Boczar, M.~Simchowitz, M.~Soltanolkotabi, and B.~Recht, ``Low-rank
  solutions of linear matrix equations via procrustes flow,'' in
  \emph{International Conference Machine Learning}, 2016, pp. 964--973.

\bibitem{tong2020accelerating}
T.~Tong, C.~Ma, and Y.~Chi, ``Accelerating ill-conditioned low-rank matrix
  estimation via scaled gradient descent,'' \emph{arXiv preprint
  arXiv:2005.08898}, 2020.

\bibitem{li2019nonconvex}
Y.~Li, C.~Ma, Y.~Chen, and Y.~Chi, ``Nonconvex matrix factorization from
  rank-one measurements,'' \emph{IEEE Transactions on Information Theory},
  vol.~67, no.~3, pp. 1928--1950, 2021.

\bibitem{chen2018gradient}
Y.~Chen, Y.~Chi, J.~Fan, and C.~Ma, ``Gradient descent with random
  initialization: fast global convergence for nonconvex phase retrieval,''
  \emph{Mathematical Programming}, vol. 176, no. 1-2, pp. 5--37, 2019.

\bibitem{da2019self}
M.~Ferreira Da~Costa and Y.~Chi, ``Self-calibrated super resolution,'' in
  \emph{2019 53rd Asilomar Conference on Signals, Systems, and
  Computers}.\hskip 1em plus 0.5em minus 0.4em\relax IEEE, 2019, pp. 230--234.

\bibitem{kavukcuoglu2010learning}
K.~Kavukcuoglu, P.~Sermanet, Y.-L. Boureau, K.~Gregor, M.~Mathieu, and
  Y.~LeCun, ``Learning convolutional feature hierarchies for visual
  recognition,'' in \emph{Advances in neural information processing systems},
  2010, pp. 1090--1098.

\bibitem{qu2019geometric}
Q.~Qu, Y.~Zhai, X.~Li, Y.~Zhang, and Z.~Zhu, ``Geometric analysis of nonconvex
  optimization landscapes for overcomplete learning,'' in \emph{International
  Conference on Learning Representations}, 2019.

\bibitem{vershynin2018high}
R.~Vershynin, \emph{High-dimensional probability: An introduction with
  applications in data science}.\hskip 1em plus 0.5em minus 0.4em\relax
  Cambridge University Press, 2018, vol.~47.

\bibitem{tropp2012user}
J.~A. Tropp, ``User-friendly tail bounds for sums of random matrices,''
  \emph{Foundations of computational mathematics}, vol.~12, no.~4, pp.
  389--434, 2012.

\bibitem{wainwright2019high}
M.~J. Wainwright, \emph{High-dimensional statistics: A non-asymptotic
  viewpoint}.\hskip 1em plus 0.5em minus 0.4em\relax Cambridge University
  Press, 2019, vol.~48.

\bibitem{higham2008functions}
N.~J. Higham, \emph{Functions of matrices: theory and computation}.\hskip 1em
  plus 0.5em minus 0.4em\relax SIAM, 2008, vol. 104.

\end{thebibliography}
\bibliographystyle{IEEEtran}

\appendix

\section{Prerequisites}

For convenience, let $\bX\in \mathbb{R}^{n\times p}$ be the inputs $ \bX=[\bx_1,\bx_2,\cdots,\bx_p]$. Denote the first-order derivative of $\psi_\mu(x)$ as $\psi_\mu^{\prime}(x)=\tanh(x/\mu)$ and the second-order derivative as $\psi_\mu^{\prime\prime}(x)=\sbra{1-\tanh^2\sbra{x /\mu}}/\mu$. The gradient of $ \psi_{\mu}(\mathcal{C}(\bx_i) \bh)$ with respect to $\bh$ can be written as
\begin{align}\label{eq:gradient_phi_h}
\nabla_{\bh} \psi_{\mu}(\mathcal{C}(\bx_i) \bh) & =  \mathcal{C}(\bx_i)^{\top} \tanh\sbra{\frac{\mathcal{C}(\bx_i)\bh}{\mu}} ,
\end{align}
where, with slight abuse of notation, we allow $\tanh(\cdot)$ to take a vector-value in an entry-wise manner.

Recalling the reparameterization $\bh = \bh(\bw)=\sbra{\bw,\sqrt{1-\norm{\bw}_2^2}}$, let $\bJ_{\bh}(\bw)$ be the Jacobian matrix of $\bh(\bw)$, i.e. 
\begin{equation}\label{eq:jacobian}
\bJ_{\bh}(\bw)= \mbra{\bI, -\frac{\bw}{h_n(\bw)}} \in \mathbb{R}^{(n-1)\times n},
\end{equation}  
where $h_n(\bw) = \sqrt{ 1- \|\bw\|_2^2}$ is the last entry of $\bh(\bw)$. 
By the chain rule, the gradient of $ \psi_{\mu}\sbra{\cC(\bx_i)\bh(\bw)}$ with respect to $\bw$ is given as
\begin{align}\label{equ:gradient_psi}
       \nabla_{\bw}  \psi_{\mu}\sbra{\cC(\bx_i)\bh(\bw)} 
       & = \bJ_{\bh}(\bw)  \nabla_{\bh}  \psi_{\mu}\sbra{\cC(\bx_i)\bh}  =  \bJ_{\bh}(\bw) \mathcal{C}(\bx_i)^{\top} \tanh\sbra{\frac{\mathcal{C}(\bx_i)\bh(\bw)}{\mu}}  .
\end{align}
Moreover, the Hessian of $ \psi_{\mu}\sbra{\cC(\bx_i)\bh(\bw)}$ is given as
\begin{align}\label{equ:hessian_psi} 
        \nabla_{\bw}^2  \psi_{\mu}\sbra{\cC(\bx_i)\bh(\bw)} &= \frac{1}{\mu} \bJ_{\bh}(\bw)   \mathcal{C}(\bx_i)^{\top}  \mbra{\bI -\mbox{diag}\left(\tanh^2\sbra{\frac{\mathcal{C}(\bx_i)\bh(\bw)}{\mu}}\right) }  \mathcal{C}(\bx_i)  \bJ_{\bh}(\bw)^{\top} \nonumber \\ 
&\quad\quad\quad\quad  -  \frac{1}{h_n}\mathcal{S}_{n-1}(\bx_i)^{\top} \tanh\sbra{\frac{\mathcal{C}(\bx_i)\bh(\bw)}{\mu}}  \bJ_{\bh}(\bw) \bJ_{\bh}(\bw)^{\top} . 
   \end{align}

\subsection{Useful Concentration Inequalities}

We first introduce some notation and properties of sub-Gaussian variables. A random variable $X$ is called sub-Gaussian if its sub-Gaussian norm satisfies $\norm{X}_{\psi_2}<\infty$ \cite{vershynin2018high}. Similarly, we have $\norm{\bx}_{\psi_2}<\infty$ for a sub-Gaussian random vector $\bx$ \cite[Definition 3.4.1]{vershynin2018high}. For Bernoulli-Gaussian random variables~/~vectors, we have the following two facts, which imply that they are also sub-Gaussian.

\begin{fact}[{\cite[Lemma F.1]{bai2019subgradient}}] \label{fact:BG to SG} 
A random variable $X\in \mathrm{BG}(\theta)$ is sub-Gaussian, i.e. $\norm{X}_{\psi_2}\leq C_a$ for some constant $C_a$. Similarly, for a random vector $\bx\sim_{iid} \mathrm{BG}(\theta)$ and any deterministic vector $\bv\in\mathbb{R}^n$, we have $\norm{\bv^\top \bx}_{\psi_2}\leq C_b \norm{\bv}_2$ for some constant $C_b$.  
\end{fact}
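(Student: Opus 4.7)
The plan is to establish both parts of the fact by reducing them to standard sub-Gaussian estimates for the Gaussian distribution, exploiting the fact that a Bernoulli-Gaussian variable is just a zero-inflated Gaussian, so none of its moments can exceed those of a pure Gaussian.

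For the scalar statement, I would start from the moment-generating-function definition $\|X\|_{\psi_2} = \inf\{t>0:\mathbb{E}[\exp(X^2/t^2)]\leq 2\}$. Writing $X = \Omega Z$ with independent $\Omega\sim\mathrm{Bernoulli}(\theta)$ and $Z\sim\mathcal{N}(0,1)$, and noting that $\Omega^2=\Omega$, I get the clean identity
\begin{equation*}
\mathbb{E}\bigl[\exp(X^2/t^2)\bigr] \;=\; (1-\theta) + \theta\,\mathbb{E}\bigl[\exp(Z^2/t^2)\bigr].
\end{equation*}
Choosing $t = \|Z\|_{\psi_2}$ (an absolute constant), the right-hand side is bounded by $(1-\theta)+2\theta = 1+\theta \leq 2$ for every $\theta\in[0,1]$. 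Hence $\|X\|_{\psi_2}\leq \|Z\|_{\psi_2}=:C_a$, which is exactly the universal bound claimed.

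For the vector statement, I would condition on the Bernoulli mask $\bOmega$. Writing $\bv^\top\bx = \sum_{i=1}^n v_i\Omega_i Z_i$, conditionally on $\bOmega$ this is a centered Gaussian with variance $\sum_{i}v_i^2\Omega_i \leq \|\bv\|_2^2$ since $\Omega_i\in\{0,1\}$. Taking the MGF and applying the tower property gives, for every $s\in\mathbb{R}$,
\begin{equation*}
\mathbb{E}\bigl[\exp(s\,\bv^\top\bx)\bigr] \;=\; \mathbb{E}_{\bOmega}\!\left[\exp\!\left(\tfrac{s^2}{2}\sum_{i}v_i^2\Omega_i\right)\right] \;\leq\; \exp\!\left(\tfrac{s^2\|\bv\|_2^2}{2}\right).
\end{equation*}
This is the standard MGF characterization of sub-Gaussianity with proxy variance $\|\bv\|_2^2$, and by the usual equivalence of sub-Gaussian parameters (see, e.g., Proposition 2.5.2 of Vershynin), it yields $\|\bv^\top\bx\|_{\psi_2}\leq C_b\|\bv\|_2$ for an absolute constant $C_b$.

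I do not anticipate any real obstacle here: the mixture structure of $\mathrm{BG}(\theta)$ trivializes things because the Bernoulli factor only annihilates mass rather than inflating tails, so sub-Gaussianity is inherited from the Gaussian coordinate with a $\theta$-independent constant. The only minor care is to avoid routing the vector bound through the scalar one coordinate by coordinate (which would cost a $\sqrt{\log n}$-type factor through rotation invariance failures); conditioning on $\bOmega$ first and using Gaussian MGFs directly gives the sharp dimension-free bound in a single line.
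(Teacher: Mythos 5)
Your proof is correct. The paper does not prove this fact at all --- it simply imports it as \cite[Lemma F.1]{bai2019subgradient} --- so your argument is a self-contained replacement rather than a variant of an in-paper proof. Both halves are sound: the scalar identity $\mathbb{E}[\exp(X^2/t^2)]=(1-\theta)+\theta\,\mathbb{E}[\exp(Z^2/t^2)]$ follows immediately from $\Omega^2=\Omega$, and taking $t=\|Z\|_{\psi_2}$ gives $(1-\theta)+2\theta\le 2$ uniformly in $\theta$; for the vector part, conditioning on the mask makes $\bv^\top\bx$ a centered Gaussian with (random) variance $\sum_i v_i^2\Omega_i\le\|\bv\|_2^2$, so the unconditional MGF bound $\exp(s^2\|\bv\|_2^2/2)$ holds pathwise and the $\psi_2$ bound follows from the standard equivalence for mean-zero variables. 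One small correction to your closing remark: routing the vector bound through the scalar one coordinate by coordinate would \emph{not} cost a $\sqrt{\log n}$ factor --- the general Hoeffding inequality for sums of independent sub-Gaussians (the paper's Fact~\ref{fact:sum of BG}, i.e.\ \cite[Proposition 2.6.1]{vershynin2018high}) already gives $\|\sum_i v_iX_i\|_{\psi_2}^2\lesssim\sum_i v_i^2\|X_i\|_{\psi_2}^2\lesssim\|\bv\|_2^2$, dimension-free. Your conditioning argument is arguably cleaner, but the coordinate-wise route is equally sharp here.
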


\begin{fact}[{\cite[Lemma 21]{sun2017complete}}]\label{fact:BG to gaussian}
Assume $\bx$, $\by\in\mathbb{R}^n$ satisfy $\bx\sim_{iid} \mathrm{BG}(\theta)$ and $\by\sim_{iid} \mathcal{N}(\boldsymbol{0},\bI)$. Then for any deterministic vector $\bu\in\mathbb{R}^n$, we have $\mathbb{E}(\abs{\bu^\top\bx}^m)\leq \mathbb{E}(\abs{\bu^\top\by}^m)$, and $ \mathbb{E}(\norm{\bx}_2^m) \leq \mathbb{E}(\norm{\by}_2^m)$  
for all integers $m\geq 1.$ 
\end{fact}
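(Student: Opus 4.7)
The proof plan rests on a natural coupling between the two random models. I would start by writing the Bernoulli-Gaussian vector as $\bx = \bOmega \odot \bz$, where $\bOmega \in \{0,1\}^n$ has i.i.d. Bernoulli$(\theta)$ entries independent of $\bz \sim \mathcal{N}(\boldsymbol{0}, \bI)$. Since $\bz$ has the same law as $\by$, this places both problems on a common probability space and reduces every expectation in the statement to a computation involving $\bOmega$ and a single Gaussian vector.

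With this coupling, the second inequality is immediate. For each coordinate, $|x_i| = |\Omega_i z_i| \leq |z_i|$ because $\Omega_i \in \{0,1\}$, hence $\|\bx\|_2^2 = \sum_i \Omega_i^2 z_i^2 \leq \sum_i z_i^2 = \|\bz\|_2^2$ pointwise on the sample space. Raising both sides to the $m/2$ power (both sides are nonnegative) and taking expectations yields $\mathbb{E}[\|\bx\|_2^m] \leq \mathbb{E}[\|\bz\|_2^m] = \mathbb{E}[\|\by\|_2^m]$.

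For the first inequality I would condition on $\bOmega$ and exploit Gaussian scale-equivariance. Write $\bu^\top \bx = (\bu \odot \bOmega)^\top \bz$, which, conditional on $\bOmega$, is a centered Gaussian with variance $\|\bu \odot \bOmega\|_2^2 = \sum_i u_i^2 \Omega_i$ (using $\Omega_i^2 = \Omega_i$). Since $\Omega_i \in \{0,1\}$, this conditional variance is pointwise bounded by $\|\bu\|_2^2$. Using the identity $\mathbb{E}[|\mathcal{N}(0, \sigma^2)|^m] = \sigma^m \, \mathbb{E}[|Z|^m]$ for a standard normal $Z$, I obtain $\mathbb{E}[|\bu^\top \bx|^m \mid \bOmega] = \|\bu \odot \bOmega\|_2^m \, \mathbb{E}[|Z|^m] \leq \|\bu\|_2^m \, \mathbb{E}[|Z|^m] = \mathbb{E}[|\bu^\top \by|^m]$, and taking expectations over $\bOmega$ closes the argument.

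The argument is almost entirely routine; the only substantive observation is that Bernoulli masking shrinks the effective support of $\bu$, which translates to variance reduction for the first bound and to coordinate-wise shrinkage for the second. No concentration arguments, sub-Gaussian moment estimates, or technical maneuvers are required, so the only real obstacle is recognizing the correct coupling at the outset rather than attempting a direct moment computation.
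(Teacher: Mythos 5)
Your proof is correct. The paper does not prove this statement itself---it imports it as a fact from \cite{sun2017complete}---and your argument (coupling $\bx = \bOmega \odot \bz$ with $\bz \sim \mathcal{N}(\boldsymbol{0},\bI)$, pointwise domination for the norm bound, and conditioning on the support plus monotonicity of Gaussian absolute moments in the variance for the projection bound) is precisely the standard proof given in the cited source.
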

The second fact allows us to bound the moments of a Bernoulli-Gaussian vector via the moments of a Gaussian vector, which are given below.
\begin{fact}[{\cite[Lemma 35]{sun2017complete}}] \label{fact: Moments bound for chi} 
Let $\by\in\mathbb{R}^n$ be $\by\sim_{iid}  \mathcal{N}(\boldsymbol{0},\bI)$, we have for any $m\geq 1$,
$      \mathbb{E} \sbra{\|\by\|_2^m}\leq m! n^{m/2}$.
\end{fact}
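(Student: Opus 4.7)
My plan is to reduce the multi-dimensional moment bound to a scalar Gaussian moment calculation, separating the cases $m=1$ and $m\ge 2$.

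For $m=1$, Jensen's inequality applied to the concave function $\sqrt{\cdot}$ gives
\[
\mathbb{E}\|\by\|_2 \;\le\; \sqrt{\mathbb{E}\|\by\|_2^2} \;=\; \sqrt{n},
\]
which matches $1!\cdot n^{1/2}$ since $\mathbb{E}\|\by\|_2^2 = \sum_{i=1}^n \mathbb{E} y_i^2 = n$.

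For $m\ge 2$, I would invoke the power-mean inequality (i.e., Jensen applied to the convex map $x\mapsto x^{m/2}$ on the uniform distribution over $\{y_i^2\}_{i=1}^n$) to obtain
\[
\|\by\|_2^m \;=\; \Big(\sum_{i=1}^n y_i^2\Big)^{m/2} \;\le\; n^{m/2-1}\sum_{i=1}^n |y_i|^m.
\]
Taking expectations and using that the $y_i$ are i.i.d.\ standard normal,
\[
\mathbb{E}\|\by\|_2^m \;\le\; n^{m/2}\,\mathbb{E}|y_1|^m,
\]
so it suffices to verify $\mathbb{E}|y_1|^m \le m!$ for each integer $m\ge 2$. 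Using the closed form $\mathbb{E}|y_1|^m = \frac{2^{m/2}\Gamma((m+1)/2)}{\sqrt{\pi}}$, I would split into the even and odd sub-cases. For $m=2k$ even, this equals $(2k-1)!!=(2k)!/(2^k k!)\le (2k)!=m!$. For $m=2k+1$ odd, it equals $\sqrt{2/\pi}\;2^k k!$; the inequality $(2k)!\ge 2^k k!$ (which follows from $(2k-1)!!\ge 1$) then yields $(2k+1)!\ge (2k+1)\cdot 2^k k!\ge \sqrt{2/\pi}\;2^k k!$, closing the bound.

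The only mild subtlety is the odd-$m$ case, where the half-integer Gamma values introduce a $\sqrt{\pi}$ factor that does not cancel as cleanly as in the even case; however, the slack between $\sqrt{2/\pi}\;2^k k!$ and $(2k+1)!$ is more than large enough to absorb it. An alternative route that bypasses case analysis is to use the chi-squared moment formula $\mathbb{E}\|\by\|_2^{2k} = n(n+2)\cdots(n+2k-2)\le n^k (2k-1)!!$ for even moments and then interpolate to odd $m$ by Cauchy--Schwarz, $\mathbb{E}\|\by\|_2^m \le \sqrt{\mathbb{E}\|\by\|_2^{m-1}\cdot \mathbb{E}\|\by\|_2^{m+1}}$, at the cost of an unimportant constant that can be absorbed into $m!$.
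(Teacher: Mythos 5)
Your argument is correct and complete. The paper itself offers no proof of this fact --- it is imported verbatim as Lemma 35 of the cited reference --- so there is no in-paper argument to compare against; the usual proof in that reference works directly with the exact chi moment $\mathbb{E}\|\by\|_2^m = 2^{m/2}\Gamma((n+m)/2)/\Gamma(n/2)$ and bounds the Gamma ratio, whereas your coordinate-wise reduction $\big(\sum_i y_i^2\big)^{m/2}\le n^{m/2-1}\sum_i|y_i|^m$ followed by the scalar bound $\mathbb{E}|y_1|^m\le m!$ is more elementary and avoids any special-function estimates. All the individual steps check out: the power-mean inequality is valid for $m\ge 2$, the even case gives $(2k-1)!!=(2k)!/(2^k k!)\le (2k)!$, and the odd case $\sqrt{2/\pi}\,2^k k!\le (2k+1)!$ follows as you say from $(2k)!\ge 2^k k!$. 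One minor remark on your alternative route at the end: the Cauchy--Schwarz interpolation in fact loses nothing, since $\sqrt{(2k-1)!!\,(2k+1)!!}\le (2k+1)!!\le (2k+1)!$, so no extra constant needs to be absorbed; but this is moot since your primary argument already closes the bound with the stated constant $m!$.
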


In addition, let us list a few more useful facts about sub-Gaussian random variables.
\begin{fact}[{\cite[Lemma 2.6.8]{vershynin2018high}}] \label{fact:centering}    If $X$ is sub-Gaussian, then $X-\mathbb{E}X$ is also sub-Gaussian with $\norm{X-\mathbb{E}X}_{\psi_2}\leq C\norm{X}_{\psi_2}$ for some constant $C$. 
\end{fact}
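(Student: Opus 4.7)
The plan is to establish this centering inequality by exploiting the fact that $\|\cdot\|_{\psi_2}$ is a genuine norm on its Orlicz space, so it satisfies the triangle inequality. The strategy is to decompose $X - \mathbb{E}X$ additively and control the deterministic part via standard moment bounds for sub-Gaussian variables.

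First, I would apply the triangle inequality to get
\[
\|X - \mathbb{E}X\|_{\psi_2} \leq \|X\|_{\psi_2} + \|\mathbb{E}X\|_{\psi_2}.
\]
The second term here is the sub-Gaussian norm of the deterministic constant $\mathbb{E}X$. Working directly from the definition $\|Y\|_{\psi_2} = \inf\{t>0 : \mathbb{E}\exp(Y^2/t^2) \leq 2\}$, the infimum is achieved when $\exp((\mathbb{E}X)^2/t^2) = 2$, giving the closed-form identity $\|\mathbb{E}X\|_{\psi_2} = |\mathbb{E}X|/\sqrt{\log 2}$.

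The remaining task is to bound $|\mathbb{E}X|$ in terms of $\|X\|_{\psi_2}$. By Jensen's inequality, $|\mathbb{E}X| \leq \mathbb{E}|X|$, and then I would invoke the standard moment characterization of sub-Gaussian variables (e.g.\ the equivalence $\|X\|_{L^p} \leq c\sqrt{p}\,\|X\|_{\psi_2}$ for all $p \geq 1$, which is a routine consequence of integrating the sub-Gaussian tail bound $\mathbb{P}(|X|\geq u)\leq 2\exp(-cu^2/\|X\|_{\psi_2}^2)$). Setting $p=1$ yields $\mathbb{E}|X| \leq c'\,\|X\|_{\psi_2}$ for a universal constant $c'$, hence $\|\mathbb{E}X\|_{\psi_2} \leq (c'/\sqrt{\log 2})\,\|X\|_{\psi_2}$.

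Combining the three displays gives $\|X - \mathbb{E}X\|_{\psi_2} \leq (1 + c'/\sqrt{\log 2})\,\|X\|_{\psi_2}$, so the claim holds with $C := 1 + c'/\sqrt{\log 2}$. No step is genuinely difficult: the only ingredient beyond the triangle inequality is the moment-to-Orlicz comparison for $\mathbb{E}|X|$, which is a standard equivalent characterization of sub-Gaussianity and therefore requires no new analysis. The proof is essentially mechanical and mirrors the argument in Vershynin's textbook cited with the statement.
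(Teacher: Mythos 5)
Your proof is correct and is essentially the argument of the cited reference (Vershynin, Lemma 2.6.8), which the paper invokes without reproducing: triangle inequality for the $\psi_2$-norm, the exact value $\norm{\mathbb{E}X}_{\psi_2}=|\mathbb{E}X|/\sqrt{\log 2}$ for a constant, and the first-moment bound $\mathbb{E}|X|\lesssim \norm{X}_{\psi_2}$ from the standard moment characterization of sub-Gaussian variables. Nothing further is needed.
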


\begin{fact}[{\cite[Proposition 2.6.1]{vershynin2018high}}]  \label{fact:sum of BG}
If $X_1, X_2,\cdots, X_n$ are zero-mean independent sub-Gaussian random variables, then there exists some constant $C$ such that $\norm{\sum_{i=1}^n X_i}_{\psi_2}^2\leq C \sum_{i=1}^n \norm{X_i}_{\psi_2}^2.$  
\end{fact}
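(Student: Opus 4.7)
The plan is to pass through the moment generating function (MGF) characterization of the sub-Gaussian norm. Recall the standard equivalence (see, e.g., Proposition 2.5.2 in Vershynin's textbook): up to absolute constants, $\|X\|_{\psi_2}$ is the smallest $K$ such that $\mathbb{E}\,e^{X^2/K^2} \leq 2$, and for a zero-mean $X$ this is equivalent (again up to absolute constants) to the smallest $K$ for which the MGF bound
\[
\mathbb{E}\,e^{\lambda X} \leq \exp(C_0 \lambda^2 K^2), \qquad \forall\, \lambda \in \mathbb{R}
\]
holds, for some universal constant $C_0$. I would take this equivalence as a given black box, since it is the workhorse for sub-Gaussian calculus.

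First, apply the MGF direction of the equivalence to each summand: for every $i \in [n]$ and every $\lambda \in \mathbb{R}$,
\[
\mathbb{E}\,e^{\lambda X_i} \leq \exp\!\left( C_0 \lambda^2 \|X_i\|_{\psi_2}^2 \right).
\]
Second, use independence to factorize the joint MGF:
\[
\mathbb{E}\,e^{\lambda \sum_{i=1}^n X_i} = \prod_{i=1}^n \mathbb{E}\,e^{\lambda X_i} \leq \exp\!\left( C_0 \lambda^2 \sum_{i=1}^n \|X_i\|_{\psi_2}^2 \right),
\]
which exhibits $S := \sum_i X_i$ as satisfying an MGF bound of precisely the sub-Gaussian form with parameter $K_S^2 := \sum_i \|X_i\|_{\psi_2}^2$. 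Third, invoke the reverse direction of the equivalence: any zero-mean variable whose MGF admits the bound $\mathbb{E}\,e^{\lambda S} \leq \exp(C_0 \lambda^2 K_S^2)$ has $\|S\|_{\psi_2}^2 \leq C\,K_S^2$ for an absolute $C$. Combining yields $\|\sum_i X_i\|_{\psi_2}^2 \leq C \sum_i \|X_i\|_{\psi_2}^2$, as claimed.

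The only subtlety I anticipate is bookkeeping the absolute constants so that $C$ is truly independent of $n$ and of the individual $\|X_i\|_{\psi_2}$. This is not really a conceptual obstacle, but it does require that each direction of the MGF–Orlicz equivalence be established with purely numerical constants. The cleanest route is to verify the equivalence via the Taylor expansion $e^{X^2/K^2} = \sum_{k \ge 0} X^{2k}/(k!\,K^{2k})$, controlling the even moments by $\mathbb{E}\,X^{2k} \le (2k)!\,(cK)^{2k}/k!$ and then using Stirling to absorb factorials; this produces dimension-free constants and completes the proof.
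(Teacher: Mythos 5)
Your proposal is correct and is essentially the standard argument: the paper does not prove this fact itself but cites Vershynin's Proposition 2.6.1, whose proof is exactly the MGF route you describe (bound each factor via the centered sub-Gaussian/MGF equivalence, factorize by independence, then convert the resulting MGF bound back to a $\psi_2$ bound). Your attention to the zero-mean hypothesis, which is what makes the MGF characterization an equivalence with dimension-free constants, is the right place to be careful, and nothing further is missing.
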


\begin{fact}[{\cite[Eq. (2.14-2.15)]{vershynin2018high}}] \label{fact: tail bound of BG}
If $X$ is sub-Gaussian, it satisfies the following bounds:
\begin{equation*} 
        \mbP{\abs{X}\geq t}\leq 2 \exp\sbra{- c t^2/\snorm{X}^2} \quad \forall t\geq 0, \quad
        \sbra{\mathbb{E}{\abs{X}^m}}^{1/m}\leq C\sqrt{m}\snorm{X}\quad \forall m\geq 1,
\end{equation*}
where $c$, $C$ are some universal constants.
\end{fact}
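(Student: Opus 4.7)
The plan is to extract both inequalities from the defining Orlicz-norm characterization: $K := \|X\|_{\psi_2}$ is the infimum of $t > 0$ for which $\mathbb{E}\exp(X^2/t^2) \leq 2$. I would take this as the starting point, since it is the form of the sub-Gaussian property that couples most directly to the moment generating function of $X^2$ via Markov's inequality.

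First I would establish the tail bound. Applying Markov's inequality to the non-negative random variable $\exp(X^2/K^2)$, for any $t \geq 0$:
\[
\mathbb{P}(|X| \geq t) \;=\; \mathbb{P}\bigl(e^{X^2/K^2} \geq e^{t^2/K^2}\bigr) \;\leq\; \frac{\mathbb{E}\, e^{X^2/K^2}}{e^{t^2/K^2}} \;\leq\; 2\, e^{-t^2/K^2},
\]
which is the first asserted inequality with $c = 1$. Next I would derive the moment bound by plugging the tail bound into the layer-cake identity $\mathbb{E}|X|^m = m\int_0^\infty t^{m-1}\mathbb{P}(|X|\geq t)\,dt$. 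After the substitution $u = t^2/K^2$, this gives
\[
\mathbb{E}|X|^m \;\leq\; 2m\int_0^\infty t^{m-1} e^{-t^2/K^2}\,dt \;=\; m\,K^m\,\Gamma(m/2).
\]
Taking $m$th roots and invoking Stirling's upper bound $\Gamma(m/2)^{1/m} \leq C_0\sqrt{m/2}$ for an absolute $C_0$ collapses the $m$-dependence to $\sqrt{m}$, yielding $(\mathbb{E}|X|^m)^{1/m} \leq C\sqrt{m}\,K$, as claimed.

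Neither step is technically deep, and there is no real obstacle; the main care-point is constant-tracking. Different references define $\|X\|_{\psi_2}$ using slightly different Orlicz thresholds (e.g., $\mathbb{E}\exp(X^2/t^2) \leq e$ rather than $\leq 2$), which rescales the absolute constants $c$ and $C$ but not the form of the inequalities. Because the claim only asserts the existence of universal constants, the conclusion is invariant under these rescalings, and using Stirling in its upper-bound form (rather than asymptotics) ensures $C$ is explicit and independent of $m$. The equivalence of this definition with the other standard characterizations (sub-Gaussian MGF, tail, and moment growth) can be invoked as a prerequisite lemma if a different starting definition is preferred.
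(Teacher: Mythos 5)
The paper does not prove this fact; it simply cites it as \cite[Eq.~(2.14--2.15)]{vershynin2018high}. Your proposal reconstructs precisely the argument given in that reference (Proposition~2.5.2 there): Markov's inequality applied to $\exp(X^2/\|X\|_{\psi_2}^2)$ yields the tail bound, and the layer-cake formula combined with the Gamma function and Stirling's upper bound yields the moment bound. Both steps are correct as written, and the observation that different Orlicz normalizations only rescale the universal constants is the right care-point to flag. One minor omission: you should note that the infimum in the definition of $\|X\|_{\psi_2}$ is attained (by monotone convergence the constraint set $\{t>0 : \mathbb{E}\exp(X^2/t^2)\leq 2\}$ is closed), so one may legitimately take $K = \|X\|_{\psi_2}$ itself in the Markov step rather than $K+\varepsilon$; this is standard but worth a sentence if one is being careful.
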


Combining standard tail bounds with the union bound, we have the following facts.
	\begin{fact} \label{fact:norm of BG}
    Let $ \{\bx_i \}_{i=1}^p \in\mathbb{R}^n$ be independent sub-Gaussian vectors with $\|\bx_i \|_{\psi_2}\leq B$ for some constant $B$. Then there exists some universal constant $C$ such that with probability at least $1- p^{-8}$, we have
    \begin{equation*} 
        \max_{i\in{[p]}} \norm{\bx_i}_2 \leq CB\sqrt{n \log p}.
    \end{equation*}
\end{fact}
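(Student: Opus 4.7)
The plan is to prove the fact by reducing the statement about the $\ell_2$ norm of a sub-Gaussian vector to a scalar sub-Gaussian tail bound via a covering argument on the unit sphere, then paying a cheap union bound over the $p$ vectors.

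First I would fix an index $i\in[p]$ and rewrite the norm variationally as $\norm{\bx_i}_2 = \sup_{\bu\in\mathbb{S}^{n-1}} \bu^\top \bx_i$. I would choose a standard $\tfrac{1}{2}$-net $\mathcal{N}\subset \mathbb{S}^{n-1}$ of the sphere, whose cardinality is bounded by $5^n$ by volumetric arguments, and invoke the well-known discretization inequality $\norm{\bx_i}_2 \leq 2 \sup_{\bu\in\mathcal{N}} \bu^\top \bx_i$.

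For any fixed $\bu\in\mathcal{N}$, the scalar $\bu^\top \bx_i$ is itself sub-Gaussian with $\snorm{\bu^\top \bx_i}\leq \snorm{\bx_i}\leq B$ by the definition of the sub-Gaussian norm of a vector. Therefore Fact~\ref{fact: tail bound of BG} yields
\begin{equation*}
\mbP{\abs{\bu^\top \bx_i}\geq s}\leq 2\exp\sbra{-cs^2/B^2}
\end{equation*}
for some absolute constant $c>0$. A union bound over the net gives
\begin{equation*}
\mbP{\sup_{\bu\in\mathcal{N}}\abs{\bu^\top \bx_i}\geq s}\leq 2\cdot 5^n \exp\sbra{-cs^2/B^2}.
\end{equation*}
Choosing $s = C_1 B\sqrt{n+\log p}$ with $C_1$ a sufficiently large absolute constant absorbs the $5^n$ factor and makes the right-hand side at most $p^{-9}$. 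Combined with the net inequality, this yields $\norm{\bx_i}_2 \leq C_2 B\sqrt{n+\log p}$ with probability at least $1-p^{-9}$, for some absolute $C_2$.

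To conclude, I would union-bound over $i\in[p]$, paying a factor of $p$ and giving failure probability at most $p^{-8}$. Finally, since the conclusion is nontrivial only when $\log p \geq 1$, I can absorb the additive $\log p$ into a multiplicative one via $\sqrt{n+\log p}\leq \sqrt{2n\log p}$, producing the stated bound $\max_{i\in[p]}\norm{\bx_i}_2 \leq CB\sqrt{n\log p}$ for $C=C_2\sqrt{2}$. There is no real obstacle here — the argument is entirely standard; the only care needed is to pick the exponent in the net-level probability large enough ($\geq 9\log p$) to survive the subsequent union bound over $p$ vectors, and to confirm that the linear form $\bu^\top \bx_i$ inherits the sub-Gaussian norm bound $B$ from $\bx_i$ so that Fact~\ref{fact: tail bound of BG} applies directly.
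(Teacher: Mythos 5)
Your argument is correct and essentially the standard way to fill in what the paper leaves implicit (the paper only says ``combining standard tail bounds with the union bound''). The reduction $\norm{\bx_i}_2 \leq 2\max_{\bu\in\mathcal{N}}\bu^\top\bx_i$ over a $1/2$-net of size at most $5^n$ is sound, the sub-Gaussian norm of the linear form $\bu^\top\bx_i$ is indeed at most $B$ by \cite[Definition 3.4.1]{vershynin2018high}, and the choice $s = C_1 B\sqrt{n+\log p}$ absorbs the $5^n$ factor and leaves $p^{-9}$ at the per-vector level, surviving the union bound over $i\in[p]$. The final conversion $\sqrt{n+\log p}\lesssim\sqrt{n\log p}$ is harmless: it holds for $p\geq 3$, and for $p\in\{1,2\}$ the claim either is vacuous or can be accommodated by the constant $C$.

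One small remark worth making: a slightly slicker route, more in the spirit of ``citing a standard tail bound,'' is to invoke the known concentration of the norm of a sub-Gaussian vector, e.g.\ \cite[Theorem 3.1.1]{vershynin2018high}, which gives $\mbP{\norm{\bx_i}_2 \geq CB\sqrt{n} + t}\leq 2\exp(-ct^2/B^2)$ directly, and then set $t = C'B\sqrt{\log p}$ and union-bound; this avoids re-deriving the net argument. Your version is self-contained and equivalent, just reconstructs that theorem rather than quoting it. Either way the bound you obtain, $CB(\sqrt{n}+\sqrt{\log p})$, is actually sharper than the stated $CB\sqrt{n\log p}$, so no loss is incurred in weakening it to match the paper.
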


\begin{fact}\label{fact:maximum of X}
    Let $\bX\in \mathbb{R}^{n\times p}$ be $\bX\sim_{iid} \mathrm{BG}(\theta)$, where $\theta\in (0,1/2)$. With probability at least $1-\theta (np)^{-7}$, we have
    \begin{equation*}
        \norm{\bX}_\infty = \max_{i,j} |X_{ij}| \leq 4\sqrt{\log(np)}.
    \end{equation*}
\end{fact}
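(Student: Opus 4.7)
The plan is straightforward: reduce to a per-entry tail bound, and then take a union bound over all $np$ coordinates. By the Bernoulli-Gaussian structure of Definition~\ref{def: BG}, each entry factorizes as $X_{ij} = \Omega_{ij} Z_{ij}$ with $\Omega_{ij}\sim\mathrm{Bernoulli}(\theta)$ independent of $Z_{ij}\sim\mathcal{N}(0,1)$, and everything is independent across $(i,j)$. Conditioning on the Bernoulli mask yields $\mbP{|X_{ij}|>t}=\theta\cdot\mbP{|Z_{ij}|>t}$ for every $t>0$, so all that remains is a Gaussian tail estimate.

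I would then apply Mills' ratio, which gives $\mbP{|Z_{ij}|>t}\leq \tfrac{2}{t\sqrt{2\pi}}e^{-t^2/2}\leq e^{-t^2/2}$ for every $t\geq \sqrt{2/\pi}$. Plugging in $t=4\sqrt{\log(np)}$ (which certainly exceeds $\sqrt{2/\pi}$ for any $np\geq 2$) makes $e^{-t^2/2}=(np)^{-8}$, so for each fixed $(i,j)$ one obtains $\mbP{|X_{ij}|>4\sqrt{\log(np)}}\leq \theta(np)^{-8}$. A union bound over the $np$ entries then produces $\mbP{\max_{i,j}|X_{ij}|>4\sqrt{\log(np)}}\leq np\cdot\theta(np)^{-8}=\theta(np)^{-7}$, which is precisely the claim.

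There is no real obstacle here---this is essentially a one-line tail-bound calculation, and the hypothesis $\theta\in(0,1/2)$ plays no role in the estimate itself (it is presumably stated only to match conventions used elsewhere). The sole technical detail worth flagging is the choice of Gaussian tail bound: the cruder version $\mbP{|Z|>t}\leq 2e^{-t^2/2}$ would introduce an extra factor of $2$ and give only $2\theta(np)^{-7}$, whereas Mills' ratio preserves the clean constant stated in the fact.
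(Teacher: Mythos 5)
Your proof is correct, and it is exactly what the paper intends: the paper states Fact~\ref{fact:maximum of X} without proof, introducing it alongside Fact~\ref{fact:norm of BG} with the remark that both follow by ``combining standard tail bounds with the union bound.'' Your argument --- factor out the Bernoulli mask to get $\PP(|X_{ij}|>t)=\theta\,\PP(|Z_{ij}|>t)$, apply the Mills-ratio Gaussian tail bound at $t=4\sqrt{\log(np)}$ to obtain $\theta(np)^{-8}$ per entry, then union over $np$ entries --- is precisely that standard route, and your remark that the sharper Mills bound (rather than $\PP(|Z|>t)\leq 2e^{-t^2/2}$) is what recovers the stated constant $\theta(np)^{-7}$ without a stray factor of $2$ is a correct and worthwhile observation.
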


Finally, let us record the useful Bernstein's inequality for random vectors and matrices, which does not require the quantities of interest to be centered.  This is a direct consequence of Fact~\ref{fact:centering} on centering and \cite[Theorem 6.2]{tropp2012user}. 
\begin{lemma}[Moment-controlled Bernstein's inequality]\label{lemma:matrix_bernstein}
    Let  $\{\bX_k \in\mathbb{R}^{n\times n}\}_{k=1}^{p}$ be a set of independent random matrices. Assume there exist $\sigma, R$ such that for all $m\geq 2$, $\mathbb{E} \sbra{\|\bX_k\|^m} \leq \frac{m!}{2} \sigma^2 R^{m-2}$. Denote $\bS=\frac{1}{p}\sum_{k=1}^p \bX_k$, then we have for any $t>0$,
 $$\mbP{\norm{\bS-\mathbb{E}\bS}> t}\leq 2n \exp\sbra{\frac{-pt^2}{2\sigma^2+2 Rt}}. $$ 
 Let $\{\bx_k \in\mathbb{R}^{n}\}_{k=1}^p$ be a set of independent random vectors. Assume there exist $\sigma$, $R$ such that $ \mathbb{E} \sbra{\norm{\bx_k}_2^m} \leq \frac{m!}{2} \sigma^2 R^{m-2}$. Denote $\bs=\frac{1}{p}\sum_{k=1}^p \bx_k$, then we have for any $t>0$,
$$    \mbP{\norm{\bs-\mathbb{E}\bs}_2> t}\leq (n+1) \exp\sbra{\frac{-pt^2}{2\sigma^2+2 Rt}}. $$ 
 \end{lemma}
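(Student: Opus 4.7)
The plan is to reduce the lemma to Tropp's matrix Bernstein inequality via two elementary ingredients: a centering step that transfers the moment hypothesis from $\bX_k$ to $\bZ_k := \bX_k - \mathbb{E}\bX_k$, and a Hermitian dilation that deduces the vector statement from the matrix statement.

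First, for the matrix part, I would write $\bS - \mathbb{E}\bS = \frac{1}{p}\sum_{k=1}^p \bZ_k$ with independent zero-mean summands $\bZ_k$. The triangle inequality and Jensen's inequality give $\|\bZ_k\| \leq \|\bX_k\| + \mathbb{E}\|\bX_k\|$, whence the $c_r$-inequality yields $\mathbb{E}\|\bZ_k\|^m \leq 2^m \,\mathbb{E}\|\bX_k\|^m$ for all $m \geq 2$. Substituting the hypothesis produces
\begin{equation*}
\mathbb{E}\|\bZ_k\|^m \;\leq\; \frac{m!}{2}(2\sigma)^2 (2R)^{m-2},
\end{equation*}
so the centered summands satisfy the same Bernstein-type moment growth with $(\sigma, R)$ inflated only by absolute constants, which can be absorbed back into the final $(\sigma, R)$ by a harmless relabeling. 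This is precisely the spirit of Fact~\ref{fact:centering}, adapted from Orlicz norms to the present moment formulation.

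Second, with independent centered summands in hand, I would invoke Tropp's matrix Bernstein inequality [Theorem 6.2, tropp2012user], which under the sub-exponential moment condition above gives
\begin{equation*}
\mathbb{P}\!\left(\Big\|\sum_{k=1}^p \bZ_k\Big\| > u\right) \;\leq\; 2n \exp\!\left(\frac{-u^2}{2p\sigma^2 + 2Ru}\right).
\end{equation*}
Setting $u = pt$ and dividing numerator and denominator of the exponent by $p$ recovers the advertised bound $2n\exp(-pt^2/(2\sigma^2 + 2Rt))$. If the cited theorem is invoked in its almost-surely-bounded form, the moment-growth extension is classical: the hypothesis implies a matrix mgf bound of the form $\mathbb{E} e^{\lambda \bZ_k} \preceq \exp\!\bigl(\lambda^2 \sigma^2/(2(1-R\lambda))\bigr)\bI$ for $0 < \lambda < 1/R$, whose Laplace-transform argument yields the same Bernstein tail.

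Third, the vector statement would follow from the matrix statement by the standard Hermitian dilation trick: embed $\bx_k \in \mathbb{R}^n$ into the $(n{+}1)\times(n{+}1)$ symmetric matrix $\widetilde{\bx}_k$ with $\bx_k$ and $\bx_k^\top$ in the off-diagonal blocks and zeros elsewhere. This embedding is linear (so it commutes with centering) and isometric: $\|\widetilde{\bx}_k\| = \|\bx_k\|_2$, hence the moment hypothesis transfers verbatim. Applying the already-proved matrix version to $\frac{1}{p}\sum \widetilde{\bx}_k$ yields exactly the $(n+1)\exp(-pt^2/(2\sigma^2 + 2Rt))$ bound in the statement. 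The only mildly nontrivial point in the whole argument is the centering bookkeeping; everything else is a direct citation or an isometry, so I do not expect any substantive obstacle.
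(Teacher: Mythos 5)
Your proposal is correct and is essentially identical to the paper's own (one-line) justification, which derives the lemma from centering plus \cite[Theorem 6.2]{tropp2012user}, with the Hermitian dilation supplying the vector case. The only caveat is that the centering step inflates $(\sigma,R)$ to $(2\sigma,2R)$, so the displayed exponent $-pt^2/(2\sigma^2+2Rt)$ is not literally recovered (nor is the prefactor $n+1$ rather than $2(n+1)$ if one routes the dilation through the two-sided matrix bound instead of the one-sided $\lambda_{\max}$ bound for a symmetric spectrum); this constant-factor slippage is shared by the paper and immaterial to all of its applications, where $\sigma$ and $R$ carry unspecified universal constants.
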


\subsection{Technical Lemmas}

In this section, we provide some technical lemmas that are used throughout the proof. We start with some useful properties about the $\tanh(\cdot)$ function since it appears frequently in our derivation.

\begin{lemma}\label{lemma:loss_expectation}
    Let $X\sim \gd{\sigma_x^2}$, $Y\sim \gd{\sigma_y^2}$, then we have
    \begin{align*}
            \mbE{\tanh(aX)X}&=a \sigma_x^2 \mbE{1-\tanh^2 (aX)} , \\
            \mbE{\tanh\sbra{a(X+Y)}X}&=a \sigma_x^2 \mbE{1-\tanh^2\sbra{a(X+Y)}}. 
    \end{align*}
\end{lemma}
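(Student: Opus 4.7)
The plan is to reduce both identities to a single application of Gaussian integration by parts (Stein's lemma): for $X \sim \mathcal{N}(0,\sigma_x^2)$ and any sufficiently smooth $g$ with $\mathbb{E}|g'(X)| < \infty$, we have $\mathbb{E}[g(X) X] = \sigma_x^2 \mathbb{E}[g'(X)]$. Observe that $\tanh$ is bounded and smooth, and its derivative is $\tfrac{d}{dx}\tanh(ax) = a(1-\tanh^2(ax))$, so every hypothesis needed to invoke Stein's lemma is trivially satisfied; no truncation argument is needed.

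For the first identity, I would set $g(x) = \tanh(ax)$ so that $g'(x) = a(1-\tanh^2(ax))$, and Stein's lemma directly yields
\begin{equation*}
\mathbb{E}[\tanh(aX)\,X] \;=\; \sigma_x^2 \, \mathbb{E}[g'(X)] \;=\; a\sigma_x^2\, \mathbb{E}[1-\tanh^2(aX)].
\end{equation*}
For the second identity, the natural (and, by context of the paper, correct) assumption is that $X$ and $Y$ are independent Gaussians. I would then condition on $Y$: viewing $Y=y$ as fixed, set $g_y(x) = \tanh(a(x+y))$, so that $g_y'(x) = a(1-\tanh^2(a(x+y)))$. Stein's lemma applied to the (still centered) Gaussian $X$ gives
\begin{equation*}
\mathbb{E}\!\left[\tanh(a(X+Y))\,X \,\big|\, Y\right] \;=\; a\sigma_x^2\, \mathbb{E}\!\left[1-\tanh^2(a(X+Y)) \,\big|\, Y\right],
\end{equation*}
and taking expectation over $Y$ and using the tower property delivers the claim.

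There is essentially no obstacle here: the only subtle point is making sure to state (or verify from context) that $X$ and $Y$ are independent so that conditioning on $Y$ does not affect the distribution of $X$, and that the $\tanh$ function is globally bounded and smooth so the integration-by-parts boundary terms vanish. If one prefers a self-contained derivation, Stein's identity can itself be proved in one line by writing $\mathbb{E}[g(X)X] = \int g(x)\,x\,\tfrac{1}{\sqrt{2\pi}\sigma_x} e^{-x^2/(2\sigma_x^2)}\,dx$ and integrating by parts, using $x\, e^{-x^2/(2\sigma_x^2)} = -\sigma_x^2 \tfrac{d}{dx} e^{-x^2/(2\sigma_x^2)}$, so I would include this one-line derivation for completeness before applying it twice as above.
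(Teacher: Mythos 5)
Your proposal is correct and takes essentially the same route as the paper: the paper's proof is nothing more than the explicit integration-by-parts calculation that you package as Stein's lemma, and for the second identity the paper carries out the same Gaussian integration by parts inside an iterated integral over $(X,Y)$, which is exactly your conditioning-on-$Y$ argument written out longhand.
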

\begin{proof}
    Using integration by parts, we have
    \begin{align*} 
      &  \mbE{\tanh(aX)X}= \frac{1}{\sqrt{2\pi} \sigma_x} \int_{-\infty}^{\infty} \tanh(aX)X \exp\sbra{\frac{-X^2}{2\sigma_x^2}} dX\\
        &=- \frac{2\sigma_x^2}{\sqrt{2\pi} \sigma_x}  \tanh(aX) \exp\sbra{\frac{-X^2}{2\sigma_x^2}}\Big|_{0}^{\infty}+ \frac{1}{\sqrt{2\pi} \sigma_x} \int_{-\infty}^{\infty} a\sigma_x^2 \sbra{1-\tanh^2(aX)} \exp\sbra{\frac{-X^2}{2\sigma_x^2}} dX\\
        &=a \sigma_x^2 \mbE{1-\tanh^2 (aX)},
        \end{align*}
and        
    \begin{align*}         
        \mbE{\tanh \sbra{a(X+Y)}X}&= \frac{1}{2\pi \sigma_x \sigma_y} \int_{-\infty}^{\infty}X \exp\sbra{\frac{-X^2}{2\sigma_x^2}} \int_{-\infty}^{\infty} \tanh(a(X+Y)) \exp\sbra{\frac{-Y^2}{2\sigma_y^2}} dY dX\\
        &=- \frac{1}{2\pi \sigma_x \sigma_y} \cdot \sigma_x^2 \mbra{\int_{-\infty}^{\infty} \tanh(a(X+Y)) \exp\sbra{\frac{-Y^2}{2\sigma_y^2}} dY} \exp\sbra{\frac{-X^2}{2\sigma_x^2}}\Big|_{-\infty}^{\infty}\\
        &\quad +\frac{1}{2\pi \sigma_x \sigma_y}\iint_{-\infty}^{\infty} a\sigma_x^2 \sbra{1-\tanh^2(a(X+Y))} \exp\sbra{\frac{-X^2}{2\sigma_x^2}}\exp\sbra{\frac{-Y^2}{2\sigma_y^2}} dY dX\\
        &=a \sigma_x^2 \mbE{1-\tanh^2 (a(X+Y))},
        \end{align*}  
      where we used the fact that the first term in the second line is $0$.
    \end{proof}

\begin{lemma}\label{lemma:loss_lipschitz}
The functions ${\psi}_\mu^{\prime}(x)=\tanh(x/\mu)$ and $\psi_\mu^{\prime\prime}(x)=\sbra{1-\tanh^2\sbra{x /\mu}}/\mu$ are Lipschitz continuous with Lipschitz constants $1/\mu$ and $2/\mu^2$, respectively. 
\end{lemma}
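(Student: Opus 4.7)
The plan is to establish both Lipschitz constants by invoking the mean value theorem and bounding the next-order derivative uniformly. For a differentiable function $f$, it is Lipschitz with constant $L$ whenever $\sup_x |f'(x)| \leq L$, so in each case I only need to compute the relevant derivative and take an elementary sup bound.

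First I would handle $\psi_\mu'(x) = \tanh(x/\mu)$. Differentiating once gives
\[
\frac{d}{dx}\psi_\mu'(x) = \psi_\mu''(x) = \frac{1}{\mu}\bigl(1-\tanh^2(x/\mu)\bigr).
\]
Since $\tanh^2(\cdot) \in [0,1]$, we have $0 \leq 1-\tanh^2(x/\mu) \leq 1$, hence $|\psi_\mu''(x)| \leq 1/\mu$ uniformly in $x$. The mean value theorem then yields $|\psi_\mu'(x_1) - \psi_\mu'(x_2)| \leq (1/\mu)|x_1-x_2|$.

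Next I would handle $\psi_\mu''(x) = (1-\tanh^2(x/\mu))/\mu$. Differentiating once more and using the chain rule,
\[
\frac{d}{dx}\psi_\mu''(x) = -\frac{2}{\mu^2}\tanh(x/\mu)\bigl(1-\tanh^2(x/\mu)\bigr).
\]
Using $|\tanh(\cdot)| \leq 1$ and $0 \leq 1-\tanh^2(\cdot) \leq 1$, each factor is bounded by $1$ in absolute value, so $|(d/dx)\psi_\mu''(x)| \leq 2/\mu^2$. Applying the mean value theorem again gives $|\psi_\mu''(x_1)-\psi_\mu''(x_2)| \leq (2/\mu^2)|x_1-x_2|$, as claimed.

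There is no substantive obstacle here; the only mild point is that the bound $2/\mu^2$ is not tight (the true maximum of $|2u(1-u^2)|$ over $u \in [-1,1]$ is $4/(3\sqrt{3}) < 1$, attained at $u = \pm 1/\sqrt{3}$), but the simple sup-of-factors argument suffices for the stated constant and is what I would present.
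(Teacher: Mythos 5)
Your argument is correct and matches the paper's proof in all essential respects: you bound the next-order derivative uniformly (obtaining $1/\mu$ and $2/\mu^2$) and conclude Lipschitz continuity, with the only cosmetic difference that you cite the mean value theorem where the paper integrates $\psi_\mu''$ (resp.\ $\psi_\mu'''$) via the fundamental theorem of calculus. Your side remark that $2/\mu^2$ is not tight (the true maximum of $|2u(1-u^2)|$ on $[-1,1]$ being $4/(3\sqrt{3})$) is accurate but immaterial to the stated claim.
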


\begin{proof}
    Since $\psi_\mu(x)$ is continuous and third-order differentiable, we have for any $x$ and $x'$,
     \begin{align*}
        \abs{{\psi}_\mu^{\prime} (x)- \psi_\mu^{\prime}(x')}&\leq \abs{\int_x^{x'} {\psi}_\mu^{\prime\prime}(z) dz}\leq \abs{x-x'} \max_{z} | \psi_\mu^{\prime\prime}(z)| \leq \frac{\abs{x-x'}}{\mu} , 
      \end{align*}  
and
\begin{align*}
        \abs{{\psi}_\mu^{\prime\prime}(x)- {\psi}_\mu^{\prime\prime}(x')}&\leq \abs{\int_x^{x'} \psi_{\mu}^{\prime\prime\prime}(z)dz } \leq \int_x^{x'} \abs{-\frac{2}{\mu^2}\tanh\sbra{\frac{z}{\mu}}  \sbra{1-\tanh^2\sbra{\frac{z}{\mu} }} } dz\leq \frac{2\abs{x-x'}}{\mu^2}, 
        \end{align*} 
     where we use the fact that $\abs{\tanh(x)}\leq 1$ and $1-\tanh^2\sbra{x } \leq 1$ for all $x\in\mathbb{R}$.
\end{proof}

\begin{lemma}\label{lemma:Operator norm of Circulant Matrix}
    Let $\bx\sim_{iid} \mathrm{BG}(\theta)$ for $\theta\in(0,1]$. There exist some constants $c_1$ and $c_2$ such that
    \begin{equation*}
        \mbP{\norm{\cC(\bx)} \geq t} \leq 2n \exp\sbra{\frac{-t^2}{c_1 n}}, \quad \mbox{and}\quad        \mathbb{E}\norm{\cC(\bx)}^{2m} \leq \frac{m!}{2} \sbra{c_2 n \log n}^m
    \end{equation*}
for all $m\geq 1$.
\end{lemma}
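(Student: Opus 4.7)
The plan is to reduce both bounds to standard sub-Gaussian concentration by diagonalizing the circulant matrix via the discrete Fourier transform. Since $\cC(\bx)$ is diagonalized by the unitary $\bF/\sqrt{n}$, where $\bF$ denotes the DFT matrix, its operator norm equals the peak magnitude of the Fourier coefficients: $\|\cC(\bx)\| = \max_{j\in[n]}|\hat{x}_j|$ with $\hat{\bx}=\bF\bx$.

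First I would establish that each $\hat{x}_j=\sum_k \omega^{(j-1)(k-1)} x_k$ (with $\omega = e^{-2\pi i/n}$) is sub-Gaussian with norm at most $C\sqrt{n}$. Writing $|\hat{x}_j|\leq|\mathrm{Re}(\hat{x}_j)|+|\mathrm{Im}(\hat{x}_j)|$ and noting that each component is a real linear combination of i.i.d.\ $\mathrm{BG}(\theta)$ entries with unit-magnitude coefficients, Fact~\ref{fact:BG to SG} together with Fact~\ref{fact:sum of BG} gives $\|\mathrm{Re}(\hat{x}_j)\|_{\psi_2},\|\mathrm{Im}(\hat{x}_j)\|_{\psi_2}\leq C'\sqrt{n}$. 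The tail estimate in Fact~\ref{fact: tail bound of BG} then yields $\mathbb{P}(|\hat{x}_j|\geq t)\leq 2\exp(-ct^2/n)$, and a union bound over $j\in[n]$ supplies the advertised $\mathbb{P}(\|\cC(\bx)\|\geq t)\leq 2n\exp(-t^2/(c_1 n))$.

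For the moment bound, the main difficulty is absorbing the $2n$ prefactor from the union bound into an additional $\log n$ factor so that the declared form $\frac{m!}{2}(c_2 n\log n)^m$ holds uniformly in $m\geq 1$. The strategy is to define the sub-exponential excess $Y:=\bigl(\|\cC(\bx)\|^2/(c_1 n)-\log(2n)\bigr)_+$; substituting into the tail bound above produces $\mathbb{P}(Y\geq t)\leq e^{-t}$ for all $t>0$. From $\|\cC(\bx)\|^2\leq c_1 n\log(2n) + c_1 n\,Y$ and the convexity inequality $(a+b)^m\leq 2^{m-1}(a^m+b^m)$, one obtains
\begin{equation*}
\mathbb{E}\|\cC(\bx)\|^{2m}\leq 2^{m-1}(c_1 n\log(2n))^m + 2^{m-1}(c_1 n)^m\,\mathbb{E} Y^m\leq 2^{m-1}(c_1 n\log(2n))^m + 2^{m-1}(c_1 n)^m\,m!,
\end{equation*}
using $\mathbb{E} Y^m\leq\int_0^\infty m t^{m-1}e^{-t}\,dt=m!$. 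Both summands can then be folded into $\frac{m!}{2}(c_2 n\log n)^m$ by choosing $c_2$ a sufficiently large multiple of $c_1$ (invoking $\log(2n)\leq 2\log n$ for $n\geq 2$).

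The delicate point I anticipate is the constant chasing in this final consolidation: the first summand lacks a combinatorial $m!$ factor, so $c_2$ must be chosen large enough relative to $c_1$ that the inequality holds at the worst case $m=1$, where the factor-of-$2n$ inflation from the union bound is most restrictive. Beyond this routine bookkeeping, the proof is a textbook application of the DFT diagonalization trick combined with sub-Gaussian tail and moment estimates.
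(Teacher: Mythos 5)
Your proposal is correct. The first half is exactly the paper's argument: diagonalize $\cC(\bx)$ by the DFT so that $\norm{\cC(\bx)}=\max_j|\hat{x}_j|$, bound $\|\hat{x}_j\|_{\psi_2}\lesssim\sqrt{n}$ via Fact~\ref{fact:BG to SG}, apply the sub-Gaussian tail of Fact~\ref{fact: tail bound of BG}, and union bound over $j$ (your explicit real/imaginary split is in fact slightly more careful than the paper, which applies the sub-Gaussian fact to the complex coefficient directly). For the moment bound the two arguments differ only in bookkeeping: the paper writes $\mathbb{E}\norm{\cC(\bx)}^{2m}=\int_0^\infty \mathbb{P}(\norm{\cC(\bx)}>t)\,2mt^{2m-1}\,dt$ and splits the integral at $t=2\sqrt{c_1 n\log n}$, bounding the tail by $1$ below the threshold and absorbing the prefactor $2n$ into $\exp(-t^2/(2c_1 n))$ above it, which yields the two summands $(4c_1 n\log n)^m+(2c_1 n)^m m!$; you instead subtract $\log(2n)$ inside the exponent to manufacture a sub-exponential excess $Y$ with $\mathbb{P}(Y\geq t)\leq e^{-t}$ and invoke $(a+b)^m\leq 2^{m-1}(a^m+b^m)$, arriving at the same pair of summands $2^{m-1}(c_1 n\log(2n))^m+2^{m-1}(c_1 n)^m m!$. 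Both devices are interchangeable standard ways of integrating a tail bound with a polynomial union-bound prefactor, and your final consolidation into $\frac{m!}{2}(c_2 n\log n)^m$ (using $m!\geq 1$ for the first summand and $\log(2n)\leq 2\log n$) is sound; the worst case $m=1$ that you flag is handled by taking $c_2$ a fixed multiple of $c_1$, exactly as in the paper.
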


\begin{proof}
   Since a circulant matrix is diagonalizable by the DFT matrix, the spectral norm of $\cC(\bx)$ is the maximum magnitude of the DFT coefficients of $\bx$, where the $i$th coefficient is given as $\hat{x}_i = \boldsymbol{f}_i^{\mathsf{H}}\bx$, where $\boldsymbol{f}_i=[1, e^{j2\pi i/n}, \cdots, e^{j2\pi i(n-1)/n}]^{\top}$ is the $i$th column of the DFT matrix. Since $\bx\sim_{iid} \mathrm{BG}(\theta)$ is sub-Gaussian, by Fact~\ref{fact:BG to SG}, $\hat{x}_i$ is also sub-Gaussian with $\|\hat{x}_i\|_{\psi_2}\leq C\|\boldsymbol{f}_i\|_2 = C\sqrt{n}$. Therefore, by the union bound, together with Fact \ref{fact: tail bound of BG}, we have    
   \begin{equation*} 
    \mbP{\norm{\cC(\bx)}\geq t}=\mbP{\max_{i\in[n]} \abs{\hat{x}_i }\geq t} \leq 2n \exp\sbra{\frac{-t^2}{c_1n}},
\end{equation*}
for some constant $c_1$. Equipped with the above bound, we can bound the moments of $\norm{\cC(\bx)}^2$ by  
\begin{align} 
\mathbb{E}\norm{\cC(\bx)}^{2m} &=\int_0^\infty \mathbb{P}(\norm{\cC(\bx)}^{2m} >u) du =  \int_0^\infty   \mathbb{P}(\norm{\cC(\bx)} >t)\cdot 2m t^{2m-1} dt,
\end{align}
where the second equality follows by a change of variable $t=u^{1/2m}$. To continue, we break the bound as
\begin{align*}    
\mathbb{E}\norm{\cC(\bx)}^{2m}    &\leq \int_0^{2\sqrt{c_1 n\log n}} 1\cdot 2m t^{2m-1} dt +\int_{2\sqrt{c_1 n\log n }}^\infty 2n\exp\sbra{\frac{-t^2}{c_1 n}}2m t^{2m-1} dt   \\
    &\leq \sbra{4c_1 n \log n}^m + \int_0^\infty \exp\sbra{\frac{-t^2}{2 c_1 n}}2m t^{2m-1} dt   \\
    &=\sbra{4c_1n \log n}^m + \sbra{2c_1 n}^m m! \leq \frac{m!}{2} \sbra{c_2 n \log n}^m ,
    \end{align*}
where the second line used the fact $\exp\sbra{\frac{-t^2}{2c_1 n}}>2n\exp\sbra{\frac{-t^2}{c_1n}}$ when $t\geq 2\sqrt{c_1n\log n}$, and the third line used the definition of the Gamma function. The proof is completed.
\end{proof}

\begin{lemma}\label{lemma: other1}
Let $\{\bx_i \}_{i=1}^p\in\mathbb{R}^n$ be drawn according to $\bx_i\sim_{iid} \mathrm{BG}(\theta)$, $\theta\in (0,1/2)$. There exists some constant $C$ such that
    \begin{equation*}
 \norm{\frac{1}{\theta np} \sum_{i=1}^p \cC(\bx_i)^\top \cC(\bx_i)-\bI}\leq C\sqrt{\frac{\log^2 n \log p}{\theta^2 p}}
    \end{equation*}
 holds with probability at least $1- 2n p^{-8}$.
\end{lemma}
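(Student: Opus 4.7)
The plan is a direct application of the moment-controlled Bernstein inequality in Lemma~\ref{lemma:matrix_bernstein} to the i.i.d.\ matrices
\[
\bX_i \;:=\; \tfrac{1}{\theta n}\,\cC(\bx_i)^\top \cC(\bx_i), \qquad i=1,\ldots,p,
\]
so that $\tfrac{1}{p}\sum_i \bX_i = \tfrac{1}{\theta n p}\sum_i \cC(\bx_i)^\top \cC(\bx_i)$. The target inequality then becomes a deviation bound $\bigl\|\tfrac1p\sum_i \bX_i - \mathbb{E}\bX_i\bigr\|\leq C\sqrt{\log^2 n\log p/(\theta^2 p)}$.

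First I would compute the expectation. The $(j,k)$-entry of $\cC(\bx_i)^\top \cC(\bx_i)$ is the inner product of two columns of $\cC(\bx_i)$, namely $\bx_i^\top \mathcal{S}_{k-j}(\bx_i)$. For $j=k$ this equals $\|\bx_i\|_2^2$, whose expectation is $\theta n$; for $j\neq k$ the entries involved are independent, zero-mean Bernoulli–Gaussian products and the expectation vanishes. Hence $\mathbb{E}[\cC(\bx_i)^\top \cC(\bx_i)] = \theta n\bI$ and $\mathbb{E}\bX_i = \bI$, so $\mathbb{E}\bigl[\tfrac{1}{\theta np}\sum_i \cC(\bx_i)^\top \cC(\bx_i)\bigr]=\bI$ as needed.

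Next I would control the moments. Since $\|\cC(\bx_i)^\top \cC(\bx_i)\| = \|\cC(\bx_i)\|^2$, the second bound in Lemma~\ref{lemma:Operator norm of Circulant Matrix} gives
\[
\mathbb{E}\|\bX_i\|^m = \tfrac{1}{(\theta n)^m}\mathbb{E}\|\cC(\bx_i)\|^{2m} \;\leq\; \tfrac{m!}{2}\Bigl(\tfrac{c_2\log n}{\theta}\Bigr)^m
\]
for every $m\geq 1$. Thus one can take $\sigma^2 = R^2 = (c_2\log n/\theta)^2$ in Lemma~\ref{lemma:matrix_bernstein}, because $(c_2\log n/\theta)^m = \sigma^2 R^{m-2}$ whenever $m\geq 2$.

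Finally, applying the matrix Bernstein bound with these parameters yields
\[
\mathbb{P}\Bigl(\bigl\|\tfrac1p\sum_i \bX_i - \bI\bigr\| > t\Bigr) \;\leq\; 2n\exp\!\left(\frac{-p t^2}{2\sigma^2 + 2Rt}\right).
\]
Choosing $t = C\sqrt{\log^2 n\log p/(\theta^2 p)}$ with $C$ a sufficiently large absolute multiple of $c_2$ makes the variance term $2\sigma^2 = 2c_2^2\log^2 n/\theta^2$ dominate $2Rt$ (the crossover happens exactly at the claimed rate, which is why this is the right scale), and the exponent becomes at most $-\tfrac{C^2}{4c_2^2}\log p \leq -8\log p$. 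This gives the desired failure probability $\leq 2np^{-8}$. There is no serious obstacle here; the only thing to verify carefully is the moment estimate for $\|\cC(\bx_i)\|^{2m}$, which is already supplied by Lemma~\ref{lemma:Operator norm of Circulant Matrix}, and the bookkeeping that $Rt$ is indeed dominated by $\sigma^2$ at the chosen $t$.
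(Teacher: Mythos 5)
Your proposal is correct and follows essentially the same route as the paper: compute $\mathbb{E}[\tfrac{1}{\theta n}\cC(\bx_i)^\top\cC(\bx_i)]=\bI$, bound $\mathbb{E}\|\tfrac{1}{\theta n}\cC(\bx_i)^\top\cC(\bx_i)\|^m \leq \tfrac{m!}{2}(c\log n/\theta)^m$ via Lemma~\ref{lemma:Operator norm of Circulant Matrix}, and apply Lemma~\ref{lemma:matrix_bernstein} with $\sigma^2=c^2\log^2 n/\theta^2$, $R=c\log n/\theta$, and $t=C\sqrt{\log^2 n\log p/(\theta^2 p)}$. Your added checks (the entrywise expectation computation and the verification that $Rt\lesssim\sigma^2$ at the chosen $t$, which implicitly needs $p\gtrsim\log p$) are consistent with, and slightly more explicit than, the paper's argument.
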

\begin{proof}
By assumption, it is easy to check
$$\mbE{\frac{1}{\theta np} \sum_{i=1}^p \cC(\bx_i)^\top \cC(\bx_i)}= \mbE{\frac{1}{\theta n} \cC(\bx_1)^\top \cC(\bx_1)}=\bI. $$
The remaining of the proof is to verify the quantities needed to apply Lemma~\ref{lemma:matrix_bernstein}. Specifically, we bound the $m$th moment of $\frac{1}{\theta n}\cC(\bx_i)^\top \cC(\bx_i)$ as
\begin{equation*} 
        \mathbb{E}\left\| \frac{1}{\theta n}  \cC(\bx_i)^\top \cC(\bx_i) \right\|^m =
     \frac{1}{\theta^m n^m}\mathbb{E} \norm{\cC(\bx_i)}^{2m} \leq  \frac{m!}{2} \sbra{\frac{c \log n}{\theta}}^m ,
\end{equation*}
where the last line comes from Lemma \ref{lemma:Operator norm of Circulant Matrix}. Let $\sigma^2= \frac{c^2 \log^2 n}{\theta^2}$, $R=\frac{c \log n}{\theta}$ in Lemma~\ref{lemma:matrix_bernstein}, we have  
\begin{equation}\label{equ:F_i}
    \mathbb{P}\sbra{\norm{\frac{1}{\theta np} \sum_{i=1}^p \cC(\bx_i)^\top \cC(\bx_i)-\bI}\geq t} \leq 
    2n \exp\sbra{\frac{-p\theta^2 t^2}{2 c^2 \log^2 n+ 2c \theta t\log n }}.
\end{equation}
Setting $t= C \sqrt{\frac{\log^2n\log p}{\theta^2 p}}$ for some sufficiently large $C$, we complete the proof.
\end{proof}

\section{Proofs for Section~\ref{sec:proof_ortho_geometry}} \label{proof:orthogonal_geometry}

\subsection{Proof of Lemma~\ref{thm:population_geometry_orthogonal}} \label{proof:population_geometry_orthogonal}

Recall the two regions introduced in (\ref{equ:subregions}):
\begin{equation*}
        \mathcal{Q}_1 := \left \{\bw: \frac{\mu}{4\sqrt{2}} \leq \norm{\bw}_2\leq \sqrt{\frac{n-1}{n+\xi_0}} \right \}, \quad
        \mathcal{Q}_2 := \left \{\bw: \norm{\bw}_2\leq \frac{\mu}{4\sqrt{2}} \right \} .
\end{equation*}
We further divide $Q_1$ into two subregions, 
$$\mathcal{R}_0 =\left\{\bw: \frac{\mu}{4\sqrt{2}}\leq \norm{\bw}_2\leq \frac{1}{20\sqrt{5}}\right\} , \quad \quad \mathcal{R}_1=\left\{\bw: \frac{1}{20\sqrt{5}}\leq \norm{\bw}_2\leq \sqrt{\frac{n-1}{n+\xi_0}}\right\}, $$ 
which we will prove the desired bound separately.

Note that
\begin{equation} \label{eq:reduction}
\mathbb{E}  \phi_o(\bw) =n\cdot \mathbb{E}  \psi_{\mu}\sbra{\bx^\top \bh(\bw) },
\end{equation}
since every row of $\mathcal{C}(\bx)$ has the same distribution as $\bx\sim_{iid} \mathrm{BG}(\theta)$. Therefore, the strong convexity bound \eqref{eq:sc_pop_geometry} in $\mathcal{Q}_2$ follows directly from the following lemma from \cite[Proposition 8]{sun2017complete} by a multiplicative factor of $n$.
\begin{lemma}[{\cite[Proposition 8]{sun2017complete}}]\label{lemma:related_strong_convexity}
 For any $\theta\in(0,1/2)$, if $\mu\leq \frac{1}{20\sqrt{n}}$, it holds for all $\bw$ with $\norm{\bw}_2\leq \frac{\mu}{4\sqrt{2}}$ that $\nabla_{\bw}^2 \mathbb{E}\psi_\mu\sbra{\bx^\top \bh(\bw)}\succeq \frac{\theta}{5\sqrt{2\pi}\mu} \bI$.
\end{lemma}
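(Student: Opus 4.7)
The plan is to evaluate $\nabla^2_{\bw}\mathbb{E}\psi_\mu(\bx^\top\bh(\bw))$ exactly using repeated applications of Stein's identity to the Bernoulli--Gaussian variables $\bx=\bOmega\odot\bz$, and then verify the spectral lower bound uniformly on the small ball $\|\bw\|_2\leq\mu/(4\sqrt 2)$.

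First, I would specialize (\ref{equ:hessian_psi}) to the scalar $\psi_\mu(\bx^\top\bh)$ to obtain the per-sample Hessian $\psi_\mu''(\bx^\top\bh)\bJ_\bh\bx\bx^\top\bJ_\bh^\top-(x_n/h_n)\psi_\mu'(\bx^\top\bh)\bJ_\bh\bJ_\bh^\top$, with $\bJ_\bh(\bw)=[\bI_{n-1},-\bw/h_n]$ and $h_n=\sqrt{1-\|\bw\|_2^2}$, and then split the expectation by conditioning on $\Omega_n$. The correction term vanishes on $\{\Omega_n=0\}$; on $\{\Omega_n=1\}$ a first Stein identity via Lemma~\ref{lemma:loss_expectation} applied to $z_n\sim\mathcal{N}(0,1)$ gives $\mathbb{E}[z_n\psi_\mu'(\bx^\top\bh)\mid\Omega_n=1]=h_n\mathbb{E}[\psi_\mu''(\bx^\top\bh)\mid\Omega_n=1]$, collapsing the correction into $-\theta\,\mathbb{E}[\psi_\mu''(\bx^\top\bh)\mid\Omega_n=1]\bJ_\bh\bJ_\bh^\top$. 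This cancellation is the algebraic key: it rewrites the $\{\Omega_n=1\}$ part as a single quadratic form in $\bJ_\bh\bx$ minus $\bJ_\bh\bJ_\bh^\top$, weighted by $\psi_\mu''$.

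Second, I would isolate the $\{\Omega_n=0\}$ branch and show it alone carries the quantitative bound. On this branch $\bJ_\bh\bx=\bu:=\bx_{1:n-1}$, so the contribution is $(1-\theta)\mathbb{E}[\psi_\mu''(\bu^\top\bw)\bu\bu^\top]$. Conditioning on $\mathrm{supp}(\bu)$ and applying Stein coordinate-wise to each $\bu_j$ yields diagonal entries $\theta\mathbb{E}[\psi_\mu''(\bu^\top\bw)]+\theta w_j^2\mathbb{E}[\psi_\mu''''(\bu^\top\bw)]$ and off-diagonal entries $\theta^2 w_jw_k\mathbb{E}[\psi_\mu''''(\bu^\top\bw)]$ (up to conditioning on $j,k\in\mathrm{supp}(\bu)$). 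Since $\mathrm{Var}(\bu^\top\bw)\leq\theta\|\bw\|_2^2\leq\theta\mu^2/32$, the inequality $\tanh^2(x)\leq x^2$ gives $\mathbb{E}[\psi_\mu''(\bu^\top\bw)]\geq(1-\theta/32)/\mu$, and the explicit bound $|\psi_\mu''''|\leq 4/\mu^3$ (from direct differentiation of $\mathrm{sech}^2$) together with $\|\bw\|_2^2\leq\mu^2/32$ makes the perturbation matrix of spectral norm $O(\theta/\mu)$ with a small explicit constant. Hence the branch contributes at least $c(1-\theta)\theta/\mu\cdot\bI$ for an explicit $c$ close to $1$, which for $\theta\in(0,1/2)$ exceeds the target $\frac{\theta}{5\sqrt{2\pi}\mu}\approx 0.08\,\theta/\mu$ with comfortable slack.

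Third, I would control the $\{\Omega_n=1\}$ branch by a second Stein step. After the first cancellation it equals $\theta\mathbb{E}[\psi_\mu''(\bx^\top\bh)(\bJ_\bh\bx\bx^\top\bJ_\bh^\top-\bJ_\bh\bJ_\bh^\top)\mid\Omega_n=1]$; expanding $\bJ_\bh\bx\bx^\top\bJ_\bh^\top=\bu\bu^\top-(z_n/h_n)(\bu\bw^\top+\bw\bu^\top)+(z_n/h_n)^2\bw\bw^\top$ and integrating out $z_n$ via $\mathbb{E}[z_n f(z_n)]=\mathbb{E}[f'(z_n)]$ and $\mathbb{E}[z_n^2 f(z_n)]=\mathbb{E}[f(z_n)]+\mathbb{E}[z_n f'(z_n)]$ makes the two $\bw\bw^\top/h_n^2$ pieces cancel exactly, leaving residuals of norm $O(1+\|\bw\|_2^2/\mu)$. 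Crucially, the conditional mean $\mathbb{E}[\psi_\mu''(\bu^\top\bw+z_n h_n)\mid\Omega_n=1]\approx\sqrt{2/\pi}$ is of constant order (not $1/\mu$) because $z_n$ is a standard Gaussian, so the harmless $O(1)$ piece is dominated by the $\Omega_n=0$ term of order $\theta/\mu$ as soon as $\mu$ is small. The main obstacle will be tracking the explicit constants through the two Stein layers well enough to confirm $\frac{1}{5\sqrt{2\pi}}$; the hypothesis $\mu\leq 1/(20\sqrt n)$ is invoked precisely here, both to ensure the small-variance regime $\theta\|\bw\|_2^2/\mu^2\leq 1/64$ and to prevent the sum-over-coordinates of the higher-derivative residuals from growing with $n$, so that the combined deviation from $(1-\theta)\theta/\mu\cdot\bI$ remains strictly smaller than $(1-\theta)\theta/\mu-\theta/(5\sqrt{2\pi}\mu)$.
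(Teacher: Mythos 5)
The paper does not prove this lemma at all: it is imported verbatim as \cite[Proposition 8]{sun2017complete}, and the only "proof" in the present paper is the citation (the reduction from $\phi_o$ to the single inner product $\psi_\mu(\bx^\top\bh(\bw))$ via \eqref{eq:reduction} is the paper's entire contribution here). Your proposal therefore supplies a self-contained argument for a black-boxed ingredient, and it is sound; it is also in the same spirit as the original Sun--Qu--Wright computation (explicit population Hessian plus Gaussian integration by parts). The two Stein layers you describe do work: the $\{\Omega_n=1\}$ correction term collapses to $-\theta\,\mathbb{E}[\psi_\mu''\mid\Omega_n=1]\,\bJ_\bh\bJ_\bh^\top$ exactly as you say, and since $\mathbb{E}[\psi_\mu''(\bu^\top\bw+h_nz_n)]\approx\sqrt{2/\pi}$ is $O(1)$ (the $z_n$ smoothing at scale $h_n\approx 1$ kills the $1/\mu$), that whole branch is $O(\theta)$ in spectral norm and is swamped by the $\{\Omega_n=0\}$ branch, which your coordinate-wise Stein expansion correctly identifies as $(1-\theta)\theta\mu^{-1}(1-o(1))\bI$; with $|\psi_\mu''''|\le 4/\mu^3$ and $\|\bw\|_2^2\le\mu^2/32$ the perturbation is at most $\theta/(8\mu)$ in Frobenius (hence spectral) norm, leaving roughly $0.42\,\theta/\mu$ against the target $\theta/(5\sqrt{2\pi}\mu)\approx 0.08\,\theta/\mu$. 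Two small remarks. First, the second Stein expansion in your $\{\Omega_n=1\}$ step is unnecessary for a \emph{lower} bound: $\theta\,\mathbb{E}[\psi_\mu''\,\bJ_\bh\bx\bx^\top\bJ_\bh^\top]$ is positive semidefinite and can simply be dropped, so you only need the $O(1)$ upper bound on $\mathbb{E}[\psi_\mu''\mid\Omega_n=1]$; this removes the most delicate constant-tracking you flag as the "main obstacle." Second, for the $j$th diagonal entry the relevant conditional variance is $\theta\|\bw_{\setminus\{j\}}\|_2^2+w_j^2\le\|\bw\|_2^2$, not $\theta\|\bw\|_2^2$, so the correct lower bound is $(1-1/32)/\mu$ rather than $(1-\theta/32)/\mu$ --- immaterial for the conclusion. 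The hypothesis $\mu\le 1/(20\sqrt{n})$ is not really load-bearing here beyond forcing $\mu$ to be a small constant; the $\sqrt{n}$ is inherited from other parts of \cite{sun2017complete}.
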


Similarly, by the following lemma from \cite[Proposition 7]{sun2017complete}, we have the desired bound \eqref{eq:lg_pop_geometry} in $\mathcal{R}_0$.
\begin{lemma}[{\cite[Proposition 7]{sun2017complete}}]  \label{lemma: related_expectation_gradient} 
 For any $\theta\in (0,1/3)$, if $\mu\leq 9/50$, it holds for all $\bw\in\mathcal{R}_0$ such that
 \begin{equation*}
    \frac{\bw^\top \nabla_{\bw} \mathbb{E}\psi_\mu (\bx^\top \bh(\bw))}{\norm{\bw}_2}\geq \frac{\theta}{20\sqrt{2\pi}}.
 \end{equation*}
\end{lemma}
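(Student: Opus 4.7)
The plan is to exploit the Bernoulli-Gaussian structure of $\bx$ to reduce the directional gradient to explicit one-dimensional Gaussian expectations, then invoke the integration-by-parts identity of Lemma~\ref{lemma:loss_expectation}. Writing $\bx=\bomega\odot\bz$ with $\omega_i\sim\mathrm{Bernoulli}(\theta)$ and $\bz\sim\mathcal{N}(0,\bI)$, and using the reparametrization $\bh(\bw)=(\bw,h_n)$ with $h_n=\sqrt{1-\norm{\bw}_2^2}$, a direct computation from the Jacobian \eqref{eq:jacobian} and \eqref{equ:gradient_psi} gives
\begin{equation*}
\bw^\top \nabla_{\bw} \psi_\mu(\bx^\top \bh(\bw)) \;=\; \tanh\!\left(\frac{a+b}{\mu}\right)\left(a-\frac{\norm{\bw}_2^2}{h_n^2}\,b\right),
\end{equation*}
where $a:=\bw^\top\bx_{1:n-1}$ and $b:=h_n x_n$. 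Conditional on $\bomega$, the scalars $a$ and $b$ are independent centered Gaussians with variances $\sigma_a^2=\norm{\bw_{I'}}_2^2$ (writing $I':=\{i\in[n-1]:\omega_i=1\}$) and $\sigma_b^2=h_n^2$ when $\omega_n=1$, while $\sigma_b^2=0$ when $\omega_n=0$.

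Applying Lemma~\ref{lemma:loss_expectation} separately to $\mathbb{E}[\tanh((a+b)/\mu)\,a\mid\bomega]$ and $\mathbb{E}[\tanh((a+b)/\mu)\,b\mid\bomega]$ yields
\begin{equation*}
\mathbb{E}\!\left[\bw^\top\nabla_{\bw}\psi_\mu\mid\bomega\right] \;=\; \frac{\sigma_a^2-(\norm{\bw}_2^2/h_n^2)\,\sigma_b^2}{\mu}\cdot\mathbb{E}\!\left[1-\tanh^2\!\left(\tfrac{a+b}{\mu}\right)\,\Big|\,\bomega\right].
\end{equation*}
When $\omega_n=0$ the prefactor equals $\norm{\bw_{I'}}_2^2/\mu\geq 0$ (a positive contribution), while when $\omega_n=1$ it becomes $(\norm{\bw_{I'}}_2^2-\norm{\bw}_2^2)/\mu\leq 0$ (a negative contribution). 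The key monotonicity observation I would use is that $\sigma\mapsto\mathbb{E}[1-\tanh^2(\sigma Z/\mu)]$ for $Z\sim\mathcal{N}(0,1)$ is strictly decreasing in $\sigma$ (since $1-\tanh^2$ is a positive, even, bell-shaped function, so convolving with a wider Gaussian reduces its expectation). Hence the $\omega_n=1$ branch, whose effective argument variance is inflated by $h_n^2\approx 1$, produces a substantially smaller $1-\tanh^2$ factor than the $\omega_n=0$ branch, preventing the negative term from cancelling the positive one.

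Taking expectation over $\bomega$ and using $\mathbb{E}\norm{\bw_{I'}}_2^2=\theta\norm{\bw}_2^2$, the two regimes sit at well-separated scales under the hypotheses $\norm{\bw}_2\leq 1/(20\sqrt{5})$ and $\mu\leq 9/50$. When $\omega_n=0$ the conditional variance $\sigma_a^2\leq 1/2000$ is tiny, so $\psi_0(I'):=\mathbb{E}[1-\tanh^2(a/\mu)\mid\bomega]\geq 1-c_0$ for a small explicit constant $c_0$; when $\omega_n=1$ the variance is close to $1$, and a one-dimensional Gaussian computation gives $\psi_1(I')\lesssim \mu$. Combining these yields $\bw^\top\nabla_{\bw}\mathbb{E}\psi_\mu\gtrsim(1-\theta)\theta\norm{\bw}_2^2/\mu$, so that dividing by $\norm{\bw}_2$ and invoking the lower bound $\norm{\bw}_2\geq\mu/(4\sqrt{2})$ on $\mathcal{R}_0$ produces an $\Omega(\theta)$ bound. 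The main obstacle is sharpening the constants to land at exactly $\theta/(20\sqrt{2\pi})$: this requires explicit evaluation of $\mathbb{E}[1-\tanh^2(\sigma Z/\mu)]$ via Gaussian integrals (the $\sqrt{2\pi}$ originates from the standard Gaussian density) and careful bookkeeping of the $(1-\theta)$ factor, using the hypothesis $\theta<1/3$ to guarantee that the positive term dominates the negative one uniformly over every realization of the support pattern $I'$, not merely in expectation.
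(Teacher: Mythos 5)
This lemma is not proved in the paper at all: it is quoted directly as \cite[Proposition 7]{sun2017complete}, so there is no ``paper's own proof'' to compare against. What the paper \emph{does} prove, in Appendix~B.1, is the analogous bound on the complementary annulus $\mathcal{R}_1$, using a coordinate-by-coordinate decomposition $\frac{1}{\mu\|\bw\|_2}\sum_i w_i^2 Q_i$ and the explicit tail estimates of Lemma~\ref{lemma:exponential_function}. Your opening steps --- expressing $\bw^\top\nabla_\bw\psi_\mu = \tanh((a+b)/\mu)\big(a-\tfrac{\|\bw\|_2^2}{h_n^2}b\big)$, conditioning on $\bomega$, and applying the integration-by-parts identity of Lemma~\ref{lemma:loss_expectation} to reduce everything to $\sigma\mapsto\mathbb{E}[1-\tanh^2(\sigma Z/\mu)]$ --- coincide with the paper's reduction in \eqref{eq:exp_direct_gradient} and the original SQW argument. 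The problems are in the quantitative middle third.

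\textbf{The bound $\psi_0 \geq 1-c_0$ is not valid.} This requires $\sigma_a/\mu$ to be small, but the hypothesis only gives $\mu\leq 9/50$ and $\sigma_a\leq\|\bw\|_2\leq 1/(20\sqrt{5})$; there is no lower bound on $\mu$, and for $\mu\ll\sigma_a$ one has $\psi_0 \asymp \mu/(\sigma_a\sqrt{2\pi})\to 0$, not $\geq 1-c_0$. Correspondingly, the resulting intermediate estimate $\bw^\top\nabla_\bw\mathbb{E}\psi_\mu \gtrsim \theta\|\bw\|_2^2/\mu$ cannot hold uniformly either: taking $\|\bw\|_2$ fixed at $1/(20\sqrt{5})$ and letting $\mu\to 0$ sends the claimed lower bound to $+\infty$, while the left side is manifestly bounded (e.g.\ by the correct asymptotics $\sigma_a^2\psi_0\asymp \sigma_a\mu/\sqrt{2\pi}$). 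The conclusion of the lemma is a $\mu$-\emph{free} lower bound, which is consistent only if the cancellation between $1/\mu$ and $\psi_j$ is tracked carefully, as in Lemma~\ref{lemma:exponential_function}.

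\textbf{The uniform-over-$I'$ dominance is false.} Conditioning on the event $I'=\emptyset$ gives $\sigma_a=0$, so the $\omega_n=0$ branch contributes exactly $0$ while the $\omega_n=1$ branch contributes $-\theta\|\bw\|_2^2\psi_1/\mu<0$. Thus the conditional directional-derivative is strictly negative on a positive-probability event; only the expectation over $\bar\bOmega$ is positive. Your plan to argue pathwise in the support pattern cannot go through, and this is exactly why the original SQW argument (and the paper's proof on $\mathcal{R}_1$) decomposes the expectation coordinate-wise into the nonnegative terms $w_i^2 Q_i$ before attempting any lower bound. Without that coordinate decomposition --- or an equivalent device that avoids conditioning on the whole support --- the ``positive term dominates the negative one for every $I'$'' step is a genuine gap, not a constant to be tightened.

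In short: the setup and the choice of Lemma~\ref{lemma:loss_expectation} are right and match the intent of \cite[Proposition 7]{sun2017complete}, but the quantitative step assumes $\sigma_a/\mu$ small and pathwise positivity in the support pattern, neither of which holds; you need the per-coordinate decomposition and a sharp Gaussian-tail lemma like Lemma~\ref{lemma:exponential_function} to produce the $\mu$-free constant $\theta/(20\sqrt{2\pi})$.
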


Therefore, the remainder of the proof is to show that \eqref{eq:lg_pop_geometry} also applies to $\mathcal{R}_1$. To ease presentation, we introduce a few short-hand notations. For $\bx = \bOmega \odot \bz \sim_{iid} \mathrm{BG}(\theta)\in \mathbb{R}^n$, we denote the first $n-1$ dimension of $\bx$, $\bz$ and $\bOmega$ as $\bar{\bx}$, $\bar{\bz}$ and $\bar{\bOmega}$, respectively. Denote $\mathcal{I}$ as the support of $\bm{\Omega}$ and $\mathcal{J}$ as the support of $\bar{\bOmega}$.

Note that it is easy to confirm the exchangeability of the expectation and derivatives \cite[Lemma 31]{sun2017complete} as
\begin{subequations}
\begin{align}
    \frac{\bw^\top \nabla_{\bw} \mathbb{E} \psi_\mu (\bx^\top \bh(\bw))}{\norm{\bw}_2} &= \mathbb{E}  \mbra{\frac{\bw^\top\nabla_{\bw} \psi_{\mu}\sbra{\bx^\top \bh(\bw)}}{\norm{\bw}_2} }, \label{equ:communication of gradient} \\
    \nabla_{\bw}^2 \mathbb{E}\psi_\mu\sbra{\bx^\top \bh(\bw)} & = \mathbb{E}\nabla_{\bw}^2 \psi_\mu\sbra{\bx^\top \bh(\bw)}. \label{equ:communication of hessian}
\end{align}
\end{subequations}

Thus, plugging in (\ref{equ:gradient_psi}), we rewrite the expectation of the directional gradient as following:
    \begin{align}
    \mathbb{E} &  \mbra{\frac{\bw^\top\nabla_{\bw} \psi_{\mu}\sbra{\bx^\top \bh(\bw)}}{\norm{\bw}_2} } 
    =\frac{1}{\norm{\bw}_2}\mathbb{E}\mbra{\tanh\sbra{\frac{\bx^\top \cdot \bh(\bw)}{\mu}}\cdot \sbra{\bw^\top\bar{\bx}-\frac{x_n \norm{\bw}_2^2}{h_n}}} \nonumber\\
      &=\frac{(1-\theta)}{\norm{\bw}_2} \underbrace{\mathbb{E}_{\bar{\bx}} \mbra{\tanh \left(\frac{\bw^\top \bar{\bx}}{\mu}\right) \bw^\top \bar{\bx}}}_{I_1}+\frac{\theta}{\norm{\bw}_2} \underbrace{\mathbb{E}_{\bar{\bx},z_n} \mbra{\tanh \sbra{\frac{\bw^\top \bar{\bx}+h_n z_n}{\mu}}\sbra{\bw^\top \bar{\bx}-\frac{\norm{\bw}_2^2}{h_n}z_n}}}_{I_2} , \label{eq:exp_direct_gradient}
    \end{align}
where the second line is expanded over the distribution of $\Omega_n\sim\mathrm{Bernoulli}(\theta)$. Conditioned on the support of $\bar{\bOmega}$,  we have $X= \bw^\top \bar{\bx} | \bar{\bOmega} \sim \mathcal{N}(0,\norm{\bw_{\cJ}}_2^2)$. Moreover, denote $Y= h_n z_n \sim \mathcal{N}(0,h_n^2)$. Therefore, invoking Lemma \ref{lemma:loss_expectation}, we can express $I_1$ and $I_2$ respectively as
\begin{align*}
I_1 &= \mathbb{E}_{\bar{\bOmega}} \mbra{ \mathbb{E}_{X} \sbra{\tanh \left(\frac{X}{\mu}\right) X} } =\frac{1 }{\mu }\mathbb{E}_{\bar{\bOmega}} \mbra{\norm{\bw_\cJ}_2^2 \mathbb{E}_{X} \sbra{1-\tanh^2\sbra{\frac{X}{\mu}}}} , \\
I_2  &= \mathbb{E}_{\bar{\bOmega}} \mbra{ \mathbb{E}_{X,Y} \sbra{ \tanh \sbra{\frac{X + Y}{\mu}}\sbra{X-\frac{\norm{\bw}_2^2}{h_n^2}Y}}} = \frac{1}{\mu}\mathbb{E}_{\bar{\bOmega}} \mbra{ \left( \norm{\bw_\cJ}_2^2 -  \norm{\bw}_2^2\right)  \mathbb{E}_{X,Y} \sbra{1-\tanh^2\sbra{\frac{X+Y}{\mu}}}}.
\end{align*}
Plugging the above quantities back into \eqref{eq:exp_direct_gradient}, and using $\norm{\bw_{\cJ}}_2^2=\sum_{i=1}^{n-1} w_i^2 \mathbbm{1}\{\Omega_i = 1 \}$, $\norm{\bw_{\cJ^c}}_2^2=\sum_{i=1}^{n-1} w_i^2 \mathbbm{1}\{\Omega_i = 0 \}$, we arrive at
\begin{align}
     \mathbb{E} & \mbra{\frac{\bw^\top\nabla_{\bw} \psi_{\mu}\sbra{\bx^\top \bh(\bw)}}{\norm{\bw}_2} }    = \frac{(1-\theta)}{\mu\norm{\bw}_2} \mathbb{E}_{\bar{\bOmega}} \mbra{\sum_{i=1}^{n-1} w_i^2 \mathbbm{1}\{\Omega_i = 1\}  \cdot \mathbb{E}_{\bz} \sbra{1-\tanh^2\sbra{\frac{\bw_{\setminus \{i\}}^\top\bar{\bx}_{\backslash \{i\}}+w_i z_i \mathbbm{1}\{\Omega_i=1\}}{\mu}}}} \nonumber \\
    &\quad -  \frac{\theta}{\mu\norm{\bw}_2} \mathbb{E}_{\bar{\bOmega}} \mbra{ \sum_{i=1}^{n-1} w_i^2 \mathbbm{1}\{\Omega_i = 0\}  \cdot\mathbb{E}_{\bz} \sbra{1-\tanh^2\sbra{\frac{\bw_{\setminus \{i\}}^\top\bar{\bx}_{ \backslash\{i\}}+w_i z_i \mathbbm{1}\{\Omega_i = 1 \}+h_n z_n}{\mu}}}} \nonumber \\
    & = \frac{1}{\mu\|\bw\|_2}\sum_{i=1}^{n-1} w_i^2 Q_i, \label{equ:expectation_to_Q}
\end{align}
where $Q_i$ is written as
\begin{align*}
Q_i & = (1-\theta)\mathbb{E}_{\bar{\bOmega}} \mbra{\mathbbm{1}\{\Omega_i = 1\}  \cdot \mathbb{E}_{\bz} \sbra{1-\tanh^2\sbra{\frac{\bw_{\setminus \{i\}}^\top\bar{\bx}_{\backslash \{i\}}+w_i z_i \mathbbm{1}\{\Omega_i=1\}}{\mu}}} } \\
&\quad\quad\quad  -  \theta \mathbb{E}_{\bar{\bOmega}} \mbra{ \mathbbm{1}\{\Omega_i = 0\}  \cdot\mathbb{E}_{\bz} \sbra{1-\tanh^2\sbra{\frac{\bw_{\setminus \{i\}}^\top\bar{\bx}_{ \backslash\{i\}}+w_i z_i \mathbbm{1}\{\Omega_i = 1 \}+h_n z_n}{\mu}}}} .
\end{align*}
Evaluating $\mathbb{E}_{\bar{\bOmega}}$ over $\bar{\bOmega}\setminus \{i\}$ and $\Omega_i$ sequentially, and combining terms, we can rewrite $Q_i$ as,
\begin{align}
Q_i & = (1-\theta)\theta  \mathbb{E}_{\bar{\bOmega}\setminus \{i\}}  \mbra{\mathbb{E}_{\bz} \sbra{1-\tanh^2\sbra{\frac{\bw_{\setminus \{i\}}^\top\bar{\bx}_{\backslash \{i\}}+w_i z_i }{\mu}}} -   \mathbb{E}_{\bz} \sbra{1-\tanh^2\sbra{\frac{\bw_{\setminus \{i\}}^\top\bar{\bx}_{ \backslash\{i\}} +h_n z_n}{\mu}}}}  \nonumber \\
& =  (1-\theta)\theta  \mathbb{E}_{\bar{\bOmega}\setminus \{i\}}  \bigg[ \mathbb{E}_{\bz}  \underbrace{ \sbra{ \tanh^2\sbra{\frac{\bw_{\setminus \{i\}}^\top\bar{\bx}_{ \backslash\{i\}} +h_n z_n}{\mu}} -\tanh^2\sbra{\frac{\bw_{\setminus \{i\}}^\top\bar{\bx}_{\backslash \{i\}}+w_i z_i }{\mu}}} }_{=:K_i}    \bigg]. \label{eq:expression_Q}
\end{align}
Our goal is to lower bound $Q_i$ for all $i\in[n-1]$. Without loss of generality, we denote the index of the largest entry of $\bw$ in magnitude as $i_0$, i.e, $\abs{w_{i_0}}\geq \abs{w_j}$, $\forall j\in[n-1]$.  We first claim 
\begin{equation}\label{eq:intermediate_claim}
Q_j \geq Q_{i_0}, \quad \forall j\in[n-1],
\end{equation}
whose proof is given at the end of this subsection. With this claim,  we only need to lower bound $Q_{i_0}$.
We proceed to lower bound $\mathbb{E}_{\bz}[K_{i_0}]$. Let $X:= \bw_{\setminus \{i_0\}}^\top\bar{\bx}_{\backslash \{i_0\}}+w_{i_0} z_{i_0} | \bar{\bOmega} \sim \mathcal{N}(0, \norm{\bw_{\cJ \setminus \{i_0\}}}_2^2+w_{i_0}^2 ) :=\mathcal{N}(0,\sigma_X^2)$ and
$Y:=\bw_{\setminus \{i_0\}}^\top\bar{\bx}_{\backslash \{i_0\}}+ h_n z_n | \bar{\bOmega} \sim \mathcal{N}(0, \norm{\bw_{\cJ \setminus \{i_0\}}}_2^2+h_n^2):=\mathcal{N}(0,\sigma_Y^2)$.
 By the fundamental theorem of calculus, we have
\begin{align}
  K_{i_0}&=\tanh^2\sbra{\frac{Y}{\mu}}-\tanh^2\sbra{\frac{X}{\mu}}\nonumber \\
    &=\frac{2}{\mu} \int_{ \abs{X}} ^{\abs{Y}} \tanh\sbra{\frac{x}{\mu}} \cdot \sbra{1- \tanh^2\sbra{\frac{x}{\mu}}} dx  \nonumber\\
    &\geq \frac{2}{\mu} \int_{ \abs{X}} ^{\abs{Y}} \mbra{ 2\exp\sbra{\frac{-2x}{\mu}} -\exp\sbra{\frac{-4x}{\mu}}   } \mbra{ 1- 2\exp\sbra{\frac{-2x}{\mu}}   } dx   \nonumber\\
    &\geq \frac{2}{\mu} \int_{ \abs{X}} ^{\abs{Y}} \mbra{ 2\exp\sbra{\frac{-2x}{\mu}} - 5\exp\sbra{\frac{-4x}{\mu}}  } dx  \nonumber\\
    &=\underbrace{2\mbra{ \exp\sbra{\frac{-2\abs{X}}{\mu}}-\exp\sbra{\frac{-2\abs{Y}}{\mu}} } }_{K_1}
     - \underbrace{\frac{5}{2}\mbra{\exp\sbra{\frac{-4\abs{X}}{\mu}} - \exp\sbra{\frac{-4\abs{Y}}{\mu}} }}_{K_2}  , \label{equ:K_1_K_2_K_3}
\end{align}
where the third line follows from the bounds $2 \exp(-2x/\mu)- \exp(-4x/\mu)  \leq 1-\tanh^2\sbra{{x}/{\mu}}$ and $\tanh(x/\mu) \leq 1-\exp(-2x/ \mu)$ in \cite[Lemma 29]{sun2017complete}. To continue, we record the lemma rephrased from \cite[Lemma 32, 40]{sun2017complete} and obtain the following lemma by directly repeating integration by parts.
\begin{lemma}[{\cite[Lemma 32, 40]{sun2017complete}}]\label{lemma:exponential_function}
    Let $X\sim \mathcal{N} (0, \sigma_X^2)$. For any $a>0$, we have
    \begin{align*}
      \frac{1 }{\sqrt{2\pi}} \sbra{ \frac{1}{a\sigma_X}-\frac{1}{a^3\sigma_X^3} + \frac{3}{a^5 \sigma_X^5} - \frac{15}{a^7 \sigma_X^7} } \leq    \mathbb{E}  \mbra{\exp(-aX) \mathbbm{1} \{X>0\} }   \leq \frac{1}{\sqrt{2\pi} } \sbra{ \frac{1}{a\sigma_X}-\frac{1}{a^3\sigma_X^3} + \frac{3}{a^5 \sigma_X^5} }   .
    \end{align*}
\end{lemma}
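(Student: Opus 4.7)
}
The plan is to reduce the expectation to a one-dimensional Laplace-type integral and then bound the Gaussian density factor by truncated Taylor polynomials from above and below. Concretely, write
\[
\mathbb{E}\!\left[\exp(-aX)\mathbbm{1}\{X>0\}\right]=\frac{1}{\sqrt{2\pi}\,\sigma_X}\int_0^\infty e^{-ax}\,e^{-x^2/(2\sigma_X^2)}\,dx.
\]
The Gaussian factor will be bracketed using the elementary fact that, for $u\ge 0$, the Taylor remainder of $e^{-u}$ after $N$ terms has sign $(-1)^N$. Hence for every $u\ge 0$,
\[
1-u+\tfrac{u^2}{2}-\tfrac{u^3}{6}\ \le\ e^{-u}\ \le\ 1-u+\tfrac{u^2}{2}.
\]
Substituting $u=x^2/(2\sigma_X^2)$ and integrating against $e^{-ax}$ gives the desired two-sided bound, provided one can evaluate $\int_0^\infty e^{-ax}x^{2k}\,dx$ in closed form, which is just $(2k)!/a^{2k+1}$.

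The main computation is then purely mechanical: for $k=0,1,2,3$ one obtains $1/a$, $2/a^3$, $24/a^5$, $720/a^7$, and pulling out the prefactor $1/(\sqrt{2\pi}\,\sigma_X)$ and the $\sigma_X^{-2k}$ factors from the Taylor coefficients yields precisely
\[
\frac{1}{\sqrt{2\pi}}\!\left(\frac{1}{a\sigma_X}-\frac{1}{a^3\sigma_X^3}+\frac{3}{a^5\sigma_X^5}\right)
\quad\text{and}\quad
\frac{1}{\sqrt{2\pi}}\!\left(\frac{1}{a\sigma_X}-\frac{1}{a^3\sigma_X^3}+\frac{3}{a^5\sigma_X^5}-\frac{15}{a^7\sigma_X^7}\right)
\]
as the upper and lower bounds, matching the claim.

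An equivalent and equally acceptable presentation is the integration-by-parts route alluded to in the paper. Letting $\phi(x)$ denote the centered Gaussian density with variance $\sigma_X^2$, repeated IBP gives $\int_0^\infty e^{-ax}\phi(x)\,dx=\sum_{k=0}^{N-1}\phi^{(k)}(0)/a^{k+1}+a^{-N}\int_0^\infty e^{-ax}\phi^{(N)}(x)\,dx$. Using $\phi'(x)=-(x/\sigma_X^2)\phi(x)$ and the Hermite recursion, all odd-order boundary terms vanish, while the even-order ones produce the coefficients $1,-1,3,-15$ divided by $\sqrt{2\pi}\,\sigma_X^{2k+1}$. The two bounds then correspond to truncating with $N=5$ versus $N=7$; here the hard part is verifying the sign of the remainder $\int_0^\infty e^{-ax}\phi^{(N)}(x)\,dx$, which I would do by re-expressing $\phi^{(N)}$ via Hermite polynomials and checking that the sign pattern on $[0,\infty)$ matches what is needed. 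The Taylor-bracket route sidesteps this sign issue and is the route I would write up, so the only anticipated obstacle is the bookkeeping of the constants $1,1,3,15$, which follow the double-factorial pattern $(2k-1)!!$.
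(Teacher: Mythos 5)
Your proof is correct: the bracketing $1-u+\tfrac{u^2}{2}-\tfrac{u^3}{6}\le e^{-u}\le 1-u+\tfrac{u^2}{2}$ for $u\ge 0$ follows directly from the Lagrange remainder (the $N$th remainder of $e^{-u}$ equals $(-1)^N e^{-\xi}u^N/N!$ and thus has sign $(-1)^N$), and substituting $u=x^2/(2\sigma_X^2)$, using $\int_0^\infty e^{-ax}x^{2k}\,dx = (2k)!/a^{2k+1}$, and carrying the prefactor $1/(\sqrt{2\pi}\sigma_X)$ reproduces the coefficients $1,-1,3,-15$ divided by $a^{2k+1}\sigma_X^{2k+1}$ exactly as stated.

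This is, however, a genuinely different route from what the paper invokes. The paper simply points to \cite[Lemma 32, 40]{sun2017complete} and says the bounds follow ``by directly repeating integration by parts,'' i.e., by expanding $\int_0^\infty e^{-ax}\phi(x)\,dx$ as $\sum_{k<N}\phi^{(k)}(0)/a^{k+1}$ plus a remainder $a^{-N}\int_0^\infty e^{-ax}\phi^{(N)}(x)\,dx$. As you correctly anticipate, the delicate part of that route is the sign of the remainder: the Hermite identity $\phi^{(N)}(x)=(-1)^N\sigma_X^{-N}He_N(x/\sigma_X)\phi(x)$ shows that $\phi^{(N)}$ is \emph{not} sign-definite on $(0,\infty)$ for $N\ge 3$ (e.g.\ $He_5(y)=y(y^4-10y^2+15)$ changes sign twice on $(0,\infty)$), so one cannot read off the direction of the inequality from the last IBP term alone without additional work, as is done in \cite{sun2017complete}. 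Your Taylor-bracket derivation sidesteps this entirely, since the sign of the remainder of $e^{-u}$ is determined pointwise before integrating, so the inequalities are inherited monotonically. The cost is nothing; the benefit is a shorter, self-contained argument that does not require tracing through the cited lemmas. Either route yields the same coefficients because the IBP boundary terms $\phi^{(2k)}(0)=(-1)^k(2k-1)!!\,\phi(0)/\sigma_X^{2k}$ match the Taylor coefficients of the Gaussian density after term-by-term integration.
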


Therefore, $K_1$ can be bounded as
\begin{align*}
K_1  &= 2 \mathbb{E} \mbra{ \exp\sbra{\frac{-2 \abs{X}}{\mu} } - \exp\sbra{\frac{-2 \abs{Y}}{\mu} } }  \\
&= 4 \mathbb{E}\mbra{ \exp\sbra{\frac{-2X}{\mu}} \mathbbm{1} \{X>0\}  -  \exp\sbra{\frac{-2Y}{\mu}} \mathbbm{1} \{Y>0\}  } \\
& \geq \frac{4}{\sqrt{2\pi}}\sbra{\frac{\mu}{2\sigma_X} - \frac{\mu^3}{8\sigma_X^3} + \frac{3\mu^5}{32\sigma_X^5} - \frac{15 \mu^7}{ 2^7\sigma_X^7} } - \frac{4}{\sqrt{2\pi}}\sbra{\frac{\mu}{2\sigma_Y}-\frac{\mu^3}{8\sigma_Y^3}+\frac{3\mu^5}{32\sigma_Y^5} } \\
& = \frac{2}{\sqrt{2\pi}} \mbra{ \sbra{\frac{\mu}{\sigma_X} - \frac{\mu}{\sigma_Y}} - \sbra{ \frac{\mu^3}{ 4\sigma_X^3}-\frac{\mu^3}{ 4\sigma_Y^3}  } +\sbra{ \frac{3\mu^5}{ 16\sigma_X^5}- \frac{3\mu^5}{ 16\sigma_Y^5}} -\frac{15 \mu^7}{ 2^6\sigma_X^7}   }.
\end{align*}
Similarly, we have
\begin{equation*}
    K_2 \leq \frac{5}{4 \sqrt{2\pi}} \mbra{ \sbra{ \frac{\mu}{\sigma_X} - \frac{\mu}{\sigma_Y} } - \sbra{ \frac{\mu^3}{ 16\sigma_X^3}-\frac{\mu^3}{ 16\sigma_Y^3}  } + \sbra{ \frac{3\mu^5}{ 4^4\sigma_X^5}- \frac{3\mu^5}{ 4^4\sigma_Y^5}} -\frac{15 \mu^7}{ 4^6\sigma_Y^7}  }. 
\end{equation*}
Plugging the above bounds back into \eqref{equ:K_1_K_2_K_3}, we have
\begin{align}
    & \mathbb{E}_{\bz}\mbra{K_{i_0}}    \geq \mathbb{E}_{X,Y} \mbra{K_1-K_2} \nonumber\\
    & \geq \frac{2}{\sqrt{2\pi}} \mbra{ \sbra{\frac{\mu}{\sigma_X} - \frac{\mu}{\sigma_Y}} - \sbra{ \frac{\mu^3}{ 4\sigma_X^3}-\frac{\mu^3}{ 4\sigma_Y^3}  } +\sbra{ \frac{3\mu^5}{ 16\sigma_X^5}- \frac{3\mu^5}{ 16\sigma_Y^5}} -\frac{15 \mu^7}{ 2^6\sigma_X^7}   }\\
    &    -     \frac{5}{4 \sqrt{2\pi}} \mbra{ \sbra{ \frac{\mu}{\sigma_X} - \frac{\mu}{\sigma_Y} } - \sbra{ \frac{\mu^3}{ 16\sigma_X^3}-\frac{\mu^3}{ 16\sigma_Y^3}  } + \sbra{ \frac{3\mu^5}{ 4^4\sigma_X^5}- \frac{3\mu^5}{ 4^4\sigma_Y^5}} - \frac{15 \mu^7}{ 4^6\sigma_Y^7}  } \nonumber\\
    &=\frac{1}{\sqrt{2\pi}}\mbra{ \frac{3\mu}{4} \sbra{ \frac{1}{\sigma_X}-\frac{1}{\sigma_Y} }
    -  \frac{27\mu^3}{64} \sbra{ \frac{1}{\sigma_X^3} -\frac{1}{\sigma_Y^3} } + \frac{113 \mu^5}{4^5}  \sbra{ \frac{1}{\sigma_X^5} - \frac{1}{\sigma_Y^5} }  
    - \frac{15 \mu^7}{ 2^5\sigma_X^7} - \frac{75 \mu^7}{ 4^7\sigma_Y^7}       } \nonumber \\
    &= \frac{1}{\sqrt{2\pi}} \mbra{ \sbra{ \frac{1}{\sigma_X}-\frac{1}{\sigma_Y} } \sbra{ \frac{3\mu}{4} 
    -  \frac{27\mu^3}{64} \sbra{ \frac{1}{\sigma_X^2} + \frac{1}{\sigma_Y^2} +\frac{1}{\sigma_X \sigma_Y} } }
   + \frac{113 \mu^5}{4^5}  \sbra{ \frac{1}{\sigma_X^5} - \frac{1}{\sigma_Y^5} }  
    - \frac{15 \mu^7}{ 2^5\sigma_X^7} - \frac{75 \mu^7}{ 4^7\sigma_Y^7}         } \nonumber \\
    &\geq \frac{1}{\sqrt{2\pi}} \mbra{ \sbra{ \frac{1}{\sigma_X}-\frac{1}{\sigma_Y} } \sbra{ \frac{3\mu}{4} 
    -  \frac{27\mu^3}{64} \sbra{ \frac{1}{\sigma_X^2} + \frac{1}{\sigma_Y^2} +\frac{1}{\sigma_X \sigma_Y} } } 
    - \frac{\mu^7}{2\sigma_X^7} } \label{equ:all_bound}
\end{align}
where the last line follows from the fact $\sigma_X <\sigma_Y$ and $\frac{113 \mu^5}{4^5}  \sbra{ \frac{1}{\sigma_X^5} - \frac{1}{\sigma_Y^5} } >0$. 

To continue, since $\sigma_X= \sqrt{\norm{\bw_{\cJ \backslash \{i_0\}}}_2^2+h_{i_0}^2} <1 $ and $\sigma_Y= \sqrt{\norm{\bw_{\cJ \backslash \{i_0\}}}_2^2+h_n^2} <1$,
\begin{align}
\frac{1}{\sigma_X} -\frac{1}{  \sigma_Y }   =\frac{\sigma_Y^2-\sigma_X^2}{\sigma_X\sigma_Y(\sigma_X+\sigma_Y)}& \geq \frac{\sigma_Y^2-\sigma_X^2}{2}  
=\frac{1}{2}\sbra{h_n^2-h_{i_0}^2}\geq \frac{1}{2}\mbra{h_n^2- \frac{1}{1+\xi_0}h_n^2} \geq \frac{\xi_0}{4n}, \label{equ:first_order_bound}
\end{align}
where the second inequality uses the fact ${h_n^2}/{h_{i_0}^2}\geq 1+\xi_0$, $h_n\geq 1/\sqrt{n}$ and $\xi_0\in(0,1)$. In addition, as $\abs{h_{i_0}} =\max_i w_i$ and $\norm{\bw}_2> \frac{1}{20\sqrt{5}}$, we have $\abs{h_{i_0}} \geq \frac{1}{20\sqrt{5n}}$. So we have $ \frac{1}{\sigma_X}\leq \frac{1}{\abs{h_{i_0}}}\leq 20 \sqrt{5n},$ such that
\begin{equation}\label{equ:mu_bound_1}
    \frac{1}{\sigma_X^2} + \frac{1}{\sigma_Y^2} +\frac{1}{\sigma_X \sigma_Y} \leq 2061n \quad \Longrightarrow \quad \frac{27\mu^3}{64} \sbra{ \frac{1}{\sigma_X^2} + \frac{1}{\sigma_Y^2} +\frac{1}{\sigma_X \sigma_Y} } \leq \frac{\mu}{4},
\end{equation}
provided $\mu \leq cn^{-1/2}$ for a sufficiently small $c>0$. Plugging \eqref{equ:first_order_bound} and \eqref{equ:mu_bound_1} back into \eqref{equ:all_bound}, we have
\begin{equation}\label{equ:bound_for_K}
    \mathbb{E}_{\bz}\mbra{K_{i_0}}  \geq \frac{ \mu \xi_0}{8 \sqrt{2\pi}n } -\frac{\mu^7}{ 2\sqrt{2\pi}\sigma_X^7 } \geq \frac{ \mu \xi_0}{16 \sqrt{2\pi}n } ,
\end{equation}
conditioned on the support $\bar{\bOmega}_{\backslash \{i_0\} }$, provided that $\frac{1}{\sigma_X^7} \leq (20\sqrt{5})^7 n^{7/2}$ and $\mu\leq c \xi_0^{1/6} n^{-3/4}$ for a sufficiently small $c>0$.

Plugging \eqref{equ:bound_for_K} back into \eqref{eq:expression_Q}, then into \eqref{equ:expectation_to_Q} with the help of \eqref{eq:intermediate_claim}, finally, by the assumption $\norm{\bw}_2 \geq \frac{1}{20\sqrt{5} }$ and \eqref{equ:communication of gradient}, we have
\begin{equation}
    \frac{\bw^\top \mathbb{E} \nabla \phi_o(\bw)}{\norm{\bw}_2 } =\mathbb{E} \mbra{n\frac{\bw^\top\nabla_{\bw} \psi_{\mu}\sbra{\bx^\top \bh(\bw)}}{\norm{\bw}_2} }  \geq n \frac{\norm{\bw}_2 \theta (1-\theta)}{\mu} \frac{ \mu\xi_0}{16 \sqrt{2\pi}n} \geq \frac{\theta \xi_0}{480\sqrt{10\pi}},
\end{equation}
where the final bound follows from the constraint $\theta\in(0,1/3)$.

\paragraph{Proof of \eqref{eq:intermediate_claim}:}
For any $j\in[n-1]$ and $j\neq i_0$, by evaluating $\bar{\bOmega}\setminus \{i_0\}$ over $\bar{\bOmega}\setminus \{i_0,j \}$ and $\Omega_j$ sequentially, we can rewrite $Q_{i_0}$ as
\small
\begin{align}
   Q_{i_0} &  = (1-\theta)\theta^2 \mathbb{E}_{\bar{\bOmega}\setminus \{i_0, j\}}  \bigg[ \mathbb{E}_{\bz}   \sbra{  \tanh^2\sbra{\frac{\bw_{\setminus \{i_0,j\}}^\top\bar{\bx}_{ \backslash\{i_0, j\}} + h_n z_n + w_j z_j}{\mu}} -\tanh^2\sbra{\frac{\bw_{\setminus \{i_0, j\}}^\top\bar{\bx}_{\backslash \{i_0, j\}}+w_{i_0} z_{i_0} + w_j z_j }{\mu}} }     \bigg] \nonumber \\
    &  + (1-\theta)^2\theta  \mathbb{E}_{\bar{\bOmega}\setminus \{i_0, j\}}  \bigg[ \mathbb{E}_{\bz}  \sbra{  \tanh^2\sbra{\frac{\bw_{\setminus \{i_0,j\}}^\top\bar{\bx}_{ \backslash\{i_0,j\}} +h_n z_n}{\mu}} -\tanh^2\sbra{\frac{\bw_{\setminus \{i_0,j\}}^\top\bar{\bx}_{\backslash \{i_0,j\}}+w_{i_0} z_{i_0} }{\mu}} }    \bigg].\label{eq:expression_Q_i0}.
\end{align}
\normalsize
Similarly, we can write $Q_j$ as 
\small
\begin{align}
    Q_{j} & = (1-\theta)\theta^2 \mathbb{E}_{\bar{\bOmega}\setminus \{i_0, j\}}  \bigg[ \mathbb{E}_{\bz}   \sbra{  \tanh^2\sbra{\frac{\bw_{\setminus \{i_0,j\}}^\top\bar{\bx}_{ \backslash\{i_0, j\}} + h_n z_n + w_{i_0} z_{i_0}}{\mu}} -\tanh^2\sbra{\frac{\bw_{\setminus \{i_0, j\}}^\top\bar{\bx}_{\backslash \{i_0, j\}}+w_{i_0} z_{i_0} + w_j z_j }{\mu}} }    \bigg] \nonumber \\
    & \quad + (1-\theta)^2\theta  \mathbb{E}_{\bar{\bOmega}\setminus \{i_0, j\}}  \bigg[ \mathbb{E}_{\bz}   \sbra{  \tanh^2\sbra{\frac{\bw_{\setminus \{i_0,j\}}^\top\bar{\bx}_{ \backslash\{i_0,j\}} +h_n z_n}{\mu}} -\tanh^2\sbra{\frac{\bw_{\setminus \{i_0,j\}}^\top\bar{\bx}_{\backslash \{i_0,j\}}+w_j z_j }{\mu}} }    \bigg] \label{eq:expression_Q_j}.
\end{align}
\normalsize
Combining \eqref{eq:expression_Q_i0} and \eqref{eq:expression_Q_j}, we have 
\small
\begin{align}\label{equ:Q_jQ_i}
    &\quad Q_j - Q_{i_0} \nonumber \\
     & = (1-\theta)\theta^2 \mathbb{E}_{\bar{\bOmega}\setminus \{i_0, j\}}  \bigg[  \underbrace{ \mathbb{E}_{\bz}  \sbra{  \tanh^2\sbra{\frac{\bw_{\setminus \{i_0,j\}}^\top\bar{\bx}_{ \backslash\{i_0, j\}} + h_n z_n + w_{i_0} z_{i_0}}{\mu}} - \tanh^2\sbra{\frac{\bw_{\setminus \{i_0,j\}}^\top\bar{\bx}_{ \backslash\{i_0, j\}} + h_n z_n + w_j z_j}{\mu}}  } }_{I_3}   \bigg] \nonumber \\
    & \quad + (1-\theta)^2\theta \mathbb{E}_{\bar{\bOmega}\setminus \{i_0, j\}}  \bigg[ \underbrace{ \mathbb{E}_{\bz}   \sbra{ \tanh^2\sbra{\frac{\bw_{\setminus \{i_0,j\}}^\top\bar{\bx}_{\backslash \{i_0,j\}}+w_{i_0} z_{i_0} }{\mu}} - \tanh^2\sbra{\frac{\bw_{\setminus \{i_0,j\}}^\top\bar{\bx}_{\backslash \{i_0,j\}}+w_j z_j }{\mu}} } }_{I_4}    \bigg] .
\end{align}
\normalsize
To show that $I_3 \geq 0$, let $X_1:= \bw_{\setminus \{i_0,j\}}^\top\bar{\bx}_{ \backslash\{i_0, j\}} + h_n z_n + w_{i_0} z_{i_0}| \bar{\bOmega} \sim \mathcal{N}(0, \norm{\bw_{\cJ \setminus \{i_0, j\}}}_2^2+ h_n^2 + w_{i_0}^2  =: \sigma_{X_1}^2)$ and $Y_1:= \bw_{\setminus \{i_0,j\}}^\top\bar{\bx}_{ \backslash\{i_0, j\}} + h_n z_n + w_j z_j |\bar{\bOmega} \sim \mathcal{N}(0, \norm{\bw_{\cJ \setminus \{i_0, j\}}}_2^2+ h_n^2 + w_j^2  := \sigma_{Y_1}^2)$. Plugging $X_1, Y_1$ into the term $I_3$, we have
\begin{align}
    I_3 = \mathbb{E}_{\bz}\sbra{ \tanh^2\sbra{\frac{X_1}{\mu} } - \tanh^2\sbra{\frac{Y_1}{\mu} }  } \geq 0
\end{align}
conditioned on any support $\bar{\bOmega}\setminus \{i_0, j\}$, since $\sigma_{X_1}^2 \geq \sigma_{Y_1}^2$ and the function $\tanh^2(x)$ is monotonically increasing with respect to $\abs{x}$.
Similarly, we have $I_4 \geq 0$ as well. In view of $I_3 , I_4 \geq 0$ and \eqref{equ:Q_jQ_i}, we have \eqref{eq:intermediate_claim}.

\subsection{Proof of Proposition~\ref{pro:pointwise_large_gradient}}\label{proof:pointwise_large_gradient}
The directional gradient can be written as a sum of $p$ i.i.d. random variables as following:  
\begin{equation*} 
    \frac{\bw^\top\nabla_{\bw} \phi_o(\bw)}{\norm{\bw}_2} :=\frac{1}{p}\sum_{i=1}^p X_i, \quad \mbox{where}\quad  X_i=    \frac{\bw^\top\nabla_{\bw}\psi_{\mu}\sbra{\cC(\bx_i)\bh(\bw)}}{\norm{\bw}_2}.
\end{equation*}
In order to apply the Bernstein's inequality in Lemma \ref{lemma:matrix_bernstein}, we turn to bound the moments of $X_i$. Plugging in \eqref{equ:gradient_psi}, we have
\begin{align} 
        X_i   & =  \frac{\bw^{\top} \bJ_{\bh}(\bw)}{\|\bw\|_2 }  \mathcal{C}(\bx_i)^{\top} \tanh\sbra{\frac{\mathcal{C}(\bx_i)\bh(\bw)}{\mu}} \nonumber  \\
    & =  \begin{bmatrix}
        \frac{\bw^{\top}}{\norm{\bw}_2}     & -\frac{\norm{\bw}_2}{h_n(\bw)}
        \end{bmatrix}   \cC(\bx_i)^{\top} \tanh\sbra{\frac{\cC(\bx_i) \bh(\bw)}{\mu}} \leq \sqrt{2} n \norm{\cC(\bx_i) }  ,
\end{align}
where the last inequality follows from $\abs{\tanh(\cdot)}\leq 1$ and $\norm{\begin{bmatrix}
        \frac{\bw}{\norm{\bw}_2}   & -\frac{\norm{\bw}_2}{h_n(\bw)}
        \end{bmatrix}}_2 =  \sqrt{1+\frac{\|\bw\|_2^2}{h_n^2}}\leq \sqrt{ 1+n}\leq \sqrt{2n}$, since $\|\bw\|_2^2\leq \frac{n-1}{n}$ and $h_n = \sqrt{1- \|\bw\|_2^2}\geq \frac{1}{\sqrt{n}}$. Invoking Lemma \ref{lemma:Operator norm of Circulant Matrix}, we have for any $m\geq2$,
\begin{equation}
    \mathbb{E}\abs{X_i}^m \leq (\sqrt{2} n)^m \mathbb{E} \norm{\cC(\bx_i)}^m \leq  \frac{m!}{2}\cdot \sbra{C n^3 \log n}^{m/2}
\end{equation}
for some constant $C$. Finally, using \eqref{equ:communication of gradient}, we complete the proof by setting $\sigma^2=C n^3 \log n$, $R=\sqrt{C n^3 \log n}$ and applying the Bernstein's inequality in Lemma \ref{lemma:matrix_bernstein}.

\subsection{Proof of Proposition~\ref{pro:pointwise_strong_convexity}}\label{proof:pointwise_strong_convexity}

The Hessian of $\phi_o(\bw)$ can be written as a sum of $p$ i.i.d. random matrices as following:  
    \begin{equation*}
        \nabla_{\bw}^2 \phi_o(\bw):=\frac{1}{p}\sum_{i=1}^p \bY_i, \quad \mbox{where},\quad \bY_i=  \nabla^2_{\bw}\psi_{\mu} \sbra{\cC(\bx_i) \bh(\bw)}.
    \end{equation*}
Plugging in (\ref{equ:hessian_psi}), we divide $\bY_i$ into two parts as:
    \begin{align*}
         \bY_i&= \underbrace{\frac{1}{\mu} \bJ_{\bh}(\bw)   \mathcal{C}(\bx_i)^{\top}  \mbra{\bI -\mbox{diag}\left(\tanh^2\sbra{\frac{\mathcal{C}(\bx_i)\bh(\bw)}{\mu}}\right) }  \mathcal{C}(\bx_i)  \bJ_{\bh}(\bw)^{\top}}_{\bD_i} \nonumber \\ 
&\quad\quad\quad\quad  - \underbrace{\frac{1}{h_n}\mathcal{S}_{n-1}(\bx_i)^{\top} \tanh\sbra{\frac{\mathcal{C}(\bx_i)\bh(\bw)}{\mu}}  \bJ_{\bh}(\bw) \bJ_{\bh}(\bw)^{\top} }_{\bE_i}. 
\end{align*}
Therefore, we bound the sums of $\bD_i$ and $\bE_i$ respectively, using the Bernstein's inequality in Lemma \ref{lemma:matrix_bernstein}. 

\paragraph{Bound the concentration of $\bE_i$:} We start by bounding the moments of $\bE_i$. 
Recalling the Jacobian matrix $\bJ_{\bh}(\bw)$ in \eqref{eq:jacobian}, we have 
\begin{equation*} 
\bJ_{\bh}(\bw) \bJ_{\bh}(\bw)^{\top} = \bI + \frac{\bw\bw^{\top}}{h_n^2},
\end{equation*}
and therefore, $\norm{\bJ_{\bh}(\bw) \bJ_{\bh}(\bw)^{\top}} = 1+ \|\bw\|_2^2/h_n^2 \leq 5$ since
 for $\|\bw\|_2 \leq 1/2$, we have  $h_n(\bw)\geq 1/2$. Consequently, by the triangle inequality, 
 $$\| \bE_i \| \leq \frac{1}{h_n} \| \bx_i\|_2 \norm{\tanh\sbra{\frac{\mathcal{C}(\bx_i)\bh(\bw)}{\mu}}}_2  \norm{\bJ_{\bh}(\bw) \bJ_{\bh}(\bw)^{\top}} \leq 10 \sqrt{n} \|\bx_i\|_2.$$
We can bound the moments of $\bE_i$ as
\begin{align*} 
        \mathbb{E} \|\bE_i\|^m & \leq 10^m n^{m/2}\mathbb{E}\|\bx_i\|_2^m  \leq 10^m n^{m/2} \cdot m! n^{m/2} \leq \frac{m!}{2}(20n)^2\cdot (20n)^{m-2} ,
\end{align*}
where the second line follows from Fact \ref{fact:BG to gaussian} and Fact \ref{fact: Moments bound for chi} that bound the moments of $\|\bx_i\|_2$. 

Setting $\sigma^2= 400n^2$, $R=20n$, we apply the Bernstein's inequality in Lemma \ref{lemma:matrix_bernstein} and obtain:
\begin{equation}\label{equ:E_i}
    \mathbb{P}\sbra{\norm{\frac{1}{p}\sum_{i=1}^p \bE_i-\mathbb{E}\sbra{\frac{1}{p}\sum_{i=1}^p \bE_i}}\geq \frac{t}{2}}\leq 
    2n \exp\sbra{\frac{-pt^2}{c_1n^2+c_2nt}}
\end{equation}
for some large enough constants $c_1$ and $c_2$.

\paragraph{Bound the concentration of $\bD_i$:}  
Using the fact that $1-\tanh^2\sbra{\cdot} \leq 1$, the spectral norm of $\bD_i$ can be bounded as
\begin{align*}
 \norm{\bD_i} & \leq \frac{1}{\mu} \norm{\mathcal{C}(\bx_i)}^2 \norm{\bJ_{\bh}(\bw)}^2  \leq \frac{5}{\mu} \norm{\mathcal{C}(\bx_i)}^2,
\end{align*}
where we have used again $\norm{\bJ_{\bh}(\bw)}^2 = \norm{\bJ_{\bh}(\bw) \bJ_{\bh}(\bw)^{\top}} \leq 5$ derived above. Invoking Lemma \ref{lemma:Operator norm of Circulant Matrix}, we obtain
    \begin{equation}
        \mathbb{E} \norm{\bD_i}^m \leq \sbra{\frac{5}{\mu}}^m \mathbb{E}\norm{\mathcal{C}(\bx_i)}^{2m}  \leq \frac{m!}{2} \sbra{\frac{Cn\log n}{\mu}}^m,
    \end{equation}
for some constant $C$. Let $\sigma^2= \frac{C^2n^2 \log^2 n}{\mu^2}$, $R=\frac{Cn \log n}{\mu}$, by the Bernstein's inequality in Lemma \ref{lemma:matrix_bernstein}, we have:
\begin{equation}\label{equ:D_i}
    \mathbb{P}\sbra{\norm{\frac{1}{p}\sum_{i=1}^p \bD_i-\mathbb{E}\sbra{\frac{1}{p}\sum_{i=1}^p \bD_i} }\geq \frac{t}{2}}\leq 
    2n \exp\sbra{\frac{-p\mu^2 t^2}{c_3 n^2 \log^2 n+ c_4\mu nt  \log n }}.
\end{equation}
for some constants $c_3$ and  $c_4$.

Recall the Hessian of interest is written as:
\begin{equation}
    \nabla_{\bw}^2 \phi_o(\bw)=\frac{1}{p}\sum_{i=1}^p \bY_i=\frac{1}{p}\sum_{i=1}^p \bD_i -  \frac{1}{p}\sum_{i=1}^p \bE_i.
\end{equation}
Combining the bounds for $\bD_i$ (cf. \eqref{equ:D_i}) and $\bE_i$ (cf. \eqref{equ:E_i}), and observing $ \nabla_{\bw}^2  \mathbb{E}\phi_o(\bw) = \mathbb{E}\nabla_{\bw}^2  \phi_o(\bw) $ from \eqref{equ:communication of hessian}, we obtain the final bound as advertised:
\begin{equation*}
    \mathbb{P}\sbra{\norm{\nabla_{\bw}^2 \phi_o(\bw)- \nabla_{\bw}^2  \mathbb{E}\phi_o(\bw)}\geq t} \leq 4n \exp\sbra{\frac{-p\mu^2 t^2}{ c_5 n^2 \log^2 n + c_6 \mu n \log (n) t}}.
\end{equation*}

\subsection{Proof of Theorem \ref{thm:orthogonal_geometry}}\label{proof:orthogonal_geometry_discretization}

We start by introducing the event 
$$\mathcal{A}_0 := \left \{ \norm{\bX}_\infty \leq 4\sqrt{\log(np)}\right \},$$ 
which holds with probability at least
$ 1- \theta (np)^{-7}$
by Fact \ref{fact:maximum of X}.

\subsubsection{Proof of \eqref{res:lg_geometry}}\label{proof:uniform_gradient}

To show that $\frac{\bw^\top\nabla_{\bw} \phi_o(\bw)}{\norm{\bw}_2}$ is lower bounded uniformly in the region $\mathcal{Q}_1$, we will apply a standard covering argument. Let $\mathcal{N}_1$ be an $\epsilon$-net of $\mathcal{Q}_1$, such that for any $\bw\in \mathcal{Q}_1$, there exists $\bw_1\in \mathcal{N}_1$ with $\norm{\bw-\bw_1}_2\leq \epsilon$. By standard results \cite[Lemma 5.7]{wainwright2019high}, the size of $\mathcal{N}_1$ is at most  $\lceil 3/\epsilon \rceil^n$, where the value of $\epsilon$ will be determined later. We have 
\begin{align*}
    \frac{\bw^\top\nabla_{\bw} \phi_o(\bw)}{\norm{\bw}_2} &= \mbra{\frac{\bw^\top\nabla_{\bw} \phi_o(\bw)}{\norm{\bw}_2}-\frac{\bw_1^\top\nabla_{\bw} \phi_o(\bw_1)}{\norm{\bw_1}_2}} 
    +  \mbra{\frac{\bw_1^\top\nabla_{\bw} \phi_o(\bw_1)}{\norm{\bw_1}_2}- \frac{\bw_1^\top \mathbb{E}\nabla_{\bw}\phi_o(\bw_1) }{\norm{\bw_1}_2} }  
    +  \frac{\bw_1^\top \mathbb{E}\nabla_{\bw}\phi_o(\bw_1) }{\norm{\bw_1}_2}    \\
    &\geq \underbrace{\frac{\bw_1^\top \mathbb{E}\nabla_{\bw}\phi_o(\bw_1) }{\norm{\bw_1}_2}}_{\text{I}}  - \underbrace{\abs{\frac{\bw^\top\nabla_{\bw} \phi_o(\bw)}{\norm{\bw}_2}-\frac{\bw_1^\top\nabla_{\bw} \phi_o(\bw_1)}{\norm{\bw_1}_2}} }_{\text{II}}
    - \underbrace{\abs{\frac{\bw_1^\top\nabla_{\bw} \phi_o(\bw_1)}{\norm{\bw_1}_2}- \frac{\bw_1^\top \mathbb{E}\nabla_{\bw}\phi_o(\bw_1) }{\norm{\bw_1}_2}} }_{\text{III}}.
\end{align*}

In the sequel, we derive bounds for the terms I, II, III respectively.
\begin{itemize}
\item For term I, as $\bw_1\in\mathcal{N}_1 \subseteq \mathcal{Q}_1$, by Lemma \ref{thm:population_geometry_orthogonal}, we have
$$\text{I} = \frac{\bw_1^\top \mathbb{E}\nabla_{\bw}\phi_o(\bw_1) }{\norm{\bw_1}_2}\geq \frac{\theta}{480\sqrt{10\pi}}\xi_0 :=c_1 \theta \xi_0. $$

\item To bound term II, by the additivity of Lipschitz constants and \cite[Proposition 13]{sun2017complete}, we have $\frac{\bw^\top\nabla_{\bw} \phi_o(\bw)}{\norm{\bw}_2}$ is $L_1$-Lipschitz with 
\begin{equation*}
		L_1\leq \sbra{\frac{8\sqrt{2} n^{3/2}}{\mu}+8 n^{5/2}}\norm{\bX}_\infty+ \frac{4n^3}{\mu} \norm{\bX}_\infty^2.
\end{equation*}
Therefore, under the event $\mathcal{A}_0$, we have
$L_1\leq \frac{c_2 n^3}{\mu}\log(np)$
for some constant $c_2$. Setting $\epsilon=\frac{c_1\theta \xi_0}{3L_1}$, we obtain that  
$$\text{II} = \abs{\frac{\bw^\top\nabla_{\bw} \phi_o(\bw)}{\norm{\bw}_2}-\frac{\bw_1^\top\nabla_{\bw} \phi_o(\bw_1)}{\norm{\bw_1}_2}}
 \leq L_1 \norm{\bw-\bw_1}_2\leq L_1 \epsilon \leq \frac{c_1\theta \xi_0}{3}.$$ 
Along the way, we determine the size of $\mathcal{N}_1$ is upper bounded by
$$ \abs{\mathcal{N}_1} \leq \lceil 3/\epsilon \rceil^n\leq  \exp \left\{ n \log\sbra{\frac{c_3 n^3 \log(np)}{\mu \theta \xi_0}  } \right\}.$$

\item For term III, by setting $t=\frac{c_1\theta \xi_0}{3}$  in Proposition~\ref{pro:pointwise_large_gradient} and the union bound, we have the event
\begin{equation*}
 \mathcal{A}_1 : =   \left\{ \max_{\bw_1\in \mathcal{N}_1}  \abs{\frac{\bw_1^\top\nabla_{\bw} \phi_o(\bw_1)}{\norm{\bw_1}_2}-\frac{\bw_1^\top \mathbb{E}\nabla_{\bw}\phi_o(\bw_1) }{\norm{\bw_1}_2} }\leq  \frac{c_1\theta \xi_0}{3}         \right \}
\end{equation*}
holds with probability at least
\begin{align*}
1 -  \abs{\mathcal{N}_1}\cdot 2\exp\sbra{\frac{-pt^2}{C_1 n^3\log n +C_2 \sqrt{n^3 \log (n)}t } }&\geq 1-  2\exp\sbra{\frac{-c_4p \theta^2 \xi_0^2}{n^3\log n }+n \log\sbra{\frac{c_3 n^3 \log(np)}{\mu \theta \xi_0}  }}\\
& \geq 1- 2\exp\sbra{-c_5n  } ,
\end{align*}
provided $p\geq \frac{Cn^4  }{\theta^2 \xi_0^2} \log n \log\sbra{ \frac{ n^3 \log p}{\mu\theta \xi_0} }$ for some sufficiently large $C$ and $n$ is sufficiently large.
\end{itemize}
Combining terms, conditioned on $\mathcal{A}_0 \bigcap \mathcal{A}_1$, which holds with probability at least $1-\theta(np)^{-7} -2\exp\sbra{-c_5 n}$, we have that for all $\bw\in\mathcal{Q}_1$, \eqref{res:lg_geometry} holds since,
\begin{align*} 
\frac{\bw^\top\nabla_{\bw} \phi_o(\bw)}{\norm{\bw}_2}&\geq \text{I} - \text{II} -\text{III} 
    \geq -\frac{c_1\theta \xi_0}{3}-\frac{c_1\theta \xi_0}{3}+ c_1\theta \xi_0 =\frac{c_1\theta \xi_0}{3}. 
\end{align*}

\subsubsection{Proof of (\ref{res:sc_geometry}) } 
 
The proof is similar to the above proof of \eqref{res:lg_geometry} in Appendix \ref{proof:uniform_gradient}.  Let $\mathcal{N}_2$ be an $\epsilon$-net of $\mathcal{Q}_2$, such that for any $\bw\in \mathcal{Q}_2$, there exists $\bw_2 \in \mathcal{N}_2$ with $\norm{\bw-\bw_2}_2\leq \epsilon$. By standard results \cite[Lemma 5.7]{wainwright2019high}, the size of $\mathcal{N}_2$ is at most  $\lceil 3\mu/(4\sqrt{2}\epsilon) \rceil^n $, where the value of $\epsilon$ will be determined later. By the triangle inequality, we have for all $\bw\in\mathcal{Q}_2$,
 \begin{align*}
 \nabla_{\bw}^2\phi_o(\bw)& \succeq \underbrace{\inf_{\bw_2 \in \mathcal{N}_2} \nabla_{\bw}^2\mathbb{E}\sbra{ \phi_o(\bw_2)} }_{\bH_1} - \underbrace{\norm{\nabla_{\bw}^2\phi_o(\bw_2)-\nabla_{\bw}^2\phi_o(\bw)}\bI }_{\bH_2}-\underbrace{\norm{\nabla_{\bw}^2\phi_o(\bw_2)-\nabla_{\bw}^2\mathbb{E}\sbra{ \phi_o(\bw_2)}} \bI }_{\bH_3}.
   \end{align*}

In the sequel, we derive bounds for the terms $\bH_1$, $\bH_2$, $\bH_3$ respectively.

\begin{itemize}
\item For $\bH_1$, by Theorem \ref{thm:population_geometry_orthogonal}, we have   
$$\bH_1 =\inf_{\bw_2 \in \mathcal{N}_2} \nabla_{\bw}^2\mathbb{E}\sbra{ \phi_o(\bw_2)}\succeq \frac{n\theta}{5\sqrt{2\pi}\mu} \bI :=\frac{c_5 n\theta}{\mu} \bI. $$

\item To bound $\bH_2$, by the additivity of Lipschitz constants and \cite[Proposition 14]{sun2017complete}, we have $\nabla_{\bw}^2 \phi_o(\bw)$ is $L_2$-Lipschitz with 
\begin{equation*}
		L_2\leq \frac{4n^3}{\mu^2}\norm{\bX}^3_\infty+\sbra{\frac{4n^2}{\mu}+\frac{8\sqrt{2}n^{3/2}}{\mu}}\norm{\bX}_\infty^2+ 8n \norm{\bX}_\infty.
	\end{equation*}
Under the event $\mathcal{A}_0$, we have $L_2\leq  \frac{c_6 n^3}{\mu^2}\log^{3/2}(np)$ for some constant $c_6$. Setting $\epsilon=\frac{c_5 n\theta }{3 \mu L_2}$, we obtain 
$$\norm{\nabla_{\bw}^2\phi_o(\bw_2)-\nabla_{\bw}^2\phi_o(\bw)}\leq \frac{c_5 n\theta}{3\mu}, \quad \mbox{and}\quad \bH_2 \preceq \frac{c_5 n\theta}{3\mu} \bI. $$
Along the way, we determine the size of $\mathcal{N}_2$ is upper bounded by
$$ \abs{\mathcal{N}_2} \leq \lceil 3\mu/(4\sqrt{2}\epsilon) \rceil^n \leq \exp \mbra{ n \log\sbra{\frac{c_7 n^2 \log^{3/2}(np)}{\theta}  }}.$$

\item To bound $\bH_3$, by setting $t=\frac{c_5 n\theta }{3\mu}$ in Proposition \ref{pro:pointwise_strong_convexity} (as $\|\bw \|_2\leq 1/2$ for $\bw\in\mathcal{Q}_2$ when $\mu<1$) and the union bound, we have $\bH_3 \preceq \frac{c_5 n\theta}{3\mu} \bI $ under the event
\begin{equation*}
  \mathcal{A}_2:=  \left\{ \max_{\bw_2\in \mathcal{N}_2} \norm{\nabla_{\bw}^2\phi_o(\bw_2)- \nabla_{\bw}^2\mathbb{E}\sbra{ \phi_o(\bw_2)}}\leq  \frac{c_5 n\theta }{3\mu}         \right \}
\end{equation*}
holds with probability at least
\begin{align*} 
1- \abs{\mathcal{N}_2}\cdot 4n \exp\sbra{\frac{-p\mu^2 t^2}{C_3 n^2 \log^2 n + C_4 \mu t n \log n }} & \geq 1- 4n \exp\sbra{- \frac{c_8 p \theta^2}{ \log^2n}+n \log\sbra{\frac{c_7 n^2 \log^{3/2}(np)}{ \theta}  }}  \\
&\geq 1-\exp(-c_9n), 
\end{align*}
provided $p \geq \frac{Cn}{\theta^2} \log^2 n \log \sbra{\frac{ n^2 \log^{3/2}p}  {\theta } }$ for some sufficiently large $C$ and $n$ is sufficiently large.
\end{itemize}

Combining terms, conditioned on $\mathcal{A}_0 \bigcap \mathcal{A}_2$, which holds with probability at least $1-\theta(np)^{-7} -\exp(-c_9n)$, we have 
 (\ref{res:sc_geometry}) holds since,
 \begin{align*}
 \nabla_{\bw}^2\phi_o(\bw)& \succeq \bH_1 -  \bH_2 -\bH_3  \succeq \frac{c_5 n\theta }{3\mu} \bI.
    \end{align*}

\subsubsection{Proof of (\ref{res:nearbasis_geometry})}\label{proof:orthogonal_near0}

The characterized geometry of $\phi_o(\bw)$ implies that it has at most one local minimum in $\mathcal{Q}_2$ due to strong convexity, which is denoted as $\bw_o^{\star}$. We are going to show that $\bw_o^{\star}$ is close to $\bm{0}$ in $\mathcal{Q}_2$. By the optimality of $\bw_o^{\star}$ and the mean value theorem, we have for some $t\in(0,1)$:
\begin{align*}
   \phi_o(\bm{0})\geq  \phi_o(\bw_o^{\star}) & \geq \phi_o(\bm{0}) + \left \langle \nabla_{\bw}\phi_o(\bm{0}), \bw_o^{\star} \right \rangle +\bw_o^{\star\top}\nabla^2 \phi_o(t \bw_o^{\star})\bw_o^{\star} \\
   &  \geq  \phi_o(\bm{0}) - \norm{\bw_o^{\star}}_2 \norm{\nabla_{\bw}\phi_o(\bm{0})}_2+\frac{c_5 n\theta}{2\mu} \norm{\bw_o^{\star}}_2^2 ,
\end{align*}
where the second line follows from (\ref{res:sc_geometry}) and the Cauchy-Schwartz inequality.  
Therefore, we have
\begin{equation}\label{eq:bound_w2}
   \norm{\bw_o^{\star}}_2\leq \frac{2\mu}{c_5 n\theta} \norm{\nabla_{\bw}\phi_o(\bm{0})}_2.
\end{equation}
It remains to bound $\norm{\nabla_{\bw}\phi_o(\bm{0})}_2$, which we resort to the Bernstein's inequality in Lemma~\ref{lemma:matrix_bernstein}. As
$\nabla_{\bw}\phi_o(\bm{0})=\frac{1}{p}\sum_{i=1}^p \nabla_{\bw} \psi_{\mu}\sbra{\cC(\bx_i)\bh(\bm{0})}$, where it is straightforward to check $ \mathbb{E} \nabla_{\bw} \psi_{\mu}\sbra{\cC(\bx_i)\bh(\bm{0})} =\bm{0}$ due to symmetry. We turn to bound the moments of $\norm{\nabla_{\bw} \psi_{\mu}\sbra{\cC(\bx_i)\bh(\bm{0})}}_2$ as follows,  
    \begin{align*}
    \norm{ \nabla_{\bw} \psi_{\mu}\sbra{\cC(\bx_i)\bh(\bm{0})}}_2
    &= \norm{ \bJ_{\bh}(\bm{0})  \mathcal{C}(\bx_i)^{\top} \tanh\sbra{\frac{\mathcal{C}(\bx)\bh(\bm{0})}{\mu}}  }_2 \nonumber\\
    &\leq \norm{\bJ_{\bh}(\bm{0})} \norm{\cC(\bx_i) }\norm{\tanh\sbra{\frac{\mathcal{C}(\bx)\bh(\bm{0})}{\mu}} }_2\\
    &\leq \sqrt{n} \norm{\cC(\bx_i) },
    \end{align*}
where the last inequality follows from $\norm{ \bJ_{\bh}(\bm{0}) }=\norm{ \begin{bmatrix}
\bI_{n-1} & \bm{0} \end{bmatrix} }=1$ and $\abs{\tanh\sbra{\cdot}}\leq 1$. Invoking Lemma \ref{lemma:Operator norm of Circulant Matrix}, we have for all $m \geq 2$,
\begin{equation*}
    \mbE{\norm{\nabla_{\bw} \psi_{\mu}\sbra{\cC(\bx)\bh(\bm{0})}}_2^m}\leq \sbra{\sqrt{n}}^m \mathbb{E} \norm{\cC(\bx_i)}^m \leq  \frac{m!}{2}\cdot \sbra{C n^2 \log(n)}^{m/2}
\end{equation*}
for some constant $C$. Setting $\sigma^2=C n^2 \log(n)$, $R=\sqrt{C n^2 \log(n)}$ in the Bernstein's inequality in Lemma~\ref{lemma:matrix_bernstein}, we have  
\begin{equation*}
        \mbP{ \norm{ \nabla_{\bw}\phi_o(\bm{0})}_2 \geq t} \leq 2(n+1)\exp \sbra{\frac{-pt^2}{2Cn^2 \log(n)+2\sqrt{C n^2 \log(n)} t}}.
    \end{equation*}
Let $t=c_9\sqrt{\frac{n^2 \log (n) \log(p)}{p}}$, we have 
\begin{equation} \label{eq:bound_nabla_oh}
    \norm{\nabla_{\bw}\phi_o(\bm{0})}_2\leq c_9\sqrt{\frac{n^2 \log n\log p}{p}}
\end{equation}
with probability at least $1-4n p^{-7}$ when $p\geq c_{10}n\log(n)$. Under the sample size requirement on $p$, we have  
\begin{equation*}
    \norm{\bw_o^{\star}-\bm{0} }_2\leq  \frac{c_6 \mu}{\theta} \sqrt{\frac{\log n \log p}{p}} \leq \frac{\mu}{10},
\end{equation*}
for some constant $c_6$, which ensures $\bw_o^{\star} \in\mathcal{Q}_2$.

\section{Proofs for Section~\ref{sec:geometry_general_outline}}

\subsection{Proof of Lemma \ref{lemma:deviation op norm} }\label{appen: deviation op norm}
Recalling $\bDelta=\sbra{\bU'-\bU} \bU^{-1}$, we have
\begin{equation}
    \norm{\bDelta} =\norm{\sbra{\bU'-\bU}  \bU^{-1}} =\norm{\bU'-\bU},
\end{equation}
since $\bU$ is an orthonormal matrix, i.e., $\norm{\bU^{-1}}=1$. Therefore, it is sufficient to bound $\norm{\bU'-\bU}$ instead. Plugging in the definition of $\bU'$ and $\bU$, we have 
\begin{align}\label{equ:deviation_norm}
        \norm{\bU'-\bU}&=\norm{\cC(\bg)\bR- \cC(\bg)\sbra{\cC(\bg)^\top \cC(\bg)}^{-1/2}} \nonumber \\
        &\leq \norm{\cC(\bg)} \norm{\bR-\sbra{\cC(\bg)^\top \cC(\bg)}^{-1/2}} \nonumber \\
        & \leq   \norm{\cC(\bg)} \frac{\norm{\bR^{2} - \sbra{\cC(\bg)^\top \cC(\bg)}^{-1}}} {\sigma_{\min}\sbra{\sbra{\cC(\bg)^\top \cC(\bg)}^{-1/2}} } \nonumber \\
        & \leq\norm{\cC(\bg)}^2  \| \sbra{\cC(\bg)^\top \cC(\bg)}^{-1}\| \norm{ {\cC(\bg)^\top \cC(\bg)}\bR^{2} -\bI}  \nonumber \\
        &= \kappa^2 \norm{ {\cC(\bg)^\top \cC(\bg)}\bR^{2} -\bI} ,
\end{align}
where the second inequality follows from the fact \cite[Theorem 6.2]{higham2008functions} that for two positive matrices $\bU,\bV$, we have
$\norm{\bU^{-1/2}-\bV^{-1/2}} \leq \frac{\norm{\bU^{-1}-\bV^{-1}}}{\sigma_{\min}(\bV^{-1/2})}$. We continue by plugging in the definition of $\bR$, 
\begin{align}        
\norm{ {\cC(\bg)^\top \cC(\bg)}\bR^{2} -\bI}     &=  \norm{\sbra{\cC(\bg)^\top \cC(\bg)}\cdot \sbra{\frac{1}{\theta np}\sum_{i=1}^p \sbra{\mathcal{C}(\bg)^\top\mathcal{C}(\bx_i)^\top \mathcal{C}(\bx_i) \mathcal{C}(\bg)}}^{-1} - \bI} \nonumber \\
        &=  \norm{\left[\bI + \sbra{\cC(\bg)^\top \mbra{\frac{1}{\theta np}\sum_{i=1}^p \cC(\bx_i)^\top \cC(\bx_i)-\bI} \cC(\bg)} \cdot    \sbra{\cC(\bg)^\top \cC(\bg)}^{-1} \right]^{-1}  -\bI} \nonumber\\
  &  : = \norm{\sbra{\bI+\bA}^{-1}-\bI}. 
\end{align}        
where $\bA = \sbra{\cC(\bg)^\top \mbra{\frac{1}{\theta np}\sum_{i=1}^p \cC(\bx_i)^\top \cC(\bx_i)-\bI} \cC(\bg)} \cdot    \sbra{\cC(\bg)^\top \cC(\bg)}^{-1}$. 

By Lemma~\ref{lemma: other1}, we have when $p\geq C n \log(n)$, 
$$\norm{\frac{1}{\theta np}\sum_{i=1}^p \cC(\bx_i)^\top \cC(\bx_i)-\bI}\leq C\sqrt{\frac{\log^2n\log p}{\theta^2 p}}  $$ 
with probability at least $1-2n p^{-8}$, and $\|\bA\| \leq C \kappa^2 \sqrt{\frac{\log^2(n)\log(p)}{\theta^2 p}} $. Then as long as $\|\bA\| \leq 1/2$, which holds when $p\geq  \frac{C_2 \kappa^4  \log^2 (n) \log p }{\theta^2}$ for some large enough constant $C_2$, we have
$$\norm{\sbra{\bI+\bA}^{-1}-\bI}\leq \norm{\sbra{\bI+\bA}^{-1}}\norm{\bA}\leq \frac{\norm{\bA}}{1-\norm{\bA}} \leq 2\|\bA\|.$$ 
 Plugging this back into \eqref{equ:deviation_norm}, we have
\begin{equation*}
    \norm{\bU'-\bU}\leq C_3 \kappa^4 \sqrt{\frac{\log^2n \log p}{\theta^2 p}}.
\end{equation*}

\subsection{Proof of Lemma \ref{lemma: deviation gd and he} }\label{appen: deviation gd and he}
We first record some useful facts. For any $\bh\in\mathcal{S}_0^{(n+)}$, we have the Jacobian matrix $\bJ_{\bh}(\bw)= \mbra{\bI, -\frac{\bw}{h_n}} \in \mathbb{R}^{(n-1)\times n}$ satisfies
\begin{equation}\label{eq:jacobian_bound}
    \norm{\bJ_{\bh}(\bw) }\leq \norm{\bJ_{\bh}(\bw)}_{\mathrm{F}}\leq \sqrt{n-1+\frac{\norm{\bw}_2^2}{h_n^2}}\leq \sqrt{2n},
    \end{equation}
since $\norm{\bw}_2\leq 1$ and $h_n\geq \frac{1}{\sqrt{n}}$. In addition, by the union bound and Lemma~\ref{lemma:Operator norm of Circulant Matrix}, we have with probability at least $1- (np)^{-8}$,
\begin{equation}\label{eq:circulant_x}
    \max_{i\in[p]}\norm{\cC(\bx_i)}\leq C \sqrt{n\log(np)},
\end{equation}
for some constant $C$.

\subsubsection{Proof of \eqref{eq:perturbation_gradient}}\label{proof:perturbation_gradient}

Similar to (\ref{equ:gradient_psi}), we can write the gradient $\nabla_{\bw} \phi(\bw)$ as 
\begin{align*} 
    \nabla_{\bw} \phi(\bw)    &=\frac{1}{p}\sum_{i=1}^p  \bJ_{\bh}(\bw) \sbra{\bI+\bDelta}^\top    \cC(\bx_i)^\top  \tanh\sbra{\frac{\mathcal{C}(\bx_i) \sbra{\bI+\bDelta} \bh(\bw)}{\mu}}. 
\end{align*}
Recalling the expression of $\nabla_{\bw} \phi_o(\bw)$ in (\ref{equ:gradient_psi}), we write 
\begin{align*}
     &\quad \nabla_{\bw} \phi_o(\bw)-\nabla_{\bw} \phi(\bw) \\
&= \frac{1}{p}\sum_{i=1}^p \bJ_{\bh}(\bw) \cC(\bx_i)^\top   \tanh\sbra{\frac{\mathcal{C}(\bx_i)\bh(\bw)}{\mu}}-\frac{1}{p}\sum_{i=1}^p  \bJ_{\bh}(\bw)  \sbra{\bI+\bDelta}^\top    \cC(\bx_i)^\top   \tanh\sbra{\frac{\mathcal{C}(\bx_i) \sbra{ \bI+\bDelta} \bh(\bw)}{\mu}}  \nonumber\\
&=\underbrace{ \frac{1}{p}\sum_{i=1}^p \bJ_{\bh}(\bw) \cC(\bx_i)^\top  \mbra{\tanh\sbra{\frac{\mathcal{C}(\bx_i)\bh(\bw)}{\mu}}-\tanh\sbra{\frac{\mathcal{C}(\bx_i) \sbra{ \bI+\bDelta} \bh(\bw)}{\mu}} }  }_{\bg_1}   \nonumber\\
        &\quad\quad\quad\quad\quad - \underbrace{ \frac{1}{p}\sum_{i=1}^p \bJ_{\bh}(\bw)  \bDelta^\top \cC(\bx_i)^\top  \tanh\sbra{\frac{\mathcal{C}(\bx_i) \sbra{\bI+\bDelta} \bh(\bw)}{\mu}}  }_{\bg_2}.
\end{align*}

Therefore, we continue to bound $\|\bg_1\|_2$ and $\|\bg_2\|_2$.
\begin{itemize}
\item To bound $\|\bg_1\|_2$, we have
\begin{align}
\| \bg_1 \|_2 & \leq \norm{\bJ_{\bh}(\bw)} \cdot \max_{i\in[p]} \norm{\cC(\bx_i)}\cdot \max_{i\in[p]}\norm{\tanh\sbra{\frac{\mathcal{C}(\bx_i) \bh(\bw)}{\mu}} -\tanh\sbra{\frac{\mathcal{C}(\bx_i) \sbra{\bI+\bDelta} \bh(\bw)}{\mu}}      }_2  \nonumber \\
& \leq  \frac{1}{\mu}\norm{\bJ_{\bh}(\bw)} \cdot \max_{i\in[p]} \norm{\cC(\bx_i)}^2 \cdot \norm{\bDelta}. \label{eq:bound_g1}
\end{align}
Here, the second line follows from for any $i\in [p]$, 
\begin{align}
&\quad \norm{\tanh\sbra{\frac{\mathcal{C}(\bx_i) \bh(\bw)}{\mu}} -\tanh\sbra{\frac{\mathcal{C}(\bx_i) \sbra{\bI+\bDelta} \bh(\bw)}{\mu}}      }_2 \nonumber \\
&\leq\norm{\sbra{ \frac{\cC(\bx_i) \bh}{\mu} }-\sbra{ \frac{\cC(\bx_i) (\bI+\bDelta)\bh}{\mu} } }_2  \nonumber \\
& = \frac{1}{\mu} \norm{\cC(\bx_i) \bDelta \bh}_2 \leq \frac{1}{\mu}  \norm{\cC(\bx_i) } \norm{\bDelta} \norm{ \bh}_2 = \frac{1}{\mu}  \norm{\cC(\bx_i) } \norm{\bDelta} ,  \label{eq:lipschtiz_tanh_vector}
\end{align}
where the second line follows from Lemma \ref{lemma:loss_lipschitz}, and the last equality is due to $\norm{ \bh}_2 = 1$. 

\item To bound $\|\bg_2\|_2$, we have 
\begin{align}
\| \bg_2 \|_2 & \leq  \norm{\bJ_{\bh}(\bw)} \cdot \max_{i\in[p]}\norm{\cC(\bx_i)}\cdot \max_{i\in[p]}\norm{\tanh\sbra{\frac{\mathcal{C}(\bx_i) \sbra{\bI+\bDelta} \bh(\bw)}{\mu}}    }_2 \cdot \norm{\bDelta} \nonumber \\
& \leq \sqrt{n}  \norm{\bJ_{\bh}(\bw)} \cdot \max_{i\in[p]}\norm{\cC(\bx_i)}\cdot \norm{\bDelta}, \label{eq:bound_g2}
\end{align}
where the second line uses $|\tanh(\cdot)|\leq 1$, and $\norm{\tanh\sbra{\frac{\mathcal{C}(\bx_i) \sbra{\bI+\bDelta} \bh(\bw)}{\mu}}    }_2 \leq \sqrt{n}$.
\end{itemize}

Combining \eqref{eq:bound_g1} and \eqref{eq:bound_g2}, we have
\begin{align*}
\norm{\nabla_{\bw} \phi_o(\bw)-\nabla_{\bw} \phi(\bw)}_2 \leq \| \bg_1\|_2 + \|\bg_2\|_2 & \leq \norm{\bJ_{\bh}(\bw)} \cdot \max_{i\in[p]}\norm{\cC(\bx_i)}\cdot \norm{\bDelta} \sbra{ \sqrt{n}   +\frac{1}{\mu} \max_{i\in[p]}\norm{\cC(\bx_i )}  }  \\
& \leq C \frac{n^{3/2}\log(np) }{\mu} \norm{\bDelta},
\end{align*}
for some constant $C$, where the last line follows from \eqref{eq:jacobian_bound} and \eqref{eq:circulant_x}, which holds with probability at least $1-(np)^{-8}$.

\subsubsection{Proof of \eqref{eq:perturbation_hessian}}

First, under the sample size $p\geq  \frac{C_2 \kappa^8 n \log^2 (n) \log p }{\theta^2}$, from Lemma~\ref{lemma:deviation op norm}, we can ensure $\|\bDelta\| \leq 1$. Note that
\begin{align}
\norm{\nabla^2_{\bw} \phi_o(\bw)-\nabla^2_{\bw} \phi(\bw)}      &=\norm{\frac{1}{p}\sum_{i=1}^p \nabla^2_{\bw}\psi_{\mu}(\mathcal{C}(\bx_i)\bh(\bw))-\frac{1}{p}\sum_{i=1}^p \nabla^2_{\bw}\psi_{\mu}(\mathcal{C}(\bx_i) \sbra{\bI+\bDelta} \bh(\bw))      } \nonumber\\
& \leq \frac{1}{p}\sum_{i=1}^p \norm{\nabla^2_{\bw}\psi_{\mu}(\mathcal{C}(\bx_i)\bh(\bw))- \nabla^2_{\bw}\psi_{\mu}(\mathcal{C}(\bx_i) \sbra{\bI+\bDelta} \bh(\bw))      } . \label{eq:desired}
\end{align}         
       
Similar to (\ref{equ:hessian_psi}), we can write the Hessian $\nabla^2_{\bw}\psi_{\mu}(\mathcal{C}(\bx_i) \sbra{\bI+\bDelta} \bh(\bw))  $ as
\begin{align}
&\quad \nabla^2_{\bw} \psi_{\mu}(\mathcal{C}(\bx_i) \sbra{\bI+\bDelta} \bh(\bw)) \nonumber \\
& =  \frac{1}{\mu} \bJ_{\bh}(\bw) \sbra{\bI+\bDelta}  \mathcal{C}(\bx_i)^{\top}  \mbra{\bI -\mbox{diag}\left(\tanh^2\sbra{\frac{\mathcal{C}(\bx) \sbra{\bI+\bDelta} \bh(\bw)}{\mu}}\right) }  \mathcal{C}(\bx_i)  \sbra{\bI+\bDelta} \bJ_{\bh}(\bw)^{\top} \nonumber \\ 
&\quad\quad\quad\quad -  \frac{1}{h_n}\mathcal{S}_{n-1}(\bx)^{\top} \sbra{\bI+\bDelta}\tanh\sbra{\frac{\mathcal{C}(\bx)\sbra{\bI+\bDelta}\bh(\bw)}{\mu}}  \bJ_{\bh}(\bw) \bJ_{\bh}(\bw)^{\top} . 
\end{align}
Subtracting $\nabla^2_{\bw}\psi_{\mu}(\mathcal{C}(\bx_i)\bh(\bw))$ in (\ref{equ:hessian_psi}) from the above equation, we have
\begin{align*}
&\quad \nabla^2_{\bw}\psi_{\mu}(\mathcal{C}(\bx_i)\bh(\bw))- \nabla^2_{\bw}\psi_{\mu}(\mathcal{C}(\bx_i) \sbra{\bI+\bDelta} \bh(\bw))    \\
& = \underbrace{\frac{1}{\mu} \bJ_{\bh}(\bw)  \mathcal{C}(\bx_i)^{\top}  \mbra{\mbox{diag}\left(\tanh^2\sbra{\frac{\mathcal{C}(\bx_i) \sbra{\bI+\bDelta} \bh(\bw)}{\mu}} -\tanh^2\sbra{\frac{\mathcal{C}(\bx_i)  \bh(\bw)}{\mu}}\right) }  \mathcal{C}(\bx_i)   \bJ_{\bh}(\bw)^{\top}}_{\bQ_1} \nonumber \\ 
&\quad  \underbrace{ - \frac{1}{\mu} \bJ_{\bh}(\bw)  \bDelta  \mathcal{C}(\bx_i)^{\top}  \mbra{\bI -\mbox{diag}\left(\tanh^2\sbra{\frac{\mathcal{C}(\bx_i) \sbra{\bI+\bDelta} \bh(\bw)}{\mu}}\right) }  \mathcal{C}(\bx_i)  \sbra{\bI+\bDelta} \bJ_{\bh}(\bw)^{\top} }_{\bQ_2} \\
&\quad  \underbrace{ -\frac{1}{\mu} \bJ_{\bh}(\bw)   \mathcal{C}(\bx_i)^{\top}  \mbra{\bI -\mbox{diag}\left(\tanh^2\sbra{\frac{\mathcal{C}(\bx_i) \sbra{\bI+ \bDelta} \bh(\bw)}{\mu}}\right) }  \mathcal{C}(\bx_i)   \bDelta \bJ_{\bh}(\bw)^{\top} }_{\bQ_3} \\
&\quad  \underbrace{ +   \frac{1}{h_n}\mathcal{S}_{n-1}(\bx_i)^{\top} \mbra{ \tanh\sbra{\frac{\mathcal{C}(\bx_i)\sbra{\bI+\bDelta}\bh(\bw)}{\mu}} - \tanh\sbra{\frac{\mathcal{C}(\bx_i) \bh(\bw)}{\mu}} } \bJ_{\bh}(\bw) \bJ_{\bh}(\bw)^{\top}}_{\bQ_4} \\
& \quad \underbrace{ +  \frac{1}{h_n}\mathcal{S}_{n-1}(\bx_i)^{\top}  \bDelta \tanh\sbra{\frac{\mathcal{C}(\bx_i)\sbra{\bI+\bDelta}\bh(\bw)}{\mu}}  \bJ_{\bh}(\bw) \bJ_{\bh}(\bw)^{\top} }_{\bQ_5},
\end{align*}
where in the sequel we'll bound these terms respectively.
\begin{itemize}
\item $\bQ_1$ can be bounded as 
\begin{align*}
\norm{\bQ_1} & \leq \frac{1}{\mu} \norm{\bJ_{\bh}(\bw) }^2 \norm{\mathcal{C}(\bx_i)}^2 \norm{\tanh^2\sbra{\frac{\mathcal{C}(\bx_i) \sbra{\bI+\bDelta} \bh(\bw)}{\mu}} -\tanh^2\sbra{\frac{\mathcal{C}(\bx_i)  \bh(\bw)}{\mu}}}_{\infty} \\
& \leq  \frac{2}{\mu^2} \norm{\bJ_{\bh}(\bw) }^2 \norm{\mathcal{C}(\bx_i)}^2 \norm{\mathcal{C}(\bx_i)  \bDelta \bh(\bw) }_{\infty} \\
&\leq   \frac{2}{\mu^2} \norm{\bJ_{\bh}(\bw) }^2   \norm{\mathcal{C}(\bx_i)}^2 \norm{ \bx_i}_2   \norm{ \bDelta},
\end{align*}
where the second line follows from Lemma~\ref{lemma:loss_lipschitz}, where the last line uses $\|\bh\|_2=1$.
\item $\bQ_2$ can be bounded as 
\begin{align*}
\norm{\bQ_2} & \leq \frac{1}{\mu} \norm{\bJ_{\bh}(\bw) }^2 \norm{\mathcal{C}(\bx_i)}^2  \norm{\bDelta} (1+ \norm{\bDelta} ) \leq \frac{2}{\mu} \norm{\bJ_{\bh}(\bw) }^2 \norm{\mathcal{C}(\bx_i)}^2  \norm{\bDelta}   ,
\end{align*}
where we have used $1-\tanh^2(\cdot)\leq 1$, $\norm{\bDelta}\leq 1$ respectively.
\item Similar to $\bQ_2$, $\bQ_3$ can be bounded as
\begin{align*}
\norm{\bQ_3} & \leq \frac{1}{\mu} \norm{\bJ_{\bh}(\bw) }^2 \norm{\mathcal{C}(\bx_i)}^2  \norm{\bDelta}   .
\end{align*}
\item $\bQ_4$ can be bounded as
\begin{align*}
\norm{\bQ_4} & \leq \frac{1}{h_n}\norm{\bx_i}_2  \norm{\bJ_{\bh}(\bw) }^2 \norm{\tanh\sbra{\frac{\mathcal{C}(\bx_i)\sbra{\bI+\bDelta}\bh(\bw)}{\mu}} - \tanh\sbra{\frac{\mathcal{C}(\bx_i) \bh(\bw)}{\mu}} }_2 \\
& \leq  \frac{\sqrt{n}}{\mu}  \norm{\bJ_{\bh}(\bw) }^2 \norm{\cC(\bx_i) } \norm{\bx_i}_2  \norm{\bDelta},
\end{align*}
where the second line follows from \eqref{eq:lipschtiz_tanh_vector} and $h_n\geq 1/\sqrt{n}$.
\item $\bQ_5$ can be bounded as
\begin{align*}
\norm{\bQ_5} & \leq \frac{1}{h_n}\norm{\bx_i}_2  \norm{\bJ_{\bh}(\bw) }^2 \norm{ \bDelta }\norm{ \tanh\sbra{\frac{\mathcal{C}(\bx_i)\sbra{\bI+\bDelta}\bh(\bw)}{\mu}} }_2 \\
& \leq  n \norm{\bx_i}_2  \norm{\bJ_{\bh}(\bw) }^2\norm{ \bDelta } ,
\end{align*}
where the second line uses $|\tanh(\cdot)|\leq 1$ and $h_n\geq 1/\sqrt{n}$.
\end{itemize}
 Combining the above bounds back into \eqref{eq:desired}, we have
 \begin{align*}
&\quad \norm{\nabla^2_{\bw} \phi_o(\bw)-\nabla^2_{\bw} \phi(\bw)} \\
& \leq  \norm{\bJ_{\bh}(\bw) }^2   \norm{ \bDelta} \max_{i\in [p]} \sbra{    \frac{2}{\mu^2} \norm{\mathcal{C}(\bx_i)}^2 \norm{ \bx_i}_2  +\frac{3}{\mu}   \norm{\mathcal{C}(\bx_i)}^2 + \frac{\sqrt{n}}{\mu}  \norm{\cC(\bx_i) } \norm{\bx_i}_2 +n \norm{\bx_i}_2   }.
 \end{align*}
 Plugging in \eqref{eq:jacobian_bound}, \eqref{eq:circulant_x}, and Fact~\ref{fact:norm of BG}, where with probability at least $1-2p^{-8}$,  
 $$ \max_{i\in[p]}\norm{\cC(\bx_i)}\leq C \sqrt{n\log(np)}, \quad \max_{i\in [p]} \norm{\bx_i}_2 \leq C\sqrt{n\log p},$$
we have   
\begin{align*} 
 \norm{\nabla^2_{\bw} \phi_o(\bw)-\nabla^2_{\bw} \phi(\bw)}
    &\leq  C_9 \frac{n^{5/2}}{\mu^2}\log^{3/2}(np) \norm{\bDelta}.
\end{align*}

\subsection{Proof of Theorem \ref{theo:general_geometry}}
 \label{proof:theorem_general_geometry}

To begin, by Lemma \ref{lemma: deviation gd and he}, we have
\begin{subequations}\label{eq:concentration_delta}
\begin{align} 
        \abs{\frac{\bw^\top\nabla_{\bw} \phi_o(\bw)}{\norm{\bw}_2 }-\frac{\bw^\top\nabla_{\bw} \phi(\bw)}{\norm{\bw}_2} }&\leq \norm{\nabla_{\bw} \phi_o(\bw)-\nabla_{\bw} \phi(\bw)}_2 \leq c_g  \frac{n^{3/2}\log(np) }{\mu} \norm{\bDelta}\leq\frac{c_2 \xi_0 \theta}{2},\\
        \norm{\nabla^2_{\bw} \phi_o(\bw)-\nabla^2_{\bw} \phi(\bw)} &\leq c_h \frac{n^{5/2}\log^{3/2}(np)}{\mu^2} \norm{\bDelta}\leq  \frac{c_2 n\theta}{2\mu} ,
\end{align}
\end{subequations}
as long as the sample size satisfies 
    $$\norm{\bDelta}\leq c_f \kappa^4 \sqrt{\frac{\log^2n \log p}{\theta^2 p}} \leq C   \frac{\xi_0\theta \mu}{n^{3/2} \log^{3/2} (np)} ,$$
for some constant $C$  in view of Lemma \ref{lemma:deviation op norm}. Translating this into the sample size requirement, it means
 $$p\geq C \frac{\kappa^8 n^3 \log^4 p \log^2 n}{\theta^4 \mu^2 \xi_0^2}.  $$

Under the assumption of Theorem~\ref{thm:orthogonal_geometry}, and 
in view of (\ref{res:lg_geometry}) and (\ref{res:sc_geometry}), we have
\begin{align*} 
    \frac{\bw^\top\nabla_{\bw} \phi(\bw)}{\norm{\bw}}&\geq \frac{\bw^\top\nabla_{\bw} \phi_o(\bw)}{\norm{\bw}}-\abs{\frac{\bw^\top\nabla_{\bw} \phi_o(\bw)}{\norm{\bw}}-\frac{\bw^\top\nabla_{\bw} \phi(\bw)}{\norm{\bw}}}\geq \frac{c_2 \xi_0 \theta}{2}, \\
    \nabla^2_{\bw} \phi(\bw)&\succeq \nabla^2_{\bw} \phi_o(\bw)-\norm{\nabla^2_{\bw} \phi_o(\bw)-\nabla^2_{\bw} \phi(\bw)}\bI\succeq \frac{c_2 n\theta}{2\mu} \bI .
    \end{align*}

Now let $\bw^{\star}$ be the local minimizer of $\phi(\bw)$ in the region of interest. Similar to the proof in Appendix~\ref{proof:orthogonal_near0}, we have
\begin{align*}
   \norm{\bw^{\star}}_2 & \leq \frac{4\mu}{c_2 n\theta} \norm{\nabla_{\bw}\phi(\bm{0})}_2 \\
   &  \leq \frac{4\mu}{c_2 n\theta}\norm{\nabla_{\bw}\phi_o(\bm{0})}_2 + \frac{4\mu}{c_2 n\theta}\norm{\nabla_{\bw}\phi(\bm{0})-\nabla_{\bw}\phi_o(\bm{0})}_2 \\
   & \leq  \frac{c_9\mu}{n\theta} \sbra{ \sqrt{\frac{n^2 \log (n) \log(p)}{p}} +  \frac{n^{3/2}\log(np) }{\mu}  \kappa^4 \sqrt{\frac{\log^2(n) \log(p)}{\theta^2 p}} }.
\end{align*}
where the first term is bounded by \eqref{eq:bound_nabla_oh} and the second term is bounded by Lemma~\ref{lemma: deviation gd and he}. Under the sample size requirement, the latter term dominates and therefore we have
\begin{align*}
   \norm{\bw^{\star}-\bm{0} }_2 & \leq \frac{c \kappa^4}{\theta^2} \sqrt{\frac{n\log^3 p \log^2 n}{p}}.
\end{align*}

\section{Proofs for Section~\ref{pipe:GD_orthogonal}}

We start by stating a useful observation. Notice that $\sbra{\frac{\be_k}{h_k}-\frac{\be_n}{h_n}}$ is on the tangent space of $\bh$, i.e., 
$$\sbra{\bI-\bh\bh^\top}\sbra{\frac{\be_k}{h_k}-\frac{\be_n}{h_n}}=\sbra{\frac{\be_k}{h_k}-\frac{\be_n}{h_n}},$$
we have the relation
\begin{align}\label{eq:manifold_g_euclid_g}
\partial f(\bh)^{\top}\sbra{\frac{\be_k}{h_k}-\frac{\be_n}{h_n}} & =\mbra{ \sbra{\bI-\bh\bh^\top}\nabla f(\bh)}^{\top}\sbra{\frac{\be_k}{h_k}-\frac{\be_n}{h_n}}  = \nabla f(\bh)^{\top}\sbra{\frac{\be_k}{h_k}-\frac{\be_n}{h_n}} ,
\end{align}
holds for both $\partial f(\bh)$ and $\partial f_o(\bh)$.

\subsection{Proof of Lemma~\ref{thm:implicit_uniform_concentration_general}}\label{proof:thm:implicit_uniform_concentration_general}

We first prove the upper bound of $\norm{\partial f(\bh)}_2$ in \eqref{eq:bound_grad}, which is simpler. Plugging the bound for $\max_{i\in[p]} \norm{\cC(\bx_i)}$ in \eqref{eq:circulant_x} and $\norm{\bDelta}\leq 1$ ensured by the sample size requirement and Lemma~\ref{lemma:deviation op norm}, for any $\bh$ on the unit sphere, with probability at least $1- (np)^{-8}$, the manifold gradient satisfies
    \begin{align*}
        \norm{\partial f(\bh)}_2\leq \norm{\nabla f(\bh)}_2&=\norm{\frac{1}{p}\sum_{i=1}^p  \sbra{\bI+\bDelta}^\top    \cC(\bx_i)^\top   \tanh\sbra{\frac{\mathcal{C}(\bx_i) \sbra{\bI+\bDelta} \bh(\bw)}{\mu}}  }_2 \\
        &\leq \sqrt{n} \norm{\bI+\bDelta} \max_{i\in[p]} \norm{\cC(\bx_i)}\\
        &\leq 2 C_1n \sqrt{\log(np)}.
    \end{align*} 

We now move to prove the lower bound of the directional gradient in \eqref{res: sins geometry}.
We first consider the directional gradient of $f_o(\bh)$ for the orthogonal case, following the proof procedure of the Theorem \ref{thm:orthogonal_geometry} to obtain the empirical geometry of $f_o(\bh)$ in the region of interest (shown in Lemma~\ref{thm:implicit_uniform_concentration_orthogonal}), which is proved in Appendix~\ref{proof:implicit_uniform_concentration_orthogonal}.
\begin{lemma} [Uniform concentration for the orthogonal case] \label{thm:implicit_uniform_concentration_orthogonal}
Instate the assumptions of Theorem~\ref{thm:orthogonal_geometry}. There exist some constants $c_a, c_b$, such that for 
$\bh\in \mathcal{H}_k =\left\{\bh: \bh\in \mathcal{S}_{\xi_0}^{(n+)}, h_k\neq 0, h_n^2/h_k^2<4\right\}$, with probability at least $1-(np)^{-8} -2\exp\sbra{-c_an }$, 
    \begin{equation}\label{res:sins_geometry}
         \partial f_o(\bh)^\top  \sbra{\frac{\be_k}{h_k}-\frac{\be_n}{h_n}}\geq c_b \xi_0 \theta.
    \end{equation}
\end{lemma}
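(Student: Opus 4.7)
The plan follows a two-stage strategy mirroring the proof of Theorem~\ref{thm:orthogonal_geometry}: first establish a lower bound on the population directional derivative, then upgrade it to a uniform bound over $\mathcal{H}_k$ via Bernstein concentration plus a covering argument. Using the tangent-space identity \eqref{eq:manifold_g_euclid_g}, I would replace $\partial f_o(\bh)^\top (\be_k/h_k - \be_n/h_n)$ by $\nabla f_o(\bh)^\top (\be_k/h_k - \be_n/h_n)$, so the quantity of interest is a sum of $p$ i.i.d.\ scalars, each controlled by $\|\cC(\bx_i)\| \cdot \sqrt{1/h_k^2 + 1/h_n^2}$, which is polynomial in $n$ on $\mathcal{H}_k$.

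For the population piece, set $S := \bx^\top \bh$ where $\bx \sim \mathrm{BG}(\theta)$. Using the row-exchangeability of $\cC(\bx)$ and applying Lemma~\ref{lemma:loss_expectation} after conditioning separately on $\Omega_k = 1$ and $\Omega_n = 1$, I expect the identity
\begin{equation*}
\mathbb{E}\!\left[\nabla f_o(\bh)^\top \!\left( \tfrac{\be_k}{h_k} - \tfrac{\be_n}{h_n}\right) \right] = \frac{n\theta}{\mu}\Big( \mathbb{E}[\tanh^2(S/\mu) \mid \Omega_n=1] - \mathbb{E}[\tanh^2(S/\mu) \mid \Omega_k=1]\Big).
\end{equation*}
Since $\bh \in \mathcal{S}_{\xi_0}^{(n+)}$ forces $h_n^2 \geq (1+\xi_0) h_k^2$, conditioning on $\Omega_n = 1$ inflates the variance of $S$ relative to $\Omega_k = 1$, and $\tanh^2(\cdot/\mu)$ is monotone in this variance. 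Quantitatively, I would expand $\mathbb{E}[1 - \tanh^2(S/\mu)]$ via Lemma~\ref{lemma:exponential_function} exactly as in \eqref{equ:first_order_bound}--\eqref{equ:mu_bound_1}, using the extra constraint $h_n^2/h_k^2 < 4$ to keep $h_k$ comparable to $h_n \geq 1/\sqrt{n}$, and conclude the gap is at least $c_1 \mu \xi_0 / n$, so the whole expression is at least $c_1 \xi_0 \theta$, as required.

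For pointwise concentration, I follow the template of Proposition~\ref{pro:pointwise_large_gradient}: bound the $m$-th moment of the i.i.d.\ summand $\tanh(\cC(\bx_i)\bh/\mu)^\top \cC(\bx_i)(\be_k/h_k - \be_n/h_n)$ using Lemma~\ref{lemma:Operator norm of Circulant Matrix}, then invoke the Bernstein inequality in Lemma~\ref{lemma:matrix_bernstein} with $\sigma^2, R = \mathrm{poly}(n, \log n)$. A covering argument identical in structure to Appendix~\ref{proof:uniform_gradient} lifts this to uniform concentration on $\mathcal{H}_k$, provided the Lipschitz constant of $\bh \mapsto \nabla f_o(\bh)^\top(\be_k/h_k - \be_n/h_n)$ is polynomial in $n/\mu$ -- this can be checked by combining the Lipschitz bound for $\nabla f_o$ (already established for Proposition~\ref{pro:pointwise_large_gradient}) with the boundedness of $1/h_k, 1/h_n \leq 2\sqrt{n}$ on $\mathcal{H}_k$, followed by the event $\mathcal{A}_0 = \{\|\bX\|_\infty \leq 4\sqrt{\log(np)}\}$ from Fact~\ref{fact:maximum of X}.

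The main obstacle is the population lower bound: the direction $\be_k/h_k - \be_n/h_n$ blows up as $h_k \to 0$, so I need the exclusion $h_k \neq 0$ and the ratio constraint $h_n^2/h_k^2 < 4$ (otherwise the claimed $O(\xi_0 \theta)$ lower bound cannot be uniform). A secondary subtlety is that $\mathcal{H}_k$ is not contained in either $\mathcal{Q}_1$ or $\mathcal{Q}_2$ of Theorem~\ref{thm:orthogonal_geometry} in a way that lets me recycle its conclusion directly, so the argument for the specific tangent direction must be carried out from scratch, exploiting the symmetry between coordinates $k$ and $n$ under the Bernoulli--Gaussian law.
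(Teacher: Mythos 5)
Your proposal is correct and follows essentially the same strategy as the paper: pass from $\partial f_o$ to $\nabla f_o$ via the tangent-space identity, evaluate the population directional derivative using row-exchangeability of $\cC(\bx)$ and Lemma~\ref{lemma:loss_expectation}, then transfer to the empirical loss via moment-controlled Bernstein concentration (Lemma~\ref{lemma:matrix_bernstein}) combined with a covering argument over $\mathcal{H}_k$ with a polynomial Lipschitz constant. Your identity $\mathbb{E}[\nabla f_o(\bh)^\top(\be_k/h_k-\be_n/h_n)] = \tfrac{n\theta}{\mu}(\mathbb{E}[\tanh^2(S/\mu)\mid\Omega_n=1]-\mathbb{E}[\tanh^2(S/\mu)\mid\Omega_k=1])$, obtained by conditioning on $\Omega_k$ or $\Omega_n$ one at a time, is a valid and slightly tidier rearrangement of what the paper computes in Proposition~\ref{pro:expectation_stay_in_region}, where the expansion is carried out conditioning on the pair $(\Omega_k,\Omega_n)$ so the ``both-active'' contributions cancel explicitly; decomposing your conditional expectations over the status of the other coordinate recovers the paper's form with the extra $(1-\theta)$ factor absorbed into the constant. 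You also correctly identify that the constraint $h_n^2/h_k^2<4$ is what keeps $1/h_k \le 2\sqrt{n}$, which both bounds the direction vector $\norm{\be_k/h_k-\be_n/h_n}_2 \le \sqrt{5n}$ and ensures $\sigma_X$ is bounded away from zero so the reuse of \eqref{equ:bound_for_K} is legitimate, and that the conclusion of Theorem~\ref{thm:orthogonal_geometry} for $\mathcal{Q}_1,\mathcal{Q}_2$ cannot simply be recycled. No gaps.
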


Based on the result, we derive the bound for the directional gradient of $f(\bh)$ in the general case by bounding the deviation between the directional gradient of $f_o(\bh)$ and $f(\bh)$. Using \eqref{eq:manifold_g_euclid_g}, we can relate the directional gradient of $f(\bh)$ to that of $f_o(\bh)$ as
\begin{equation}
   ( \partial f(\bh)-\partial f_o(\bh))^\top \sbra{\frac{\be_k}{h_k}-\frac{\be_n}{h_n} }= \sbra{\nabla f(\bh)- \nabla f_o(\bh)}^\top \sbra{\frac{\be_k}{h_k}-\frac{\be_n}{h_n} }.
\end{equation}
We have
\begin{align*}
\abs{\sbra{\partial f(\bh)-\partial f_o(\bh)}^\top  \sbra{\frac{\be_k}{h_k}-\frac{\be_n}{h_n} } } & \leq \norm{\nabla f(\bh)- \nabla f_o(\bh)}_2 \norm{\frac{\be_k}{h_k}-\frac{\be_n}{h_n} }_2 \\
& \leq \sqrt{5n}\norm{\nabla f(\bh)- \nabla f_o(\bh)}_2,
\end{align*}
where the last line follows from
\begin{equation} \label{eq:bound_directional_grad}
\norm{\frac{\be_k}{h_k}-\frac{\be_n}{h_n}}_2 = \sqrt{ \frac{1}{h_k^2} + \frac{1}{h_n^2} }\leq \sqrt{\frac{5}{h_n^2}}\leq \sqrt{5n}, 
\end{equation}
due to the assumption $ h_n^2 / h_k^2 \leq 4$ and $h_n\geq 1 / \sqrt{n}$. Therefore, it is sufficient to bound $\norm{\nabla f(\bh)- \nabla f_o(\bh)}_2$.
By \eqref{eq:gradient_phi_h}, we have
\begin{align*}
 \norm{\nabla f_o(\bh)-\nabla f(\bh)}_2   &= \norm{\frac{1}{p}\sum_{i=1}^p  \cC(\bx_i)^\top  \tanh\sbra{\frac{\mathcal{C}(\bx_i)\bh}{\mu}}-\frac{1}{p}\sum_{i=1}^p  \sbra{\bI+\bDelta}^\top    \cC(\bx_i)^\top \tanh\sbra{\frac{\mathcal{C}(\bx_i) \sbra{ \bI+\bDelta} \bh }{\mu}}    }_2\\
        &\leq \norm{\frac{1}{p}\sum_{i=1}^p \bDelta^\top \cC(\bx_i)^\top   \tanh\sbra{\frac{\mathcal{C}(\bx_i) \sbra{\bI+\bDelta} \bh}{\mu}}  }_2 \\
        &\quad + \norm{ \frac{1}{p}\sum_{i=1}^p \cC(\bx_i)^\top   \mbra{\tanh\sbra{\frac{\mathcal{C}(\bx_i)\bh}{\mu}}-\tanh\sbra{\frac{\mathcal{C}(\bx_i) \sbra{\bI+\bDelta} \bh }{\mu}} }           }_2\\
        &\leq   \max_{i\in[p]}\norm{\cC(\bx_i)}\cdot \max_{i\in[p]}\norm{\tanh\sbra{\frac{\mathcal{C}(\bx_i) \sbra{ \bI+\bDelta} \bh }{\mu}}    }_2 \norm{\bDelta}  \\
        &\quad  + \max_{i\in[p]}\norm{\cC(\bx_i)}\cdot \max_{i\in[p]}\norm{\tanh\sbra{\frac{\mathcal{C}(\bx_i) \bh }{\mu}} -\tanh\sbra{\frac{\mathcal{C}(\bx_i) \sbra{\bI+\bDelta} \bh }{\mu}}      }_2\\
        &\leq  \max_{i\in[p]}\norm{\cC(\bx_i)}\cdot \sbra{ \sqrt{n} \norm{\bDelta}  +\frac{1}{\mu} \max_{i\in[p]}\norm{\cC(\bx_i )}\norm{\bDelta}  }\\
        &\leq C_1 \frac{n \log(np) }{\mu} \norm{\bDelta}
    \end{align*}
with probability at least $1-(np)^{-8}$, where the penultimate inequality follows from \eqref{eq:lipschtiz_tanh_vector}, and the last inequality follows from \eqref{eq:circulant_x}. 
By Lemma~\ref{lemma:deviation op norm}, there exists some constant $C$, such that under the sample complexity requirement, we have 
\begin{equation}
 \abs{\sbra{\partial f(\bh)-\partial f_o(\bh)}^\top  \sbra{\frac{\be_k}{h_k}-\frac{\be_n}{h_n} } }\leq   C_{1} \frac{n^{3/2}\log(np) }{\mu} \kappa^4 \sqrt{\frac{\log^2n \log p}{\theta^2 p}}\leq \frac{c_b \xi_0 \theta}{2}.
\end{equation}
In addition, Lemma~\ref{thm:implicit_uniform_concentration_orthogonal} guarantees that $\partial f_o(\bh)^\top \sbra{\frac{\be_k}{h_k}-\frac{\be_n}{h_n}}\geq c_b \xi_0 \theta$. Putting together, we have
\begin{align*}
\partial f(\bh)^\top \sbra{\frac{\be_k}{h_k}-\frac{\be_n}{h_n} } & \geq \partial f_o(\bh)^\top \sbra{\frac{\be_k}{h_k}-\frac{\be_n}{h_n}} -  \abs{\sbra{\partial f(\bh)-\partial f_o(\bh)}^\top \sbra{\frac{\be_k}{h_k}-\frac{\be_n}{h_n} } } \\
&\geq c_b \xi_0 \theta-\frac{c_b \xi_0 \theta}{2}=\frac{c_b \xi_0 \theta}{2}
\end{align*}
with probability at least $1-2(np)^{-8} - 2\exp\sbra{-c_an }$.

\subsection{Proof of Lemma \ref{thm:implicit_stay}}\label{proof:implicit_stay}

Owing to symmetry, without loss of generality, we will show that if the current iterate $\bh\in\cS_{\xi_0}^{(n+)}$ with $\xi_0\in (0,1)$, the next iterate
$$ \bh^{+}  = \frac{\bh - \eta \partial f(\bh)}{\norm{\bh - \eta \partial f(\bh)}_2} $$
stays in $\cS_{\xi_0}^{(n+)}$ for a sufficiently small step size $\eta$. For any $i\in[n-1]$, we have
\begin{align} \label{eq:general_eq}
\sbra{\frac{ {h}^{+}_n}{h^{+}_i} }^2 & = \frac{(h_n -\eta  [ \partial f(\bh)]_n )^2}{(h_i -\eta  [\partial f(\bh)]_i )^2}  = \frac{(1 -\eta  [\partial f(\bh)]_n /h_n )^2}{(h_i/h_n -\eta [ \partial f(\bh)]_i /h_n)^2}.
\end{align}
By \eqref{eq:bound_grad} in Lemma~\ref{thm:implicit_uniform_concentration_general}, which bounds $\norm{\partial f(\bh)}_{\infty} \leq \norm{\partial f(\bh)}_2 \leq C n \sqrt{\log(np)}$ for some constant $C$, and $h_n\geq 1/\sqrt{n}$, by setting $\eta\leq  \frac{1}{10C n^{3/2}\sqrt{\log(np)} }$, we can lower bound the numerator of \eqref{eq:general_eq} as
\begin{equation}\label{eq:numerator}
\norm{\eta  \partial f(\bh)}_{\infty}/h_n\leq \frac{1}{10}  \quad \mbox{and}\quad \sbra{1 -\eta  [ \partial f(\bh)]_n /h_n }^2 \geq \frac{2}{3}.
\end{equation} 
To continue, we take a similar approach to \cite[Lemma D.1]{bai2019subgradient}, and divide our discussions of the denominator of \eqref{eq:general_eq} for different coordinates in three subsets: 
\begin{subequations}
\begin{align} 
        \cJ_0 &:= \left\{ i\in[n-1]:h_i=0               \right \} , \\
        \cJ_1 &:= \left\{ i\in[n-1]: \frac{h_n^2}{h_i^2}\geq 4 ,h_i \neq 0             \right \}, \\
        \cJ_2 &:= \left\{ i\in[n-1]: \frac{h_n^2}{h_i^2}\leq 4              \right \}. \label{eq:J_2}
\end{align}
\end{subequations}
\begin{itemize}
\item For any index $i\in \cJ_0$, we have $h_i =0$, and then by \eqref{eq:general_eq} and \eqref{eq:numerator},
$$ \sbra{\frac{ {h}^{+}_n}{h^{+}_i} }^2  = \frac{(1 -\eta [ \partial f(\bh)]_n /h_n )^2}{( \eta [ \partial f(\bh)]_i /h_n)^2} \geq \frac{2/3}{(1/10)^2} \geq 2. $$
\item For any index $i\in \cJ_1$, we have
$$ \sbra{\frac{ {h}^{+}_n}{h^{+}_i} }^2  = \frac{(1 -\eta [ \partial f(\bh)]_n /h_n )^2}{(h_i/h_n - \eta  [\partial f(\bh)]_i /h_n)^2} \geq \frac{2/3}{(1/4+1/10^2)} \geq 2. $$

\item For any index $i\in \cJ_2$, we have
$$ \sbra{\frac{ {h}^{+}_n}{h^{+}_i} }^2  = \frac{h_n^2}{h_i^2} \sbra{1 +\eta \frac{\partial f(\bh)^{\top} (\be_i/h_i -\be_n/h_n)}{1 - \eta [\partial f(\bh)]_i /h_i}}^2 . $$
Since $\bh \in \mathcal{H}_i$ as defined in Lemma~\ref{thm:implicit_uniform_concentration_general}, using \eqref{res: sins geometry}, we have $\partial f(\bh)^{\top} (\be_i/h_i -\be_n/h_n)\geq \frac{c_b}{2} \xi_0 \theta >0 $, and consequently,
    \begin{equation}
\sbra{\frac{ {h}^{+}_n}{h^{+}_i} }^2  \geq  \frac{h_n^2}{h_i^2} \sbra{1 +\eta \frac{\partial f(\bh)^{\top} (\be_i/h_i -\be_n/h_n)}{1 - \eta [\partial f(\bh)]_i /h_i}}^2 \geq \frac{h_n^2}{h_i^2} \geq 1+\xi_0,
\label{equ:xi_change}
\end{equation}
where the last inequality is due to $\bh \in\mathcal{S}_{\xi_0}^{(n+)}$.
\end{itemize}
Combining the above, we have that for all $i\in[n-1]$, $\sbra{{h}_n^{+}/h_i^+}^2\geq 1+\xi_0$, i.e, $\bh^{+}\in \cS_{\xi_0}^{(n+)}$.

\subsection{Proof of Theorem~\ref{thm:general_GD}}\label{proof:gd_for_orthogonal_case}

First, as the step size requirement satisfies that in Lemma \ref{thm:implicit_stay}, the iterates never jumps out of $\mathcal{S}_{\xi_0}^{(n+)}$, if initialized in it. Denote $\bh_u^+$ as the unnormalized update of $\bh$ with step size $\eta$ on the tangent space of $\bh$, i.e,
\begin{equation*}
    \bh_u^+=\bh- \eta \partial f(\bh)=\bh-\eta \sbra{\bI-\bh\bh^\top} \nabla_{\bh} f(\bh).
\end{equation*}
and $\bw_u^+$  the first $(n-1)$ entries of $\bh_u^{+}$, whose update can be written with respect to $\phi(\bw)$ as
\begin{align}
    \bw_u^+&=\bw-\eta \begin{bmatrix}
        \bI & \bm{0}
        \end{bmatrix}  \sbra{\bI-\bh\bh^\top} \nabla_{\bh} f(\bh) \nonumber\\
    &=\bw- \eta\sbra{\bI-\bw\bw^\top} \bJ_{\bh}(\bw) \nabla_{\bh} f(\bh) \nonumber\\
    &=\bw-\eta \sbra{\bI-\bw\bw^\top} \nabla_{\bw} \phi(\bw). \label{eq:update_w}
\end{align}
The normalized updates are respectively $\bh^+ =\bh_u^+/\norm{\bh_u^+}_2$ and $\bw^+ =  \bw_u^+/\norm{\bh_u^+}_2$. By the property that $\bh \perp \sbra{\bI-\bh\bh^\top} \nabla_{\bh} f(\bh)$, we have $\|\bh_u^+\|_2 \geq \|\bh\|_2\geq 1$.

\paragraph{Convergence in the region of $\mathcal{Q}_1 \cap \{\bw: \bh(\bw)\in \cS_{\xi_0}^{(n+)} \} $.} By \eqref{eq:update_w}, we have 
\begin{align}\label{eq:bound_in_region}
  \norm{\bw_u^+}_2^2 =\norm{\bw}_2^2 - \eta \underbrace{ h_n(\bw)^2  \bw^\top \nabla_{\bw} \phi(\bw) }_{I_1} +   \eta^2  \underbrace{ \norm{\sbra{\bI -\bw\bw^\top} \nabla_{\bw} \phi(\bw)}_2^2 }_{I_2}.
\end{align}
\begin{itemize}
\item First, $I_1$ can be bounded as 
\begin{equation*}
    I_1 = h_n^2\bw^\top \nabla_{\bw} \phi(\bw) \geq c_1 h_n^2\norm{\bw}_2 \xi_0 \theta.
\end{equation*}
for some constant $c_1$, where the last inequality owes to \eqref{eq:lg_pop_geometry}.

\item Second,  $I_2$ can be bounded as
\begin{equation*}
    I_2    \leq \norm{\begin{bmatrix}
        \bI & \bm{0}
        \end{bmatrix}} \cdot \norm{\bI-\bh\bh^\top} \norm{\nabla_{\bh} f(\bh)}_2 \leq 1\cdot (1+\norm{\bh}_2^2) \norm{\nabla_{\bh} f(\bh)}_2 \leq c_2 n \sqrt{\log(np)}
\end{equation*}
for some constant $c_2$, where the last inequality follows from \eqref{eq:bound_grad} in Lemma~\ref{thm:implicit_uniform_concentration_general}, which holds with probability at least $1- (np)^{-8}$.
\end{itemize}

Sum up the above results for $I_1$ and $I_2$, we have with probability at least $1- (np)^{-8}$,
\begin{equation*}
\norm{\bw^+}_2 \leq   \norm{\bw_u^+}_2^2 \leq \norm{\bw}_2^2 - \eta c_1 h_n^2(\bw) \norm{\bw}_2 \eta \xi_0 \theta +  c_2  \eta^2 n  \sqrt{\log (np)},
\end{equation*}
where the first inequality follows from $\norm{\bh_u^+}_2\geq 1$. Setting $\eta\leq \frac{c  \mu \xi_0\theta }{ n^2 \sqrt{\log(np)}  }  \leq  \frac{c_1 h_n^2\norm{\bw}_2\xi_0\theta }{2c_2 n \sqrt{\log(np)}  } $ for some sufficiently small $c$, we have
\begin{equation}
\norm{\bw^+}_2 \leq \norm{\bw}_2^2 - \eta \frac{c_1}{2} h_n^2(\bw) \norm{\bw}_2 \eta \xi_0 \theta  \leq \norm{\bw}_2^2 - \eta \frac{c_1}{2n}  \norm{\bw}_2   \xi_0 \theta .
\end{equation}
Denote the $k$-th iteration as $\bh^{(k)}$, we have that as long as $\bw^{(k)}\in\mathcal{Q}_1$, 
\begin{equation*}
\norm{\bw^{(k+1)}}_2^2 \leq \norm{\bw^{(k)}}_2^2- \eta \frac{c_1}{2n}  \norm{\bw^{(k)}}_2   \xi_0 \theta \leq \norm{\bw^{(k)}}_2^2- \eta \frac{c_1\mu}{8\sqrt{2} n}    \xi_0 \theta.
\end{equation*}
Telescoping the above inequality for the first $T_1$ iterations, we have
\begin{equation*} 
        T_1\cdot  \eta \frac{c_1\mu}{8\sqrt{2} n}    \xi_0 \theta   \leq \norm{\bw^{(0)}}_2^2-\norm{\bw^{(T_1)}}_2^2\leq 1, \quad \Longrightarrow \quad   T_1\leq \frac{Cn}{ \mu \eta \xi_0 \theta},
\end{equation*}
which means it takes at most $T_1$ iterations to enter $\mathcal{Q}_2\cap \{\bw: \bh(\bw)\in \cS_{\xi_0}^{(n+)} \} $.
 
\begin{remark}
Using arguments similar to  \cite{gilboa2019efficient}, the iteration complexity in this phase can be improved to $T_1\lesssim \frac{1}{\eta} \sbra{ \frac{n}{ \mu\theta}+ \log\left(\frac{1}{\xi_0}\right)} $ which only has a logarithmic dependence on $\xi_0$; we didn't pursue it here as it only leads to a logarithmic improvement to the overall complexity due to our choice of $\xi_0=O(1/\log n)$.
\end{remark}

\paragraph{Convergence in the region of $\mathcal{Q}_2\cap \{\bw: \bh(\bw)\in \cS_{\xi_0}^{(n+)} \}$:}

Denoting the unique local minima in $\mathcal{Q}_2\cap \{\bw: \bh(\bw)\in \cS_{\xi_0}^{(n+)} \} $ as $\bw^{\star}$, whose norm is bounded in Theorem~\ref{theo:general_geometry}. By setting $p$ sufficiently large, we can ensure that the iterates stay in $\mathcal{Q}_2$ following a similar argument as \eqref{eq:bound_in_region}. To begin, we note that $\nabla_{\bw}\phi(\bw) $ is $L$- Lipschitz with
\begin{equation}\label{eq:Lipschitz_phi}
    L\leq \frac{Cn^{3/2} \log (np)}{\mu},
\end{equation}
which is proved in Appendix~\ref{proof:lipschitz_phi}, and $c_1 n \theta /\mu$-strongly convex in $\mathcal{Q}_2\cap \{\bw: \bh(\bw)\in \cS_{\xi_0}^{(n+)} \}$. For $\bw\in\mathcal{Q}_2$, we have 
\begin{equation}\label{eq:spectral_bound}
  \frac{1}{2}  \leq 1-\frac{\mu^2}{32} \leq \norm{\bI-\bw\bw^\top } \leq 1+ \frac{\mu^2}{32},
\end{equation}
with $\mu< c_1 \min\{\theta,    \xi_0^{1/6} n^{-3/4} \} \leq 4$ sufficiently small.

We now consider two cases based on the size of $\|\bw^{+}\|_2$, which is the next iterate with respect to $\bw$ given in \eqref{eq:update_w}. 
\begin{enumerate}
\item $\|\bw^{+}\|_2< \|\bw^{\star}\|_2$: in this case, we already achieve 
$$\norm{\bw^+ -\bm{0} }_2 \leq \norm{\bw^{\star}}_2  \lesssim \frac{\kappa^4}{\theta^2} \sqrt{\frac{n\log^3 p \log^2 n}{p}}  .$$ 

\item $\|\bw^{+}\|_2\geq \|\bw^{\star}\|_2$: by the fundamental theorem of calculus, we have
\begin{align}
    \norm{ \bw_u^+ - \bw^{\star} }_2 &=\norm{\bw-\bw^{\star} - \eta \sbra{\bI-\bw\bw^\top} \nabla_{\bw} \phi(\bw)}_2 \nonumber\\
    &=\norm{\mbra{ \bI- \eta \sbra{\bI-\bw\bw^\top} \int_0^1 \nabla^2_{\bw} \phi(\bw(t)) dt }  \sbra{\bw-\bw^{\star}}  }_2  \nonumber\\
    &\leq \norm{\mbra{ \bI- \eta \sbra{\bI-\bw\bw^\top} \int_0^1 \nabla^2_{\bw} \phi(\bw(t)) dt }  } \norm{ \bw-\bw^{\star} }_2  \nonumber\\
    &\leq \sbra{1-  \frac{c_1n\theta \eta}{2\mu} } \norm{ \bw-\bw^{\star} }_2. \label{equ:convergence_of_tlide{w}}
\end{align}
where $\bw(t):= \bw+ t(\bw^{\star}-\bw)$, $t\in [0,1]$, and the step size $\eta\leq \frac{c  \mu \xi_0\theta }{n^2 \sqrt{\log(np)}  } \leq \frac{1}{2L}$. Moreover, since $\bw^+ = \bw_u^{+}/ \norm{\bh^+}_2 =\bw_u^{+}/(1+K)$ for some $K>0$, we have
\begin{align}
    \norm{\bw_u^+ -\bw^{\star}}_2^2&=\norm{(1+K)\bw^+-\bw^{\star}}_2^2 \nonumber\\
    &=\norm{\bw^+-\bw^{\star}}_2^2+ (2K+K^2) \norm{\bw^+}_2^2- 2K \bw^{\star\top} \bw^+ \nonumber\\
    &\geq \norm{\bw^+-\bw^{\star}}_2^2+ (2K+K^2) \norm{\bw^+}_2^2 - 2 K \norm{\bw^+}_2\norm{\bw^{\star}}_2 \geq \norm{\bw^+-\bw^{\star}}_2^2 \label{equ:convergence_of_w+}
\end{align}
where the last inequality owes to $\norm{\bw^+}_2\geq \norm{\bw^{\star}}_2$. Combining \eqref{equ:convergence_of_tlide{w}} and \eqref{equ:convergence_of_w+}, we have the update $\bw^+$ satisfies
\begin{equation}\label{equ:convergence_rate}
    \norm{\bw^+-\bw^{\star}}_2 \leq \sbra{1-  \frac{c_1n\theta \eta}{2\mu} } \norm{ \bw-\bw^{\star} }_2.
\end{equation}
 Therefore, to ensure $    \norm{\bw^+-\bw^{\star}}_2  \leq \epsilon$, it takes no more than
\begin{equation}
    T_2\leq  \frac{2c_1\mu}{n\theta\eta}\log\sbra{ \frac{ 3\mu}{2\sqrt{2}\epsilon} }
\end{equation}
iterations, since for any $\bw \in \mathcal{Q}_2$, we have $\norm{\bw -\bw^*}_2 \leq \norm{\bw}_2 + \norm{\bw^*}_2 \leq \frac{3\mu}{2\sqrt{2}}$.
\end{enumerate}
Summing up, to achieve $\|\bw^{(T)}-\bm{0}\|_2 \lesssim \frac{\kappa^4}{\theta^2} \sqrt{\frac{n\log^3 p \log^2 n}{p}}  + \epsilon$, the total number of iterates is bounded by
$$T=T_1 +T_2 \lesssim \frac{n}{ \mu \eta \xi_0 \theta}+  \frac{ \mu}{n\theta\eta}\log\sbra{ \frac{ \mu}{ \epsilon} }.$$

We next translate this bound into bounds of $ \norm{\bh^{(T)} - \be_n}_2$ and $\mathrm{dist}(\widehat{\bg}_{\inv}, \bg_{\mathrm{inv}})$, where the latter leads to Corollary~\ref{cor:main_corollary}.
For notation simplicity, we denote $\beta:= \frac{ \kappa^4}{\theta^2} \sqrt{\frac{n\log^3 p \log^2 n}{p}} + \epsilon <1$ which holds for sufficiently large sample size.\footnote{Note that $\|\bw^{(T)}\|_2<1$ by the spherical constraint, the bound is vacuous otherwise} First, observe that $\bh^{(T)}=\bh(\bw^{(T)})$ satisfies
\begin{align}
    \norm{\bh^{(T)} - \be_n}_2 &= \sqrt{\norm{\bw^{(T)}}_2^2 +  \sbra{ 1- \sqrt{1-\norm{\bw^{(T)}}_2^2}}^2} \nonumber\\
    & \leq \sqrt{\sbra{ 1- \sqrt{1-\beta^2}}^2 + \beta^2} \leq \sqrt{2\beta^2} \lesssim \frac{\kappa^4}{\theta^2} \sqrt{\frac{n\log^3 p \log^2 n}{p}} + \epsilon. \label{eq:bound_hT_en}
\end{align}

Since we consider the loss function $f(\bh) $ in \eqref{equ:equvalent_f2} throughout the proof, the estimate of $\bm{g}_{\inv}$ is given by $\widehat{\bg}_{\inv} = \bR\bU^\top \bh^{(T)}$. Hence, we have
\begin{align}
\mathrm{dist}(\widehat{\bg}_{\inv}, \bg_{\mathrm{inv}}) & =\mathrm{dist}(\bR\bU^\top \bh^{(T)}, \cC(\bg_{\mathrm{inv}})\be_n)   \leq \norm{\cC(\bg)^{-1}}\norm{\cC(\bg)\bR\bU^\top\bh^{(T)} - \be_n}_2 \nonumber\\
&\leq \norm{\cC(\bg)^{-1}} \norm{\cC(\bg)\bR\bU^\top (\bh^{(T)} - \be_n)}_2 + \norm{\cC(\bg)^{-1}}\norm{(\cC(\bg)\bR\bU^\top - \bI)\be_n}_2 \nonumber\\
& \leq \norm{\cC(\bg)^{-1}} \norm{ \bDelta + \bI } \norm{\bh^{(T)} - \be_n}_2 + \norm{\cC(\bg)^{-1}}\norm{\bDelta}\norm{\be_n}_2 \label{equ:distance bound}
\end{align}
where we used the definition of $\bDelta= \cC(\bg)\bR\bU^\top - \bI$ (cf. \eqref{equ:define_delta}). Under the sample size requirement, we have $\norm{\bDelta} \leq c_f \kappa^4 \sqrt{\frac{\log^2 n \log p}{\theta^2 p}} < 1$ with probability at least $1-2n p^{-8}$ by Lemma~\ref{lemma:deviation op norm}. Plugging it into \eqref{equ:distance bound}, we have
\begin{align}
    \mathrm{dist}(\widehat{\bg}_{\inv}, \bg_{\mathrm{inv}}) &\leq 
 \frac{2}{\sigma_n(\cC(\bg))}  \norm{\bh^{(T)}- \be_n}_2 + \frac{1}{\sigma_n(\cC(\bg))}\norm{\bDelta} \nonumber\\
& \lesssim \frac{\kappa^4}{\theta^2\sigma_n(\cC(\bg))} \sqrt{\frac{n\log^3 p \log^2 n}{p}} + \frac{\epsilon}{\sigma_n(\cC(\bg))},
\end{align}
where the last line follows from \eqref{eq:bound_hT_en}.

\subsubsection{Proof of \eqref{eq:Lipschitz_phi}}\label{proof:lipschitz_phi}

Recalling ${\psi}_\mu^{\prime}(x)=\tanh(x/\mu)$, for any $\bw_1, \bw_2 \in \mathcal{Q}_2$, using the expression
$$  \nabla_{\bw} \phi(\bw) =  \frac{1}{p}\sum_{i=1}^p  \bJ_{\bh}(\bw)  \sbra{\bI+\bDelta}^\top    \cC(\bx_i)^\top   {\psi}_\mu^{\prime}\sbra{\mathcal{C}(\bx_i) \sbra{ \bI+\bDelta} \bh(\bw)}, $$
we have
\begin{align*}
    \nabla_{\bw} \phi(\bw_1) & - \nabla_{\bw} \phi(\bw_2) \\
&= \underbrace{ \frac{1}{p}\sum_{i=1}^p \mbra{\bJ_{\bh}(\bw_1)-\bJ_{\bh}(\bw_2) } \sbra{\bI+\bDelta}^\top    \cC(\bx_i)^\top   {\psi}_\mu^{\prime}\sbra{\mathcal{C}(\bx_i) \sbra{ \bI+\bDelta} \bh(\bw_1)} }_{\bg_1}\\
&\quad + \underbrace{ \frac{1}{p}\sum_{i=1}^p \bJ_{\bh}(\bw_2)  \sbra{\bI+\bDelta}^\top    \cC(\bx_i)^\top  \mbra{ {\psi}_\mu^{\prime}\sbra{\mathcal{C}(\bx_i) \sbra{ \bI+\bDelta} \bh(\bw_1)} - {\psi}_\mu^{\prime}\sbra{\mathcal{C}(\bx_i) \sbra{ \bI+\bDelta} \bh(\bw_2)}  } }_{\bg_2}.
\end{align*}

We bound $\norm{\bg_1}_2$ and $\norm{\bg_2}_2$ respectively.
Under the sample size requirement, from Lemma~\ref{lemma:deviation op norm}, we can ensure $\|\bDelta\| \leq 1$. To bound $\norm{\bg_1}_2$, we have
\begin{align*}
 \norm{\bg_1}_2&\leq \norm{ \bJ_{\bh}(\bw_1)-\bJ_{\bh}(\bw_2) }\cdot \norm{\bI+ \bDelta}\cdot \max_{i\in[p]} \norm{\cC(\bx_i)}\cdot \norm{ \tanh\sbra{\frac{\mathcal{C}(\bx_i)\sbra{\bI+\bDelta}  \bh(\bw_1)}{\mu}} }_2\\
 &\leq C\sqrt{n^2 \log(np)} \norm{\begin{bmatrix} 
 \bm{0} & \sbra{\frac{\bw_2}{h_n (\bw_2)}-\frac{\bw_1}{h_n (\bw_1)} } \end{bmatrix} },
\end{align*}
where the second line follows from $\abs{\tanh(\cdot)}\leq 1$ and \eqref{eq:circulant_x}, which holds with probability at least $1-(np)^{-8}$. To continue, we observe that
\begin{align*}
    \norm{\begin{bmatrix} 
 \bm{0}, & \sbra{\frac{\bw_2}{h_n (\bw_2)}-\frac{\bw_1}{h_n (\bw_1)} } \end{bmatrix} }&=\norm{ \frac{\bw_2}{h_n (\bw_2)}-\frac{\bw_1}{h_n (\bw_1)} }_2\\
 &\leq \norm{ \frac{\bw_2}{h_n (\bw_2)}-\frac{\bw_1}{h_n (\bw_2)}  }_2 + \norm{  \frac{\bw_1}{h_n (\bw_2)}-\frac{\bw_1}{h_n (\bw_1)}    }_2\\
 &\overset{\mathrm{(i)}}{\leq} 2\norm{\bw_2-\bw_1}_2 + \norm{\bw_1}_2 \left| \frac{1}{h_n(\bw_2)} - \frac{1}{h_n(\bw_1)}   \right| \\
 &\overset{\mathrm{(ii)}}{\leq} 2\norm{\bw_2-\bw_1}_2 + 8  \norm{\bw_2-\bw_1}_2 \\
 &\leq 10 \norm{\bw_2-\bw_1}_2
\end{align*}
where $\mathrm{(i)}$ follows from $h_n(\bw_2) = \sqrt{1-\|\bw_2\|_2^2}\geq 1/2$ when we used the fact that $\|\bw\|_2\leq \mu/(4\sqrt{2}) \leq \sqrt{3}/2$ in $\mathcal{Q}_2$, and $\mathrm{(ii)}$ follows from the Lipschitz smoothness of $1/h_n(\bw)$ and $h_n(\bw)\geq 1/2$:
\begin{equation}\label{eq:lipschitz_hn}
\left| \frac{1}{h_n(\bw_2)} - \frac{1}{h_n(\bw_1)}   \right| \leq \left(  \max_{\bw\in\mathcal{Q}_2} \frac{1}{(1-\|\bw\|_2^2)^{3/2}} \right) \| \bw_2 -\bw_1\|_2 \leq 8  \| \bw_2 -\bw_1\|_2.
\end{equation}
Therefore, we have
\begin{equation*}
    \norm{\bg_1}_2\leq C \sqrt{n^2 \log(np)}\norm{\bw_2-\bw_1}_2 .
\end{equation*} 
To bound $\norm{\bg_2}_2$, we have
\begin{align*}
\| \bg_2 \|_2 & \leq \norm{\bJ_{\bh}(\bw_2)}  \norm{\bI+\bDelta} \cdot \max_{i\in[p]} \norm{\cC(\bx_i)}\cdot \max_{i\in[p]}\norm{\tanh\sbra{\frac{\mathcal{C}(\bx_i)\sbra{\bI+\bDelta}  \bh(\bw_1)}{\mu}} -\tanh\sbra{\frac{\mathcal{C}(\bx_i) \sbra{\bI+\bDelta} \bh(\bw_2)}{\mu}}      }_2  \nonumber \\
& \leq  \frac{1}{\mu}\norm{\bJ_{\bh}(\bw_2)}   \norm{\bI+\bDelta}^2 \cdot \max_{i\in[p]} \norm{\cC(\bx_i)}^2 \cdot \norm{\bh(\bw_1)-\bh(\bw_2)}_2\\
&\leq \frac{Cn^{3/2} \log (np)}{\mu} \norm{\bw_1-\bw_2}_2,
\end{align*}
where the second line follows from a similar argument in \eqref{eq:lipschtiz_tanh_vector}, the last line follows from \eqref{eq:jacobian_bound} and \eqref{eq:circulant_x}, which holds with probability at least $1-(np)^{-8}$, and 
\begin{align*}
\norm{\bh(\bw_1)-\bh(\bw_2)}_2\leq \|\bw_1-\bw_2\|_2 + \left| \frac{1}{h_n(\bw_2)} - \frac{1}{h_n(\bw_1)}   \right| \leq 9\|\bw_1-\bw_2\|_2 ,
\end{align*} 
following \eqref{eq:lipschitz_hn}. Combining the bounds on $\norm{\bg_1}_2$ and $\norm{\bg_2}_2$ achieve the desired result.

\subsection{Proof of Lemma \ref{thm:implicit_uniform_concentration_orthogonal}}\label{proof:implicit_uniform_concentration_orthogonal}

The proof follows a standard covering argument similar to the proof of Theorem~\ref{thm:orthogonal_geometry} in Appendix~\ref{proof:orthogonal_geometry_discretization}. To begin, we need the following propositions, proved in Appendix~\ref{proof:expectation_stay_in_region}, \ref{proof:pointwise_stay_in_region}, and \ref{proof:lipschitz stay in subsets}, respectively. 

\begin{prop} \label{pro:expectation_stay_in_region}
 For any $\xi_0\in (0,1)$, $\theta\in (0,\frac{1}{3})$, $k\in [n-1]$, there exists some constant $c_1$ such that when $\mu< c_1 \min\{\theta, \xi_0^{1/6}n^{-3/4}\}$, for any $\bh\in \mathcal{H}_k$, we have  
    \begin{equation*}
        \mathbb{E} \partial f_o(\bh)^\top \sbra{\frac{\be_k}{h_k}-\frac{\be_n}{h_n}}  \geq  \frac{\theta\xi_0}{24\sqrt{2\pi}}.
    \end{equation*}
\end{prop}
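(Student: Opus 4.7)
The plan is to reduce the manifold directional derivative to a familiar Euclidean inner product, evaluate it in closed form using the same machinery developed in the proof of Lemma~\ref{thm:population_geometry_orthogonal}, and then exploit the extra upper bound $h_n^2/h_k^2 < 4$ built into $\mathcal{H}_k$. For the reduction, note that $\be_k/h_k - \be_n/h_n$ is orthogonal to $\bh$ and hence lies in the tangent space at $\bh$, so by~\eqref{eq:manifold_g_euclid_g} the manifold and Euclidean inner products agree. Taking expectation and using the row-level identity $\mathbb{E}\phi_o(\bw) = n\,\mathbb{E}\psi_\mu(\bx^\top\bh)$ from~\eqref{eq:reduction}, the target becomes the one-dimensional expectation $n\,\mathbb{E}\mbra{\tanh(\bx^\top\bh/\mu)(x_k/h_k - x_n/h_n)}$ over $\bx\sim_{iid}\mathrm{BG}(\theta)$.

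The second step is to condition on the Bernoulli mask at coordinates $k$ and $n$. Writing $\bx^\top\bh = h_k x_k + h_n x_n + Z$ with $Z:=\bh_{\setminus\{k,n\}}^\top \bx_{\setminus\{k,n\}}$, four support cases arise. The $(\Omega_k,\Omega_n)=(0,0)$ case contributes zero. The $(1,1)$ case vanishes because Lemma~\ref{lemma:loss_expectation} applied separately to the two Gaussian parts produces $\pm\tfrac{1}{\mu}\mathbb{E}\mbra{1-\tanh^2((h_k z_k+h_n z_n+Z)/\mu)}$ with the identical argument, so they cancel, exactly as in the $Q_i$ computation leading to~\eqref{eq:expression_Q}. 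The two surviving asymmetric cases, each with weight $\theta(1-\theta)$, collapse via a second application of Lemma~\ref{lemma:loss_expectation} into
\[
\mathbb{E}\,\partial f_o(\bh)^\top\sbra{\frac{\be_k}{h_k}-\frac{\be_n}{h_n}} \;=\; \frac{n\theta(1-\theta)}{\mu}\,\mathbb{E}_{\bar{\bOmega}}\mathbb{E}_{\bz}\mbra{\tanh^2\sbra{Y/\mu}-\tanh^2\sbra{X/\mu}},
\]
where, conditional on the residual mask $\bar{\bOmega}:=\bOmega_{\setminus\{k,n\}}$ with support $\mathcal{J}'$, we have $X\sim \mathcal{N}(0,\sigma_X^2)$ and $Y\sim \mathcal{N}(0,\sigma_Y^2)$ with $\sigma_X^2=h_k^2+\|\bh_{\mathcal{J}'}\|_2^2$ and $\sigma_Y^2=h_n^2+\|\bh_{\mathcal{J}'}\|_2^2$.

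The bracketed integrand is precisely the $K_{i_0}$-quantity handled in~\eqref{equ:K_1_K_2_K_3}--\eqref{equ:bound_for_K}, so I would invoke that derivation verbatim. It reduces the entire lower bound to checking two size ingredients. First, $1/\sigma_X-1/\sigma_Y\gtrsim \xi_0/n$: this follows because $\sigma_Y^2-\sigma_X^2 = h_n^2-h_k^2 \geq h_n^2\,\xi_0/(1+\xi_0)\geq \xi_0/(2n)$ (using $\bh\in\mathcal{S}_{\xi_0}^{(n+)}$ and $h_n^2\geq 1/n$), while $\sigma_X\sigma_Y(\sigma_X+\sigma_Y)\leq 2$ because both $\sigma_X,\sigma_Y\leq\|\bh\|_2=1$. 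Second, $1/\sigma_X\lesssim \sqrt{n}$, which is what makes the $O(\mu^3/\sigma_X^3)$ and $O(\mu^7/\sigma_X^7)$ correction terms negligible under $\mu\leq c_1\min\{\theta,\xi_0^{1/6}n^{-3/4}\}$, mirroring the estimates~\eqref{equ:mu_bound_1}--\eqref{equ:bound_for_K}.

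The main obstacle --- and the only place where the argument genuinely departs from that of Lemma~\ref{thm:population_geometry_orthogonal} --- is this second ingredient. In the original lemma the privileged coordinate $i_0$ was chosen as the one with largest magnitude, which automatically gave $|h_{i_0}|\geq 1/\sqrt{n}$. Here $k$ is an arbitrary index and $h_k$ could in principle be tiny, so the bound would fail without extra control. This is precisely why the definition of $\mathcal{H}_k$ imposes the extra restriction $h_n^2/h_k^2<4$: combined with $h_n^2\geq 1/n$ it forces $h_k^2\geq 1/(4n)$, hence $\sigma_X\geq 1/(2\sqrt{n})$, and the original constant chase carries over to deliver $\mathbb{E}_{\bz}[K]\geq \mu\xi_0/(16\sqrt{2\pi}n)$. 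Multiplying by $n\theta(1-\theta)/\mu$ and using $\theta<1/3$ (so that $1-\theta>2/3$) then produces the advertised lower bound $\theta\xi_0/(24\sqrt{2\pi})$.
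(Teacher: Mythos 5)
Your proposal is correct and follows essentially the same route as the paper's proof: reduce the manifold derivative to a Euclidean one via \eqref{eq:manifold_g_euclid_g}, pass to a single-row expectation, split on the Bernoulli masks at coordinates $k$ and $n$, cancel the $(1,1)$ case with Lemma~\ref{lemma:loss_expectation}, and lower-bound the surviving $\theta(1-\theta)$-weighted term by reusing the $K_{i_0}$-estimate \eqref{equ:bound_for_K}. The one place you go beyond the paper is in making explicit why \eqref{equ:bound_for_K} is applicable here: the original derivation of that bound relied on $\sigma_X\gtrsim 1/\sqrt{n}$, which in Lemma~\ref{thm:population_geometry_orthogonal} came from choosing the maximal coordinate together with the restriction $\|\bw\|_2\geq 1/(20\sqrt{5})$, whereas here $k$ is arbitrary; you correctly identify that the extra condition $h_n^2/h_k^2<4$ in the definition of $\mathcal{H}_k$, combined with $h_n^2\geq 1/n$, supplies $\sigma_X\geq |h_k|>1/(2\sqrt{n})$ and hence restores the needed control --- a point the paper leaves implicit when it simply cites \eqref{equ:bound_for_K}.
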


\begin{prop}\label{pro:pointwise_stay_in_region}
 For any $\xi_0\in (0,1)$, $\theta\in (0,\frac{1}{3})$, $k\in [n-1]$, there exists some constant $c_1$ such that when $\mu< c_1 \min\{\theta, \xi_0^{1/6}n^{-3/4}\}$, for any fixed $\bh\in \mathcal{H}_k$, there exists some constant $C$ such that  for any $t>0$
    \begin{equation*}
        \mathbb{P} \sbra{\abs{\partial f_o(\bh)^{\top}\sbra{\frac{\be_k}{h_k}-\frac{\be_n}{h_n}}-\mathbb{E}\partial f_o(\bh)^{\top} \sbra{\frac{\be_k}{h_k}-\frac{\be_n}{h_n}}}\geq t}
        \leq 2\exp\sbra{\frac{-p t^2}{2Cn^3 \log n+ 2t \sqrt{Cn^3\log n} }}.
    \end{equation*}
\end{prop}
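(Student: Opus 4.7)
The plan is to replicate, almost verbatim, the moment-bounding strategy used in Proposition~\ref{pro:pointwise_large_gradient} and then apply the moment-controlled Bernstein inequality (Lemma~\ref{lemma:matrix_bernstein}). The first step is to exploit the tangent-space identity \eqref{eq:manifold_g_euclid_g} recorded at the start of Appendix~D: since $\frac{\be_k}{h_k}-\frac{\be_n}{h_n}$ is orthogonal to $\bh$, one has
\begin{equation*}
\partial f_o(\bh)^\top\!\left(\tfrac{\be_k}{h_k}-\tfrac{\be_n}{h_n}\right)
= \nabla f_o(\bh)^\top\!\left(\tfrac{\be_k}{h_k}-\tfrac{\be_n}{h_n}\right)
= \frac{1}{p}\sum_{i=1}^p X_i,
\end{equation*}
where, using \eqref{eq:gradient_phi_h}, the i.i.d. summands are $X_i := \tanh\!\bigl(\mathcal{C}(\bx_i)\bh/\mu\bigr)^\top \mathcal{C}(\bx_i)\bigl(\tfrac{\be_k}{h_k}-\tfrac{\be_n}{h_n}\bigr)$. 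The problem is therefore reduced to a scalar Bernstein bound for a sum of i.i.d.\ random variables.

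The second step is to control the moments of $|X_i|$. By Cauchy--Schwarz, using $|\tanh(\cdot)|\leq 1$ (so $\|\tanh(\mathcal{C}(\bx_i)\bh/\mu)\|_2\leq \sqrt{n}$) together with the bound $\|\tfrac{\be_k}{h_k}-\tfrac{\be_n}{h_n}\|_2 \leq \sqrt{5n}$ from \eqref{eq:bound_directional_grad} (which uses $\bh\in\mathcal{H}_k$, i.e., $h_n^2/h_k^2<4$ and $h_n\geq 1/\sqrt{n}$), one obtains $|X_i|\leq \sqrt{5}\,n\,\|\mathcal{C}(\bx_i)\|$. Feeding this into the moment estimate $\mathbb{E}\|\mathcal{C}(\bx_i)\|^{2m}\leq \tfrac{m!}{2}(c_2 n\log n)^m$ from Lemma~\ref{lemma:Operator norm of Circulant Matrix} yields, for every integer $m\geq 2$,
\begin{equation*}
\mathbb{E}|X_i|^m \;\leq\; (\sqrt{5}\,n)^m\,\mathbb{E}\|\mathcal{C}(\bx_i)\|^m \;\leq\; \frac{m!}{2}\,\bigl(C\,n^3\log n\bigr)^{m/2}
\end{equation*}
for some absolute constant $C$. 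The odd-$m$ case follows from the even case by Cauchy--Schwarz, exactly as implicitly done in the proof of Proposition~\ref{pro:pointwise_large_gradient}.

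The third and final step is to invoke the scalar moment-controlled Bernstein inequality in Lemma~\ref{lemma:matrix_bernstein} with parameters $\sigma^2 = Cn^3\log n$ and $R = \sqrt{Cn^3\log n}$, which immediately gives
\begin{equation*}
\mathbb{P}\!\left[\bigl|\tfrac{1}{p}\textstyle\sum_i X_i - \mathbb{E}\,\tfrac{1}{p}\sum_i X_i\bigr|\geq t\right]
\;\leq\; 2\exp\!\left(\frac{-p t^2}{2Cn^3\log n + 2t\sqrt{Cn^3\log n}}\right),
\end{equation*}
which is exactly the advertised bound after noting $\mathbb{E}\,\tfrac{1}{p}\sum X_i = \mathbb{E}\,\partial f_o(\bh)^\top(\tfrac{\be_k}{h_k}-\tfrac{\be_n}{h_n})$. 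I do not anticipate any real obstacle: the only mildly delicate point is confirming the tangent-space identity so that the manifold gradient may be replaced by the Euclidean gradient, and this has already been recorded in \eqref{eq:manifold_g_euclid_g}. Every remaining ingredient (the deterministic bound $|X_i|\leq \sqrt{5}n\|\mathcal{C}(\bx_i)\|$, the moments of $\|\mathcal{C}(\bx_i)\|$, and the Bernstein inequality) is off-the-shelf from earlier in the appendix.
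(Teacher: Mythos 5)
Your proposal is correct and follows essentially the same route as the paper's own proof: the tangent-space identity \eqref{eq:manifold_g_euclid_g} to pass to the Euclidean gradient, the deterministic bound $|X_i|\leq \sqrt{5}\,n\,\|\mathcal{C}(\bx_i)\|$ via \eqref{eq:bound_directional_grad}, the circulant-norm moment estimate from Lemma~\ref{lemma:Operator norm of Circulant Matrix}, and the moment-controlled Bernstein inequality with $\sigma^2=Cn^3\log n$ and $R=\sqrt{Cn^3\log n}$. No gaps; your remark on handling odd moments via Cauchy--Schwarz is a detail the paper leaves implicit.
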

 
\begin{prop}  \label{pro:lipschitz stay in subsets}
  For any $\xi_0\in (0,1)$, $\theta\in (0,\frac{1}{3})$, $k\in [n-1]$, $\partial f_o(\bh)^\top \sbra{\frac{\be_k}{h_k}-\frac{\be_n}{h_n}}$ is $L_3$-Lipschitz in the domain $\mathcal{H}_k$ with
    \begin{equation*}
        L_3\leq \max_{i\in [p]}\sbra{ \frac{\sqrt{5n}}{\mu} \norm{\cC(\bx_i)}^2 + 4 n^{3/2} \norm{\cC(\bx_i)} }.
    \end{equation*} 
\end{prop}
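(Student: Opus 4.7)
The first step is to dispose of the manifold projection: since $\bh$ is on the unit sphere, the vector $\be_k/h_k - \be_n/h_n$ lies in the tangent space at $\bh$ (it is orthogonal to $\bh$), so as recorded in \eqref{eq:manifold_g_euclid_g} we have $\partial f_o(\bh)^\top (\be_k/h_k-\be_n/h_n) = \nabla f_o(\bh)^\top (\be_k/h_k-\be_n/h_n)$. Abbreviating $\bv(\bh) := \be_k/h_k - \be_n/h_n$, the quantity to be controlled is therefore
$$
g(\bh) \;=\; \frac{1}{p}\sum_{i=1}^p \tanh\!\sbra{\frac{\cC(\bx_i)\bh}{\mu}}^{\!\top}\cC(\bx_i)\,\bv(\bh).
$$
I will bound $|g(\bh_1)-g(\bh_2)|$ pointwise on $\mathcal{H}_k$ by adding and subtracting $\frac{1}{p}\sum_i \tanh(\cC(\bx_i)\bh_2/\mu)^\top \cC(\bx_i)\bv(\bh_1)$, splitting the difference into a ``gradient part'' that varies through $\tanh$ and a ``direction part'' that varies through $\bv$.

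For the gradient part, Lemma~\ref{lemma:loss_lipschitz} gives $\tanh(\cdot/\mu)$ Lipschitz with constant $1/\mu$, so that $\norm{\tanh(\cC(\bx_i)\bh_1/\mu)-\tanh(\cC(\bx_i)\bh_2/\mu)}_2\leq \mu^{-1}\norm{\cC(\bx_i)}\norm{\bh_1-\bh_2}_2$. Combined with $\norm{\bv(\bh_1)}_2 = \sqrt{h_{k,1}^{-2}+h_{n,1}^{-2}}\leq \sqrt{5n}$ for $\bh_1\in\mathcal{H}_k$ (exactly the computation used in \eqref{eq:bound_directional_grad}), this yields an upper bound of the form $\frac{\sqrt{5n}}{\mu}\max_i \norm{\cC(\bx_i)}^2 \norm{\bh_1-\bh_2}_2$.

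For the direction part, I will use the trivial bounds $|\tanh(\cdot)|\leq 1$ and hence $\norm{\tanh(\cC(\bx_i)\bh_2/\mu)}_2\leq\sqrt{n}$, together with a Lipschitz estimate on $\bv$ within $\mathcal{H}_k$. The key observation is that, by the definition of $\mathcal{H}_k$ and $\cS_{\xi_0}^{(n+)}$, we have $h_n^2\geq 1/n$ and $h_k^2 \geq h_n^2/4 \geq 1/(4n)$, so $|h_{k,j}|\geq 1/(2\sqrt n)$ and $|h_{n,j}|\geq 1/\sqrt n$ for $j\in\{1,2\}$. Writing
$$
\bv(\bh_1)-\bv(\bh_2) \;=\; \frac{h_{k,2}-h_{k,1}}{h_{k,1}h_{k,2}}\be_k - \frac{h_{n,2}-h_{n,1}}{h_{n,1}h_{n,2}}\be_n,
$$
the coordinatewise lower bounds give $\norm{\bv(\bh_1)-\bv(\bh_2)}_2\leq \sqrt{(4n)^2+n^2}\,\norm{\bh_1-\bh_2}_2 \leq 4n\,\norm{\bh_1-\bh_2}_2$ (absorbing the $\sqrt{17/16}$ into the constant). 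Multiplied by $\sqrt n\,\max_i\norm{\cC(\bx_i)}$ this gives the second term $4n^{3/2}\max_i\norm{\cC(\bx_i)}\norm{\bh_1-\bh_2}_2$. Summing the two contributions produces the claimed $L_3$.

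The main obstacle is the direction part: a priori $\bv(\bh)$ is singular where $h_k$ or $h_n$ vanish, so the Lipschitz constant can only be controlled by exploiting the quantitative lower bounds on $|h_k|,|h_n|$ that are built into the definition of $\mathcal{H}_k$. Two small subtleties need a sentence each: (i) the set $\mathcal{H}_k$ is not convex, but the bound $|h_{k,1}h_{k,2}|\geq 1/(4n)$ is a pointwise statement about the endpoints, so the ratio estimate above immediately gives the endpoint-to-endpoint Lipschitz inequality without needing a connecting path; and (ii) the sign of $h_k$ is irrelevant since the estimate $|1/h_{k,1}-1/h_{k,2}| = |h_{k,1}-h_{k,2}|/|h_{k,1}h_{k,2}|$ uses only magnitudes. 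Apart from these observations the argument is routine.
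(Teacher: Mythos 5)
Your proof follows the same decomposition as the paper's: pass from $\partial f_o$ to $\nabla f_o$ using tangency (\eqref{eq:manifold_g_euclid_g}), write the directional gradient as the per-sample form in \eqref{eq:directional_gradh}, then add/subtract so that one piece varies through $\tanh$ (bounded by $\mu^{-1}\norm{\cC(\bx_i)}$-Lipschitz of $\tanh(\cdot/\mu)$ times $\norm{\bv(\bh_1)}_2\le\sqrt{5n}$) and one piece varies through $\bv$ (bounded using $|\tanh|\le 1$ and the quantitative lower bounds on $|h_k|,|h_n|$ built into $\mathcal{H}_k$). Your observations that $\mathcal H_k$ is nonconvex but the estimate needs only endpoint information, and that signs are immaterial, are both correct.

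One small slip: in the direction part you aggregate the two coordinates as $\sqrt{(4n)^2 + n^2}\,\norm{\bh_1-\bh_2}_2 = \sqrt{17}\,n\,\norm{\bh_1-\bh_2}_2 > 4n\,\norm{\bh_1-\bh_2}_2$, and then claim to ``absorb'' the extra $\sqrt{17/16}$ into the constant --- but the proposition asserts the specific constant $4n^{3/2}$, so nothing can be absorbed. The fix is to aggregate the tighter way the paper does: first pull the larger prefactor out of the sum,
\[
\left(\tfrac{1}{h_{k,1}}-\tfrac{1}{h_{k,2}}\right)^2 + \left(\tfrac{1}{h_{n,1}}-\tfrac{1}{h_{n,2}}\right)^2
\le (4n)^2\left[(h_{k,1}-h_{k,2})^2 + (h_{n,1}-h_{n,2})^2\right]\le (4n)^2\norm{\bh_1-\bh_2}_2^2,
\]
giving exactly $\norm{\bv(\bh_1)-\bv(\bh_2)}_2\le 4n\norm{\bh_1-\bh_2}_2$ and hence the stated $4n^{3/2}\norm{\cC(\bx_i)}$ contribution. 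With that replacement, your argument coincides with the paper's.
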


We now continue to the proof of Lemma \ref{thm:implicit_uniform_concentration_orthogonal}. In the subset $\mathcal{H}_k$, for any $0\leq \epsilon \leq 2\sqrt{\frac{n-1}{n+\xi_0}}$, we have an $\epsilon$-net $\mathcal{N}_3$ of size at most $\lceil 3/\epsilon \rceil^n$, where $\epsilon$ will be determined later. Under the event \eqref{eq:circulant_x} and Proposition~\ref{pro:lipschitz stay in subsets}, we have 
$$L_3\leq \frac{c_{10} n^{3/2}}{\mu}\log(np).$$
For all $\bh\in\mathcal{H}_k$, there exists $\bh'  \in\mathcal{N}_3$ such that $\norm{\bh' -\bh}_2 \leq \epsilon$. By Proposition~\ref{pro:lipschitz stay in subsets}, we have 
\begin{equation*}
    \abs{\partial f_o(\bh)^{\top} \sbra{\frac{\be_k}{h_k}-\frac{\be_n}{h_n} }-\partial f_o(\bh')^{\top} \sbra{\frac{\be_k}{h_k'}-\frac{\be_n}{h_n'} }}\leq  L_3 \norm{\bh' -\bh}_2 \leq\frac{c_{10} n^{3/2}}{\mu}\log(np) \epsilon \leq \frac{c_1\theta\xi_0}{3},
\end{equation*}
which holds when $\epsilon\leq \frac{c\mu \theta \xi_0}{n^{3/2}\log(np)}$ for some sufficiently small $c$. With this choice of $\epsilon$, the covering number of $\mathcal{N}_3$ satisfies
$$ \abs{\mathcal{N}_3} \leq \exp \sbra{ n \log\sbra{\frac{c  n^{3/2} \log(np)}{\mu \theta \xi_0}  }}.$$
Let $\mathcal{A}_3$ denote the event  
\begin{equation*}
  \mathcal{A}_3: =  \left\{ \max_{\bh\in\mathcal{H}_k }  \abs{\partial f_o(\bh)^{\top}\sbra{\frac{\be_k}{h_k}-\frac{\be_n}{h_n}}- \mathbb{E} \partial f_o(\bh)^{\top} \sbra{\frac{\be_k}{h_k}-\frac{\be_n}{h_n}}    }\leq  \frac{c_1\theta \xi_0}{3}         \right \}.
\end{equation*}
Setting $ t=\frac{c_1\theta \xi_0}{3} $ in Proposition~\ref{pro:pointwise_stay_in_region}, by the union bound,  $ \mathcal{A}_3$ holds with probability at least  
\begin{align*}
1-  \abs{\mathcal{N}_3}\cdot 2\exp\sbra{\frac{-p t^2}{2Cn^3 \log n+ 2t\sqrt{Cn^3\log n} }}
   & \geq 1- 2\exp\sbra{\frac{-c_{11} p \theta^2 \xi_0^2}{n^3\log n}+n \log\sbra{\frac{c_{10} n^{3/2} \log(np)}{\mu \theta \xi_0}  }} \\
    &  \geq 1- 2\exp\sbra{-c_{12} n},
\end{align*}
provided $p\geq C \frac{n^4\log n }{\theta^2\xi_0^2} \log\sbra{\frac{n^{3/2} \log(np) }{ \mu\theta\xi_0} }$. Finally, we have for all $\bh \in\mathcal{H}_k$,
\begin{align*} 
 \partial f_o(\bh)^{\top} \sbra{\frac{\be_k}{h_k}-\frac{\be_n}{h_n}}
    &=  \mbra{\partial f_o(\bh)^{\top}\sbra{\frac{\be_k}{h_k}-\frac{\be_n}{h_n}} - \partial f_o(\bh')^{\top} \sbra{\frac{\be_k}{h'_k}-\frac{\be_n}{h'_n}}  }\\
    &\quad+\mbra{\partial f_o(\bh')^{\top}\sbra{\frac{\be_k}{h'_k}-\frac{\be_n}{h'_n}}- \mathbb{E} \partial f_o(\bh')^{\top}  \sbra{\frac{\be_k}{h'_k}-\frac{\be_n}{h'_n}} } +\mathbb{E} \partial f_o(\bh')^{\top}\sbra{\frac{\be_k}{h'_k}-\frac{\be_n}{h'_n}}\\
    &\geq -\frac{c_1\theta \xi_0}{3}-\frac{c_1\theta \xi_0}{3}+ c_1\theta \xi_0 =\frac{c_1\theta \xi_0}{3}.
\end{align*}

\subsubsection{Proof of Proposition~\ref{pro:expectation_stay_in_region}}\label{proof:expectation_stay_in_region} 

First, recall a few notation introduced in Appendix~\ref{proof:population_geometry_orthogonal}. For $\bx = \bOmega \odot \bz \sim_{iid} \mathrm{BG}(\theta)\in \mathbb{R}^n$, we denote the first $n-1$ dimension of $\bx$, $\bz$ and $\bOmega$ as $\bar{\bx}$, $\bar{\bz}$ and $\bar{\bOmega}$, respectively. Denote $\mathcal{I}$ as the support of $\bm{\Omega}$ and $\mathcal{J}$ as the support of $\bar{\bOmega}$. For any $k\in[n-1]$ with $h_k\neq0$, by \eqref{eq:manifold_g_euclid_g} and \eqref{eq:gradient_phi_h}, we have
\begin{align}\label{eq:direction_gradf_exp} 
        \mathbb{E} \partial f_o(\bh)^\top \sbra{\frac{\be_k}{h_k}-\frac{\be_n}{h_n}}&=\mathbb{E}\nabla f_o(\bh)^\top \sbra{\frac{\be_k}{h_k}-\frac{\be_n}{h_n}} 
=n \cdot \mathbb{E}\nabla{\psi_{\mu}(\bx^\top \bh)}^\top  \sbra{\frac{\be_k}{h_k}-\frac{\be_n}{h_n}},
\end{align}
since the rows of $\mathcal{C}(\bx)$ has the same distribution as $\bx\sim_{iid} \mathrm{BG}(\theta)$. Further plugging in (\ref{equ:gradient_psi}), we rewrite it as:
\begin{align}
        \mathbb{E}\nabla{\psi_{\mu} (\bx^\top \bh) }^\top \sbra{\frac{\be_k}{h_k}-\frac{\be_n}{h_n}}   &= \mbE{\sbra{\frac{\be_k}{h_k}-\frac{\be_n}{h_n}}^\top \tanh\sbra{\frac{\bx^\top \bh}{\mu}} \bx } \nonumber\\
        &= \mbE{ \tanh\sbra{\frac{\bx^\top \bh}{\mu}}\frac{x_k}{h_k} } - \mbE{ \tanh\sbra{\frac{\bx^\top \bh}{\mu}}\frac{x_n}{h_n}  } \nonumber\\
        &=  \underbrace{\mathbb{E}_{\bOmega} \mathbb{E}_{\bz}\mbra{\tanh\sbra{\frac{\bx^\top \bh}{\mu}}\frac{x_k}{h_k} } }_{I_1}
        -  \underbrace{\mathbb{E}_{\bOmega } \mathbb{E}_{\bz}\mbra{\tanh\sbra{\frac{\bx^\top \bh}{\mu}}\frac{x_n}{h_n} } }_{I_2}. \label{equ:I1_I_2}
\end{align}

Evaluating $\mathbb{E}_{\bOmega}$ over $\Omega_k$, $\Omega_n$, and  $\bar{\bOmega}_{\backslash \{k\} }$ sequentially, we can express $I_1$, $I_2$ respectively as:
\small
\begin{align*}
    I_1&=\theta \mathbb{E}_{\bOmega_{\setminus \{ k\}}} \mathbb{E}_{\bz} \mbra{\tanh\sbra{\frac{\bx_{\setminus \{k\}}^\top \bh_{\backslash\{k\}} + h_k z_k}{\mu}} \frac{z_k}{h_k}  }\\
    &=\theta (1-\theta) \underbrace{ \mathbb{E}_{\bar{\bOmega}_{\setminus \{ k\}}} \mathbb{E}_{\bz} \mbra{\tanh\sbra{\frac{\bar{\bx}_{\backslash\{k\}}^\top \bw_{\backslash\{k\}} + h_k z_k}{\mu}} \frac{z_k}{h_k} } }_{I_{11}}
    +\theta^2 \underbrace{ \mathbb{E}_{\bar{\bOmega}_{\setminus \{ k\}}} \mathbb{E}_{\bz} \mbra{\tanh\sbra{\frac{\bar{\bx}_{\backslash\{k\}}^\top \bw_{\backslash\{k\}} + h_k z_k + h_n z_n}{\mu}} \frac{z_k}{h_k} } }_{I_{12}}, \\
    I_2&=(1-\theta) \mathbb{E}_{\bOmega_{\setminus \{ k\}}} \mathbb{E}_{\bz} \mbra{\tanh\sbra{\frac{\bx_{\setminus \{k\}}^\top \bh_{\backslash\{k\}}}{\mu}} \frac{x_n}{h_n}  }
    +\theta \mathbb{E}_{\bOmega_{\setminus \{ k\}}} \mathbb{E}_{\bz} \mbra{\tanh\sbra{\frac{\bx_{\setminus \{k\}}^\top \bh_{\backslash\{k\}} + h_k z_k}{\mu}} \frac{x_n}{h_n}  }\\
    &=\theta (1-\theta) \underbrace{ \mathbb{E}_{\bar{\bOmega}_{\setminus \{ k\}}} \mathbb{E}_{\bz} \mbra{\tanh\sbra{\frac{\bar{\bx}_{\backslash\{k\}}^\top \bw_{\backslash\{k\}} + h_n z_n}{\mu}} \frac{z_n}{h_n} }}_{I_{21}}
     +\theta^2 \underbrace{ \mathbb{E}_{\bar{\bOmega}_{\setminus \{ k\}}} \mathbb{E}_{\bz} \mbra{\tanh\sbra{\frac{\bar{\bx}_{\backslash\{k\}}^\top \bw_{\backslash\{k\}} + h_k z_k + h_n z_n}{\mu}} \frac{z_n}{h_n} }}_{I_{22}}.
\end{align*}
\normalsize
Introduce the short-hand notation $X_1=h_k z_k\sim \mathcal{N}(0,h_k^2)$, $Y_1= \bar{\bx}_{\backslash\{k\}}^\top \bw_{\backslash\{k\}} + h_n z_n $, $X_2=h_n z_n\sim \mathcal{N}(0,h_n^2)$,  $Y_2=\bar{\bx}_{\backslash\{k\}}^\top \bw_{\backslash\{k\}} + h_k z_k $. Invoking Lemma~\ref{lemma:loss_expectation}, the difference of the second terms of $I_1$ and $I_2$ is
\begin{align*}
I_{12}-I_{22}&= \mathbb{E}_{\bar{\bOmega}_{\setminus \{ k\}}} \mbra{ \mathbb{E}_{X_1, Y_1} \sbra{ \tanh\sbra{\frac{X_1+ Y_1}{\mu}} \frac{X_1}{h_k^2}  }}  -  \mathbb{E}_{\bar{\bOmega}_{\setminus \{ k\}}} \mbra{ \mathbb{E}_{X_2, Y_2} \sbra{ \tanh\sbra{\frac{X_2+ Y_2}{\mu}} \frac{X_2}{h_n^2}  }}\\
     &=\frac{1}{\mu}\mathbb{E}_{\bar{\bOmega}_{\setminus \{ k\}}} \mbra{\mathbb{E}_{X_1, Y_1}\sbra{1-\tanh^2\sbra{\frac{X_1+Y_1}{\mu} }     }   }-\frac{1}{\mu}\mathbb{E}_{\bar{\bOmega}_{\setminus \{ k\}}} \mbra{\mathbb{E}_{X_2, Y_2}\sbra{1-\tanh^2\sbra{\frac{X_2+Y_2}{\mu} }     }   } = 0.
\end{align*}
Consequently, we have 
\begin{align*}
& \mathbb{E}\nabla \psi_{\mu}(\bx^\top \bh)^\top \sbra{\frac{\be_k}{h_k}-\frac{\be_n}{h_n}}  =\theta (1-\theta) ( I_{11} - I_{21}) \\
    &\qquad=\frac{\theta (1-\theta)}{\mu}\mathbb{E}_{\bar{\bOmega}_{\setminus \{ k\}}}\mathbb{E}_{\bz}  \mbra{\sbra{1-\tanh^2\sbra{\frac{\bar{\bx}_{\backslash\{k\}}^\top \bw_{\backslash\{k\}} + h_k z_k}{\mu}}     }  -   \sbra{1-\tanh^2\sbra{\frac{\bar{\bx}_{\backslash\{k\}}^\top \bw_{\backslash\{k\}} + h_n z_n}{\mu}}     }   }  \\
    &\qquad \geq \frac{\theta (1-\theta)}{\mu}  \frac{   \xi_0}{16 \sqrt{2\pi}n }  = \frac{\theta\xi_0}{24n\sqrt{2\pi}},
\end{align*}
where the second line follows from Lemma~\ref{lemma:loss_expectation}, and the last line follows from \eqref{equ:bound_for_K} and $\theta\in(0,1/3)$. Finally, we have
\begin{equation*}
    \mathbb{E}  \partial f_o(\bh)^\top \sbra{\frac{\be_k}{h_k}-\frac{\be_n}{h_n}}=n  \mathbb{E}\nabla \psi_{\mu}(\bx^\top \bh)^\top \sbra{\frac{\be_k}{h_k}-\frac{\be_n}{h_n}}  \geq  \frac{\theta\xi_0}{24\sqrt{2\pi}} .
\end{equation*}

\subsubsection{Proof of Proposition~\ref{pro:pointwise_stay_in_region}}\label{proof:pointwise_stay_in_region}

We start by writing the directional gradient as a sum of $p$ i.i.d. random variables: 
\begin{equation} \label{eq:directional_gradh}
    \partial f_o(\bh)^{\top} \sbra{\frac{\be_k}{h_k}-\frac{\be_n}{h_n}}= \nabla f_o(\bh)^{\top}\sbra{\frac{\be_k}{h_k}-\frac{\be_n}{h_n}} =\frac{1}{p}\sum_{i=1}^p\underbrace{ \tanh\sbra{\frac{\mathcal{C}(\bx_i)\bh }{\mu}}^{\top}\mathcal{C}(\bx_i)  \sbra{\frac{\be_k}{h_k}-\frac{\be_n}{h_n}} }_{Z_i},
\end{equation}
where the first equality is due to \eqref{eq:manifold_g_euclid_g} and the second equality is due to \eqref{eq:gradient_phi_h}. Moreover,
\begin{align*} 
    |   Z_i    |  & \leq \norm{ \tanh\sbra{\frac{\mathcal{C}(\bx_i)\bh}{\mu}}}_2   \norm{\frac{\be_k}{h_k}-\frac{\be_n}{h_n}}_2   \norm{\cC(\bx_i)}   \leq \sqrt{n}\norm{\frac{\be_k}{h_k}-\frac{\be_n}{h_n}}_2   \norm{\cC(\bx_i)}   \leq \sqrt{5} n \norm{\cC(\bx_i)} ,
\end{align*}
where the second inequality follows from $|\tanh(\cdot)|\leq 1$ and the third inequality follows from \eqref{eq:bound_directional_grad}. Therefore, for any $m\geq 2$, the moments of $|Z_i|$ can be controlled by Lemma \ref{lemma:Operator norm of Circulant Matrix} as
\begin{equation*}
    \mathbb{E}\abs{Z_i}^m \leq \sbra{\sqrt{5} n }^m \mathbb{E} \norm{\cC(\bx)}^m \leq  \frac{m!}{2}  \sbra{C n^3 \log n}^{m/2}.
\end{equation*}
The proof is then completed by setting $\sigma^2=C n^3 \log n$, $R=\sqrt{C n^3 \log n}$ and applying the Bernstein's inequality in Lemma \ref{lemma:matrix_bernstein}.

\subsubsection{Proof of Proposition \ref{pro:lipschitz stay in subsets}}\label{proof:lipschitz stay in subsets}

Using \eqref{eq:directional_gradh}, we have for any $\bh, \bh'$,
\begin{align*} 
    &\quad \abs{ \partial f_o(\bh)^{\top} \sbra{\frac{\be_k}{h_k}-\frac{\be_n}{h_n}}-\partial f_o(\bh')^{\top} \sbra{\frac{\be_k}{h'_k}-\frac{\be_n}{h'_n}} } \\
    &  =\frac{1}{p}\sum_{i=1}^p \abs{ \tanh\sbra{\frac{\mathcal{C}(\bx_i)\bh }{\mu}}^{\top}\mathcal{C}(\bx_i)  \sbra{\frac{\be_k}{h_k}-\frac{\be_n}{h_n}}  -  \tanh\sbra{\frac{\mathcal{C}(\bx_i)\bh' }{\mu}}^{\top}\mathcal{C}(\bx_i)  \sbra{\frac{\be_k}{h'_k}-\frac{\be_n}{h'_n}} }  \\
 & \leq \frac{1}{p}\sum_{i=1}^p \underbrace{ \abs{ \mbra{ \tanh\sbra{\frac{\mathcal{C}(\bx_i)\bh }{\mu} } - \tanh\sbra{\frac{\mathcal{C}(\bx_i)\bh' }{\mu} } }^{\top}\mathcal{C}(\bx_i)  \sbra{\frac{\be_k}{h_k}-\frac{\be_n}{h_n}}  } }_{A_i} \\
 & \quad\quad\quad\quad + \frac{1}{p}\sum_{i=1}^p \underbrace{ \abs{  \tanh\sbra{\frac{\mathcal{C}(\bx_i)\bh' }{\mu}}^{\top}\mathcal{C}(\bx_i) \mbra{ \sbra{\frac{\be_k}{h'_k}-\frac{\be_n}{h'_n}} -\sbra{\frac{\be_k}{h_k}-\frac{\be_n}{h_n}} }}  }_{B_i},
\end{align*}
 where the second line follows by the triangle inequality. In the sequel, we'll bound $A_i$ and $B_i$ respectively.

\begin{itemize}
\item To bound $A_i$, we have
\begin{align*}
A_i & \leq \norm{ \tanh\sbra{\frac{\mathcal{C}(\bx_i)\bh }{\mu} } - \tanh\sbra{\frac{\mathcal{C}(\bx_i)\bh' }{\mu} } }_2  \norm{\mathcal{C}(\bx_i) }  \norm{\frac{\be_k}{h_k}-\frac{\be_n}{h_n}}_2 \\
& \leq    \frac{\sqrt{5n}}{\mu} \norm{\cC(\bx_i)}^2 \norm{ \bh - \bh' }_2,
\end{align*}
where the second line follows from \eqref{eq:lipschtiz_tanh_vector} and \eqref{eq:bound_directional_grad}.
\item To bound $B_i$, we have
\begin{align*}
B_i & \leq \norm {\tanh\sbra{\frac{\mathcal{C}(\bx_i)\bh' }{\mu}} }_2   \norm{ \mathcal{C}(\bx_i) }  \norm{ \sbra{\frac{\be_k}{h'_k}-\frac{\be_n}{h'_n}} -\sbra{\frac{\be_k}{h_k}-\frac{\be_n}{h_n}} }_2 \\
& \leq  \sqrt{n} \norm{\cC(\bx_i)} \sqrt{\sbra{\frac{1}{h'_k} - \frac{1}{h_k}}^2 +\sbra{\frac{1}{h'_n} - \frac{1}{h_n}}^2 } \\
& \leq  4 n^{3/2}  \norm{\cC(\bx_i)} \sqrt{\sbra{h_k - h'_k }^2 +\sbra{ h_n -  h'_n}^2 } \leq 4 n^{3/2}  \norm{\cC(\bx_i)} \norm{\bh -\bh'}_2
\end{align*}
where the first line follows from $|\tanh(\cdot)|\leq 1$, and the second line follows from $h_n, h_n'\geq1/\sqrt{n}$ and $ h_n^2/h_k^2<4, (h_n')^2/(h_k')^2<4$.
\end{itemize}

  Combining terms, we have 
\begin{align*}
 \abs{ \partial f_o(\bh)^{\top} \sbra{\frac{\be_k}{h_k}-\frac{\be_n}{h_n}}-\partial f_o(\bh')^{\top} \sbra{\frac{\be_k}{h'_k}-\frac{\be_n}{h'_n}} }  & \leq  \max_{i\in [p]}\sbra{ \frac{\sqrt{5n}}{\mu} \norm{\cC(\bx_i)}^2 + 4 n^{3/2} \norm{\cC(\bx_i)} }  \norm{\bh -\bh'}_2 .
\end{align*}

\end{document}